\documentclass[11pt]{article} 

\usepackage{amsmath,amsfonts,amsthm,mathrsfs, mathtools,amssymb}
\usepackage{hyperref}
\usepackage{hyperref}
\hypersetup{       
    unicode=false,          
    pdftoolbar=true,       
    pdfmenubar=true,      
    pdffitwindow=false,     
    pdfstartview={FitH},   
    pdfnewwindow=true,      
    colorlinks=true,      
    linkcolor=blue,          
    citecolor=blue,       
    filecolor=magenta,      
    urlcolor=blue,           
    breaklinks=true
}
\usepackage{xurl}
\usepackage[capitalize]{cleveref}
\usepackage{nicefrac}   
\usepackage{enumitem}
\usepackage[dvipsnames]{xcolor}
\usepackage{wrapfig}

\usepackage{comment}
\usepackage[rightcaption]{sidecap}
\usepackage{bm,bold-extra} 
\usepackage[innerleftmargin=0em, innerrightmargin=0em,skipabove=0.4em, innertopmargin=0em, innerbottommargin=0.4em,skipbelow=0.4em,outerlinecolor=white,linecolor=white]{mdframed} 

\usepackage{makecell}
\usepackage{booktabs} 
\usepackage{pifont}

\newcommand{\xmark}{\ding{55}}%
\usepackage{pifont}
\usepackage[dvipsnames,table]{xcolor}
\usepackage{authblk}
\usepackage{selectp}
\usepackage[normalem]{ulem}
\usepackage{scalerel,stackengine}
\usepackage{fullpage}
\usepackage{breqn}

\usepackage{tikz}
\usetikzlibrary{hobby}

\usepackage[toc,page,header]{appendix}
\usepackage{minitoc}

\usepackage[natbib=true,backend=biber,doi=false, isbn=false, eprint=false, url=false,date=year,maxbibnames=99,defernumbers=true,giveninits,style=authoryear,useprefix=true]{biblatex}
\addbibresource{references.bib}
\AtEveryBibitem{\clearlist{language}}
\AtEveryBibitem{\clearlist{editor}}
\DeclareSortingScheme{noneyear}{
 \sort{\citeorder}
 \sort{\field{year}}
}

\newcommand{\F}{\mathscr{F}} 
\newcommand{\Fstar}{\F_{\star}} 
\newcommand{\fstar}{f_{\star}} 

\newcommand{\Freg}{\F^{\rho}} 
\newcommand{\Fregstar}{\Freg_{\star}} 
\newcommand{\balltwo}[1]{B(#1)} 
\newcommand{\Fr}{\F_r} 
\newcommand{\sumt}{\sum_{t=0}^{T-1}} 
\newcommand{\Sobo}{\mathscr{H}^{s}}
\newcommand{\Soboper}{\Sobo_{\text{per}}} 
\newcommand{\smallcube}{Q_L} 
\newcommand{\mediumcube}{Q_{2L}} 
\newcommand{\bigcube}{Q_{4L}} 
\newcommand{\Prob}{\mathbb{P}} 
\newcommand{\inputseq}{X} 
\newcommand{\jointProb}{\Prob_{\inputseq}} 
\newcommand{\X}{\Omega} 
\newcommand{\reg}[1]{\Psi(#1)}
\newcommand{\Elltwo}{\mathscr{L}^2} 
\newcommand{\N}[3]{\mathcal{N}_{#1}\left( #2, #3 \right)} 
\newcommand{\Eset}{\mathcal{E}} 
\newcommand{\MOC}[1]{\bm{\mathrm{M}}_T \left[ #1 \right] } 
\newcommand{\Alo}{\mathcal{A}} 
\newcommand{\Dataset}{\mathcal{D}} 
\newcommand{\R}{\mathbb{R}} 
\newcommand{\Z}{\mathbb{Z}_{\geq 0}} 
\newcommand{\Filt}{\mathcal{X}} 
\newcommand{\sigmaw}{\sigma_W^2} 
\newcommand{\NA}{\textbf{(\texttt{N.A})}} 
\newcommand{\NB}{\textbf{(\texttt{N.B})}} 
\newcommand{\NC}{\textbf{(\texttt{N.C})}} 
\newcommand{\D}{D^{\alpha}} 
\newcommand{\Der}{\mathscr{D}} 
\newcommand{\Cont}{\mathscr{C}} 
\newcommand{\HH}{\mathcal{H}} 
\newcommand{\fhat}{\hat{f}} 
\newcommand{\dx}{d_X} 
\newcommand{\dy}{d_Y} 
\newcommand{\innerprod}[3]{\left \langle #1,\, #2 \right \rangle_{#3}} 
\newcommand{\norm}[2]{\left \| #1 \right \|_{#2}} 
\newcommand{\E}[2]{\mathbb{E}_{#2} \left[ #1 \right]} 
\newcommand{\PeriodExt}[2]{E_{\text{per}}(#1)(#2)} 
\newcommand{\transker}[3]{\mathcal{K}_{#3}\left(#1,#2\right)} 
\newcommand{\Lebmeas}{\mu_{\lambda}} 
\newcommand{\pt}[2]{p_{#1}\left(#2 \right)} 
\newcommand{\kappal}{\underline{\kappa}} 
\newcommand{\kappau}{\overline{\kappa}} 
\renewcommand{\i}{\iota} 
\newcommand{\pdec}[1]{p_{#1}} 
\newcommand{\RSob}{\rho_f} 
\newcommand{\ExtOp}[2]{\tilde{E}(#1)(#2)}
\newcommand{\Ftmp}{\F_{\Lebmeas}} 
\newcommand{\Fe}{\F_{\epsilon}} 
\newcommand{\me}{m_{\epsilon}} 
\newcommand{\Ellinf}{\mathscr{L}^{\infty}} 
\newcommand{\p}{\mathfrak{p}} 
\newcommand{\xx}{\mathrm{x}} 
\newcommand{\tiRSob}{\widetilde{\RSob}} 
\newcommand{\tir}{\widetilde{r}} 
\newcommand{\eps}{\epsilon} 
\newcommand{\bigO}{\mathcal{O}} 
\newcommand{\Kl}{\underline{K}} 
\newcommand{\Ku}{\overline{K}} 
\newcommand{\Fku}{\F_{\Ku}} 
\newcommand{\Fk}{\widetilde{F}_{k}} 
\newcommand{\Cc}{C_c} 
\newcommand{\Cslow}{C_{\texttt{slow}}} 
\newcommand{\Cfast}{C_{\texttt{fast}}} 


\newcommand\polygon[2][]{
  \pgfmathsetmacro{\angle}{360/#2}
  \pgfmathsetmacro{\startangle}{-90 + \angle/2}
  \pgfmathsetmacro{\y}{cos(\angle/2)}
  \begin{scope}[#1]
    \foreach \i in {1,2,...,#2} {
      \pgfmathsetmacro{\x}{\startangle + \angle*\i}
      \draw[fill=ForestGreen,color=ForestGreen] (0, 0) -- (\x:1 cm) -- (\x + \angle/2:\y cm) -- cycle;
      \draw[fill=ForestGreen,color=ForestGreen] (0, 0) -- (\x + \angle/2:\y cm) -- (\x + \angle:1 cm) -- cycle;
    }
  \end{scope}
}



\newmdenv[backgroundcolor=blue!5]{repthm}

\newmdtheoremenv[%
ntheorem = true,backgroundcolor=blue!5
]{theorem}{Theorem}
\newmdtheoremenv[%
ntheorem = true,backgroundcolor=blue!5%
]{proposition}[theorem]{Proposition}
\newmdtheoremenv[%
ntheorem = true,backgroundcolor=blue!5%
]{lemma}[theorem]{Lemma}
\newmdtheoremenv[%
ntheorem = true,backgroundcolor=blue!5,innerbottommargin=0em%
]{corollary}[theorem]{Corollary}

\theoremstyle{remark}
\newmdtheoremenv[%
ntheorem = true,backgroundcolor=white%
]{problem}{Problem}
\newmdtheoremenv[%
ntheorem = true,backgroundcolor=red!5%
]{definition}{Definition}
\newmdtheoremenv[%
ntheorem = true,backgroundcolor=yellow!20%
]{assumption}{Assumption}
\newmdtheoremenv[ntheorem = false,backgroundcolor=gray!20]{remark}{Remark}

\numberwithin{theorem}{section}

\crefname{theorem}{Theorem}{Theorems}
\crefname{proposition}{Proposition}{Propositions}
\crefname{corollary}{Corollary}{Corollaries}
\crefname{lemma}{Lemma}{Lemmas}
\crefname{assumption}{Assumption}{Assumptions}
\crefname{equation}{Equation}{Equations}
\crefname{section}{Section}{Sections}
\crefname{appendix}{Appendix}{Appendices}
\crefname{figure}{Figure}{Figures}

\allowdisplaybreaks

\DeclareMathOperator*{\argmin}{arg\,min}

\numberwithin{definition}{section}
\numberwithin{equation}{section}
\numberwithin{theorem}{section}

\title{\LARGE \bf
Physics-informed learning under mixing:\\ How physical knowledge speeds up learning 
}

\author[1]{Anna Scampicchio*}
\author[2]{Leonardo F.~Toso*}
\author[1]{Rahel Rickenbach}
\author[2]{James Anderson}
\author[1]{Melanie N.~Zeilinger}
\affil[1]{Institute for Dynamic Systems and Control, ETH Z\"urich}
\affil[2]{Department of Electrical Engineering, Columbia University}
\date{September 2025}
\begin{document}

\doparttoc 
\faketableofcontents

\maketitle
\allowdisplaybreaks
\begin{abstract}
A major challenge in physics-informed machine learning is to understand how the incorporation of prior domain knowledge affects learning rates when data are dependent. Focusing on empirical risk minimization with physics-informed regularization, we derive complexity-dependent bounds on the excess risk in probability and in expectation. We prove that, when the physical prior information is aligned, the learning rate improves from the (slow) Sobolev minimax rate to the (fast) optimal i.i.d.~one without any sample-size deflation due to data dependence.
\end{abstract}

\footnote{\hspace{-0.6cm}*A.~Scampicchio and L.~F.~Toso share first-authorship. \\ Correspondence to: \{ascampicc@ethz.ch, lt2879@columbia.edu\}.}

\section{Introduction}

Physics-informed machine learning encompasses a wide taxonomy of approaches that combine physical knowledge and learning algorithms to address two main tasks: (i) enhancing physical models (given, e.g., by systems of partial differential equations) through data-driven methods to improve their accuracy and numerical solvability; (ii) improve the learning algorithms' performance by including physical information, e.g., as additional constraint~\citep{karniadakis_physics-informed_2021,meng_when_2025}. Focusing on the second class of methods, surveyed in~\cite{rai_driven_2020,von_rueden_informed_2023}, the resulting approaches turn out to be practically effective in terms of data efficiency, generalization capability and interpretability, especially in view of downstream tasks such as safe learning-based control~\citep{nghiem_physics-informed_2023,drgona_safe_2025}.  However, theoretically quantifying the beneficial impact of physical information into learning algorithms is challenging and still an active research question
~(see \cite{von_rueden_how_2023} and references therein). \\

In this paper, we tackle this question by considering a statistical learning set-up and focusing on regularized empirical risk minimization problems of the following form:
\vspace{-0.3em}
\begin{equation}
\begin{tikzpicture}

\node (fhatEqArgmin) {$\fhat = \argmin$};

\node[xshift = 1em,yshift=-1.1em, align=center,scale=0.7] (fspec) at (fhatEqArgmin)  {$f \in $ ball in \\ \vspace{-1.1em} Sobolev space};

\node[align=center,draw=black,rounded corners,xshift=7.2em,yshift=-0.2em,execute at begin node = \linespread{0.8}\selectfont] (dataFit) at (fhatEqArgmin) {data-fit \\ squared loss$(f)$};

\node[xshift=4.3em] (plus) at (dataFit) {$+$};

\node[xshift=1.2em] (lambda) at (plus) {$\lambda_T$};

\node[align=center,draw=black,rounded corners,xshift=4.7em,execute at begin node = \linespread{0.8}\selectfont] (reg) at (lambda) {physics-informed\\ regularizer$(f)$};

\node[xshift=4.2em] (comma) at (reg) {,};

\end{tikzpicture}\label{eq:abstract_RERM}
\end{equation}
where data entering the fit term are \emph{dependent},  derived from observations of a ground-truth nonlinear dynamical system $X_{t+1} = \fstar(X_t) + W_t$, with $W_t$ being a sub-Gaussian noise martingale  difference sequence.  
The regularizer in~\eqref{eq:abstract_RERM} encodes the information that the true function to be estimated, $\fstar$, approximately satisfies a known partial differential equation induced by a linear operator $\Der$ --- i.e., we have that the regularizer takes the form $\norm{\Der(f)}{\Elltwo}^2$, and we say that \emph{knowledge alignment} occurs if it holds that  $\norm{\Der(\fstar)}{\Elltwo}^2 \simeq 0$. \\

The main results of this paper are \emph{complexity-dependent} bounds  --- i.e., bounds that depend on $||\Der(\fstar)||_{\Elltwo}$~\citep{lecue_regularization_2017} --- for the \emph{excess risk} $||\fhat - \fstar||^2_{\Elltwo}$ in physics-informed and non-parametric learning with dependent data. Informally, our results (both in high probability and expectation) will look like this:
\begin{repthm}
    {\bfseries Theorem (Informal).} For a suitable choice of the regularization parameter $\lambda_T$, for a sufficiently large number of samples $T$, and letting $d < 1$ be the \emph{Sobolev minimax rate}~\citep{ibragimov_statistical_1981,nussbaum_minimax_2006}, it holds that
    \begin{equation}
    \text{(Excess risk)}  \quad ||\fhat - \fstar||_{\Elltwo}^2 \leq \Cslow \frac{\norm{\Der(\fstar)}{\Elltwo}^{\text{some power}}}{T^{d}} + \Cfast \frac{\text{noise level}}{T}.\notag
\end{equation}
\end{repthm}
Thanks to this we show that, under knowledge alignment, the regularized estimate $\fhat$ converges to the true, unknown function $\fstar$ at the i.i.d.~rate of $\bigO(1/T)$: in other words, it behaves like classic optimal rates for i.i.d.~learning \emph{even if the data are dependent} after a suitable burn-in time.\\

The remainder of the paper unfolds as follows: \cref{sec:setup} provides the set-up of the learning problem, introducing the \emph{weighted, vector-valued} function spaces that will be used throughout the paper. Next, the learning problem is stated in~\cref{sec:problem_statement}, and in~\cref{sec:bounds} we provide the general statement for the excess risk bounds, both in probability and in expectation. Our analysis culminates in~\cref{sec:convergence_rates}, where we prove how knowledge alignment leads to optimal i.i.d.~rates even if data are dependent. We discuss our results in juxtaposition with related works in~\cref{sec:discussion}, and present some concluding remarks in~\cref{sec:conclusions}.

\section{Problem set-up}\label{sec:setup}
This section collects preliminary concepts, defining the probability set-up of the data-generation mechanism (\cref{sec:setupXt}) and the involved weighted, vector-valued function spaces (\cref{sec:function_spaces}).

\subsection{Input domain and trajectory distribution}\label{sec:setupXt} Let $\X \subseteq [-L,L]^{\dx} \subset \R^{\dx}$ be the input domain whose boundary is locally Lipschitz~\citep[Definition~4.9]{adams_sobolev_2003}. Suppose we have a horizon length $T$, the input trajectories denoted by $\inputseq \doteq (X_0,\, X_1,\,\cdots,\allowbreak \, X_{T-1})$ belong to the metric space \mbox{$(\X^T, \left\{ \Filt _t\right\}_{t=0}^{T-1}, \jointProb)$}, where \mbox{$\X^T \doteq {\bigtimes}_{t=0}^{T-1} \X$} is the Cartesian product of the single-component input domains~$\X$; \mbox{$\left\{ \Filt _t\right\}_{t=0}^{T-1}$} is the \textit{filtration} given by a sequence of increasing~$\sigma$-algebras \mbox{$\Filt_{t+1} \subset \Filt_t$} with respect to which~$X$ is \textit{adapted}~\citep[Chapter~II.45]{rogers_diffusions_2000}; and~$\jointProb$ is the joint probability distribution of the input trajectory. As detailed in~\cref{sec:mut}, there exists a probability distribution associated with every component of $\inputseq$ --- we denote it by $\mu_t$ for each $t=0,\cdots,T-1$, and we mostly work with a \textit{known} initial distribution $\mu_0$ for $X_0$ (typically, a Dirac measure centered at the observed initial state $X_0$). Overall, we make use of the following:
\begin{assumption}\label{ass:densities}
    Let $\Lebmeas$ be the Lebesgue measure defined on $\X \subset \R^{\dx}$. For all $t=0,\cdots, T-1$, each measure $\mu_t\colon \Filt_t \to \R_{\geq 0}$ is assumed to admit a density with respect to $\Lebmeas$. We denote such density by $p_t(\cdot)$, and we assume that  there exist $0 < \kappal < \kappau < \infty$ such that, for all $t=0,\cdots, T-1$, $\kappal \leq \pt{t}{\cdot} \leq \kappau$. 
\end{assumption}
    Note that~\cref{ass:densities} accounts for many cases of practical relevance, such as the uniform, the truncated Gaussian and the beta 
    distributions~\citep{krishnamoorthy_handbook_2016}.

\subsection{Spaces of functions} \label{sec:function_spaces}
\paragraph{Space of square-integrable functions $\Elltwo$.} We will focus on the Hilbert space~$\Elltwo(\X^T,\jointProb;\R^{\dy})$ of vector-valued, square-integrable functions that consist of multiple evaluations of a function $f\colon \X \to \R^{\dy}$ along the input trajectory $\inputseq$. Such a space allows us to consider the  trajectory $\inputseq$ and is endowed with the inner product defined as follows: given $f,g \colon \X \to \R^{\dy}$, we have
\begin{align}\label{eq:innerprod}
    \innerprod{f}{g}{\Elltwo(\X^T,\jointProb;\R^{\dy})} &\doteq \frac{1}{T}\sumt \E{\innerprod{f(X_t)}{g(X_t)}{2}}{\jointProb} = \frac{1}{T}\sumt \int_{\X^T} \innerprod{f(X_t)}{g(X_t)}{2}d\jointProb \notag\\
    & =\frac{1}{T}\sumt \int_{\X} \innerprod{f(X_t)}{g(X_t)}{2} \mu_t(dX_t),\end{align}
where $\innerprod{\cdot}{\cdot}{2}$ is the standard inner product defined in the Euclidean space $\R^{\dy}$, and $\mu_t$ is the probability measure of the $t$-th component of $\inputseq$ introduced in~\cref{sec:setupXt}. The inner product~\eqref{eq:innerprod} induces the trajectory norm $\norm{f}{\Elltwo(\X^T,\jointProb;\R^{\dy})}$ such that $\norm{f}{\Elltwo(\X^T,\jointProb;\R^{\dy})}^2 = \innerprod{f}{f}{\Elltwo(\X^T,\jointProb;\R^{\dy})}$. Furthermore, it follows by construction that one can write $\norm{f}{\Elltwo(\X^T,\jointProb;\R^{\dy})}^2 = \frac{1}{T}\sumt \mathbb{E}_{\jointProb}[\norm{f(X_t)}{2}^2]$. 
Note in addition that, thanks to the separability of $\R^{\dy}$, the vector-valued space $\Elltwo(\X^T,\jointProb;\R^{\dy}) = \{ f\colon \X \to \R^{\dy} \mid \norm{f}{\Elltwo(\X^T,\jointProb;\R^{\dy})} < \infty \}$ can be written as the direct sum 
$\bigoplus_{i=1}^{\dy} \Elltwo(\X^T,\jointProb;\R)$~\citep[Chapter~I.6]{conway_course_2007}: indeed, following~\eqref{eq:innerprod}, we can write 
\begin{align}
  \norm{f}{\Elltwo(\X^T,\jointProb;\R^{\dy})}^2 = \sum_{i=1}^{\dy}\frac{1}{T} \sumt \E{f_i(X_t)^2}{\jointProb} = \sum_{i=1}^{\dy} \norm{f_i}{\Elltwo(\X^T,\jointProb;\R)}^2. \notag
\end{align}

\paragraph{General $\mathscr{L}^p$ spaces.}
In general, one can define the space $\mathscr{L}^p(\X^T,\jointProb;\R^{\dy})$ for any $p \in \Z$ endowed with the norm $\norm{f}{\mathscr{L}^p(\X^T,\jointProb;\R^{\dy})}^p = \frac{1}{T}\sumt \E{\norm{f(X_t)}{2}^p}{\jointProb}$. Of particular interest will be the Banach space of bounded functions $\Ellinf(\X^T;\R^{\dy})$ equipped with the norm \begin{equation}
\norm{f}{\Ellinf(\X^T;\R^{\dy})} \allowbreak \doteq \sup_{x \in \X} \norm{f(x)}{2}.\notag
\end{equation} 

\paragraph{Sobolev space $\Sobo$.} Another fundamental function space derived from $\Elltwo(\X^T,\jointProb;\R^{\dy})$ is
the multi-output, weighted \textit{Sobolev space} of order $s \in \Z$, which is defined as follows:
\begin{align}\label{eq:SobolevSpace}
\Sobo(\X^T,\jointProb;\R^{\dy}) \doteq \left\{ f \in \Elltwo(\X^T,\jointProb;\R^{\dy}) \mid \norm{f}{\Sobo(\X^T,\jointProb;\R^{\dy})} < \infty \right\}, \notag
\end{align}
where the norm is induced by the inner product
\begin{align}
    \innerprod{f}{g}{\Sobo(\X,\jointProb;\R^{\dy})} \doteq \sum_{|\alpha|\leq s} \innerprod{\D f}{\D g}{\Elltwo(\X^T,\jointProb;\R^{\dy})}, \notag
\end{align}
with $\D f$ being the differential given by the multi-index $\alpha \doteq (\alpha_1, \cdots, \alpha_{\dx})$ of non-negative integers with order $|\alpha| \doteq \sum_{i=1}^{\dx} \alpha_i$, i.e., $\D  \doteq \nicefrac{\partial^{|\alpha|}f}{\partial x_1^{\alpha_1}\cdots \partial x_{\dx}^{\alpha_{\dx}}}$. Regarding the order of the Sobolev spaces we will consider, we will rely on the following:
\begin{assumption}\label{ass:SobolevOrder}
    The order $s$ of $\Sobo(\X^T,\jointProb;\R^{\dy})$ is a non-negative integer that satisfies $s \geq 2\dx$. 
\end{assumption}

Finally, note that also the space $\Sobo(\X^T,\jointProb;\R^{\dy})$ admits the representation as the direct sum $\bigoplus_{i=1}^{\dy} \Sobo(\X^T,\jointProb;\R)$ thanks to the separability of $\R^{\dy}$. This allows us to extend key results of scalar Sobolev spaces to our vector-valued ones, as detailed in~\cref{sec:sobolevProperties}. In particular, we show that the Sobolev Imbedding Theorem~\citep[Theorem~4.12]{adams_sobolev_2003} holds in our set-up, which will provide the necessary structure for the hypothesis space involved in the learning problem.

\section{Problem statement}\label{sec:problem_statement}
\paragraph{Measurement model.} Assume to collect $T$ data points, $\Dataset \doteq \{X_t, Y_t\}_{t=0}^{T-1}$, generated according to the measurement model
\begin{equation}
    Y_t \doteq X_{t+1} = \fstar(X_t) + W_t, \label{eq:measurement_model}
\end{equation}
where the noise sequence satisfies the following:
\begin{assumption}\label{ass:noise}
    The additive noise $\{W_t\}_{t \in \Z}$ is a martingale difference sequence with respect to the filtration $\{\Filt_t\}_{t\in\Z}$: thus, $\E{W_t | \Filt_{t-1}}{W_t} =0$ for all $t=0,\cdots,T-1$. Moreover, each $W_t$ is also assumed to be $\sigmaw$-conditionally sub-Gaussian given $\Filt_{t-1}$: i.e., it holds that, for every $\xi \in \R$ and every $u$ in the unit sphere in $(\R^{\dy}, \norm{\cdot}{2})$, \begin{equation}\label{eq:sub-Gaussian}
        \mathbb{E}\left[\exp\left\{\xi \innerprod{W_t}{u}{2}\right\} \mid \Filt_{t-1} \right]  \leq \exp \left\{ \frac{\xi^2 \sigmaw}{2} \right \}.
    \end{equation}
\end{assumption}

\paragraph{The learning problem.} 
In general, the learning problem can be stated as that of minimizing the \emph{excess risk} $||\fhat-\fstar||_{\Elltwo(\X^T,\jointProb; \R^{\dy})}^2$, searching for the estimate $\fhat$ within a chosen \emph{hypothesis space} $\F$ (which we specify later). However, since the underlying probability measures are unknown, the amount of data in $\Dataset$ is finite and the hypothesis space $\F$ might be large, the estimate $\fhat$ is typically computed through \emph{(regularized) empirical risk minimization}:

\begin{align}\label{eq:RERM}
    \fhat     \doteq \argmin_{f\in\F} \frac{1}{T}\sumt   \norm{Y_t - f(X_t)}{2}^2 + \lambda_T  \reg{f}. 
\end{align}
\paragraph{Focus on the physics-informed regularizer.} In the set-up of our interest, the regularizer $\reg{\cdot} \colon \F \to \R_{\geq 0}$ encodes available prior physical information on the ``true" function $\fstar$ --- in other words, $\reg{f}$ penalizes the physical inconsistency of $f$ with respect to the prior on $\fstar$. Such physical information is conveyed by the fact that $\fstar$ is assumed to approximately satisfy a known partial differential equation given by the linear operator \mbox{$\Der \colon \Sobo(\X,\Lebmeas;\R^{\dy}) \to \Elltwo(\X,\Lebmeas;\R^{\dy})$}. 
Such an operator is defined component-wise as
\begin{align}\label{eq:diff_op}
    [\Der(f)]_i \doteq \sum_{|\alpha|\leq s}  \pdec{i,\alpha} \D f_i \; \text{for all } i=1,\cdots,\dy,
\end{align}
where each $\pdec{i,\alpha} \colon \X \to \R$ is a bounded function --- therefore, if we denote by $p$ the collection of all $\pdec{i,\alpha}$, then we have that $\norm{p}{\infty}$ is finite. To describe the regularity of the differential operator in~\eqref{eq:diff_op}, we make use of the following:
\begin{assumption}\label{ass:elliptic}

The differential operator $\Der(f)$ is \emph{elliptic} --- that is, for all $i=1,\cdots,\dy$ and any $\xi \in \R^{\dx}\setminus \lbrace 0 \rbrace$, it holds that $\sum_{|\alpha|=s} p_{i,\alpha}\xi_1^{\alpha_1}\cdots \xi_{\dx}^{\alpha_{\dx}} \neq 0$.
\end{assumption}
Elliptic partial differential equations abound in practical applications, as they can be seen as generalizations of the Laplace and Poisson operators~\citep[Chapter 6]{evans_partial_2010}. The differential operator $\Der$ enters the definition of the regularizer in~\eqref{eq:RERM}, where we have
\begin{equation}\label{eq:pderegu}
    \reg{f} \doteq \norm{\Der(f)}{\Elltwo(\X^T,\jointProb; \R^{\dy})}^2,
\end{equation}
which is a 2-proper regularizer~\citep[Assumption~1.1]{lecue_regularization_2017} --- see~\cref{sec:regularizer_properties} for the definition and further insights.
\paragraph{Hypothesis space.} Let us now focus on the hypothesis space $\F$. We consider it as the ball of radius $\RSob$ in the Sobolev space, i.e.,
\begin{equation}
    \F \doteq \left\{f \in \Sobo(\X^T, \jointProb; \R^{\dy} ) \mid \|f\|_{\Sobo(\X^T, \jointProb; \R^{\dy} )} \leq \rho_f  \right\}.\label{eq:hypothesisSpace}
\end{equation}

    Alternatively, as pointed out in~\cite[Theorem~8.21]{cucker_learning_2007}, one could write the cost in~\eqref{eq:RERM} as \mbox{$\frac{1}{T}\sumt   (Y_t - f(X_t))^2 + \tilde{\lambda}_T \|f\|_{\Sobo(\X^T,\jointProb;\R^{\dy})}^2 + \lambda_T \reg{f}$}, and the minimization would be performed for $f \in \Sobo(\X^T,\jointProb;\R^{\dy})$, thanks to the equivalence yielding $\RSob = \RSob(\tilde{\lambda}_T)$.\\ 
    In this paper, we will rely on the following:
    \begin{assumption}\label{ass:containment}
    The hypothesis space $\F$ contains the unknown function to be estimated, $\fstar$.    
    \end{assumption}
    The case in which such an assumption is violated is dealt with in the literature on  \emph{approximation theory} -- see, e.g.,~\cite{cucker_mathematical_2002,cucker_learning_2007}; however, these discussions are beyond the scope of this paper. 

Additionally, we will also consider the \emph{effective hypothesis space} induced by the regularizer, namely
\begin{equation}\label{eq:effectiveHypothesis}
    \Freg = \left\{ f\in \F \, \big\vert\, \reg{f-\fstar} \leq \rho \right\}.
\end{equation}
For a visualization of these hypothesis spaces, please refer to~\cref{fig:spaces_visualization}.
Finally, we will sometimes simplify notation by considering the shifted hypothesis space $\HH_{\star} \doteq \HH - \fstar = \{ f - \fstar \mid f \in \HH \}$, with $\HH$ being for instance $\F$ or $\Freg$.

\paragraph{Modelling sample dependence in trajectories.} Finally, we assume regularity in the trajectory $\inputseq$ given by the following one-sided exponential inequality~\citep{samson_concentration_2000}:
\begin{assumption}\label{ass:S-persistence}
    The trajectory $\inputseq$ governed by the law $\jointProb$ in the hypothesis class $\F$ is $S$-persistent for some $S \in [1,\infty)$. Specifically, for every $\xi \geq 0$ and every $f\in \F$, we have that 
    \begin{align}
    \mathbb{E}\left[\exp\left(-\xi \sumt   \norm{f(X_t)}{2}^2 \right) \right] \leq \exp\left( -\xi \sumt   \mathbb{E}\left[\norm{f(X_t)}{2}^2 \right] + \frac{\xi^2 S}{2}\sumt   \mathbb{E}\left[ \norm{f(X_t)}{2}^4\right] \right). \notag
\end{align}
\end{assumption}
Typically, $S$ is expressed in terms of the \emph{dependence matrix} of $\inputseq$ (see~\cref{sec:Spersistence} for its definition), and such a parameter attains higher values the more dependent $X_t$ is on its past. In general, $S$ might depend on $T$; however, in this paper we will focus on the case in which $S$ is a constant: as pointed out in~\cite[Section~2]{samson_concentration_2000}, this is a rather weak condition satisfied by a large class of 
Markov chains and of $\phi$-mixing processes --- see~\cref{sec:Spersistence} for more details.

\paragraph{Contribution.} 
Our results demonstrate that the physics-informed regularization in the 
empirical risk minimization problem~\eqref{eq:RERM} can speed-up the learning even in presence of dependent data. In particular, we derive complexity-dependent bounds for the excess risk \mbox{$||\fhat - \fstar||_{\Elltwo(\X^T,\jointProb; \R^{\dy})}^2$}, both in probability and in expectation, for learning under mixing, and prove that the rate of the excess risk matches the one from i.i.d~learning in presence of knowledge alignment. Therefore, our results theoretically quantify the beneficial impact of physical knowledge in learning algorithms, even in the challenging scenario of learning with dependent data.

\section{Error bounds}\label{sec:bounds}
We now present the bounds for the excess risk, both in probability and in expectation. We start in~\cref{sec:ideaBounds} by conveying the underlying ideas that lead to those results, and then provide the result in probability (\cref{sec:bound_prob}) and in expectation (\cref{sec:bound_exp}). These results will be further analyzed in~\cref{sec:convergence_rates} to obtain our main claims on the convergence rate of learning with physics-informed regularization. Before proceeding, we emphasize that the excess risk is a random quantity depending on the distribution of the input sequence $\inputseq$ and of the noises $\{W_t\}_{t=0}^{T-1}$: therefore, often we will simply write $\Prob$ and $\mathbb{E}$ instead of $\Prob_{\jointProb,W}$ and $\mathbb{E}_{\jointProb,W}$ to streamline notation.

\begin{SCfigure}[2.3]
    \centering
    \begin{tikzpicture} 
\begin{scope}[transform shape, scale=0.43,xshift=-80em]

\draw [line width = 2pt] (1.5,3) circle [radius=4.5];
\node at (1.5,3) {$\bullet$};
\node at (6,5.5) {\huge $\F$};
\draw[draw=gray] (1.5,3) -- (5.5,1);
\node at (3.5,2.5) {\color{gray} \huge $\RSob$};

\begin{scope}[transform shape,scale=1.8,yshift=2.5cm,xshift=-0.5cm]
\polygon[xshift=1em]{5};
\end{scope}
\node (fstar) at (-0.25,4.5) {\huge $\bullet$};
\node at (-0.3,4) {\huge $\fstar$};
\node at (-0.2,2.7) {\huge \color{ForestGreen}$\Freg$};

\begin{scope}[transform shape, scale=0.9,xshift=0.15cm,yshift=1.6cm]
\draw[line width = 3pt, color=magenta] (-2,2) rectangle (1,5);
\node at (1.3,5.4) {\huge \color{magenta} $\partial B(r)$};
\end{scope}

 \end{scope}
\end{tikzpicture}
    \caption{\small Visualization of the involved hypothesis spaces. Note that the set $\partial B(r) = \{f \in \Fstar \mid \norm{f}{\Elltwo(\X^T,\jointProb;\R^{\dy})}^2=r^2 \}$  introduced in~\cref{sec:ideaBounds} is represented as a square to highlight the fact that the norm therein involved is different to the one defining $\F$~\eqref{eq:hypothesisSpace}. Similarly, we represented $\Freg$~\eqref{eq:effectiveHypothesis} as a convex set that is not necessarily a ball in the Sobolev norm.}
\label{fig:spaces_visualization}
\end{SCfigure}

\subsection{The idea}\label{sec:ideaBounds}
The main idea consists of identifying an event according to which, with high probability and for some parameter $\theta$,
\begin{equation}\label{eq:highProbEvent}
    \norm{f-\fstar}{\Elltwo(\X^T,\jointProb;\R^{\dy})}^2 \leq  \frac{\theta}{T}\sumt \norm{f(X_t)-\fstar(X_t)}{2}^2. 
\end{equation}
This kind of one-sided concentration inequality was studied for the i.i.d.~setting in~\cite{mendelson_learning_2014}, to which we defer for a thorough discussion. The proof that~\eqref{eq:highProbEvent} holds with high probability in the i.i.d.~case is given in~\cite{mendelson_learning_2014} thanks to the \emph{small-ball condition}, which is a rather weak assumption from a statistical point of view: see the discussion after Assumption 1.2 in~\cite{lecue_regularization_2017}, together with its interpretation in terms of identifiability. In our data-dependent setting, the small-ball condition will be imposed by $(C,\alpha)$-hypercontractivity with $\alpha=2$ (see~\cref{sec:hypercontractivity}), and we show that it holds in the set \mbox{$\partial B(r) \doteq \{ f \in \F \,\mid\, \norm{f-\fstar}{\Elltwo(\X^T, \jointProb; \R^{\dy})}^2 = r^2 \}$} for any fixed $r>0$. Therefore, the probability level of the event in~\eqref{eq:highProbEvent} will be controlled by the radius $r$. We present a visualization of $B(r)$, together with all the hypothesis spaces, in~\cref{fig:spaces_visualization}.\\ 
Crucially, inequality~\eqref{eq:highProbEvent} allows us to shift the analysis of the excess risk to that of its empirical version. The next step consists then in upper-bounding the latter (i.e., the right-hand side in~\eqref{eq:highProbEvent}) by the \emph{martingale offset complexity} of the effective hypothesis space $\MOC{\Fregstar}$. In particular, for every $f \in \Fregstar$ (i.e., $f = f^{\prime} - \fstar$ for some $f^{\prime} \in \Freg$), one has that
\begin{equation}\label{eq:MOC}
    \frac{1}{T}\sumt \norm{f(X_t)}{2}^2 \leq \sup_{f \in \Fregstar} \frac{1}{T} \sumt 4\innerprod{W_t}{f(X_t)}{2} - \norm{f(X_t)}{2}^2 \doteq \MOC{\Fregstar}.
\end{equation}
We defer to~\cref{lemma:liang} for a derivation of such an inequality. Along the lines of~\cite{liang_learning_2015}, we would like to stress that the term $\norm{f(X_t)}{2}^2$ in the right-hand side introduces a self-normalizing effect that compensates the fluctuations of the term $\innerprod{W_t}{f(X_t)}{2}$. This fact is key in making the martingale offset complexity \emph{not depend on mixing}, as discussed in~\cref{sec:convergence_rates}. One can provide bounds in probability and in expectation for the martingale offset complexity (see~\cref{sec:MOC}), and these will play a key role in the excess risk bounds that we present in the remainder of the section and further discuss in~\cref{sec:convergence_rates}. \\
Before presenting the aforementioned bounds, let us formally introduce the \emph{lower isometry event}, which is the complement of~\eqref{eq:highProbEvent}, whose probability we bound in~\cref{sec:lower_isometry_bound}:
\begin{align}
    \Alo_r \doteq \sup_{f \in \Fregstar \setminus B(r)} \left\{\frac{1}{T}\sumt   \norm{f(X_t)}{2}^2 - \frac{1}{\theta}\norm{f}{\Elltwo(\X^T,\jointProb; \R^{\dy})}^2  \leq 0\right\}.\notag
    \end{align}

\subsection{Result in probability}\label{sec:bound_prob}

\begin{theorem}\label{thm:main_probab}
    Let~\cref{ass:densities,ass:SobolevOrder,ass:noise,ass:containment,ass:S-persistence} hold. Consider a parameter $\theta > 8$, and let $\fhat$ be the solution of the estimation problem~\eqref{eq:RERM} with $\lambda_T > 0$, and let the radius $\rho$ defining the effective hypothesis class $\Freg$ be such that $\rho \geq 10 \reg{\fstar}$. Then, on the event
    \begin{equation}
       \Alo_r^{\complement} \cap \left\{ \lambda_T \geq \frac{40}{3\rho} \MOC{\Freg}\right\} \notag
    \end{equation}
we have that
\begin{equation}
    \norm{\fhat - \fstar}{\Elltwo(\X^T,\jointProb;\R^{\dy})}^2 \leq \theta \MOC{\Freg} + 2\lambda_T \reg{\fstar} + r^2. \label{eq:ER_bound_prob_general}
\end{equation}    
\end{theorem}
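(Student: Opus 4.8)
The plan is to run the classical ``basic inequality $+$ localization $+$ one‑sided isometry'' argument for regularized ERM, coupled here to the martingale‑offset machinery introduced above. Write $\Delta \doteq \fhat - \fstar$, abbreviate $A_f \doteq \frac1T\sumt\norm{f(X_t)}{2}^2$ and $B_f \doteq \frac1T\sumt\innerprod{W_t}{f(X_t)}{2}$, and let $L(f)$ denote the objective in \eqref{eq:RERM}; thus the excess risk is $\norm{\Delta}{\Elltwo}^2$ and, by definition, $4B_f - A_f \le \MOC{\Freg}$ for every $f \in \Fregstar$ (with $\MOC{\Freg}$ the offset complexity of the shifted class $\Fregstar$).

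First I would write down the basic inequality. Since $\fstar \in \F$ by \cref{ass:containment}, optimality of $\fhat$ gives $L(\fhat) \le L(\fstar)$; substituting $Y_t = \fstar(X_t) + W_t$, expanding $\norm{W_t - \Delta(X_t)}{2}^2$, and cancelling $\frac1T\sumt\norm{W_t}{2}^2$ yields
\[
A_\Delta - 2B_\Delta + \lambda_T\reg{\fhat} \;\le\; \lambda_T\reg{\fstar}.
\]
Granting for the moment that $\Delta \in \Fregstar$, I would combine this with $4B_\Delta - A_\Delta \le \MOC{\Freg}$ to eliminate $B_\Delta$ (dropping the nonnegative term $\lambda_T\reg{\fhat}$), which gives the purely empirical control $A_\Delta \le \MOC{\Freg} + 2\lambda_T\reg{\fstar}$ --- this is the content of \cref{lemma:liang}, and it is the self‑normalizing term $-A_f$ inside $\MOC{\Freg}$ that makes this step insensitive to the mixing parameter $S$.

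The hard part will be showing that $\fhat$ actually lands in the effective class, i.e.\ that $\reg{\Delta} \le \rho$, which is where the hypotheses $\rho \ge 10\reg{\fstar}$ and $\lambda_T \ge \tfrac{40}{3\rho}\MOC{\Freg}$ are used. Here I would exploit convexity: $L$ is convex (quadratic data term plus $\reg{\cdot} = \norm{\Der(\cdot)}{\Elltwo}^2$, a squared norm composed with the linear operator $\Der$) and $\F$ is convex, so $L$ is non‑increasing along the segment from $\fstar$ to $\fhat$, whence $L(g_\eta) \le L(\fstar)$ for $g_\eta \doteq \fstar + \eta\Delta$, $\eta \in [0,1]$. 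Assuming $\reg{\Delta} > \rho$, the choice $\eta = \sqrt{\rho/\reg{\Delta}} \in (0,1)$ puts $g_\eta$ on the boundary: $\reg{g_\eta - \fstar} = \eta^2\reg{\Delta} = \rho$, so $g_\eta - \fstar \in \Fregstar$. Rerunning the basic‑inequality computation at $g_\eta$ and discarding the nonnegative empirical term gives $\reg{g_\eta} \le \reg{\fstar} + \MOC{\Freg}/(2\lambda_T)$; meanwhile the $2$‑proper property of $\reg{\cdot}$ --- the reverse triangle inequality for $\reg{\cdot}^{1/2}$ --- gives $\reg{g_\eta}^{1/2} \ge \sqrt\rho - \reg{\fstar}^{1/2}$, so $\rho \ge 10\reg{\fstar}$ forces $\reg{g_\eta} \ge \rho(1-1/\sqrt{10})^2$. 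Combining these two facts with $\reg{\fstar} \le \rho/10$ contradicts $\lambda_T \ge \tfrac{40}{3\rho}\MOC{\Freg}$ (the constant $40/3$ is larger than what is strictly needed, leaving slack); hence $\reg{\Delta} \le \rho$ and $\fhat \in \Freg$, which validates the previous step.

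Finally I would pass from the empirical to the population excess risk. If $\norm{\Delta}{\Elltwo}^2 \le r^2$ the bound \eqref{eq:ER_bound_prob_general} is immediate. Otherwise $\Delta \in \Fregstar \setminus B(r)$, so on $\Alo_r^{\complement}$ the one‑sided isometry \eqref{eq:highProbEvent} applies to $\Delta$, giving $\norm{\Delta}{\Elltwo}^2 \le \theta A_\Delta$; feeding in $A_\Delta \le \MOC{\Freg} + 2\lambda_T\reg{\fstar}$ from the first step and taking the maximum with the $r^2$ from the complementary case yields \eqref{eq:ER_bound_prob_general}. What remains is numerical bookkeeping: checking that $\theta > 8$ is precisely what makes the $4$‑offset/self‑normalization step close with enough slack for the localization argument, and that the constants $10$ and $40/3$ are consistent with the factors produced by the reverse triangle inequality.
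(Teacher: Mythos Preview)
Your proposal is correct and follows the same three-case skeleton as the paper: (i) $\fhat \in B(r)$, (ii) $\fhat - \fstar \in \Fregstar \setminus B(r)$ with the one-sided isometry active, and (iii) $\fhat \notin \Freg$, with the argument that (iii) cannot occur under the stated bound on $\lambda_T$. The difference is in how (iii) is ruled out. The paper writes $\fhat = \fstar + R(h-\fstar)$ with $R>1$ and $h\in\partial\Freg$, then invokes the Lecu\'e--Mendelson inequality (\cref{lemma:lecue_inequality}) to lower-bound $\reg{\fhat}-\reg{\fstar}$ by $\tfrac{R}{8}(\reg{h}-\reg{\fstar})$ and factor $R$ out of the basic inequality at $\fhat$. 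You instead use convexity of the full objective $L$ to slide to the boundary point $g_\eta$, rerun the basic inequality \emph{there}, and lower-bound $\reg{g_\eta}$ via the reverse triangle inequality for $\norm{\Der(\cdot)}{\Elltwo}$. Your route is more elementary --- it exploits the concrete squared-norm structure of $\reg{\cdot}$ rather than the abstract $2$-proper lemma --- and the arithmetic closes with room to spare (you need $(1-1/\sqrt{10})^2\rho \approx 0.47\rho$ to exceed $\rho/10 + 3\rho/80 \approx 0.14\rho$). The paper's route is the one that would generalize to an arbitrary $\eta$-proper regularizer. One small correction: the hypothesis $\theta>8$ is not used in the self-normalization step you point to; it is a condition of the lower-isometry estimate (\cref{thm:new_thm312}) that controls $\jointProb(\Alo_r)$, and on $\Alo_r^{\complement}$ the bound $\norm{\Delta}{\Elltwo}^2\le\theta A_\Delta$ holds by the definition of the event.
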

\vspace{-0.7em}
\begin{proof}
    (Sketch). The proof follows~\cite{lecue_regularization_2017,ziemann_learning_2022} and it consists in characterizing the scenarios that lead to the event $\Alo_r^{\complement}$, showing that the case for which $\fhat \in \F \setminus \Freg$ cannot occur for $\lambda_T$ sufficiently large. The detailed proof is given in~\cref{sec:proofboundprob}.
\end{proof}

\subsection{Result in expectation}\label{sec:bound_exp}

\begin{theorem}\label{thm:main_exp}
    Let~\cref{ass:densities,ass:SobolevOrder,ass:noise,ass:containment,ass:S-persistence} hold. Consider a parameter $\theta > 8$, a radius $r>0$, and let $\Fr$ be a $r/\sqrt{\theta}$-cover in the infinity norm of $\partial B(r)$ that is $(C(r),2)$-hypercontractive.
    Consider the regularized empirical risk minimization problem in~\eqref{eq:RERM} with regularization parameter satisfying 
    $\lambda_T \geq \frac{40}{3\rho}\E{\MOC{\Freg}}{W}$, where 
    $\rho \geq 10 \reg{\fstar}$. Then, letting $B$ be the positive constant such that $\norm{f}{\Ellinf(\X^T;\R^{\dy})} \leq B$ for all $f \in \F$, the estimate $\fhat$ satisfies
    \begin{align}
        \E{\norm{\fhat - \fstar}{\Elltwo(\X^T, \jointProb; \R^{\dy})}^2}{} &\leq 4B^2\N{\infty}{\partial B(r)}{\frac{r}{\sqrt{\theta}}}\exp\left\{ - \frac{8T}{\theta^2 C_r S}\right\} \notag \\ &+ \theta \E{\MOC{\Freg}}{} + \lambda_T \reg{\fstar} + r^2. \notag 
    \end{align}
\end{theorem}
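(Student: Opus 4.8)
The plan is to upgrade the high-probability bound of \cref{thm:main_probab} to a bound in expectation by splitting on the lower isometry event and treating the ``bad'' branch with a crude deterministic estimate. Since \cref{ass:SobolevOrder} gives $s \geq 2\dx$, the Sobolev imbedding theorem (\cref{sec:sobolevProperties}) shows every $f \in \F$ is bounded, which is exactly the hypothesis $\norm{f}{\Ellinf(\X^T;\R^{\dy})} \leq B$; together with \cref{ass:containment} this yields the worst-case bound $\norm{\fhat-\fstar}{\Elltwo(\X^T,\jointProb;\R^{\dy})}^2 \leq \norm{\fhat-\fstar}{\Ellinf(\X^T;\R^{\dy})}^2 \leq 4B^2$. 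I would therefore write $\E{\norm{\fhat-\fstar}{\Elltwo(\X^T,\jointProb;\R^{\dy})}^2}{} = \E{\norm{\fhat-\fstar}{\Elltwo(\X^T,\jointProb;\R^{\dy})}^2\,\mathbf{1}_{\Alo_r}}{} + \E{\norm{\fhat-\fstar}{\Elltwo(\X^T,\jointProb;\R^{\dy})}^2\,\mathbf{1}_{\Alo_r^{\complement}}}{}$ and bound the first summand by $4B^2\,\Prob(\Alo_r)$.

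The first main step is to bound $\Prob(\Alo_r)$. Because the ratio $\big(\tfrac1T\sumt\norm{f(X_t)}{2}^2\big)/\norm{f}{\Elltwo(\X^T,\jointProb;\R^{\dy})}^2$ is scale-invariant and $\Fregstar$ is star-shaped about the origin, failure of the lower isometry on $\Fregstar\setminus B(r)$ reduces to its failure on the sphere $\partial B(r)$. Discretising $\partial B(r)$ by the $r/\sqrt\theta$-net $\Fr$ in the $\Ellinf$-norm, applying the one-sided exponential inequality of \cref{ass:S-persistence} together with $(C(r),2)$-hypercontractivity (\cref{sec:hypercontractivity}) at each net point, passing the net resolution through the uniform bound $B$, and union-bounding over $\Fr$ gives $\Prob(\Alo_r) \leq \N{\infty}{\partial B(r)}{r/\sqrt\theta}\exp\{-8T/(\theta^2 C_r S)\}$ (this is the content of \cref{sec:lower_isometry_bound}); multiplying by $4B^2$ produces the first term of the claimed bound.

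The second main step reprises the argument of \cref{thm:main_probab} on $\Alo_r^{\complement}$. The basic inequality coming from optimality of $\fhat$ in~\eqref{eq:RERM}, combined with $Y_t=\fstar(X_t)+W_t$ and the self-normalisation step of \cref{lemma:liang}, controls $\tfrac1T\sumt\norm{(\fhat-\fstar)(X_t)}{2}^2$ by $\MOC{\Freg}$ plus a multiple of $\lambda_T\reg{\fstar}$, provided $\fhat\in\Freg$; invoking the lower isometry valid on $\Alo_r^{\complement}$ (when $\norm{\fhat-\fstar}{\Elltwo(\X^T,\jointProb;\R^{\dy})}^2>r^2$, otherwise the $r^2$ term already dominates) converts this into a bound on $\norm{\fhat-\fstar}{\Elltwo(\X^T,\jointProb;\R^{\dy})}^2$, and taking $\E{\cdot}{}$ replaces $\MOC{\Freg}$ by $\E{\MOC{\Freg}}{}$ to yield the good-event contribution $\theta\,\E{\MOC{\Freg}}{} + \lambda_T\reg{\fstar} + r^2$. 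The step I expect to be the main obstacle is ruling out $\fhat\in\F\setminus\Freg$: in \cref{thm:main_probab} this is done on the (random) event $\{\lambda_T \geq \tfrac{40}{3\rho}\MOC{\Freg}\}$, whereas here the hypothesis is only $\lambda_T \geq \tfrac{40}{3\rho}\E{\MOC{\Freg}}{W}$, so one must condition on $\inputseq$ (under which $\Alo_r$ is a fixed event), re-run the $2$-proper-regularizer contradiction argument of \cref{sec:regularizer_properties} in conditional expectation over $W$ given $\inputseq$, and absorb the residual contribution of $\{\fhat\in\F\setminus\Freg\}$ using the $4B^2$ envelope. Assembling this with the bad-event term $4B^2\,\N{\infty}{\partial B(r)}{r/\sqrt\theta}\exp\{-8T/(\theta^2 C_r S)\}$ gives the stated inequality.
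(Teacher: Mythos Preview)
Your plan is correct and matches the paper's proof essentially step for step: decompose the expected excess risk on $\Alo_r$ versus $\Alo_r^{\complement}$, bound the bad branch by $4B^2\,\Prob(\Alo_r)$ via $B$-boundedness and the lower-isometry estimate (\cref{thm:new_thm312}), and on the complement re-run the three-case analysis from the proof of \cref{thm:main_probab}. The obstacle you flag --- ruling out $\fhat\in\F\setminus\Freg$ when only $\lambda_T\geq\tfrac{40}{3\rho}\E{\MOC{\Freg}}{W}$ is assumed rather than the pathwise event --- is exactly where the paper itself is terse (it simply asserts that the Case~(iii) argument ``carries out also when considering the expected value''), so your instinct to treat this step with extra care is well placed.
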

\vspace{-0.7em}
\begin{proof}(Sketch). The idea consists in decomposing the expected value according to the lower-isometry event $\Alo_r$ and its complement: informally, we would write \begin{equation} \E{\text{excess risk}}{}  = \allowbreak \E{\text{excess risk} \cap \Alo_r}{} + \E{\text{excess risk} \cap \Alo_r^{\complement}}{}.\notag\end{equation} The first term would then be bounded thanks to $S$-persistence, $(C,2)$-hypercontractivity and $B$-boundedness, which allow us to quantify the probability of the lower-isometry event $\Alo_r$ (see~\cref{sec:lower_isometry_bound}). The bound for the second term is derived along the lines of the proof of~\cref{thm:main_probab}. The full details are presented in~\cref{sec:proofboundexp}.
\end{proof}
Overall, our analysis deploys the concepts of $S$-persistence and $(C,\alpha)$-hypercontractivity to adapt the small-ball argument of~\cite{mendelson_learning_2014} to the data-dependent case. Thanks to this construction, we can identify the lower-isometry event, which enables the derivation of our bounds depending on the martingale offset complexity, the ground-truth regularizer $\reg{\fstar}$ and the critical radius~$r$. In the next section, we will characterize the behavior of these terms to obtain the desired convergence rates for physics-informed learning.

\section{Convergence rates}\label{sec:convergence_rates}

We finally provide our main results in terms of convergence rates for the excess risk, whose detailed proofs are deferred to~\cref{sec:proofs_rates}. Throughout this section, we will denote by $d = \nicefrac{2s}{2s+\dx}$ the Sobolev minimax rate, and $d' = \nicefrac{2\dx}{2s+\dx}$.
\newpage

\subsection{Bound in probability}
\begin{theorem}\label{thm:main_rate_probability}
    Let~\cref{ass:densities,ass:SobolevOrder,ass:elliptic,ass:containment,ass:noise,ass:S-persistence} hold, and let $\fhat$ be the solution of~\eqref{eq:RERM}. Fix a probability of failure $\delta \in (0,1)$, and assume the regularization parameter $\lambda_T$ satisfies
    \begin{equation}
        \lambda_T \geq \frac{4}{3T^d} \left[ \frac{C_I \sigma_W^{1+d}}{\reg{\fstar}^{1-\frac{d'}{4}}} + \frac{(C_{II}+C_{IV})\sigma_W^{2d}}{\reg{\fstar}^{1-\frac{d'}{2}}} + \frac{C_{III}\sigmaw \log(1/\delta)}{\reg{\fstar}} \right],\notag
    \end{equation}
    where 
    $C_I$, $C_{II}$, $C_{III}$ and $C_{IV}$ are constants depending only on $s,\dx,\dy$ and $\sqrt{\log(1/\delta)}$.
    If the number of samples $T$ satisfies
    \begin{equation}
T \geq \frac{\theta^2 C_h S}{8}\left[C_M\left(\frac{1}{r}\right)^{\frac{6\dx}{2s-\dx}}\log\left(1 + C_L\left(\frac{1}{r}\right)^{\frac{4s-\dx}{2s-\dx}} \right) + \left(\frac{1}{r}\right)^{\frac{4\dx}{2s-\dx}}\log(1/\delta) \right] \notag        
    \end{equation}
    for $r^2 = \lambda_T \reg{\fstar} + \sigmaw/T$ and $C_h,C_M,C_L$ being uniform constants depending on $\rho_f,\kappau,\theta,s,\dx$ and $\X$, then, with probability at least $1-6\delta$, the excess risk enjoys the following convergence rate:
    \begin{equation}
        \norm{\fhat - \fstar}{\Elltwo(\X^T,\jointProb;\R^{\dy})}^2 \leq \Cslow \frac{\max\left\lbrace \reg{\fstar}^{d'/4}, \reg{\fstar}^{d'/2}\right\rbrace}{T^d} + \Cfast \frac{\sigmaw \log(1/\delta)}{T}, \notag
    \end{equation}
    where $\Cslow$ is a constant that depends on $s, \dx, \dy, \sigmaw, \sqrt{\log(1/\delta)}$, and $\Cfast$ is a constant that depends on $s,\dx,\dy$.
\end{theorem}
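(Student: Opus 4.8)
The plan is to instantiate the general probabilistic excess risk bound from \cref{thm:main_probab} and then show that each of the three terms on its right-hand side, namely $\theta\MOC{\Freg}$, $2\lambda_T\reg{\fstar}$, and $r^2$, can be made to scale like the claimed combination of the slow rate $T^{-d}$ and the fast rate $T^{-1}$ once the quantitative choices in the hypothesis are made. First I would recall that \cref{thm:main_probab} is valid on the intersection of $\Alo_r^{\complement}$ with $\{\lambda_T \ge \frac{40}{3\rho}\MOC{\Freg}\}$; so the argument has two halves. The first half is to show that, under the stated lower bound on $T$, the lower isometry event fails only with small probability — this is exactly the content of the bound in \cref{sec:lower_isometry_bound}, and the sample-size condition in the statement (with its $(1/r)^{6\dx/(2s-\dx)}$ and $(1/r)^{4\dx/(2s-\dx)}$ exponents) is precisely what is needed to drive the covering-number-times-exponential term below $\delta$; I would quote the covering number estimates for balls in the weighted Sobolev space $\Sobo$ (available because, by the Sobolev imbedding under \cref{ass:SobolevOrder}, $\F$ embeds compactly into $\Ellinf$) to turn $\log\N{\infty}{\partial B(r)}{r/\sqrt{\theta}}$ into a power of $1/r$. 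The second half is to show $\lambda_T \ge \frac{40}{3\rho}\MOC{\Freg}$ holds with high probability, which reduces to a high-probability upper bound on the martingale offset complexity $\MOC{\Freg}$ from \cref{sec:MOC}.

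The core quantitative step is bounding $\MOC{\Freg}$. Here I would use the self-normalizing structure highlighted after \eqref{eq:MOC}: the offset term $-\norm{f(X_t)}{2}^2$ lets one run a chaining / Dudley-type argument over the effective class $\Fregstar$ in which the sub-Gaussian noise increments are controlled without any mixing-induced deflation, since the cross term $\innerprod{W_t}{f(X_t)}{2}$ is a martingale-difference sum that concentrates at the i.i.d.\ rate conditionally on the trajectory. The class $\Fregstar$ is contained in a Sobolev ball \emph{intersected} with the small-$\reg{\cdot}$-slab $\{\reg{f-\fstar}\le\rho\}$; because $\reg{\cdot}=\norm{\Der(\cdot)}{\Elltwo}^2$ with $\Der$ elliptic (\cref{ass:elliptic}), elliptic regularity upgrades control of $\norm{\Der(f-\fstar)}{\Elltwo}$ into control of a full higher-order Sobolev norm of $f-\fstar$, which shrinks the metric entropy of the effective class and makes the offset complexity scale like $\rho^{\,d'/2}/T^{d}$ up to $\sigma_W$-dependent constants and a $\sigma_W\log(1/\delta)/T$ fluctuation term. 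Tracking the powers of $\reg{\fstar}$ through $\rho \simeq \reg{\fstar}$ and through the choice $r^2=\lambda_T\reg{\fstar}+\sigma_W/T$ is what produces the $\max\{\reg{\fstar}^{d'/4},\reg{\fstar}^{d'/2}\}$ factor and the constants $C_I,\dots,C_{IV}$ in the lower bound on $\lambda_T$; essentially, $\lambda_T$ is chosen to be the smallest value for which $\lambda_T\reg{\fstar}$ dominates $\MOC{\Freg}$, and then $\theta\MOC{\Freg}$, $2\lambda_T\reg{\fstar}$ and $r^2$ are all of the same order $T^{-d}\max\{\reg{\fstar}^{d'/4},\reg{\fstar}^{d'/2}\} + \sigma_W\log(1/\delta)/T$.

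Finally I would assemble the pieces: union-bounding over the (constantly many) high-probability events — failure of the lower isometry event, the MOC tail bound, and the noise tail bounds entering the MOC estimate — gives the overall failure probability $6\delta$, and substituting the chosen $\lambda_T$ and $r^2$ into \eqref{eq:ER_bound_prob_general} yields the stated rate with $\Cslow$ absorbing the $s,\dx,\dy,\sigma_W,\sqrt{\log(1/\delta)}$ dependence and $\Cfast$ the $s,\dx,\dy$ dependence. The main obstacle I anticipate is the entropy computation for the effective hypothesis space $\Fregstar$: one must carefully combine the Sobolev-ball covering bound with the elliptic-regularity gain from the regularizer slab in the \emph{weighted, trajectory} $\Elltwo(\X^T,\jointProb;\R^{\dy})$ norm (using \cref{ass:densities} to compare it to the Lebesgue $\Elltwo$ norm), and ensure the resulting exponents of $1/r$ match those appearing in the sample-size condition. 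A secondary delicate point is verifying the $(C(r),2)$-hypercontractivity / small-ball input on $\partial B(r)$ in a way whose constant $C_r$ has the right dependence so that the burn-in time is genuinely $\mathrm{poly}(1/r)$ and does not blow up; this is where $S$-persistence (\cref{ass:S-persistence}) enters and where the "no sample-size deflation" claim must be made precise.
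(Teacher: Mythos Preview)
Your proposal is correct and follows essentially the same route as the paper: start from \cref{thm:main_probab}, bound $\MOC{\Freg}$ in high probability via chaining with the metric entropy of $\Freg$ computed through elliptic regularity (\cref{prop:coveringFreg}), fix $\rho=10\reg{\fstar}$ and $\lambda_T=\tfrac{40}{3\rho}\MOC{\Freg}$, and control $\jointProb(\Alo_r)$ via \cref{cor:thm312} to obtain the burn-in condition. The two ``obstacles'' you flag are precisely the places the paper spends effort, and your description of how they are resolved (elliptic regularity for the $\Freg$ entropy, and $(C(r),2)$-hypercontractivity of the cover of $\partial B(r)$ combined with $S$-persistence for the lower-isometry bound) matches the paper's argument.
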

\vspace{-0.7em}
\begin{proof}(Sketch).
    The result builds upon the bound in probability on the excess risk of~\cref{thm:main_probab}, and its crux consists in conveniently setting the values for the critical radius $r$, the radius $\rho$ of the effective hypothesis class $\Freg$, and the regularization parameter $\lambda_T$. This allows us to rewrite the excess risk bound~\eqref{eq:ER_bound_prob_general} in terms of the martingale offset complexity, which can in turn be bounded according to~\cite[Theorem~4.2.2]{ziemann_statistical_2022}  (see~\cref{thm:MOC_prob} for its detailed proof). Finally, the characterization of the burn-in follows from the probability of the lower-isometry event. The full proof is reported in~\cref{sec:proof_rate_prob}, where the value of all of the involved constants is given.
\end{proof}

\subsection{Bound in expectation}
\begin{theorem}\label{thm:main_rate_exp}  Let~\cref{ass:densities,ass:SobolevOrder,ass:elliptic,ass:containment,ass:noise,ass:S-persistence} hold, and let $\fhat$ be the solution of~\eqref{eq:RERM} with regularization parameter $\lambda_T$ satisfying
    \begin{equation}
        \lambda_T \geq \frac{4(C_I + C_{II})(\sigmaw)^d}{3 T \reg{\fstar}^{1 - \frac{d'}{2}}}, \notag
    \end{equation}
 where 
 $C_I$ and $C_{II}$ are constants depending only on $s,\dx$ and ~$\dy$.
    If $T$ satisfies
    \begin{equation}
T \geq \frac{\theta^2 C_h S}{8}\left(\frac{1}{r}\right)^{\frac{4\dx}{2s-\dx}} \left[ C_M \left(\frac{1}{r} \right)^{\frac{2\dx}{2s-\dx}}\log\left(4B^2\left(1 + C_L\left(\frac{1}{r}\right)^{\frac{4s-\dx}{2s-\dx}}\right) \right) + \log\left(\frac{\sigmaw}{T}\right)\right],\notag    
    \end{equation}
    where $B$ is such that $\norm{f}{\Ellinf(\X^T;\R^{\dy})}\leq B$ for all $f\in\F$ and $C_M,C_h,C_L$ are constants depending on $\rho_f,\kappau,\theta,s,\dx$ and $\X$, then the excess risk enjoys the following convergence rate:
    \begin{equation}
        \E{\norm{\fhat - \fstar}{\Elltwo(\X^T,\jointProb;\R^{\dy})}^2}{} \leq \Cslow \frac{ \reg{\fstar}^{d'/2}}{T^d} + \Cfast \frac{\sigmaw \log(1/\delta)}{T}, \notag
    \end{equation}
    where $\Cslow$ and $\Cfast$ are constants that depend on $s, \dx, \dy$ and~$\sigmaw$.
\end{theorem}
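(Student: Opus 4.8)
The plan is to derive \cref{thm:main_rate_exp} from the expectation bound of \cref{thm:main_exp} by a judicious choice of the free parameters $r$, $\rho$, and $\lambda_T$, exactly mirroring the probabilistic route taken for \cref{thm:main_rate_probability} but with the high-probability martingale-offset bound replaced by its in-expectation counterpart. Concretely, I would first recall the expectation bound
\begin{equation}
\E{\norm{\fhat - \fstar}{\Elltwo(\X^T,\jointProb;\R^{\dy})}^2}{} \leq 4B^2\N{\infty}{\partial B(r)}{\tfrac{r}{\sqrt{\theta}}}\exp\left\{-\tfrac{8T}{\theta^2 C_r S}\right\} + \theta\, \E{\MOC{\Freg}}{} + \lambda_T \reg{\fstar} + r^2,\notag
\end{equation}
and then control the three surviving terms. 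For the covering-number/exponential term I would invoke the metric-entropy estimate for balls in the weighted Sobolev space $\Sobo$ developed in the appendix (the one used already in \cref{thm:main_rate_probability}): $\log\N{\infty}{\partial B(r)}{r/\sqrt{\theta}} \lesssim (1/r)^{\dx/(s-\dx/2)}$ up to logarithmic factors, together with the bound on the hypercontractivity constant $C_r$ on the shell $\partial B(r)$. Requiring the stated lower bound on $T$ then forces this first term to be dominated by $\sigmaw/T$, which is precisely the role of the burn-in condition; this is a routine but careful bookkeeping step.

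The substantive step is bounding $\E{\MOC{\Freg}}{}$. Here I would appeal to the in-expectation martingale-offset-complexity estimate (the expectation analogue of \cite[Theorem~4.2.2]{ziemann_statistical_2022}, i.e. \cref{thm:MOC_prob} and its companion in \cref{sec:MOC}), which bounds $\E{\MOC{\Freg}}{}$ by an integral of the metric entropy of the effective class $\Fregstar$ against the noise scale $\sigmaw$. The crucial point — as emphasized in \cref{sec:ideaBounds} — is that the self-normalizing term $-\norm{f(X_t)}{2}^2$ makes this quantity \emph{mixing-free}, so no factor of $S$ or of $T$-deflation enters. I would then use the defining constraint of $\Freg$, namely $\reg{f-\fstar}\le\rho$, i.e. ellipticity of $\Der$ (\cref{ass:elliptic}) through an interpolation/elliptic-regularity inequality of the form $\norm{g}{\Sobo}\lesssim \norm{\Der g}{\Elltwo} + \norm{g}{\Elltwo}$, to convert the Sobolev-ball entropy into an entropy bound for $\Fregstar$ that scales with $\rho$. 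Carrying the integral through yields $\E{\MOC{\Freg}}{} \lesssim \sigmaw^d\, \rho^{d'/2}\, T^{-d}$ plus a lower-order $\sigmaw/T$ term (the fast part coming from the finite-dimensional/parametric portion of the offset complexity).

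Next I would set the radius of the effective class to its natural value $\rho \asymp \reg{\fstar}$ (consistent with the hypothesis $\rho\ge 10\reg{\fstar}$), which is what makes the physics-informed prior enter the rate: this turns the offset-complexity bound into $\theta\,\E{\MOC{\Freg}}{} \lesssim \sigmaw^d \reg{\fstar}^{d'/2} T^{-d}$. I would then choose $\lambda_T$ at the threshold $\lambda_T \asymp \frac{(\sigmaw)^d}{T \reg{\fstar}^{1-d'/2}}$ dictated by the admissibility condition $\lambda_T \ge \frac{40}{3\rho}\E{\MOC{\Freg}}{}$; substituting gives $\lambda_T \reg{\fstar} \asymp \sigmaw^d \reg{\fstar}^{d'/2} T^{-1}$, which is absorbed into the $T^{-d}$ slow term since $d\le 1$. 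Finally, the choice $r^2 = \lambda_T\reg{\fstar} + \sigmaw/T$ (as in the probabilistic theorem) makes the residual $r^2$ contribute exactly a $\sigmaw/T$-type fast term and a slow term of the same order as the others. Collecting the three bounds and folding all $s,\dx,\dy,\sigmaw$-dependent prefactors into $\Cslow$ and $\Cfast$ produces the claimed $\Cslow \reg{\fstar}^{d'/2} T^{-d} + \Cfast \sigmaw T^{-1}$ rate; the $\log(1/\delta)$ written in the statement is the vestige of the shared constant conventions with \cref{thm:main_rate_probability} and can be read as an absolute constant here.

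The main obstacle I anticipate is the entropy-to-offset-complexity computation for the effective class $\Fregstar$: one must simultaneously (i) have a sharp enough metric-entropy bound for Sobolev balls in the weighted $\Ellinf$/$\Elltwo$ geometry on the Lipschitz domain $\X$ (relying on \cref{ass:SobolevOrder}, $s\ge 2\dx$, and the imbedding results of \cref{sec:sobolevProperties}), (ii) correctly track how the elliptic constraint $\reg{f-\fstar}\le\rho$ rescales that entropy by a power of $\rho$ via \cref{ass:elliptic}, and (iii) verify that the offset-complexity integral splits cleanly into the $T^{-d}$ ``Dudley'' piece and the $T^{-1}$ ``parametric'' piece with the right powers of $\sigmaw$ — all while ensuring the mixing parameter $S$ genuinely does not appear, which is the whole point of using the self-normalized offset complexity rather than a naive empirical-process bound. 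The burn-in bookkeeping (matching exponents like $\tfrac{4\dx}{2s-\dx}$ and $\tfrac{2\dx}{2s-\dx}$) is tedious but mechanical once the entropy rates are in hand.
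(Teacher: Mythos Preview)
Your proposal is correct and follows essentially the same route as the paper's proof: start from \cref{thm:main_exp}, bound $\E{\MOC{\Freg}}{}$ via the in-expectation chaining bound (\cref{thm:MOC_exp}) combined with the entropy estimate for $\Freg$ from \cref{prop:coveringFreg} (which is where ellipticity enters), balance the last two terms to pick $\gamma$, set $\rho = 10\reg{\fstar}$ and $\lambda_T$ at its threshold, choose $r^2 \asymp \lambda_T\reg{\fstar} + \sigmaw/T$, and then force the lower-isometry term below $\sigmaw/T$ via the burn-in condition coming from \cref{cor:thm312}. One small correction to your bookkeeping: in expectation the offset-complexity bound is purely $(\sigmaw/T)^d(\sqrt{\rho})^{d'}$ with \emph{no} separate fast $\sigmaw/T$ piece (contrast with the high-probability bound, which does carry a $C_{III}\sigmaw\log(1/\delta)/T$ term); the fast term in the final expectation bound arises entirely from your choice of $r^2$ and from the burn-in forcing the first term of \cref{thm:main_exp} to be at most $\sigmaw/T$.
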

\vspace{-0.7em}
\begin{proof}(Sketch). Similarly to~\cref{thm:main_rate_probability}, one starts from~\cref{thm:main_exp} to set the values for $\rho$ and $\lambda_T$, and then deploys the bound on the expected martingale offset complexity of~\cite[Theorem~3.2.1]{ziemann_statistical_2022} (see~\cref{thm:MOC_exp} for its detailed proof). Ultimately, the claim is obtained by suitably choosing the critical radius $r$ and accordingly characterizing the lower-isometry event probability, leading to the expression for the burn-in. The detailed proof can be found in~\cref{sec:proof_rate_exp}.
\end{proof}

Overall, our analysis allows us to transfer the contribution of data dependence from the excess risk bound to the burn-in time condition. Moreover, our bounds feature a fast, i.i.d.-like term ($\bigO(T^{-1})$) and a slower Sobolev rate term ($\bigO(T^{-d})$) that becomes annihilated when $\reg{\fstar} \simeq 0$: this proves that, under knowledge alignment, the learning rate speeds up to $\bigO(T^{-1})$ even if data are dependent.

\section{Related work and discussion}\label{sec:discussion}
\paragraph{General statistical learning framework.} 
The general theory of statistical learning rates has developed along two main streams, as identified by \cite{fischer_sobolev_2020}. The first relies on the spectral analysis of integral operators in reproducing kernel Hilbert spaces \citep{smale_learning_2007,caponnetto_optimal_2007,steinwart_optimal_2009}, while the second builds on empirical process techniques and the small-ball method
\citep{mendelson_learning_2014,mendelson_learning_2018,lecue_regularization_2017}. Our work belongs to the latter stream, adapting the small-ball method to the \emph{dependent-data} case along the lines of the localization analysis of~\cite{ziemann_learning_2022}.

\paragraph{Learning rates for dependent data.} A common approach to handle dependence is through \emph{blocking} techniques \citep{yu_rates_1994,sancetta_estimation_2021}, where the trajectory is divided into blocks of length $k$ so that consecutive blocks can be treated as independent. However, this deflates the effective sample size, leading to suboptimal rates. Similar rates appear also in~\cite{steinwart_fast_2009,zou_generalization_2009,agarwal_generalization_2012,kuznetsov_generalization_2017}, and~\cite{nagaraj_least_2020} shows that such a deflation in a worst-case agnostic model set-up is unavoidable. To contrast this phenomenon, a significant line of work has studied learning under dependent data \emph{without regularization}. In the linear setting, \cite{simchowitz_learning_2018} and \cite{nagaraj_least_2020} established sample complexity bounds for system identification and stochastic gradient descent. Moreover, \cite{roy_empirical_2021} extended the small-ball method to dependent processes, but without using one-sided concentration, leading to slower rates. Similar slower-rate phenomena also appear in \cite{ziemann_single_2022}. More recently, \cite{ziemann_learning_2022} proposed an adaptation of the small-ball method and offset complexity technique of~\cite{liang_learning_2015} to obtain optimal rates for nonlinear settings. Our work builds upon this line of thought, extending the analysis to \emph{physics-informed regularization}. However, the results in this paper are not a mere adaptation: the physics-informed regularizer introduces additional challenges, such as characterizing the entropy numbers of the effective hypothesis class (e.g., under ellipticity, non-trivial nullspaces of the operator, and boundary conditions), 
determining trajectory hypercontractivity and working with weighted, vector-valued Sobolev spaces.

\paragraph{Theoretical analysis of physics-informed machine learning.} Our work belongs to the branch of physics-informed machine learning that aims at enhancing learning algorithms with available physical knowledge --- a class of models also known as \emph{hybrid modeling}~\citep{rai_driven_2020,von_rueden_informed_2023}. To the best of the authors' knowledge, results aimed at quantifying the beneficial impact of physical priors in learning algorithms are~\cite{von_rueden_how_2023} and~\cite{doumeche_physics-informed_2024}. The present paper is very similar in spirit to the latter work in the way complexity-dependent rates are derived, but crucially deals with non-i.i.d.~data and presents bounds for the excess risk not only just in expectation, but also in probability. We further summarize related work in~\cref{tb:comp}.

\begin{table}[ht]
    \caption{\small Comparison of convergence rates for non-parametric regression with and without regularization. The rate from~\cite{ziemann_learning_2022} follows from its Corollary~4.1 with $q=\dx/s$ under the metric entropy bound $\log \N{\infty}{\F}{\varepsilon} \sim (1/\varepsilon)^q$.  The rate from~\cite{lecue_regularization_2017} follows from its Lemma~2.1 assuming $r^2(\rho)\sim\sigmaw T^{-1}$, 
with $\lambda_T \sim T^{-d}$.
} 
  \label{tb:comp}%
  \centering%
  \resizebox{\textwidth}{!}{%
      \begin{tabular}{lllllll}
        \thead{Work}                                  &\thead{Hypothesis class} & \thead{Data} & \thead{Regularization} & \thead{Assumption}& \thead{Rate} \\
        \midrule                                                                                         
        
        \cite{nussbaum_minimax_2006} & $\Elltwo$ Sobolev space & i.i.d. & \hspace{1cm}\xmark & $\sigmaw$-Gaussian, $\dx = 1$ & $\sigmaw T^{-2s/(2s+1)}$ \\
        
        \cite{farahmand_regularized_2012} & General Sobolev space   & non-i.i.d.      & \hspace{1cm}\xmark      & Exponential mixing, $\dy = 1$          & $T^{-d}\log(T)$        \\
        
        \cite{lecue_regularization_2017} & General   & i.i.d.      & Proper regularizer      & $\sigmaw$-sub-Gaussian, $\dy = 1$          & $\reg{\fstar} T^{-d} + \sigmaw T^{-1}$\\
        
        \cite{ziemann_learning_2022}  & General (not too large)   & non.i.i.d. & \hspace{1cm} \xmark    & $\sigmaw$-sub-Gaussian  & $\sigmaw T^{-d}$ \\
        
         \cite{doumeche_physics-informed_2024}     & Periodic Sobolev space     & i.i.d.            & Physics-informed   & $\sigmaw$-sub-Gamma, $\dy = 1$         & $\reg{\fstar} T^{-d} + \sigmaw T^{-1}$                         \\
        \rowcolor{blue!30}\bf Our work               & $\Elltwo$ Sobolev space    & non-i.i.d.      & Physics-informed      & $\sigmaw$-sub-Gaussian, $s \geq 2\dx$           & $\reg{\fstar}^{d^\prime/2}T^{-d} + \sigmaw T^{-1}$
      \end{tabular}
}
\end{table}

\paragraph{Quantifying the impact of knowledge alignment.}
We now showcase the impact of knowledge alignment $\reg{\fstar} \simeq 0$ in contrast with the rates of empirical risk minimization \emph{without regularization} --- i.e., considering $\fhat'$ as the solution of~\eqref{eq:RERM} when $\lambda_T =0$. 
As shown in detail in~\cref{sec:results_noreg}, the excess risk for $\fhat'$ behaves, both in probability and in expectation, in the following way (informally):
\begin{equation}
    \text{(Excess risk)} \quad ||\fhat'-\fstar||^2_{\Elltwo(\X^T,\jointProb;\R^{\dy})} \leq \frac{\Cslow'}{T^d} + \Cfast' \frac{\sigmaw}{T}.\notag 
\end{equation}
We can notice how, for the result without regularization, the term decaying according to the Sobolev rate is not modulated by any design parameter (as happened with $\reg{\fstar}$ in~\cref{thm:main_rate_probability,thm:main_rate_exp}), and is thus the dominant term dictating the slow Sobolev convergence rate of the excess risk.

\paragraph{On the behavior of $\lambda_T$.} It is worth emphasizing that, in both the expectation and probability analyses, the condition on the regularization parameter depends on $1/\reg{\fstar}^\beta$ for some $\beta > 1$. This condition reflects the well-known regularization-complexity trade-off: as the hypothesis class is restricted (i.e., as $\rho$ becomes small), one must increase $\lambda_T$ to compensate for the reduced richness of the class and the potentially higher sensitivity to noise or variance, as discussed in~\cite[Section~2]{lecue_regularization_2017} and also displayed in~\cite[Theorem~5.3]{doumeche_physics-informed_2024}. Even if such a phenomenon prevents us from considering the case $\reg{\fstar}=0$, our bounds still capture the (practical) annihilation of the Sobolev rate term in presence of knowledge alignment. Finally, as pointed out in~\cite{doumeche_physics-informed_2024}, even if $\lambda_T$ depends on the unknown $\reg{\fstar}$, it can still be estimated via, e.g., cross-validation~\citep{wahba_spline_1990}.

\paragraph{On the burn-in condition and the Sobolev order $s$.} In~\cref{thm:main_rate_probability,thm:main_rate_exp}, the burn-in time scales as $(1/r)^{\nicefrac{6\dx}{2s-\dx}}$, and $r$ in turn scales as ${T}^{-1/2}$. Therefore, to ensure well-posedness of the burn-in time condition, we have to impose that $\nicefrac{3\dx}{2s-\dx} \leq 1$, which yields~\cref{ass:SobolevOrder}. Thus, our results come at the price of a stronger requirement on $s$ with respect to the standard $s \geq \dx/2$ needed, e.g., for the Sobolev imbedding theorem (\cref{sec:sobolevProperties}).

\paragraph{Numerical experiment.} 
We complement our theoretical analysis with an example showcasing the benefit of prior domain knowledge in learning a nonlinear dynamical system. 
In this experiment, whose full details can be found in~\cref{sec:numerical_experiment}, we consider the dynamics of a unicycle robot described by the differential equations
$\dot{x}_1(t) = \nu(t) \cos \vartheta(t), \;\ \dot{x}_2(t) = \nu(t) \sin \vartheta(t), \;\ \dot{\vartheta}(t) = \omega(t),$ 
where $(x_1,x_2) \in \R^2$ is the position of the robot on the plane, $\vartheta \in [0,\pi/2]$ is the orientation angle, and $(\nu,\omega)$ are the translational and angular velocities, respectively. The physical information we want to incorporate is that the velocity has no lateral component, enforcing the non-slip behavior of the unicycle kinematics. Such a constraint is embedded in the learning problem~\eqref{eq:RERM} as a (discretized) $\Elltwo$-regularization term, and we perform estimation by deploying a multilayer perceptron with two hidden layers featuring 64 nodes and ReLU activation functions. \\
\begin{SCfigure}[][h!]
    \centering
    \includegraphics[width=0.45\textwidth]{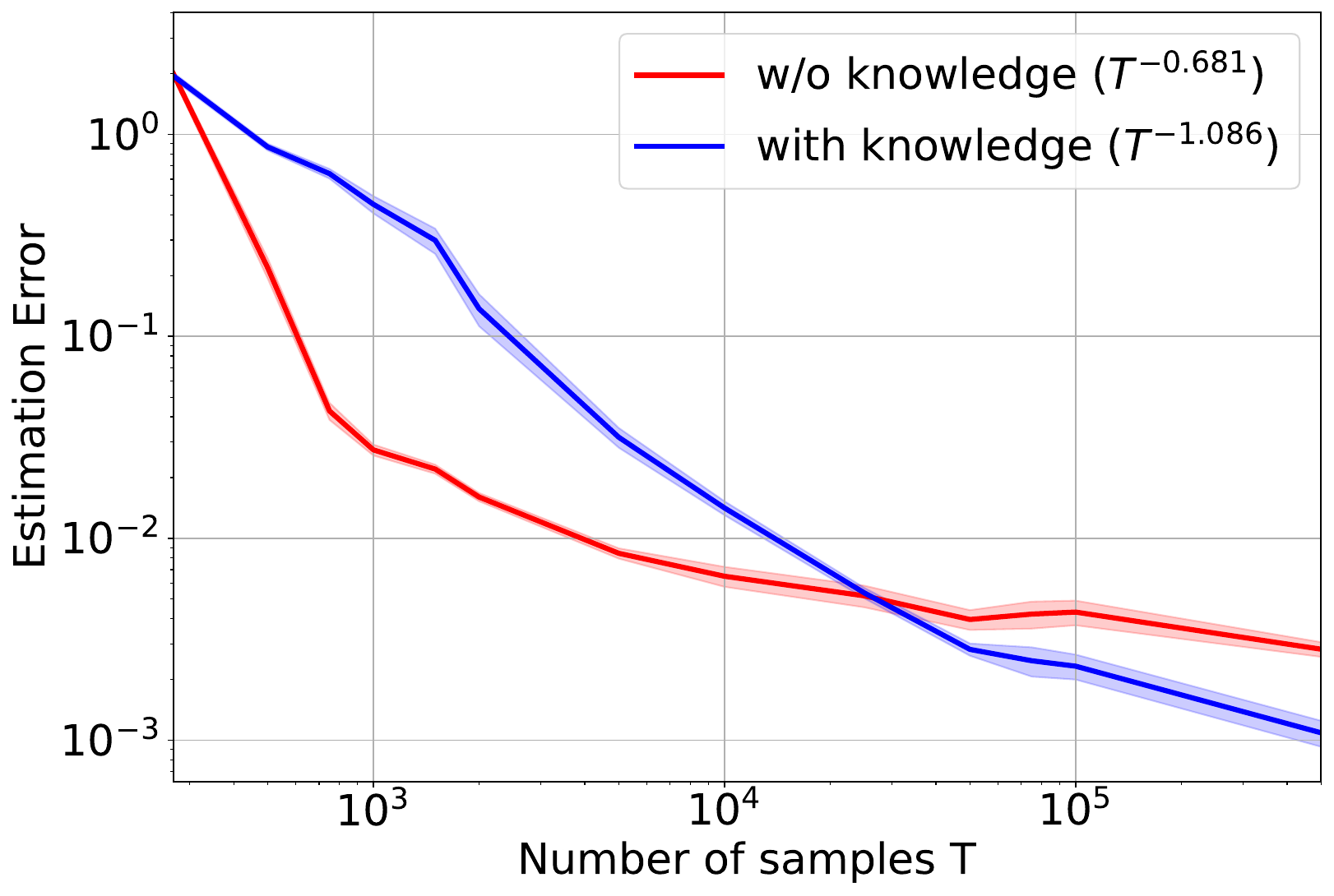}
    \vspace{-0.4cm}
    \caption{\small Log-log plot of the empirical excess risk (estimation error)  with respect to the number of samples $T$ for the unicycle dynamics after the burn-in period. Each curve is obtained by averaging over $20$ independent random realizations of the training data, with solid lines indicating the mean estimation error and shaded regions denoting $95\%$ confidence intervals.}
\label{fig:unicycle_rates}
\end{SCfigure}

The experiment, whose results are displayed in~\cref{fig:unicycle_rates}, compares the empirical rates obtained with and without physics-informed regularization. 
We can notice that both estimators eventually return an accurate model for the ground-truth dynamics. 
However, without physics knowledge the rate of decay of the estimation error is relatively slow, with an empirical slope of approximately $\mathcal{O}(T^{-0.681})$. In contrast, incorporating physics-informed regularization yields a markedly faster decay, with an empirical slope of approximately $\mathcal{O}(T^{-1.086})$, as the model is explicitly constrained by the domain knowledge that unicycle dynamics do not admit lateral velocity. This experiment demonstrates how embedding physics-based operators into the training objective leads to provable improvements in sample efficiency, consistent with our theoretical trends predicted in Section~\ref{sec:convergence_rates} -- especially the result in expectation presented in~\cref{thm:main_rate_exp}.

\section{Conclusions}\label{sec:conclusions}

This work focused on vector-valued function estimation from dependent data, and studied the excess risk of the estimate $\fhat$ obtained through regularized empirical risk minimization, where regularization is induced by physical knowledge (namely, that the unknown function approximately satisfies a partial differential equation). The main message of this work is that knowledge alignment (i.e., the regularizer is approximately zero when evaluated at the ground-truth function $\fstar$) allows to speed up the learning rate from the slow, Sobolev rate $\bigO(T^{-d})$, with $d=\nicefrac{2s}{2s+\dx}<1$, to the fast, optimal i.i.d.~one $\bigO(T^{-1})$.
Taken together, our results provide the first convergence rates for physics-informed learning under dependent data that avoid the sample-size deflation inherent to blocking techniques, and  reveal a transition from Sobolev minimax rates to fast i.i.d.-optimal rates through knowledge alignment. This bridges classical statistical learning theory, physics-informed regularization, and learning with dependent data.

\section*{Acknowledgments} 
Leonardo F. Toso thanks Ingvar Ziemann for an instructive discussion during the early stage of this work. Leonardo F. Toso is funded by the Center for AI and Responsible Financial Innovation (CAIRFI) Fellowship and by the Columbia Presidential Fellowship. James Anderson is partially funded by NSF grants ECCS 2144634 and 2231350.

\printbibliography

\clearpage
\appendix
\begin{center}
{\bfseries \Large $\bullet$  Technical appendix $\bullet$}
\end{center}

In the following sections we provide the derivations of all of the results stated in the paper. This technical appendix is structured as follows: 
\paragraph{\cref{appendix:definitions}} \hspace{-0.9em} provides the necessary results for the probability set-up of the estimation problem. Specifically, we construct the marginal probability measures stated in~\cref{ass:densities} (\cref{sec:mut}); we discuss the meaning of the $S$-persistence given in~\cref{ass:S-persistence} and its relation to data dependence (\cref{sec:Spersistence}); and derive some useful properties of sub-Gaussian random vectors that will be useful in the martingale offset complexity bounds (\cref{sec:usefulsubGaussian}).
    \paragraph{\cref{sec:sobolevProperties}} \hspace{-0.9em} constructs the auxiliary set-up of weighted and vector-valued Sobolev spaces needed to define the hypothesis spaces for the empirical risk minimization problem~\eqref{eq:RERM}. First, we extend the Sobolev imbedding theorem to the weighted, vector-valued case (\cref{sec:SobolevImbedding}). Next, we present definitions and key results of periodic Sobolev spaces (\cref{sec:periodicSobolev}), and show that the Sobolev space of interest, $\Sobo(\X^T,\jointProb;\R^{\dy})$ is imbedded in one of them. Such a construction will be leveraged in the proof of $(C(r),2)$-hyper-contractivity of $\partial \balltwo{r}$  in~\cref{sec:hypercontractivity}.
    \paragraph{\cref{sec:covering_numbers_results}} \hspace{-0.9em} focuses on bounds for covering numbers of convex sets of vector-valued Sobolev spaces. By deploying the direct-sum structure of the Sobolev space $\Sobo(\X^T,\jointProb;\R^{\dy})$ elucidated in~\cref{sec:function_spaces}, we first show how the covering of the multi-dimensional set can be obtained from the covering of the one-dimensional counterpart (\cref{sec:scalar2vec_covering}). We then use such a result to extend classic results on covering numbers, culminating in the bound for the covering number of the effective hypothesis space $\Freg$ (\cref{sec:covering_vector}).
     \paragraph{\cref{sec:properties_hypothesis_spaces}} \hspace{-0.9em} shows key properties of the hypothesis spaces $\F$ and $\Freg$, such as convexity and $B$-boundedness (\cref{sec:convexity_boundedness}), and $(C(r),2)$-hypercontractivity of $\partial \balltwo{r}$ (\cref{sec:hypercontractivity}). We also show how the trajectory hypercontractivity condition enforces the small-ball property.
     \paragraph{\cref{sec:regularizer_properties}} \hspace{-0.9em} focuses on the physics-informed regularizer and its properties. In particular, we show that the physics-informed regularizer is 2-proper (\cref{sec:reg_2proper}), and prove an inequality on the difference $\reg{\fhat} - \reg{\fstar}$ that will be useful in the proofs of~\cref{thm:main_probab,thm:main_exp} (\cref{sec:inequality_LM}).

\paragraph{\cref{sec:lower_isometry_bound}} \hspace{-0.9em} provides the bound for the probability of the lower-isometry event $\Alo_r$ introduced in~\cref{sec:ideaBounds}. We first show an ancillary inequality linking hypercontractivity and $S$-persistence (\cref{sec:combo_CalphaS}) and then derive the main lower-isometry bound result (\cref{sec:bound_Alor}), also presenting its corollary that will be useful in characterizing the burn-ins in~\cref{sec:convergence_rates}. 

\paragraph{\cref{sec:MOC}} \hspace{-0.9em} provides the full derivation of the bounds for the martingale offset complexity, which play a key role in the results of~\cref{sec:bounds,sec:convergence_rates}. We first show the inequality that underpins the definition of martingale offset complexity (\cref{sec:MOC_def}), and then prove its bounds, both in probability (\cref{sec:MOC_prob}) and in expectation (\cref{sec:MOC_exp}).  

\paragraph{\cref{sec:proofs_sec4}} \hspace{-0.9em} contains the proofs of the excess bound rates, namely of~\cref{thm:main_probab} (\cref{sec:proofboundprob}) and of~\cref{thm:main_exp} (\cref{sec:proofboundexp}).

\paragraph{\cref{sec:proofs_rates}}\hspace{-0.9em} collects the proof of the convergence rates results of~\cref{sec:convergence_rates}, specifically of~\cref{thm:main_rate_probability} (\cref{sec:proof_rate_prob}) and of~\cref{thm:main_rate_exp} (\cref{sec:proof_rate_exp}).

\paragraph{\cref{sec:results_noreg}} \hspace{-0.9em} provides the corollaries of the results  given in~\cref{sec:convergence_rates} dealing with empirical risk minimization without regularization, which we use in the comparison performed in~\cref{sec:discussion}. Specifically, we derive the result in probability (\cref{sec:noreg_prob}) and in expectation (\cref{sec:noreg_exp}). 

\paragraph{\cref{sec:numerical_experiment}} \hspace{-0.9em} presents the full details of the numerical experiment set-up outlined in the discussion reported in~\cref{sec:discussion}.

\section{Probability measure and stochastics set-up}\label{appendix:definitions}
This section collects all the ancillary results concerning the probability space $(\X^T,\lbrace \Filt_t\rbrace_{t=0}^{T-1},\jointProb)$, the inter-sample dependence in trajectories $\inputseq$ belonging to such a space, and the noise sequence we are considering. In particular, in~\cref{sec:mut} we specify the marginal distributions presented in~\cref{ass:densities}; then, in~\cref{sec:Spersistence} we discuss the $S$-persistence condition in~\cref{ass:S-persistence}, showing how $S$ quantifies the degree of dependence between data samples separated in time; finally, in~\cref{sec:usefulsubGaussian} we present some useful ancillary results on the second statistical moment of the sub-Gaussian random vectors given in~\cref{ass:noise}.

\subsection{On the construction of probability measures}\label{sec:mut}

We now characterize the probability measures $\mu_t$, defined for each $t=0,\dots,T-1$, associated with each term of the input trajectory $\inputseq$.\\
The classic set-up involves \textit{independent} samples. In this situation, the $\sigma$-algebra on $\X^T$ is given by the tensor product of the single $\sigma$-algebras $\Filt_t$. Moreover, by construction each component $X_t$ of the trajectory $\inputseq$ has a distribution $\mu_t$, and the resulting probability space is $(\X^T, \otimes_{t=0}^{T-1}\Filt_t, \prod_{t=0}^{T-1}\mu_t)$ --- see, e.g., ~\cite[Chapter~VII]{halmos_measure_1950} and~\cite[Section~18]{billingsley_probability_2012}.\\
We now detail the case with \textit{dependent data} building upon the results in~\cite[Chapter~I.6]{cinlar_probability_2011}. We are in the situation in which the transition between $X_{t-1}$ and $X_{t}$ for all $t=1,\dots, T$ is described by a map from $(\X,\Filt_{t-1})$ to $(\X, \Filt_{t})$. Such a map is called \textit{transition kernel} $\transker{\cdot}{\cdot}{t}\colon \X \times \Filt_{t} \to \R_{\geq 0}$ and is such that $x_{t-1} \mapsto \transker{x_t}{A}{t}$ is $\Filt_{t-1}$-measurable for every set $A \in \Filt_t$, and $A \mapsto \transker{x_{t-1}}{A}{t}$ is a measure on $(\X, \Filt_t)$ for every $x_{t-1} \in \X$. Before proving the main result in~\cref{thm:marginals}, we recall two key results:
\begin{lemma}\label{lemma:cinlar421}
    Let $(E,\mathcal{E})$ be a measurable space, and let $L$ be a functional mapping the space of non-negative measurable functions defined on $\mathcal{E}$ to $\R_{\geq 0}$. Then there exists a unique measure $\nu$ on $(E,\mathcal{E})$ such that $L(g) = \nu g$ for any function $g$ in the domain of $L$ if and only if
    \begin{enumerate}
        \item $g = 0$ implies $L(g) =0$;
        \item for any $\mathfrak{a},\mathfrak{b}\in \R_{\geq 0}$ and any $g,g^{\prime}$ in the domain of $L$, $L(\mathfrak{a}g + \mathfrak{b}g^{\prime}) = \mathfrak{a}L(g) + \mathfrak{b}L(g^{\prime})$;
        \item for any increasing sequence $\{g_n\}_n \nearrow g$ we have that $L(g_n) \nearrow L(g)$.
    \end{enumerate}
\end{lemma}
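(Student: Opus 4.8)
The plan is to treat the two implications separately. The forward ("only if") direction is immediate from standard properties of the integral, while the converse ("if") requires constructing $\nu$ explicitly from $L$ and then bootstrapping the representation from indicator functions up to arbitrary non-negative measurable functions.

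\emph{The "only if" direction.} Suppose $L(g) = \nu g$ for some measure $\nu$ on $(E,\mathcal{E})$. Then property (1) holds because $\nu$ integrates the zero function to $0$; property (2) is the linearity of the integral over non-negative integrands; and property (3) is precisely the Monotone Convergence Theorem applied to $\{g_n\}_n \nearrow g$.

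\emph{The "if" direction.} I would define the candidate $\nu\colon\mathcal{E}\to\R_{\geq 0}$ by $\nu(A)\doteq L(\mathbf{1}_A)$; this is legitimate since each $\mathbf{1}_A$ lies in the domain of $L$, and $\nu(\emptyset)=L(0)=0$ by~(1). The crucial step is countable additivity: for pairwise disjoint $\{A_n\}_{n\geq 1}\subseteq\mathcal{E}$ with union $A$, the partial sums $g_N\doteq\sum_{n=1}^N\mathbf{1}_{A_n}=\mathbf{1}_{\bigcup_{n\leq N}A_n}$ increase pointwise to $\mathbf{1}_A$, so~(2) gives $L(g_N)=\sum_{n=1}^N\nu(A_n)$ while~(3) gives $L(g_N)\nearrow L(\mathbf{1}_A)=\nu(A)$; hence $\nu(A)=\sum_{n\geq 1}\nu(A_n)$ and $\nu$ is a measure. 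Next,~(2) extends the identity $L(g)=\nu g$ to every non-negative simple function $g=\sum_{i}c_i\mathbf{1}_{A_i}$ with $c_i\geq 0$. Finally, for arbitrary non-negative measurable $g$, approximate it from below by the standard increasing sequence of simple functions $g_n\nearrow g$: by the simple-function case $L(g_n)=\nu g_n$, and property~(3) together with the Monotone Convergence Theorem for $\nu$ give $L(g)=\lim_n L(g_n)=\lim_n\nu g_n=\nu g$. Uniqueness is immediate, since any measure $\nu'$ representing $L$ must satisfy $\nu'(A)=L(\mathbf{1}_A)=\nu(A)$ for all $A\in\mathcal{E}$.

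The only genuinely delicate point is the passage from the finite additivity supplied by~(2) to countable additivity, which is exactly where the monotone-continuity hypothesis~(3) is used; the rest is the routine simple-function approximation underlying the construction of the Lebesgue integral.
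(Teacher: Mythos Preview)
Your proof is correct and is precisely the standard argument (Daniell-type construction: define $\nu(A)=L(\mathbf{1}_A)$, use~(3) for countable additivity, then extend via simple-function approximation and monotone convergence). The paper does not supply its own proof here---it simply defers to \cite[Theorem~4.21]{cinlar_probability_2011}---so your write-up is in fact more complete than the paper's, and matches the argument one would find in that reference.
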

\begin{proof}
    We defer the interested reader to~\cite[Theorem~4.21]{cinlar_probability_2011}.
\end{proof}

\begin{lemma}\label{lemma:kernelComposition}
    Let $\transker{\cdot}{\cdot}{\tau}$ be a transition kernel from $(\X,\Filt_{\tau-1})$ to $(\X,\Filt_{\tau})$, and $\transker{\cdot}{\cdot}{\tau + 1}$ be a transition kernel from $(\X,\Filt_{\tau})$ to $(\X,\Filt_{\tau+1})$. Then, their product is the transition kernel $\mathcal{K}_{\tau}\mathcal{K}_{\tau+1}$ from $(\X,\Filt_{\tau-1})$ to  $(\X,\Filt_{\tau+1})$ such that
    \begin{align}
\mathcal{K}_{\tau}\mathcal{K}_{\tau+1}(x_{\tau-1},A) =\int_{\X} \transker{x_{\tau-1}}{dx_{\tau}}{\tau} \transker{x_{\tau}}{A}{\tau+1} \quad \text{for }x_{\tau -1} \in \X, A \in \Filt_{\tau+1}.\notag
    \end{align}
\end{lemma}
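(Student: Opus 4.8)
The plan is to check that the proposed product $\mathcal{K}_\tau\mathcal{K}_{\tau+1}$ meets the two defining requirements of a transition kernel from $(\X,\Filt_{\tau-1})$ to $(\X,\Filt_{\tau+1})$: that for each fixed $x_{\tau-1}\in\X$ the set function $A\mapsto \mathcal{K}_\tau\mathcal{K}_{\tau+1}(x_{\tau-1},A)$ is a measure on $(\X,\Filt_{\tau+1})$, and that for each fixed $A\in\Filt_{\tau+1}$ the map $x_{\tau-1}\mapsto \mathcal{K}_\tau\mathcal{K}_{\tau+1}(x_{\tau-1},A)$ is $\Filt_{\tau-1}$-measurable. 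A recurring tool will be the ``standard machine'': if $\mathcal K$ is any transition kernel from $(\X,\mathcal G)$ to $(\X,\mathcal H)$, then for every nonnegative $\mathcal H$-measurable $h$ the map $x\mapsto \int_\X h(y)\,\mathcal K(x,dy)$ is nonnegative and $\mathcal G$-measurable; this holds for $h=\mathbf 1_B$ by the definition of a transition kernel, extends to nonnegative simple functions by linearity, and extends to general nonnegative measurable $h$ by writing $h$ as an increasing pointwise limit of simple functions and applying the monotone convergence theorem together with the measurability of pointwise limits.

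For the measure property, I would fix $x_{\tau-1}\in\X$ and define the functional $L$ on nonnegative $\Filt_{\tau+1}$-measurable functions by $L(g)\doteq\int_\X \big(\int_\X g(y)\,\transker{x_\tau}{dy}{\tau+1}\big)\,\transker{x_{\tau-1}}{dx_\tau}{\tau}$. The inner integral is a well-defined nonnegative number because $\transker{x_\tau}{\cdot}{\tau+1}$ is a measure, and as a function of $x_\tau$ it is $\Filt_\tau$-measurable by the standard machine applied to $\mathcal K_{\tau+1}$, so the outer integral against $\transker{x_{\tau-1}}{\cdot}{\tau}$ is meaningful. I would then verify the three hypotheses of \cref{lemma:cinlar421} for $L$: vanishing on $g\equiv0$ is immediate; additivity and positive-homogeneity follow from two successive uses of linearity of the integral; and $g_n\nearrow g\Rightarrow L(g_n)\nearrow L(g)$ follows from two successive applications of the monotone convergence theorem. \cref{lemma:cinlar421} then yields a unique measure $\nu_{x_{\tau-1}}$ on $(\X,\Filt_{\tau+1})$ with $L(g)=\nu_{x_{\tau-1}}g$, and taking $g=\mathbf 1_A$ identifies $\nu_{x_{\tau-1}}(A)$ with $\int_\X \transker{x_\tau}{A}{\tau+1}\,\transker{x_{\tau-1}}{dx_\tau}{\tau}$, i.e.\ with the claimed expression for $\mathcal K_\tau\mathcal K_{\tau+1}(x_{\tau-1},A)$; in particular this set function is a measure.

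For the measurability property, I would fix $A\in\Filt_{\tau+1}$, note that $h_A(x_\tau)\doteq\transker{x_\tau}{A}{\tau+1}$ is nonnegative and $\Filt_\tau$-measurable by the hypothesis on $\mathcal K_{\tau+1}$, and apply the standard machine to $\mathcal K_\tau$ to conclude that $x_{\tau-1}\mapsto\int_\X h_A(x_\tau)\,\transker{x_{\tau-1}}{dx_\tau}{\tau}=\mathcal K_\tau\mathcal K_{\tau+1}(x_{\tau-1},A)$ is $\Filt_{\tau-1}$-measurable. I do not expect a genuine obstacle: this is a routine measure-theoretic verification, and the only points deserving care are the repeated passages from indicators to general nonnegative measurable functions (handled uniformly by the monotone convergence theorem) and the observation that no finiteness or $\sigma$-finiteness assumption on the kernels is required, since every integral appearing has a nonnegative integrand and no interchange of integrals with a non-monotone integrand is ever performed.
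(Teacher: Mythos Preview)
Your proposal is correct and follows essentially the same approach as the paper, which simply states that the result follows directly from \cref{lemma:cinlar421}. You have fleshed out the verification that the paper leaves implicit: checking the three hypotheses of \cref{lemma:cinlar421} for the functional $L$ to get the measure property, and separately invoking the monotone-class (``standard machine'') argument for $\Filt_{\tau-1}$-measurability in the first variable.
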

\begin{proof}
    It follows directly from~\cref{lemma:cinlar421}.
\end{proof}
We are now ready to state the existence and uniqueness of the probability measures associated with each component $X_t$ of the input trajectory $\inputseq$.
\begin{theorem}\label{thm:marginals}
    Let $g\colon \X\to \R_{\geq 0}$, and assume that there exists a probability measure $\mu_0$ associated with the first component of the input trajectory $\inputseq$. Then, for each $t=1,\dots,T-1$ there exists a unique probability measure such that \begin{align}
\int_{\X^T}g(X_t)d\jointProb =  \int_{\X} g(X_t)d\mu_t.\notag
    \end{align}
\end{theorem}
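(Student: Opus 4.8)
The plan is to construct the marginal measure $\mu_t$ explicitly by composing transition kernels, and then verify the integration identity via \Cref{lemma:cinlar421}. First I would define, for each $t=1,\dots,T-1$, the measure $\mu_t$ on $(\X,\Filt_t)$ by
\begin{equation}
\mu_t(A) \doteq \int_{\X} \mu_0(dx_0) \, (\mathcal{K}_1 \mathcal{K}_2 \cdots \mathcal{K}_t)(x_0, A), \qquad A \in \Filt_t, \notag
\end{equation}
where the iterated kernel product $\mathcal{K}_1 \cdots \mathcal{K}_t$ is the transition kernel from $(\X,\Filt_0)$ to $(\X,\Filt_t)$ obtained by applying \Cref{lemma:kernelComposition} inductively (its well-definedness and associativity being exactly what that lemma, together with \Cref{lemma:cinlar421}, guarantees). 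That $\mu_t$ is a probability measure follows because each $\mathcal{K}_\tau(x_{\tau-1},\cdot)$ is a probability measure (the trajectory stays in $\X$) and $\mu_0$ is a probability measure, so $\mu_t(\X) = 1$ by a telescoping argument.

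Next I would establish the integration identity. Define the functional $L$ on non-negative $\Filt_t$-measurable functions $g \colon \X \to \R_{\geq 0}$ by $L(g) \doteq \int_{\X^T} g(X_t)\, d\jointProb$. The three hypotheses of \Cref{lemma:cinlar421} are routine: $g = 0 \Rightarrow L(g) = 0$ is immediate; linearity of $L$ is linearity of the integral against $\jointProb$; and $L(g_n) \nearrow L(g)$ for $g_n \nearrow g$ is the monotone convergence theorem applied to $\jointProb$. Hence there is a \emph{unique} measure $\nu$ on $(\X,\Filt_t)$ with $L(g) = \nu g$ for all such $g$; it remains to identify $\nu$ with the $\mu_t$ constructed above. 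For this I would expand $\int_{\X^T} g(X_t)\, d\jointProb$ by writing $\jointProb$ as the law built from $\mu_0$ and the kernels $\mathcal{K}_1,\dots,\mathcal{K}_{T-1}$ (the construction reviewed in \cref{sec:mut} just before the statement), then integrate out the coordinates $X_{t+1},\dots,X_{T-1}$ — each such integration contributes a factor $\mathcal{K}_\tau(x_{\tau-1},\X) = 1$ — leaving exactly $\int_{\X}\mu_0(dx_0)\int_{\X}\mathcal{K}_1(x_0,dx_1)\cdots\int_{\X}\mathcal{K}_t(x_{t-1},dx_t)\, g(x_t)$, which by repeated application of \Cref{lemma:kernelComposition} equals $\int_{\X} g(x_t)\, \mu_t(dx_t)$. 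Uniqueness of $\mu_t$ is then inherited from the uniqueness of $\nu$ in \Cref{lemma:cinlar421}.

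The main obstacle I anticipate is the bookkeeping in the ``integrate out the tail coordinates'' step: one must be careful that the joint law $\jointProb$ on $\X^T$ is genuinely the one generated by $\mu_0$ together with the kernels $\{\mathcal{K}_\tau\}$ (an Ionescu--Tulcea type construction), so that Fubini/Tonelli applies and the marginalization over $X_{t+1},\dots,X_{T-1}$ is legitimate; the measurability hypotheses baked into the definition of a transition kernel ($x_{\tau-1} \mapsto \mathcal{K}_\tau(x_{\tau-1},A)$ being $\Filt_{\tau-1}$-measurable) are exactly what make each intermediate function measurable so that the next integration makes sense. Everything else is an application of \Cref{lemma:cinlar421,lemma:kernelComposition} and monotone convergence, and no new estimates are needed.
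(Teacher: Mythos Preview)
Your proposal is correct and follows essentially the same approach as the paper: both arguments expand the integral against $\jointProb$ into iterated kernel integrals, use \cref{lemma:kernelComposition} to collapse the tail kernels $\mathcal{K}_{t+1},\dots,\mathcal{K}_{T}$ (which integrate to $1$) and to compose the leading kernels $\mathcal{K}_1,\dots,\mathcal{K}_t$ into a single kernel $\bar{\mathcal{K}}$, and then invoke \cref{lemma:cinlar421} to obtain existence and uniqueness of $\mu_t$. Your write-up is in fact somewhat more explicit than the paper's in verifying the three hypotheses of \cref{lemma:cinlar421} and in flagging the Ionescu--Tulcea construction underlying $\jointProb$, but the mathematical content is the same.
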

\begin{proof}
We are considering
\begin{equation}\label{eq:rewriteIntegral}
   \int_{\X^T} g(X_t)d\jointProb = \int_{\X}\mu_0(dX_0)\,\cdots \int_{\X}\transker{X_{t-1}}{dX_t}{t}g(X_t) \,\cdots \int_{\X} \transker{X_{T-1}}{dX_T}{T}.
\end{equation}
By an iterative application of~\cref{lemma:kernelComposition} to~\cref{eq:rewriteIntegral}, the contribution of the transition kernels $\transker{\cdot}{\cdot}{t+1},\cdots \transker{\cdot}{\cdot}{T}$ integrates to 1. Furthermore, we can apply again~\cref{lemma:kernelComposition} to the kernels $\transker{\cdot}{\cdot}{1},\cdots,\transker{\cdot}{\cdot}{t}$ and obtain the composed kernel $\bar{\mathcal{K}}(\cdot,\cdot)$ such that~\eqref{eq:rewriteIntegral} can be re-written as
\begin{align}
 \int_{\X^T} g(X_t)d\jointProb = \int_{\X} \mu_0(dX_0)\int_X \bar{\mathcal{K}}\left(X_0, dX_t\right)g(X_t).
 \notag 
\end{align}
It can be shown, along the lines of~\cite[Theorem~6.3]{cinlar_probability_2011}, that the right-hand side of the equation above satisfies~\eqref{lemma:cinlar421}, thus proving the claim.
\end{proof}
Note that this theorem holds also for the independent-measures case, where each $\mu_t$ is the $t$-th marginal of $\jointProb$ and can be computed relying on Fubini's Theorem.

\subsection{On $S$-persistence and data dependence}\label{sec:Spersistence} We now focus on the concept of $S$-persistence (see~\cref{ass:S-persistence}) and on how it relates to the dependence of the samples in the trajectory $\inputseq$.\\
We first start by recalling the definition of $S$-persistence. The tuple $(\F,\jointProb)$ is $S$-persistent if, for every $\xi \geq 0$ and $f\in\F$, we have that
\begin{equation}
\mathbb{E}\left[\exp\left(-\xi \sumt   \norm{f(X_t)}{2}^2 \right) \right] \leq \exp\left( -\xi \sumt   \mathbb{E}\left[\norm{f(X_t)}{2}^2 \right] + \frac{\xi^2 S}{2}\sumt   \mathbb{E}\left[ \norm{f(X_t)}{2}^4\right] \right). \notag
\end{equation}
The parameter $S$ is related to the ``degree of dependence" of the samples within the trajectory $\inputseq$: this is typically quantified in terms of the \emph{dependence matrix} (also defined \emph{mixing matrix} in~\cite{paulin_concentration_2018}), which we now define. 

Let $\Filt_{i:j}$ be the $\sigma$-field generated by the truncated input sequence $\{X_t\}_{t=i}^j$, which we represent as $\inputseq_{i:j}$. Additionally, denote with $\norm{\cdot}{TV}$ the total variation norm between two probability measures on $\Filt_{i:j}$: specifically, such a metric is defined as $\norm{\nu_1 - \nu_2}{TV} \doteq \sup_{A \in \Filt_{i:j}} |\nu_1(A) - \nu_2(A)|$ for any couple of measures $\nu_1$ and $\nu_2$ defined on the $\sigma$-algebra $\Filt_{i:j}$. The dependence matrix $\Gamma(\jointProb)$ is a lower-triangular matrix whose $(i,j)$ element, for $i,j =1,\cdots,T$, is given by
\begin{align}\label{eq:dependenceMatrix}
   [\Gamma(\jointProb)]_{i,j} \doteq \begin{cases}
   \sqrt{2\sup_{A \in \mathcal{X}_{0:T-i}} \norm{\Prob_{X_{j:T-1}}(\cdot|A) - \Prob_{X_{j:T-1}}}{TV}} & (i < j) \\
   1 & (i = j)\\
   0 & (i > j)
    \end{cases}
\end{align}
Such a matrix provides a measure of dependence through its induced 2-norm, which is $\norm{\Gamma(\jointProb)}{2} = \arg\inf_{a > 0} \{ \norm{\Gamma(\jointProb)v}{2} \leq a \norm{v}{2}, \, v \in \R^{T}\}$: thus, we have that $S$-persistence is ruled by $S = \norm{\Gamma(\jointProb)v}{2}$. If the stochastic process has independent samples, then it holds that $\norm{\Gamma(\jointProb)}{2} = 1$; on the other hand, if the process is fully dependent, then we have that $\norm{\Gamma(\jointProb)}{2}$ grows linearly in $T$. We will focus on scenarios in which $\norm{\Gamma(\jointProb)}{2}$ is a constant: as elucidated in~\cite[Section~II]{samson_concentration_2000}, these are the following.
\begin{enumerate}[label = (\alph*)]
\item \emph{uniformly ergodic Markov chains}. In these Markov chains, the transition kernels $\transker{\cdot}{\cdot}{t} = \transker{\cdot}{\cdot}{}$ are time-homogeneous, and there exists an invariant distribution $\widetilde{\pi}$ such that, for every initial condition $x$, there exists a rate $\mathfrak{r} < 1$ and a constant $\mathfrak{A}>0$ such that $\norm{\transker{x}{\cdot}{}^t - \widetilde{\pi}(\cdot)}{TV} \allowbreak \leq \mathfrak{A}\mathfrak{r}^t$~\citep[Section~16.2.1]{meyn_markov_2009}. Another characterization of uniformly ergodic Markov chains is given by the Doeblin condition, and in~\cite[Section~2.4, Theorem~1]{doukhan_mixing_1994} uniform ergodicity is proven also for non-homogeneous kernels. In general, Markov chains satisfying uniform ergodicity are given, for instance, by linear and stable auto-regressive models $X_{t+1} = FX_t + W_t$: indeed, these are T-chains~\citep[Proposition~6.3.5]{meyn_markov_2009}, and the latter that are uniformly ergodic~\citep[Theorem~16.2.5]{meyn_markov_2009}. Another notable example is given by nonlinear state-space models $X_{t+1} = F_t(X_t, U_0,\cdots, U_t, W_k)$ of the form presented in~\cite[Section~2.2.2]{meyn_markov_2009} with control model $F_t(U_0,\cdots,U_t)$ that is stable in the sense of Lagrange~\citep[Section~16.2.3]{meyn_markov_2009}.

\item \emph{contracting Markov chains}. A weaker condition imposed on the behavior of the transition kernels is given by~\cite{marton_measure_1996}, where a weaker form of Doeblin's condition states that \\$\sup_{x_1,x_2 \in \X}\norm{\transker{x_1}{}{t} - \transker{x_2}{}{t}}{TV} < 1$ for all $t$. Markov chains satisfying such a condition do not have to be time-homogeneous and are called \emph{contracting} in~\cite{marton_measure_1996}. As argued in~\cite[Equation (2.8)]{samson_concentration_2000}, this kind of Markov chain also leads to a $\norm{\Gamma(\jointProb)}{2}$ that does not depend on $T$, as argued in~\cite[Equation (2.8)]{samson_concentration_2000}.

\item \emph{$\phi$-mixing processes.} A more general way of characterizing ergodicity without assuming the Markov property of the process is given by means of \emph{mixing processes} reviewed, e.g., in~\cite{bradley_basic_1986,bradley_basic_2005} and~\cite[Chapter 1]{doukhan_mixing_1994}. Here the focus is on $\phi$-mixing processes, thoroughly studied in~\cite{ibragimov_limit_1962}, which can lead to $\norm{\Gamma(\jointProb)}{2} = \bigO(1)$. These kind of processes characterize mixing through the measure $\Phi(\Filt_{0:i},\Filt_{j:T-1})  \doteq \sup_{A \in \Filt_{0:i}}\sup_{B\in \Filt_{j:T-1}} \allowbreak|\Prob(B|A) - \Prob(B)|$; additionally, let $\Phi_k(i,j) \doteq \sup_{i,j=0,\dots T-1}\{ \Phi(\Filt_{0:i},\Filt_{j:T-1}),\, j-i \geq k\}$. A process is $\phi$-mixing if $\Phi_{k}(i,j) \to 0$ as $k \to \infty$ for all $i,j$: in other words, the measure $\Phi(\cdot,\cdot)$ quantifies how ``independent" two non-overlapping blocks of variables in the trajectory $\inputseq$ become as their distance increases. Examples of $\phi$-mixing processes are uniformly ergodic Markov processes~\citep{diananda_probability_1953} and chains of infinite order, which are non-Markovian processes where the transition probability is influenced only slightly by the remote past that have been deployed to model psychology experiments~\citep{harris_chains_1955,bush_stochastic_1953,lamperti_chains_1959}.\\ Noting how $\Phi$ enters the definition of the dependence matrix~\eqref{eq:dependenceMatrix}, $\norm{\Gamma(\jointProb)}{2}$ can be characterized by $\Phi_k \doteq \sup_{i,j} \Phi_k(i,j)$: if $\Phi_k$ exhibits an exponential decay, or it holds that $\sum_{k=1}^{\infty} \sqrt{\Phi_k} < \infty$, then we have that $\norm{\Gamma(\jointProb)}{2}$ is a constant~\cite[p.425]{samson_concentration_2000}. 

\end{enumerate}

\subsection{Useful properties of sub-Gaussian vectors}\label{sec:usefulsubGaussian}
We now present two lemmas that will be useful in the proofs of the martingale offset complexity bounds in~\cref{sec:MOC}. First, we recall that, according to~\cref{ass:noise}, the noise sequence is a martingale difference sequence with sub-Gaussian noise. Recalling the definition, we then have, for every $\xi \in \R$ and every $u$ in the unit sphere in $(\R^{\dy}, \norm{\cdot}{2})$,
\begin{equation}
    \E{\exp\left\{ \xi \innerprod{W_t}{u}{2} \vert \Filt_{t-1} \right\}}{} \leq \exp\left\{ \frac{\xi^2 \sigmaw}{2}\right\}. \tag{\ref{eq:sub-Gaussian}}\notag
\end{equation}

We now present the lemma on the second moment of a sub-Gaussian random vector.

\begin{lemma}\label{lemma:usefulsub-Gaussian}
    Let $W$ be a sub-Gaussian random vector according to~\cref{ass:noise} and let $\mathfrak{A}$ be a positive constant. Then
    \begin{equation}
        \E{\exp\left\lbrace \xi^2 \norm{W}{2}^2 \right\rbrace}{W} \leq \exp\left\lbrace   \mathfrak{A}^2 \xi^2  \right\rbrace \quad \text{for } |\xi| < \frac{1}{\mathfrak{A}}.\notag
    \end{equation}
\end{lemma}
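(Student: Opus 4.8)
The plan is to reduce the bound on the moment-generating function of $\norm{W}{2}^2$ to a one-dimensional statement about the square of a scalar sub-Gaussian variable. First I would write $\norm{W}{2}^2 = \sum_{i=1}^{\dy} W_i^2$, where $W_i \doteq \innerprod{W}{e_i}{2}$ is the $i$-th coordinate of $W$ and $e_i$ the $i$-th standard basis vector of $\R^{\dy}$; taking $u = e_i$ in the sub-Gaussian condition of \cref{ass:noise} shows that each $W_i$ has proxy variance $\sigmaw$. Since the coordinates need not be independent, I would decouple $\exp\{\xi^2 \norm{W}{2}^2\} = \prod_{i=1}^{\dy} \exp\{\xi^2 W_i^2\}$ by the generalized H\"older inequality with exponent $\dy$ on each factor, yielding
\begin{equation}
\E{\exp\{\xi^2 \norm{W}{2}^2\}}{W} \leq \prod_{i=1}^{\dy} \left( \E{\exp\{\dy\, \xi^2 W_i^2\}}{W} \right)^{1/\dy}. \notag
\end{equation}

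The core step is then the scalar bound: for $Z$ with proxy variance $\sigmaw$ and $0 \le s < 1/(2\sigmaw)$ one has $\E{\exp\{s Z^2\}}{} \le (1 - 2 s \sigmaw)^{-1/2}$. I would establish this by Gaussian decoupling --- writing $\exp\{sZ^2\} = \E{\exp\{\sqrt{2s}\, g Z\}}{g}$ for $g \sim \mathcal{N}(0,1)$ independent of $Z$, swapping the (non-negative) expectations by Tonelli, applying the sub-Gaussian moment-generating bound in the variable $Z$, and integrating out $g$ via the standard Gaussian chi-square identity $\E{\exp\{c g^2\}}{g} = (1-2c)^{-1/2}$ for $c < 1/2$. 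Plugging $s = \dy\, \xi^2$ into each factor (legitimate when $\dy\, \xi^2 \sigmaw < 1/2$) collapses the product to $\E{\exp\{\xi^2 \norm{W}{2}^2\}}{W} \le (1 - 2\dy\, \xi^2 \sigmaw)^{-1/2}$.

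To finish, I would invoke the elementary inequality $(1-x)^{-1/2} \le e^{x}$, valid for $x \in [0, 1/2]$, and restrict to $2\dy\, \xi^2 \sigmaw \le 1/2$, i.e. $|\xi| \le 1/(2\sqrt{\dy\, \sigmaw})$; this gives $\E{\exp\{\xi^2 \norm{W}{2}^2\}}{W} \le \exp\{2\dy\, \sigmaw\, \xi^2\}$. Taking $\mathfrak{A} \doteq 2\sqrt{\dy\, \sigmaw}$ (or any larger constant --- enlarging $\mathfrak{A}$ only shrinks the admissible range and weakens the right-hand side), and noting $2\dy\, \sigmaw = \mathfrak{A}^2/2 \le \mathfrak{A}^2$, recovers the claimed bound $\exp\{\mathfrak{A}^2 \xi^2\}$ on the range $|\xi| < 1/\mathfrak{A}$.

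I do not expect a genuine obstacle here: the H\"older decoupling and the estimate $(1-x)^{-1/2} \le e^x$ are routine, and the only substantive ingredient --- the moment-generating bound for the square of a sub-Gaussian scalar --- is standard once one uses the Gaussian-decoupling trick rather than a direct tail integration. The one point requiring care is bookkeeping the dependence of $\mathfrak{A}$ on $\sigmaw$ and $\dy$ so that the ranges of validity of the three steps are consistent with the stated interval $|\xi| < 1/\mathfrak{A}$.
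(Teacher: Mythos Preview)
Your proposal is correct and fully self-contained, but the paper proceeds differently. Where you split $\exp\{\xi^2\norm{W}{2}^2\}=\prod_i \exp\{\xi^2 W_i^2\}$ and decouple the coordinates via the generalized H\"older inequality, the paper introduces an auxiliary random vector $V$ uniform on the canonical basis of $\R^{\dy}$, writes $\norm{W}{2}^2 = \dy\, W^{\top}\mathbb{E}_V[VV^{\top}]W$, and pushes $\mathbb{E}_V$ inside the exponential by Jensen's inequality; this reduces the problem to a single scalar sub-Gaussian $(V^{\top}W)^2$, for which the paper simply cites~\cite[Proposition~2.5.2]{vershynin_high-dimensional_2024} rather than running the Gaussian-decoupling computation you give. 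Your route has two concrete advantages: it is self-contained (no external reference for the scalar step), and the resulting constant scales as $\mathfrak{A}\propto\sqrt{\dy}\,\sigma_W$, whereas the paper's randomization-plus-Jensen argument picks up an extra factor and lands at $\mathfrak{A}\propto\dy\,\sigma_W$. On the other hand, the paper's auxiliary-$V$ device is reused verbatim in the companion bound on $\mathbb{E}\norm{W}{2}$ (\cref{lemma:usefulsub-Gaussian2}), so it buys some economy across the two lemmas.
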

\begin{proof}
    The proof for the scalar case can be found in~\cite[Proposition 2.5.2]{vershynin_high-dimensional_2024}: i.e., when $\dy =1$, we obtain that 
    \begin{equation}
       \E{\exp\left\lbrace \xi^2 W^2 \right\rbrace}{W} \leq \exp\left\lbrace \mathfrak{A}^2 \xi^2 \right\rbrace \quad \text{for } |\xi| < \frac{1}{\mathfrak{A}}. \notag
    \end{equation}
To extend the claim to the case $\dy \geq 1$, we follow the argument in~\cite[Lemma 6.3.4]{ziemann_statistical_2022} and consider an auxiliary random vector $V$ that is drawn uniformly over the canonical Euclidean basis of $\R^{\dy}$, which we denote by $\lbrace e_1,\cdots, e_{\dy}\rbrace$. With such a construction, we have then that $\E{VV^{\top}}{V} = \frac{1}{\dy}\mathbb{I}_{\dy}$, where $\mathbb{I}_{\dy}$ is the identity matrix in $\R^{\dy}$. Then, we can consider
\begin{align}
    \E{\exp\left\lbrace \xi^2 \norm{W}{2}^2 \right\rbrace}{W} &= \E{\exp\left\lbrace \xi^2 W^{\top}\mathbb{I}_{\dy}W \right\rbrace}{W} \notag \\
    & = \E{\exp\left\lbrace \xi^2 \dy W^{\top}\E{VV^{\top}}{V}W \right\rbrace }{W} \notag \\
    & \leq \E{\exp \left\lbrace \xi^2 \dy (V^{\top}W)^2\right\rbrace}{W,V} \quad \text{ by Jensen's inequality,}\notag \\
    & \leq \exp\left\lbrace \mathfrak{A}^2\dy^2\xi^2 \right\rbrace \quad \text{for } |\xi|< \frac{1}{\mathfrak{A}\dy}, \notag
\end{align}
by applying the result of~\cite[Proposition 2.5.2]{vershynin_high-dimensional_2024} on the scalar sub-Gaussian random variable $V^{\top}W$ conditioned on $V$.
\end{proof}
Next, we derive a similar result for the 2-norm of a sub-Gaussian random vector.
\begin{lemma}\label{lemma:usefulsub-Gaussian2}
    Let $W$ be a sub-Gaussian random vector according to~\cref{ass:noise}. Then
    \begin{equation}
        \E{\norm{W}{2}}{W} \leq 3\sqrt{\dy}\sigma_W.\notag
    \end{equation}
\end{lemma}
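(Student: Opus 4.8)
The plan is to reduce the bound on the first moment of $\norm{W}{2}$ to a bound on the second moment, which is already available from Lemma~\ref{lemma:usefulsub-Gaussian}. First I would invoke Jensen's inequality (or equivalently Cauchy–Schwarz) to write $\E{\norm{W}{2}}{W} \leq \sqrt{\E{\norm{W}{2}^2}{W}}$, so it suffices to control $\E{\norm{W}{2}^2}{W}$. The second moment can be extracted from the exponential moment bound in Lemma~\ref{lemma:usefulsub-Gaussian}: applying that lemma with the constant $\mathfrak{A}$ taken to be (a constant multiple of) $\sqrt{\dy}\,\sigma_W$, we have $\E{\exp\{\xi^2 \norm{W}{2}^2\}}{W} \leq \exp\{\mathfrak{A}^2 \xi^2\}$ for $|\xi| < 1/\mathfrak{A}$. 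Expanding both sides as power series in $\xi^2$ and comparing the linear-in-$\xi^2$ coefficients yields $\E{\norm{W}{2}^2}{W} \leq \mathfrak{A}^2$; alternatively, one can differentiate at $\xi = 0$, or simply use the elementary inequality $1 + z \leq e^z$ together with a standard truncation/limiting argument as $\xi \to 0$.

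Concretely, I would proceed as follows. Step one: establish the component-wise sub-Gaussian second-moment bound. Each coordinate $W_t^{(i)} = \innerprod{W_t}{e_i}{2}$ satisfies the scalar sub-Gaussian MGF bound \eqref{eq:sub-Gaussian} with parameter $\sigmaw$, hence $\E{(W_t^{(i)})^2}{} \leq c\,\sigmaw$ for an absolute constant $c$ (for a centered sub-Gaussian scalar with parameter $\sigmaw$ one in fact gets $\E{(W^{(i)})^2}{} \leq \sigmaw$ directly by Taylor-expanding the MGF). Summing over $i = 1,\dots,\dy$ gives $\E{\norm{W}{2}^2}{} = \sum_{i=1}^{\dy} \E{(W^{(i)})^2}{} \leq \dy\,\sigmaw$. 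Step two: apply Jensen to get $\E{\norm{W}{2}}{} \leq \sqrt{\dy\,\sigmaw} = \sqrt{\dy}\,\sigma_W \leq 3\sqrt{\dy}\,\sigma_W$, which is even stronger than the claimed bound, so the constant $3$ is comfortably absorbed. (Alternatively, if one insists on routing through Lemma~\ref{lemma:usefulsub-Gaussian} verbatim with its stated constants, the factor $\dy^2$ appearing there under $|\xi| < 1/(\mathfrak{A}\dy)$ would give $\E{\norm{W}{2}^2}{} \leq 9\dy^2\sigmaw^2/\dy = 9\dy\sigmaw^2$ after optimizing, and then Jensen yields exactly $\E{\norm{W}{2}}{} \leq 3\sqrt{\dy}\,\sigma_W$ — this is presumably the route the authors take, explaining the constant $3$.)

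The main obstacle — such as it is — is purely bookkeeping: deciding whether to re-derive the scalar second-moment estimate directly from the sub-Gaussian definition (cleanest, gives the sharp constant $\sqrt{\dy}$) or to quote Lemma~\ref{lemma:usefulsub-Gaussian} as a black box (in which case one inherits the looser constants from the $V$-averaging trick and must track the factor-$\dy$ rescaling of $\xi$ carefully). Either way there is no real difficulty; the only point requiring mild care is extracting a polynomial moment from an exponential moment bound, which is standard (differentiate the MGF-type inequality at the origin, or match Taylor coefficients, noting the bound holds on a neighborhood of $\xi = 0$). I would write it using the direct scalar computation for the sharp version and remark that it also follows from Lemma~\ref{lemma:usefulsub-Gaussian}.
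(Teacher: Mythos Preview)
Your proposal is correct and takes a different route from the paper. You reduce the first moment to the second via Jensen, $\E{\norm{W}{2}}{} \leq \sqrt{\E{\norm{W}{2}^2}{}}$, and then bound $\E{\norm{W}{2}^2}{} = \sum_i \E{W_i^2}{} \leq \dy\sigmaw$ coordinate-wise from the scalar sub-Gaussian definition; this already yields $\sqrt{\dy}\,\sigma_W$, sharper than the stated $3\sqrt{\dy}\,\sigma_W$. The paper instead reuses the auxiliary-vector device from the proof of Lemma~\ref{lemma:usefulsub-Gaussian}: with $V$ uniform on the canonical basis so that $\E{VV^\top}{V} = \tfrac{1}{\dy}\mathbb{I}_{\dy}$, it writes $\norm{W}{2} = \sqrt{\dy\,W^\top\E{VV^\top}{V}W}$, passes to $\sqrt{\dy}\,\E{|V^\top W|}{W,V}$, and invokes the scalar first-moment bound $\E{|V^\top W|}{} \leq 3\sigma_W$ from~\cite{vershynin_high-dimensional_2024}. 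Your argument is more elementary and self-contained; the paper's has the appeal of structural parallelism with the preceding lemma, though as written the passage from $\sqrt{\E{(V^\top W)^2}{V}}$ to $\E{|V^\top W|}{V}$ is a concavity inequality in the wrong direction for an upper bound, so your direct route is also the safer one.
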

\begin{proof}
    Again, the claim for the scalar case $\dy=1$ can be found in~\cite[Proposition 2.5.2(ii)]{vershynin_high-dimensional_2024}, where we have that $\E{|W|}{W} \leq 3 \sigma_W$: the value for the exact constants can be retrieved from the proof using $K_5^2 = \sigmaw/2$, $K_1^2 = 2\sigmaw$ and $K_2 = (3p)^{1/p}\sqrt{p K_1^2/2}$. 

    To obtain the result for the general case $\dy \geq 1$ we proceed similarly to~\cref{lemma:usefulsub-Gaussian} and work with the auxiliary random vector $V$ that is uniformly drawn from the canonical basis of $\R^{\dy}$, resulting in
    \begin{align}
        \E{\sqrt{W^{\top}W}}{W} = \E{\sqrt{\dy W^{\top}\E{VV^{\top}}{V}W}}{W} = \sqrt{\dy}\E{|V^{\top}W|}{W,V} \leq 3\sqrt{\dy}\sigma_W. \notag
    \end{align}
\end{proof}

\section{Sobolev spaces}\label{sec:sobolevProperties}
We now provide the essential information on 
 Sobolev spaces needed in this paper. For a deeper treatment on the subject, we defer to the monograph~\cite{adams_sobolev_2003}, as well as~\cite[Chapter~5]{evans_partial_2010}, \cite{brezis_functional_2011}, \cite[Chapter~1]{grisvard_elliptic_2011}, \cite[Chapter~7]{renardy_introduction_2004}, \cite[Chapter~4]{taylor_partial_2023}. References focused on weighted Sobolev spaces are~\cite{kufner_weighted_1985,kilpelainen_weighted_1994,chua_weighted_1996,goldshtein_weighted_2009}. Finally, results of vector-valued Banach spaces given by direct sums of scalar spaces can be found in~\cite[Chapter~I.6]{conway_course_2007}, \cite[Proposition~5.81]{farenick_fundamentals_2016},~\cite[Section~3.7]{bowers_introductory_2014}.

 \subsection{Imbedding properties}\label{sec:SobolevImbedding}
 We start with the well-known Sobolev imbedding theorem. 
 
 \paragraph{Preliminary definitions.}
 A normed space $(\HH_a, \|\cdot \|_{\HH_a})$ is \textit{imbedded} in another normed space $(\HH_b, \|\cdot \|_{\HH_b})$ if $\HH_a \subseteq \HH_b$, and for any $u \in \HH_a$ there exists a constant $\mathfrak{C}_b$ such that $\|u\|_{\HH  _{b}} \leq \mathfrak{C}_b \|u\|_{\HH_{a}}$. We denote such an imbedding by $\HH_a \hookrightarrow \HH_b$. \\ 
 We now define the Banach space of differential functions that  have continuous partial derivatives up to some order $j \in \Z$ for our vector-valued set-up. We will consider the space $\Cont^j\left(\X^T;\R^{\dy}\right)$ describing functions $f \colon \X \to \R^{\dy}$ evaluated along each component of the input trajectory $\inputseq$. We endow such a space with the norm
\begin{align}
    \norm{f}{\Cont^j\left(\X^T;\R^{\dy}\right)} \doteq \sum_{i=1}^{\dy} \sum_{|\alpha|\leq j} \frac{1}{T}\sumt \sup_{X_t \in \X} |\D f_i(X_t)| = \sum_{i=1}^{\dy} \sum_{|\alpha|\leq j}\sup_{X_t \in \X} |\D f_i(X_t)|.  \notag
\end{align} 

\paragraph{Main result.}
We are now ready to prove the Sobolev imbedding theorem for vector-valued Sobolev spaces as constructed in~\cref{sec:function_spaces}.
 
\begin{theorem}[Sobolev imbedding]\label{thm:sobolev_imbedding}
    Let~\cref{ass:densities} and~\cref{ass:SobolevOrder} hold, and let $j$ be a non-negative integer such that~\cref{ass:SobolevOrder} can be written as $s = \lceil \frac{\dx}{2} \rceil + j$ .Then
    \begin{enumerate}[label = (\alph*)]
        \item $\Sobo(\X^T,\jointProb;\R^{\dy}) \hookrightarrow \Cont^j(\X^T;\R^{\dy})$;
        \item $\Sobo(\X^T,\jointProb;\R^{\dy}) \hookrightarrow \mathscr{L}^q(\X^T,\jointProb;\R^{\dy}) \quad$ for $q \geq 2$.
    \end{enumerate}
\end{theorem}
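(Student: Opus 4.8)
The plan is to deduce this vector-valued, weighted statement from the classical (scalar, Lebesgue-measure) Sobolev imbedding theorem \cite[Theorem~4.12]{adams_sobolev_2003} via two reduction steps followed by one invocation of the classical result.

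First I would pass from the vector-valued setting to the scalar one. By the direct-sum decomposition recorded in \cref{sec:function_spaces}, $\Sobo(\X^T,\jointProb;\R^{\dy}) = \bigoplus_{i=1}^{\dy}\Sobo(\X^T,\jointProb;\R)$, and the norms of $\Cont^j(\X^T;\R^{\dy})$ and $\mathscr{L}^q(\X^T,\jointProb;\R^{\dy})$ are, respectively, the $\ell^1$- and $\ell^q$-aggregates of the scalar component norms of $f_1,\dots,f_{\dy}$. Hence it suffices to prove, for scalar $f$, that $\Sobo(\X^T,\jointProb;\R)\hookrightarrow\Cont^j(\X^T;\R)$ and $\Sobo(\X^T,\jointProb;\R)\hookrightarrow\mathscr{L}^q(\X^T,\jointProb;\R)$; applying these componentwise and summing over $i=1,\dots,\dy$ then yields the vector-valued claims at the cost of a factor at most $\dy$ in the imbedding constant.

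Second I would strip off the density weights. For a scalar $f$ and any multi-index $\alpha$, \cref{ass:densities} ($\kappal\le p_t\le\kappau$) gives, with $\Lebmeas$ the Lebesgue measure on $\X$,
\begin{equation}
\kappal\int_{\X}\lvert\D f\rvert^2\,d\Lebmeas\;\le\;\frac{1}{T}\sumt\int_{\X}\lvert\D f(X_t)\rvert^2\,\mu_t(dX_t)\;\le\;\kappau\int_{\X}\lvert\D f\rvert^2\,d\Lebmeas,\notag
\end{equation}
and summing over $\lvert\alpha\rvert\le s$ shows that $\norm{f}{\Sobo(\X^T,\jointProb;\R)}$ is equivalent to the unweighted Sobolev norm $\norm{f}{\Sobo(\X,\Lebmeas;\R)}$; the same computation with $\lvert\cdot\rvert^q$ in place of $\lvert\cdot\rvert^2$ gives the equivalence of $\norm{f}{\mathscr{L}^q(\X^T,\jointProb;\R)}$ with $\norm{f}{\mathscr{L}^q(\X,\Lebmeas;\R)}$, while $\Cont^j(\X^T;\R)$ is, up to the trivial equivalence of the $\ell^1$- and $\ell^\infty$-sums over $\lvert\alpha\rvert\le j$, the classical space $C^j_B(\X)$ of functions with bounded continuous derivatives (no weights enter its norm). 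Thus, as sets with equivalent norms, $\Sobo(\X^T,\jointProb;\R)$ and $\mathscr{L}^q(\X^T,\jointProb;\R)$ coincide with the classical unweighted spaces on $\X$ with Lebesgue measure.

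Finally I would invoke the classical imbedding on $\X$. Since $\X$ is bounded with locally Lipschitz boundary it satisfies the strong local Lipschitz (hence cone) condition required by \cite[Theorem~4.12]{adams_sobolev_2003}, and by \cref{ass:SobolevOrder} we have $2s\ge 4\dx>\dx$, comfortably above the critical threshold, so that the classical theorem yields $\Sobo(\X,\Lebmeas;\R)\hookrightarrow C^j_B(\X)$ with $j$ determined by $s=\lceil\dx/2\rceil+j$ and $\Sobo(\X,\Lebmeas;\R)\hookrightarrow\mathscr{L}^q(\X,\Lebmeas;\R)$ for every $q\in[2,\infty]$. Chaining these with the two reductions and multiplying the constants establishes parts (a) and (b). I expect no serious obstacle: the analytic content is imported wholesale from \cite{adams_sobolev_2003}, and the only place where care is genuinely needed is the second reduction, namely checking that the trajectory averaging $\frac1T\sumt$ and the $t$-dependence of the marginals $\mu_t$ do not disrupt the norm equivalences — which works precisely because the \emph{same} function $f$ is evaluated at every time and each $p_t$ is uniformly sandwiched by the constants of \cref{ass:densities}, so the average is a convex combination of comparable quantities. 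The residual bookkeeping — pinning down the exact exponent $j$ as a function of $s,\dx$ through the even/odd-$\dx$ case split in Adams' statement — is harmless here, since $s\ge 2\dx$ leaves ample slack.
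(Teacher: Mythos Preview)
Your proposal is correct and follows essentially the same route as the paper: reduce to the scalar case via the direct-sum decomposition, pass from the weighted to the unweighted norm using the uniform density bounds of \cref{ass:densities}, and then invoke \cite[Theorem~4.12]{adams_sobolev_2003} after verifying the cone condition through the chain locally-Lipschitz $\Rightarrow$ strong local Lipschitz $\Rightarrow$ uniform cone $\Rightarrow$ cone. The paper's proof is terser but makes exactly these three moves (citing \cite{kufner_weighted_1985} for the weighted/unweighted equivalence), so there is no substantive difference.
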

\begin{proof}
Let us first define the order $j$. We can write~\cref{ass:SobolevOrder} as $s = 2\dx + j'$ for some $j' \in \Z$. Then it has to hold that $j = 2\dx - \lceil \frac{\dx}{2} \rceil + j' \geq 2\dx - \lceil \frac{\dx}{2} \rceil$.\\  
After noting that (i) with our definition of $\Elltwo$, we can consider each component of the input trajectory separately; (ii)~\cref{ass:densities} ensures that the weighted Sobolev spaces are equivalent to the standard ones~\citep{kufner_weighted_1985}; and (iii) the structure of the multi-output function spaces is given in terms of direct sums of scalar ones, we can then consider the claim for $\Sobo(\X,\mu_t;\R^{\dy})$ and apply~\cite[Theorem~4.12]{adams_sobolev_2003}, to which we defer for a complete proof.\\

The imbedding theorem in~\cite{adams_sobolev_2003} is stated under the assumption that the input domain satisfies the ``cone condition"~\citep[Definition~4.6]{adams_sobolev_2003}. We now argue that our set-up satisfies it.
    Since our input domain $\X$ is bounded with locally Lipschitz boundary, we have that it satisfies the \textit{strong local Lipschitz condition}~\citep[Definition 4.9]{adams_sobolev_2003}. Then, from~\cite[Paragraph~4.11]{adams_sobolev_2003} we have that such a property implies the \textit{uniform cone condition}~\citep[Definition~4.8]{adams_sobolev_2003} (see also~\cite[Theorem~1.2.2.2]{grisvard_elliptic_2011}), which in turn implies that the cone condition holds. 
\end{proof}
As a corollary of the imbedding theorem, one can obtain the following result:
\begin{corollary}[\cite{berlinet_reproducing_2004}, Theorem 121]\label{cor:berlinet}
Under~\cref{ass:densities} and~\cref{ass:SobolevOrder}, $\Sobo(\X^T,\jointProb;\R^{\dy})$ is a Reproducing Kernel Hilbert space. 
\end{corollary}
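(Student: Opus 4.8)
The plan is to apply the standard characterization — recalled as Theorem~121 in~\cite{berlinet_reproducing_2004} — that a Hilbert space $\HH$ of ($\R^{\dy}$-valued) functions on a set is a reproducing kernel Hilbert space precisely when, for every point $x$ in the domain and every coordinate $i\in\{1,\dots,\dy\}$, the evaluation functional $f\mapsto f_i(x)$ is bounded on $\HH$; equivalently, when $\HH$ imbeds continuously into a space of componentwise continuous functions, so that each equivalence class has a genuine pointwise-defined continuous representative. Once boundedness of point evaluations is in hand, the Riesz representation theorem supplies, for each pair $(x,i)$, an element $k_{x,i}\in\HH$ with $\innerprod{f}{k_{x,i}}{\HH}=f_i(x)$, and assembling these yields the matrix-valued reproducing kernel.

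First I would recall from \cref{sec:function_spaces} that $\Sobo(\X^T,\jointProb;\R^{\dy})$ is a Hilbert space, being the orthogonal direct sum $\bigoplus_{i=1}^{\dy}\Sobo(\X^T,\jointProb;\R)$ of the scalar weighted Sobolev spaces. Next I would invoke \cref{thm:sobolev_imbedding}(a): writing \cref{ass:SobolevOrder} as $s=2\dx+j'$ with $j'\in\Z$, the associated integer in the imbedding statement is $j=2\dx-\lceil \dx/2\rceil+j'\geq 2\dx-\lceil \dx/2\rceil\geq 0$ for every $\dx\geq 1$, so \cref{thm:sobolev_imbedding}(a) gives $\Sobo(\X^T,\jointProb;\R^{\dy})\hookrightarrow\Cont^j(\X^T;\R^{\dy})\hookrightarrow\Cont^0(\X^T;\R^{\dy})$. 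Hence there is a constant $\mathfrak{C}$ with $\norm{f}{\Cont^0(\X^T;\R^{\dy})}\leq\mathfrak{C}\,\norm{f}{\Sobo(\X^T,\jointProb;\R^{\dy})}$ for all $f$.

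Then, unfolding the definition of the $\Cont^0$ norm from \cref{sec:SobolevImbedding}, namely $\norm{f}{\Cont^0(\X^T;\R^{\dy})}=\sum_{i=1}^{\dy}\sup_{x\in\X}|f_i(x)|$, I would conclude that for every $x\in\X$ and every $i$,
\[
|f_i(x)|\;\leq\;\norm{f}{\Cont^0(\X^T;\R^{\dy})}\;\leq\;\mathfrak{C}\,\norm{f}{\Sobo(\X^T,\jointProb;\R^{\dy})},
\]
so each coordinate evaluation functional is bounded (with operator norm at most $\mathfrak{C}$); the imbedding also guarantees that every equivalence class in $\Sobo(\X^T,\jointProb;\R^{\dy})$ contains a continuous representative on which pointwise evaluation is unambiguous. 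Note that this step implicitly uses \cref{ass:densities}, which — via~\cite{kufner_weighted_1985}, as in the proof of \cref{thm:sobolev_imbedding} — makes the weighted Sobolev norm equivalent to the unweighted one and thus licenses the imbedding theorem. Applying the Riesz representation theorem coordinatewise and invoking~\cite[Theorem~121]{berlinet_reproducing_2004} then completes the argument.

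The part needing care — rather than a genuine obstacle — is the bookkeeping between $\jointProb$-equivalence classes and pointwise-defined functions, together with the passage from the scalar direct summands to the $\R^{\dy}$-valued space: one should check that the reproducing kernel obtained by stacking the scalar ones is the correct matrix-valued object, which is routine given the orthogonal direct-sum decomposition of \cref{sec:function_spaces}. Everything else reduces to the already-established imbedding \cref{thm:sobolev_imbedding}(a).
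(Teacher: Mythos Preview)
Your proposal is correct and is exactly the argument the paper has in mind: the paper presents this result as an immediate corollary of the Sobolev imbedding theorem (\cref{thm:sobolev_imbedding}) together with the characterization in~\cite[Theorem~121]{berlinet_reproducing_2004}, and you have simply spelled out the one step left implicit --- that the imbedding into $\Cont^0$ makes coordinate point evaluations bounded linear functionals, which is precisely the hypothesis of that theorem.
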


\subsection{Periodic Sobolev space $\Soboper$}\label{sec:periodicSobolev}

We will now characterize another kind of Sobolev space that will be useful for further analysis. Let us denote $\smallcube \doteq [-L,L]^{\dx}$ the hyper-cube in $\R^{\dx}$ that contains the input domain $\X$; similarly, we will use $\mediumcube \doteq [-2L,2L]^{\dx}$ and $\bigcube \doteq [-4L,4L]^{\dx}$. In the following, we will define and present the main properties of the periodic Sobolev space $\Soboper\left(\mediumcube^T,\Lebmeas^T; \R^{\dy}\right)$, i.e., considering the Lebesgue measure on $\X$ instead of the marginals $\mu_t$ for each $t=0,\cdots,T-1$. This simplifies the presentation, but later we will leverage~\cref{ass:densities} to tackle the case of $\Soboper\left(\mediumcube^T,\jointProb; \R^{\dy}\right)$.\\
The material in this subsection presents a generalization of the material of~\cite[Appendix~A]{doumeche_physics-informed_2024}. Additional references on periodic Sobolev are~\cite{temam_navier-stokes_1995,iorio_fourier_2001,berlinet_reproducing_2004,penteker_sobolev_2015,gwinner_fourier_2018,bogachev_fourier_2020}.

\paragraph{Preliminary definitions.}
Given a point $x = (x_1,\cdots,x_{\dx})$ and a function \mbox{$f \colon \mediumcube \to \R^{\dy}$}, its \emph{periodic extension} $\PeriodExt{f}{x}$ is the operator mapping $\Elltwo(\mediumcube, \Lebmeas^T; \R^{\dy})$ to $\Elltwo(\bigcube, \Lebmeas^T; \R^{\dy})$; considering $f = (f_1, \cdots, f_{\dy})$, the periodic extension acts component-wise on $f$ as $[\PeriodExt{f}{x}]_i \doteq f_i\left(x_1 - 4L \lfloor \frac{x_1}{4L}\rfloor, \cdots, x_{\dx}) - 4L \lfloor \frac{x_{\dx}}{4L}\rfloor \right)$. Such an operator allows us to define the \emph{periodic Sobolev space} $\Soboper\left(\mediumcube^T,\Lebmeas^T; \R^{\dy}\right)$ as the space of functions such that $\PeriodExt{f}{\cdot}$ belongs to $\Sobo(\bigcube^T,\Lebmeas^T; \R^{\dy})$. Therefore, $\Soboper\left(\mediumcube^T,\Lebmeas^T; \R^{\dy}\right)$ is a subspace of $\Sobo\left(\mediumcube^T,\Lebmeas^T; \R^{\dy}\right)$ consisting of functions whose $4L$-periodic extension is still $s$-times differentiable.

\paragraph{Fourier characterization of $\Soboper\left(\mediumcube^T,\Lebmeas^T; \R^{\dy}\right)$.}
A more practical representation of\\  $\Soboper\left(\mediumcube^T,\Lebmeas^T; \R^{\dy}\right)$ is given by means of Fourier basis: indeed, for each component $f_i(\cdot)$ in $f(\cdot) = [f_1(\cdot),\cdots, f_{\dy}(\cdot)]^{\top}$ there exists a unique (infinite-dimensional) vector $z_i$ indexed by $\mathbb{Z}^{\dx}$ with components in  $\mathbb{C}$ (thus, $z \in \mathbb{C}^{\mathbb{Z}^{\dx}}$) such that we have
\begin{subequations}\label{eq:fourier_rep}
\begin{align}
    f_i(x) &= \sum_{k \in \mathbb{Z}^{\dx}} z_{i,k} \exp\left\{ \i \frac{\pi}{2L} \innerprod{k}{x}{2} \right\}, \text{ and} \label{eq:fourier_f} \\
    \D f_i(x) &= \left(\i\frac{\pi}{2L} \right)^{|\alpha|}\sum_{k \in \mathbb{Z}^{\dx}} z_{i,k}\exp\left\{ \i \frac{\pi}{2L} \innerprod{k}{x}{2} \right\} \prod_{j=1}^{\dx}k_j^{\alpha_j} \label{eq:fourier_df}
\end{align}
    \end{subequations}
for any multi-index $\alpha = (\alpha_1,\cdots,\alpha_{\dx}) \in \Z^{\dx}$ such that $|\alpha|\leq s$~\cite[Proposition~A.5]{doumeche_physics-informed_2024}). Therefore, we obtain the following result. 
\begin{proposition}\label{prop:fourierSoboper}
    The periodic Sobolev space $\Sobo\left(\mediumcube^T,\Lebmeas^T; \R^{\dy}\right)$ can be characterized as
    \begin{equation}
        \Soboper\left(\mediumcube^T,\Lebmeas^T; \R^{\dy}\right) = \left\{ z   \in \bigoplus_{i=1}^{\dy} \mathbb{C}^{\mathbb{Z}^{\dx}} \, \bigg\vert\, \sum_{i=1}^{\dy}\sum_{k \in \mathbb{Z}^{\dx}} |z_{i,k}|^2 \norm{k}{2}^{2s} < \infty, \; z_{i,-k} = z_{i,k}^*   \right\}.\notag
    \end{equation}
\end{proposition}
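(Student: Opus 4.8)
The plan is to pass to the scalar, single-time-step setting and then run a Parseval/Plancherel computation. Since $\R^{\dy}$ is separable, $\Soboper(\mediumcube^T,\Lebmeas^T;\R^{\dy})$ decomposes as the direct sum $\bigoplus_{i=1}^{\dy}\Soboper(\mediumcube^T,\Lebmeas^T;\R)$, exactly as for $\Sobo$ in \cref{sec:function_spaces}, so it suffices to treat each component $f_i$ separately and sum. Moreover, with the Lebesgue measure $\Lebmeas$ in place of the marginals $\mu_t$, every factor of the trajectory norm in \eqref{eq:innerprod} is the same integral over $\mediumcube$, so the $\tfrac1T\sum_t$ average collapses and the trajectory structure plays no role: the relevant object is the classical periodic Sobolev norm $\sum_{|\alpha|\le s}\norm{\D f_i}{\Elltwo(\mediumcube,\Lebmeas;\R)}^2$ of a real function on $\mediumcube$. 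First I would recall the standard fact that $\{e_k(x)\doteq\exp(\i\frac{\pi}{2L}\innerprod{k}{x}{2})\}_{k\in\mathbb{Z}^{\dx}}$ is a complete orthogonal system in $\Elltwo(\mediumcube,\Lebmeas;\mathbb{C})$ (the fundamental frequency $\pi/2L$ matches the $4L$ side length of $\mediumcube$), which gives the unique expansion \eqref{eq:fourier_f} with $\sum_k|z_{i,k}|^2<\infty$, and that $f_i$ is real-valued precisely when $z_{i,-k}=z_{i,k}^*$.

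Next I would use \eqref{eq:fourier_df} — equivalently \cite[Proposition~A.5]{doumeche_physics-informed_2024} — to express each weak derivative of a periodic-Sobolev function by term-wise differentiation of its Fourier series. Plancherel then yields, for every multi-index $\alpha$ with $|\alpha|\le s$,
\[
\norm{\D f_i}{\Elltwo(\mediumcube,\Lebmeas;\R)}^2 = c\left(\frac{\pi}{2L}\right)^{2|\alpha|}\sum_{k\in\mathbb{Z}^{\dx}}|z_{i,k}|^2\prod_{j=1}^{\dx}k_j^{2\alpha_j},
\]
with $c=(4L)^{\dx}$ a dimensional constant. Summing over $|\alpha|\le s$ and over $i=1,\dots,\dy$ gives
\[
\norm{f}{\Soboper(\mediumcube^T,\Lebmeas^T;\R^{\dy})}^2 = c\sum_{i=1}^{\dy}\sum_{k\in\mathbb{Z}^{\dx}}|z_{i,k}|^2\, M(k),\qquad M(k)\doteq\sum_{|\alpha|\le s}\left(\frac{\pi}{2L}\right)^{2|\alpha|}\prod_{j=1}^{\dx}k_j^{2\alpha_j}.
\]

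The final step is a two-sided multinomial estimate showing $M(k)\asymp 1+\norm{k}{2}^{2s}$ with constants depending only on $s,\dx,L$: the lower bound keeps the single terms $\alpha=s\,e_j$ (together with $\alpha=0$) and uses $\norm{k}{\infty}\ge\norm{k}{2}/\sqrt{\dx}$, while the upper bound uses $\prod_j k_j^{2\alpha_j}\le\norm{k}{2}^{2|\alpha|}\le 1+\norm{k}{2}^{2s}$ and the fact that there are only $\binom{s+\dx}{\dx}$ multi-indices with $|\alpha|\le s$. Hence $\norm{f}{\Soboper}<\infty$ iff $\sum_i\sum_k|z_{i,k}|^2(1+\norm{k}{2}^{2s})<\infty$; and since $\norm{k}{2}^{2s}\ge1$ for $k\neq0$ while the $k=0$ term $|z_{i,0}|^2$ is automatically finite, this is equivalent to $\sum_i\sum_k|z_{i,k}|^2\norm{k}{2}^{2s}<\infty$. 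Together with the reality constraints $z_{i,-k}=z_{i,k}^*$, this is exactly the claimed set, proving the proposition.

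I expect the main obstacle to be the rigorous justification of term-wise differentiation: that for $f$ whose $4L$-periodic extension lies in $\Sobo(\bigcube^T,\Lebmeas^T;\R^{\dy})$ the weak derivatives on $\mediumcube$ are genuinely given by \eqref{eq:fourier_df}. This either invokes \cite[Proposition~A.5]{doumeche_physics-informed_2024} verbatim or requires reproving it by integration by parts on the torus, where periodicity is what makes the boundary terms vanish. The multinomial norm-equivalence estimate is routine but must be phrased so that the constants are uniform in $k$; the direct-sum reduction and the collapse of the trajectory average under $\Lebmeas^T$ are bookkeeping already established in \cref{sec:function_spaces}.
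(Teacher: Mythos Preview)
Your proposal is correct and follows essentially the same approach as the paper: compute the Sobolev norm via Plancherel using the Fourier expansion~\eqref{eq:fourier_rep} (invoking \cite[Proposition~A.5]{doumeche_physics-informed_2024} for term-wise differentiation), then show the resulting multiplier $M(k)$ is two-sidedly comparable to $\norm{k}{2}^{2s}$ by multinomial estimates. The paper routes the equivalence through the intermediate quantity $(1+\norm{k}{2}^2)^s$ rather than estimating $M(k)$ directly, but the substance is identical; your explicit treatment of the $k=0$ term and of the direct-sum/trajectory-collapse reductions is, if anything, slightly more careful than the paper's write-up.
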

\begin{proof}
We specify the expression for the Sobolev norm: the rest of the claim follows directly by~\cite[Proposition~A.5]{doumeche_physics-informed_2024}. \\
By simply considering the norm in $\Sobo(\mediumcube, \Lebmeas^T;\R^{\dy})$, we obtain by using~\eqref{eq:fourier_rep} that 
\begin{equation}
    \norm{f}{\Sobo(\mediumcube, \Lebmeas^T;\R^{\dy})}^2 = \sum_{i=1}^{\dy}\sum_{k \in \mathbb{Z}^{\dx}}|z_{i,k}|^2\sum_{|\alpha|\leq s} \left(\frac{\pi}{2L}\right)^{2|\alpha|}\prod_{j=1}^{\dx}k_j^{2\alpha_j}. \notag
\end{equation}
We now show that such a norm is equivalent to $\sum_{i=1}^{\dy}\sum_{k \in \mathbb{Z}^{\dx}} |z_{i,k}|^2 \norm{k}{2}^{2s}$. First, neglecting the sums over $i$ and $k$, we can find a pair of constants $0<\underline{\mathfrak{a}}<\overline{\mathfrak{a}}$ such that $\underline{\mathfrak{a}}\sum_{|\alpha|\leq s}\prod_{j=1}^{\dx} k_j^{2\alpha_j} \leq \sum_{|\alpha|\leq s}\left(\frac{\pi}{2L}\right)^{2|\alpha|}\prod_{j=1}^{\dx}k_j^{2\alpha_j} \leq \overline{\mathfrak{a}}\sum_{|\alpha|\leq s}\prod_{j=1}^{\dx} k_j^{2\alpha_j}$, so we can focus on the term $\sum_{|\alpha|\leq s}\prod_{j=1}^{\dx}k_j^{2\alpha_j}$. By an application of the multinomial theorem (see also~\cite{royer_brief_2020}), we can find constants $0<\underline{\mathfrak{b}}<\overline{\mathfrak{b}}$ such that \mbox{$\underline{\mathfrak{b}} \sum_{|\alpha|\leq s}\prod_{j=1}^{\dx} k_j^{2\alpha_j} \leq (1 + \norm{k}{2}^2)^s \leq \overline{\mathfrak{b}}\sum_{|\alpha|\leq s}\prod_{j=1}^{\dx} k_j^{2\alpha_j}$}. Finally, we can find another pair of non-negative constants $\underline{\mathfrak{c}}<\overline{\mathfrak{c}}$, again bounded away from zero, such that \mbox{$\underline{\mathfrak{c}} \norm{k}{2}^{2s} \leq (1 + \norm{k}{2}^2)^s \leq \overline{\mathfrak{c}} \norm{k}{2}^{2s}$} (for instance, $\underline{\mathfrak{c}}=1$ will do, and $\overline{\mathfrak{c}}$ can be found by upper-bounding the formula of the binomial theorem) -- see also~\cite[Chapter~2.1]{temam_navier-stokes_1995}.
\end{proof}

Finally, we also point out that the characterization of~\cref{prop:fourierSoboper} can be also more conveniently rewritten using a re-indexing of $z$: indeed, by~\cite[Proposition~A.7]{doumeche_physics-informed_2024}, there exists a one-to-one mapping $ \ell \in \mathbb{N} \mapsto k(\ell) \mathbb{Z}^{\dx}$ such that we can write $\phi_{\ell}(x) \doteq \exp\{\i \pi \innerprod{k(\ell)}{x}{2}/(2L) \}$. With this, we can express each component $f_i(x)$, for $i=1,\cdots,\dy$, as $f_i(x) = \sum_{\ell \in \mathbb{N}} z_{i,\ell}\phi_{\ell}(x)$. Thus, the space is characterized as follows:
\begin{equation}
    \Soboper\left(\mediumcube^T,\Lebmeas^T; \R^{\dy}\right) = \left\{ z   \in \bigoplus_{i=1}^{\dy} \mathbb{C}^{\mathbb{Z}^{\dx}} \, \bigg\vert\, \sum_{i=1}^{\dy}\sum_{\ell \in \mathbb{N}} |z_{i,\ell}|^2\ell^{2s/\dx}  < \infty \right\}. \label{eq:FourierNatural}
\end{equation}

\paragraph{Extension results for $\Sobo(\X^T,\Lebmeas^T;\R^{\dy})$.}
Along the lines of the analysis carried out in~\cite[Proposition~A.6]{doumeche_physics-informed_2024}, it is possible to show that the characterization given in~\cref{prop:fourierSoboper} holds also for $\Sobo(\X^T,\Lebmeas^T;\R^{\dy})$. Indeed, one can use the Sobolev extension Theorem~\citep[Chapter~VI]{stein_singular_1970} to linearly extend every function $f$ in $\Sobo(\X^T,\Lebmeas^T;\R^{\dy})$ to the function $\ExtOp{f}{x}$, whose $i$-th component is given by $[\ExtOp{f}{x}]_i = \sum_{k\in\mathbb{Z}^{\dx}} z_{i,k}\exp\left\{ \iota \frac{\pi}{2L}\innerprod{k}{x}{2} \right\}$. Thus, $\ExtOp{f}{x}$ belongs to $\Soboper(\mediumcube^T,\Lebmeas^T;\R^{\dy})$ and is such that \mbox{$||\ExtOp{f}{\cdot}||_{\Soboper(\mediumcube^T,\Lebmeas^T;
R^{\dy})}^2 \leq \mathfrak{C}_{s,\X} \norm{f}{\Sobo(\X^T,\Lebmeas^T;
R^{\dy})}^2$} for some constant $\mathfrak{C}_{s,\X}$, yielding the imbedding $\Sobo(\X^T,\Lebmeas^T;
R^{\dy}) \hookrightarrow \Soboper(\mediumcube^T,\Lebmeas^T;
R^{\dy})$. This allows us to leverage the structure of the periodic Sobolev space to derive a key property of $\Sobo(\X^T,\Lebmeas^T;
R^{\dy})$, namely the trajectory $(C,2)$-hypercontractivity (see~\cref{sec:hypercontractivity}).
\paragraph{From Lebesgue measure to $\jointProb$.}
In the preceding paragraphs we focused on the Lebesgue measure defined on $\X$. In such a set-up, it holds that $\norm{f}{\Elltwo(\X^T, \Lebmeas^T; \R^{\dy})}^2 = \norm{f}{\Elltwo(\X, \Lebmeas^T; \R^{\dy})}^2$, and in such a space  the functions $\exp\{\iota \pi \innerprod{k}{x}{2}/(2L)\}$ are orthonormal: this enables the application of Parseval's Theorem, leading to the characterization in~\cref{prop:fourierSoboper}. Such a property gets lost as soon we consider the distribution $\jointProb$; nevertheless, thanks to~\cref{ass:densities}, we obtain that $\Sobo(\X^T,\jointProb;\R^{\dy}) \hookrightarrow \Sobo(\X^T, \Lebmeas^T; \R^{\dy})$; therefore, when needed, analysis can be carried out in $\Sobo(\X^T,\Lebmeas^T;\R^{\dy})$ and the results carry over to the space of interest $\Sobo(\X^T,\jointProb;\R^{\dy})$.

\section{On covering numbers}\label{sec:covering_numbers_results}
This section focuses on the complexity measure for the hypothesis space, which plays a crucial role in the excess risk bounds derived in this paper. After recalling the definition of covering number and metric entropy, we present how classical results, stated for scalar function spaces, extend to our vector-valued set-up. This section culminates with the derivation of the covering number of the effective hypothesis space $\Freg$.

\subsection{From scalar to vector-valued hypothesis spaces}\label{sec:scalar2vec_covering}
To quantify the complexity of a certain hypothesis space $\HH$, we will resort to its \emph{covering number}, which is defined as follows:
\begin{definition}
    Let $\mathcal{S}$ be a subset of a metric space $\HH_{\mathfrak{a}}$ with distance function induced by its norm $\norm{\cdot}{\HH_{\mathfrak{a}}}$. The \emph{$\varepsilon$-cover} of $\mathcal{S}$ is a set $\{f^1, \cdots, f^N\} \subset \mathcal{S}$ such that, for each $f \in \mathcal{S}$, there exists some $\ell=1,\dots,N$ such that $\norm{f-f^{\ell}}{\HH_{\mathfrak{a}}} \leq\varepsilon$. The \emph{$\varepsilon$-covering number}, denoted by $\N{\mathfrak{a}}{\mathcal{S}}{\varepsilon}$, is the cardinality of the smallest $\varepsilon$-cover.\\
    Additionally, $\log \N{\mathfrak{a}}{\mathcal{S}}{\varepsilon}$ is called \textit{metric entropy} of $\mathcal{S}$ at resolution $\varepsilon$. 
\end{definition}

There is a vast literature on bounds for covering numbers: see, e.g.,~\cite[Chapter~5]{cucker_learning_2007},~\cite[Chapter~5]{wainwright_high-dimensional_2019}, as well as~\cite{zhou_covering_2002,guo_covering_2002,zhou_capacity_2003,wang_estimation_2009}. However, results are typically presented for \emph{scalar} function spaces. In this section, we will adapt covering number estimates to our set-up involving multi-output function spaces. 
\begin{proposition}\label{prop:vectorvaluedcovering}
    Let us consider the metric $\norm{\cdot}{\Ellinf(\X^T;\R^{\dy})}$, and let $\overline{\HH} =  \bigoplus_{i=1}^{\dy} \HH$ be a subset of the Sobolev space $\Sobo(\X^T,\jointProb;\R^{\dy})$. Assume there exists a $\varepsilon/\dy$-cover of $\HH$ in the direct sum with the metric of $\Ellinf(\X^T;\R)$, and let $\N{\infty}{\HH}{\varepsilon/\dy}$ be its covering number: then it holds that
    \begin{equation}
        \N{\infty}{\overline{\HH}}{\varepsilon} \leq \left(\N{\infty}{\HH}{\varepsilon/\dy} \right)^{\dy}.\notag
    \end{equation}
\end{proposition}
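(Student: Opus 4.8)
The plan is to exploit the direct-sum structure $\overline{\HH} = \bigoplus_{i=1}^{\dy}\HH$ together with the fact that the $\Ellinf(\X^T;\R^{\dy})$ norm controls, and is controlled by, the componentwise $\Ellinf(\X^T;\R)$ norms. First I would fix $\eps>0$ and, for each coordinate $i=1,\dots,\dy$, invoke the hypothesis that $\HH$ admits a finite $\eps/\dy$-cover in the $\Ellinf(\X^T;\R)$ metric, say $\{g_i^{1},\dots,g_i^{N_i}\}$ with $N_i \le \N{\infty}{\HH}{\eps/\dy}$. The natural candidate cover for $\overline{\HH}$ is the set of all ``tuples'' $(g_1^{\ell_1},\dots,g_{\dy}^{\ell_{\dy}})$ obtained by picking one element from each coordinate cover; there are at most $\prod_{i=1}^{\dy} N_i \le \bigl(\N{\infty}{\HH}{\eps/\dy}\bigr)^{\dy}$ of these, which gives exactly the claimed bound on $\N{\infty}{\overline{\HH}}{\eps}$ once we verify that this family is genuinely an $\eps$-cover.

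The verification step is the core of the argument. Take an arbitrary $f = (f_1,\dots,f_{\dy}) \in \overline{\HH}$. Since each $f_i \in \HH$, by the covering property there is an index $\ell_i$ with $\norm{f_i - g_i^{\ell_i}}{\Ellinf(\X^T;\R)} \le \eps/\dy$. Set $g \doteq (g_1^{\ell_1},\dots,g_{\dy}^{\ell_{\dy}})$, a member of the candidate cover. It then remains to bound $\norm{f-g}{\Ellinf(\X^T;\R^{\dy})}$ by $\eps$. Recalling that $\norm{h}{\Ellinf(\X^T;\R^{\dy})} = \sup_{x\in\X}\norm{h(x)}{2}$, and using the elementary inequality $\norm{v}{2} \le \sum_{i=1}^{\dy} |v_i|$ valid for any $v \in \R^{\dy}$, we obtain for every $x \in \X$ that $\norm{f(x)-g(x)}{2} \le \sum_{i=1}^{\dy} |f_i(x) - g_i^{\ell_i}(x)| \le \sum_{i=1}^{\dy} \norm{f_i - g_i^{\ell_i}}{\Ellinf(\X^T;\R)} \le \dy \cdot \eps/\dy = \eps$. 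Taking the supremum over $x$ gives $\norm{f-g}{\Ellinf(\X^T;\R^{\dy})} \le \eps$, so $g$ is an $\eps$-approximant of $f$ within the candidate family; since $f$ was arbitrary, the family is an $\eps$-cover of $\overline{\HH}$, and its cardinality is at most $\bigl(\N{\infty}{\HH}{\eps/\dy}\bigr)^{\dy}$.

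I do not anticipate a genuine obstacle here — the statement is essentially a bookkeeping fact about product metrics — but the one point requiring care is the choice of the inner radius $\eps/\dy$ and the corresponding norm inequality: one must use $\norm{v}{2}\le\sum_i|v_i|$ (rather than the sharper $\norm{v}{2}\le\sqrt{\dy}\max_i|v_i|$) to match the $1/\dy$ scaling in the hypothesis exactly, so that the $\dy$ factors cancel. A secondary subtlety worth a sentence in the write-up is that we are tacitly using the identification $\Sobo(\X^T,\jointProb;\R^{\dy}) = \bigoplus_{i=1}^{\dy}\Sobo(\X^T,\jointProb;\R)$ established in \cref{sec:function_spaces}, which guarantees that the componentwise data $(g_i^{\ell_i})_i$ indeed assemble into a legitimate element of $\overline{\HH} \subset \Sobo(\X^T,\jointProb;\R^{\dy})$.
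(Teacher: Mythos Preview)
Your proposal is correct and follows essentially the same approach as the paper: both arguments form the candidate $\eps$-cover of $\overline{\HH}$ as the Cartesian product of $\eps/\dy$-covers of the scalar class $\HH$, and both verify the covering property via the componentwise bound $\sup_{x}\norm{f(x)-g(x)}{2}\le\sum_{i=1}^{\dy}\sup_{x}|f_i(x)-g_i(x)|\le \dy\cdot(\eps/\dy)$. Your write-up is in fact slightly more explicit about why the $1/\dy$ scaling (rather than $1/\sqrt{\dy}$) is the one matching the hypothesis, which is a nice touch.
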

\begin{proof}
For it to be a $\varepsilon$-cover of $\overline{\HH}$, it has to hold that, for every $f\in\overline{\HH}$ there exists an element $f'$ in the cover such that $\norm{f-f'}{\Ellinf(\X^T;\R^{\dy})}\leq \varepsilon$. Similarly, assume there is a $\breve{\varepsilon}$-cover of $\HH$ such that, for any arbitrary element $h\in\HH$, we have a function $h'$ in the cover such that $\norm{h-h'}{\Ellinf(\X^T;\R)} \leq \breve{\varepsilon}$. Now, by the definition of the vector-valued version of the infinity norm (see~\cref{sec:function_spaces}), we have
\begin{align}
    \norm{f-f'}{\Ellinf(\X^T;\R^{\dy})} = \sup_{x \in \X} \norm{f(x)-f'(x)}{2}  \leq  \sum_{i=1}^{\dy}\sup_{x\in\X} |f_i(x)-f'_i(x)| \leq \dy \breve{\varepsilon}.\notag
\end{align}
Thus, letting $\breve{\varepsilon} = \varepsilon/\dy$, we obtain an $\varepsilon$-approximation for the covering of $\overline{\HH}$. 
The claim follows by observing that, by construction of the direct sum of scalar hypothesis spaces, the cover of $\overline{\HH}$ is given by the Cartesian product of the covers of $\HH$.
\end{proof}

\subsection{Covering numbers for vector-valued hypothesis spaces}\label{sec:covering_vector}
We conclude this section by adapting standard covering number bounds for scalar function spaces to vector-valued ones. We start from~\cite[Theorem~5.3]{cucker_learning_2007}.
\begin{proposition}\label{thm:cuckerzhoucovering}
    Let $\overline{\HH} = \bigoplus_{i=1}^{\dy} \HH$ be a finite-dimensional Banach space of dimension $E$, and let $\mathcal{B}_R$ be the set such that $\mathcal{B}_R = \{f \in \overline{\HH} \,|\, \norm{f}{\overline{\HH}} \leq R\}$. Then it holds that
    \begin{equation}
        \N{\overline{\HH}}{\mathcal{B}_R}{\varepsilon} \leq \left( \frac{2R\dy}{\varepsilon} +1 \right)^{E\cdot \dy}.\notag
    \end{equation}
\end{proposition}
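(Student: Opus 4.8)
The plan is to reduce the vector-valued covering bound to the scalar one by means of \cref{prop:vectorvaluedcovering}, and then to invoke the classical volumetric estimate for balls in finite-dimensional normed spaces.

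First I would note that, since $\overline{\HH}=\bigoplus_{i=1}^{\dy}\HH$ is finite-dimensional, the scalar factor $\HH$ is itself a finite-dimensional Banach space, of some dimension $E$, and the ball $\mathcal{B}_R$ in $\overline{\HH}$ is contained in the product $\prod_{i=1}^{\dy}(\mathcal{B}_R\cap\HH)$ of the corresponding scalar balls (the direct-sum norm dominates each component norm). By \cite[Theorem~5.3]{cucker_learning_2007} --- whose proof is the standard packing argument comparing the volume of an $\eta$-packing with that of the slightly enlarged ball, and which is valid for \emph{any} norm on an $E$-dimensional space, in particular the $\Ellinf(\X^T;\R)$ metric relevant here --- the ball of radius $R$ admits an $\eta$-cover of cardinality at most $(1+2R/\eta)^{E}$. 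Choosing the scalar resolution $\eta=\varepsilon/\dy$ yields $\N{\infty}{\mathcal{B}_R\cap\HH}{\varepsilon/\dy}\le(1+2R\dy/\varepsilon)^{E}$.

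Second, I would feed this into the product construction underlying \cref{prop:vectorvaluedcovering} (applied with the scalar ball in place of $\HH$): the Cartesian product of the $\dy$ scalar $(\varepsilon/\dy)$-covers is an $\varepsilon$-cover of $\mathcal{B}_R$ in $\overline{\HH}$, since for $f=(f_1,\dots,f_{\dy})\in\mathcal{B}_R$ and the nearest product element $g=(g_1,\dots,g_{\dy})$ one has $\norm{f-g}{\overline{\HH}}\le\sum_{i=1}^{\dy}\norm{f_i-g_i}{\HH}\le\dy\cdot(\varepsilon/\dy)=\varepsilon$. Counting, this cover has size at most $\big((1+2R\dy/\varepsilon)^{E}\big)^{\dy}=(1+2R\dy/\varepsilon)^{E\dy}$, which is exactly the claimed bound.

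There is no substantive obstacle here: the statement is a bookkeeping combination of a cited classical estimate with \cref{prop:vectorvaluedcovering}. The only points requiring minor care are (i) verifying the hypothesis of \cref{prop:vectorvaluedcovering}, namely that a finite $(\varepsilon/\dy)$-cover of the scalar ball exists in the $\Ellinf$ metric, which is immediate from compactness of balls in finite-dimensional spaces; and (ii) tracking the rescaling $\varepsilon\mapsto\varepsilon/\dy$ through both steps, so that the factor $\dy$ appears both inside the base and in the exponent of the final bound.
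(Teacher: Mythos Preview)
Your proposal is correct and follows essentially the same route as the paper: the paper's proof simply says the result ``follows by taking~\cite[Theorem~5.3]{cucker_learning_2007} and extending it according to the construction in~\cref{prop:vectorvaluedcovering},'' which is exactly your two steps. Your write-up is in fact slightly more explicit, since you re-derive the product-cover inequality in the $\overline{\HH}$ norm rather than just citing \cref{prop:vectorvaluedcovering} (which is stated for the $\Ellinf$ metric), but this is a cosmetic rather than substantive difference.
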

\begin{proof}
    This result follows by taking~\cite[Theorem~5.3]{cucker_learning_2007} and extending it according to the construction in~\cref{prop:vectorvaluedcovering}.
\end{proof}

We now proceed by characterizing the covering number for a ball in the Sobolev space.
\begin{lemma}\label{lemma:cucker_smale_vector}
Let $\mathcal{B}_R$ be a ball of radius $R$ in the Sobolev space $\Sobo(\X^T,\jointProb;\R^{\dy})$ satisfying~\cref{ass:SobolevOrder}, where $\X$ is a domain with locally-Lipschitz boundary. Then the metric entropy of $\mathcal{B}_R$ satisfies
\begin{equation}
    \log \N{\infty}{\mathcal{B}_R}{\varepsilon} \leq \Cc' \dy^{\frac{2s+\dx}{2s}}\left(\frac{R}{\varepsilon} \right)^{\frac{\dx}{s}}. \notag
\end{equation}
\end{lemma}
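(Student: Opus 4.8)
The plan is to reduce the multi-output bound to the scalar case via the direct-sum structure of the Sobolev space established in~\cref{sec:function_spaces}, and then invoke the classical metric-entropy estimate for balls in scalar Sobolev spaces. If $f\in\mathcal{B}_R$ then, by the identity $\norm{f}{\Sobo(\X^T,\jointProb;\R^{\dy})}^2=\sum_{i=1}^{\dy}\norm{f_i}{\Sobo(\X^T,\jointProb;\R)}^2$, each component $f_i$ belongs to the scalar ball $\mathcal{B}_R^{\mathrm{sc}}\subset\Sobo(\X^T,\jointProb;\R)$ of radius $R$, so $\mathcal{B}_R\subseteq\bigoplus_{i=1}^{\dy}\mathcal{B}_R^{\mathrm{sc}}$. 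I would then run the argument of~\cref{prop:vectorvaluedcovering}, but aggregating the per-component sup-errors in $\ell^2$ rather than in $\ell^1$: since
\begin{equation}
\norm{f-g}{\Ellinf(\X^T;\R^{\dy})}^2=\sup_{x\in\X}\sum_{i=1}^{\dy}\big(f_i(x)-g_i(x)\big)^2\;\le\;\sum_{i=1}^{\dy}\norm{f_i-g_i}{\Ellinf(\X^T;\R)}^2,\notag
\end{equation}
a Cartesian product of $(\varepsilon/\sqrt{\dy})$-covers of $\mathcal{B}_R^{\mathrm{sc}}$ in $\Ellinf(\X^T;\R)$ yields an $\varepsilon$-cover of $\bigoplus_i\mathcal{B}_R^{\mathrm{sc}}$ in $\Ellinf(\X^T;\R^{\dy})$, and hence $\log\N{\infty}{\mathcal{B}_R}{\varepsilon}\le\dy\,\log\N{\infty}{\mathcal{B}_R^{\mathrm{sc}}}{\varepsilon/\sqrt{\dy}}$.

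Next I would bound the scalar entropy $\log\N{\infty}{\mathcal{B}_R^{\mathrm{sc}}}{\delta}$. The first move is to strip away the weighted, trajectory-averaged structure: by~\cref{ass:densities} every density satisfies $\kappal\le p_t\le\kappau$, and every summand of the average over $t=0,\dots,T-1$ is the same function integrated over $\X$, so $\norm{\cdot}{\Sobo(\X^T,\jointProb;\R)}$ is equivalent --- with constants depending only on $\kappal,\kappau$ --- to the ordinary single-sample norm $\norm{\cdot}{\Sobo(\X,\Lebmeas;\R)}$, while $\norm{\cdot}{\Ellinf(\X^T;\R)}=\sup_{x\in\X}|\cdot(x)|$ does not depend on $T$. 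Thus $\mathcal{B}_R^{\mathrm{sc}}$ is, up to a constant rescaling of its radius, a standard Sobolev ball measured in the sup-norm on a bounded domain $\X\subset\R^{\dx}$ with locally Lipschitz boundary. Because~\cref{ass:SobolevOrder} gives $s\ge2\dx>\dx/2$, the imbedding $\Sobo(\X^T,\jointProb;\R)\hookrightarrow\Cont^0(\X^T;\R)$ of~\cref{thm:sobolev_imbedding} holds, and the classical Birman--Solomyak / Kolmogorov--Tikhomirov entropy estimate for Sobolev balls (see, e.g.,~\cite[Chapter~5]{cucker_learning_2007}; alternatively, combine the imbedding $\Sobo(\X^T,\Lebmeas^T;\R)\hookrightarrow\Soboper(\mediumcube^T,\Lebmeas^T;\R)$ of~\cref{sec:periodicSobolev} with the Fourier-ellipsoid characterization of~\cref{prop:fourierSoboper}, whose coefficient weights grow like $\ell^{2s/\dx}$, to obtain the same exponent) gives $\log\N{\infty}{\mathcal{B}_R^{\mathrm{sc}}}{\delta}\le\Cc''\,(R/\delta)^{\dx/s}$ for a constant $\Cc''$ depending only on $s,\dx,\X$ and the density bounds.

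Combining the two estimates with $\delta=\varepsilon/\sqrt{\dy}$ then produces
\begin{equation}
\log\N{\infty}{\mathcal{B}_R}{\varepsilon}\;\le\;\dy\cdot\Cc''\left(\frac{R\sqrt{\dy}}{\varepsilon}\right)^{\dx/s}=\Cc''\,\dy^{\,1+\frac{\dx}{2s}}\left(\frac{R}{\varepsilon}\right)^{\dx/s}=\Cc'\,\dy^{\frac{2s+\dx}{2s}}\left(\frac{R}{\varepsilon}\right)^{\frac{\dx}{s}},\notag
\end{equation}
with $\Cc'=\Cc''$, which is exactly the claimed bound.

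I expect the main obstacle to be the transfer carried out in the second step: the classical entropy estimate is phrased for unweighted Sobolev balls over regular Euclidean domains, so one must carefully check that (i)~the weighted, trajectory-averaged Sobolev norm genuinely reduces to the standard one --- which is precisely the role of~\cref{ass:densities} --- and (ii)~the domain $\X$ possesses the extension (equivalently, cone) property that the estimate presupposes, which is already the content of the domain-regularity discussion in the proof of~\cref{thm:sobolev_imbedding}. The other delicate point is the power of $\dy$: it is exactly the $\ell^2$ aggregation of component sup-errors in the first step --- rather than the $\ell^1$ aggregation used verbatim in~\cref{prop:vectorvaluedcovering} --- that yields the exponent $\tfrac{2s+\dx}{2s}$ instead of the cruder $\tfrac{s+\dx}{s}$.
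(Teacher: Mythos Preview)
Your approach is essentially the paper's --- reduce to the scalar Sobolev ball, invoke the classical metric-entropy estimate (the paper cites~\cite{cucker_mathematical_2002} and then~\cite{edmunds_function_1996} for the extension to locally-Lipschitz domains), and lift to the vector-valued case by a product cover. The one substantive difference is in the lifting step: the paper applies~\cref{prop:vectorvaluedcovering} verbatim, which aggregates component sup-errors in $\ell^1$ and therefore covers each component at resolution $\varepsilon/\dy$, whereas you aggregate in $\ell^2$ and cover at $\varepsilon/\sqrt{\dy}$. Your version is the one that actually produces the stated exponent $\dy^{(2s+\dx)/(2s)}$; the paper's route, taken literally, yields $\dy \cdot (R\dy/\varepsilon)^{\dx/s}=\dy^{(s+\dx)/s}(R/\varepsilon)^{\dx/s}$, so there is a small mismatch between the paper's proof and its own statement that your refinement repairs. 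Your explicit discussion of stripping away the weighting and the trajectory-averaging via~\cref{ass:densities} is also more careful than the paper, which leaves that reduction implicit.
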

\begin{proof}

Let us start from the \emph{scalar-valued} Sobolev space $\Sobo(\X^T,\jointProb;\R)$ and let ${\scriptscriptstyle \mathcal{B}}_R$ be its ball of radius $R$. If the input domain $\X$ is smooth, then~\cite[Chapter~I,~Section 6,~Proposition~6]{cucker_mathematical_2002} claims that, for some positive constant $c$, we have
\begin{equation}
    \log \N{\infty}{{\scriptscriptstyle \mathcal{B}}_R}{\varepsilon} \leq \left(\frac{cR}{\varepsilon}\right)^{\frac{\dx}{s}} + 1 \leq \Cc' \left(\frac{R}{\varepsilon}\right)^{\frac{\dx}{s}} \label{eq:claim_covering_cuckersmale}
\end{equation}
 for some other constant $\Cc'$ that is big enough to absorb also the contribution of the ``+1" in~\eqref{eq:claim_covering_cuckersmale}. 

Such a result relies on a bound on the entropy number of the embedding $\Sobo(\X^T,\jointProb;\R) \hookrightarrow \Ellinf(\X^T;\R)$ given by~\cite[Section~3.3]{edmunds_function_1996} (using their notation, we are looking at $F_{2,q_1}^{s} \hookrightarrow F_{\infty,q_2}^0$). However, in~\cite[Section~3.5]{edmunds_function_1996}, such a result is extended to non-smooth domains --- specifically, to the \emph{minimally regular} ones~\citep[Section 2.5, Definition 2]{edmunds_function_1996}, which include domains with locally-Lipschitz boundary as special cases. By the way, the theorem in~\cite[Section~3.5]{edmunds_function_1996} is stated for the function spaces $B_{pq}^{s}$, but the results holds also for the spaces $F_{pq}^{s}$: see the argument presented in the proof of the Theorem in~\cite[Section 3.3.2]{edmunds_function_1996}.\\

Thus, overall,~\eqref{eq:claim_covering_cuckersmale} holds also for our choice of $\X$. The proof is concluded by invoking~\cref{prop:vectorvaluedcovering} to extend~\eqref{eq:claim_covering_cuckersmale} to the vector-valued case. Specifically, we have that $\N{\infty}{\mathcal{B}_R}{\varepsilon} \leq \left( \N{\infty}{{{\scriptscriptstyle \mathcal{B}}_R}}{\varepsilon/\dy}\right)^{\dy}$, and taking logarithms we obtain $\log \N{\infty}{\mathcal{B}_R}{\varepsilon} \leq \dy  \log \N{\infty}{{{\scriptscriptstyle \mathcal{B}}_R}}{\varepsilon/\dy}$, and substituting~\eqref{eq:claim_covering_cuckersmale} leads to the final claim.
\end{proof}

We conclude this section by deriving a bound for the metric entropy of the effective hypothesis space $\Freg$ presented in~\eqref{eq:effectiveHypothesis}. The idea is to leverage the ellipticity of the differential operator (\cref{ass:elliptic}) and find a ball in the Sobolev space that approximates $\Freg$, and then invoke~\cref{lemma:cucker_smale_vector} to bound its metric entropy. 
\begin{proposition}\label{prop:coveringFreg}
Let~\cref{ass:SobolevOrder,ass:elliptic} hold and consider the effective hypothesis space 
\begin{align}
    \Freg = \left\lbrace f \in \Fstar \,\vert\, \norm{\Der(f)}{\Elltwo(\X^T,\jointProb;\R^{\dy})}^2 \leq \rho \right\rbrace. \tag{\ref{eq:effectiveHypothesis}}
\end{align}
Then, for some positive constant $\Cc$ not depending on $\varepsilon$ and $\rho$, we have that 
\begin{align}
        \log \N{\infty}{\Freg}{\varepsilon} \leq \Cc\dy^{\frac{2s+\dx}{2s}}\left( \frac{\sqrt{\rho}}{\varepsilon}\right)^{\frac{\dx}{s}}. \notag
    \end{align}
\end{proposition}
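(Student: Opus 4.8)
The plan is to bound $\log \N{\infty}{\Freg}{\varepsilon}$ by exhibiting a ball in the Sobolev space that contains (a translate of) $\Freg$, and then to invoke \cref{lemma:cucker_smale_vector}. The key structural input is \cref{ass:elliptic}: ellipticity of the operator $\Der$ gives, via standard elliptic (Gårding-type) a priori estimates for the associated partial differential equation, a bound of the form $\norm{g}{\Sobo(\X^T,\jointProb;\R^{\dy})} \leq \mathfrak{C}_1 \norm{\Der(g)}{\Elltwo(\X^T,\jointProb;\R^{\dy})} + \mathfrak{C}_2 \norm{g}{\Elltwo(\X^T,\jointProb;\R^{\dy})}$ for all $g$ in the relevant Sobolev space, where $\mathfrak{C}_1,\mathfrak{C}_2$ depend only on $s,\dx,\dy,\X$ and the coefficient bound $\norm{p}{\infty}$. (This is where \cref{ass:SobolevOrder}, ensuring the order is large enough for the elliptic regularity machinery of \cite[Chapter~6]{evans_partial_2010} to apply, and \cref{ass:densities}, giving equivalence of the weighted and unweighted norms, come in.) Applied to $g = f - \fstar$ with $f \in \Freg$, the first term is controlled by $\mathfrak{C}_1\sqrt{\rho}$ by definition of $\Freg$, and the second term is controlled because $f, \fstar \in \F$ are in a ball of Sobolev radius $\RSob$, hence (by the Sobolev imbedding of \cref{thm:sobolev_imbedding}, or simply the norm inequality $\norm{\cdot}{\Elltwo} \le \norm{\cdot}{\Sobo}$) their $\Elltwo$ norms are bounded by $2\RSob$. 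Thus $\Fregstar = \Freg - \fstar$ is contained in a ball of radius $R \doteq \mathfrak{C}_1\sqrt{\rho} + 2\mathfrak{C}_2\RSob$ in $\Sobo(\X^T,\jointProb;\R^{\dy})$.

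Next I would apply \cref{lemma:cucker_smale_vector} to this ball, noting that covering numbers are translation-invariant so $\N{\infty}{\Freg}{\varepsilon} = \N{\infty}{\Fregstar}{\varepsilon} \le \N{\infty}{\mathcal{B}_R}{\varepsilon}$, giving
\begin{equation}
\log \N{\infty}{\Freg}{\varepsilon} \leq \Cc' \dy^{\frac{2s+\dx}{2s}} \left( \frac{\mathfrak{C}_1\sqrt{\rho} + 2\mathfrak{C}_2\RSob}{\varepsilon} \right)^{\frac{\dx}{s}}. \notag
\end{equation}
The remaining issue is that the stated bound has $\sqrt{\rho}$ alone in the numerator, not $\sqrt{\rho} + \text{const}$. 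To reconcile this I would argue that in the regime of interest $\rho$ is bounded (indeed $\rho \geq 10\reg{\fstar}$ is the only active constraint and the analysis uses $\rho$ of constant or small order), so $\mathfrak{C}_1\sqrt{\rho} + 2\mathfrak{C}_2 \RSob \leq \mathfrak{C}_3 \sqrt{\rho}$ for a constant $\mathfrak{C}_3$ absorbing $\RSob$ and the relevant range of $\rho$; alternatively one restricts attention to $\rho \ge \rho_0$ for a fixed threshold. Then $(\mathfrak{C}_3)^{\dx/s}$ is absorbed into the constant $\Cc$, yielding exactly the claimed inequality.

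The main obstacle is the elliptic a priori estimate: one must verify that the Gårding inequality and the resulting $\Sobo$-bound transfer correctly to the \emph{weighted, vector-valued, trajectory} Sobolev space $\Sobo(\X^T,\jointProb;\R^{\dy})$. This is handled by the direct-sum decomposition $\Sobo(\X^T,\jointProb;\R^{\dy}) = \bigoplus_{i=1}^{\dy}\Sobo(\X^T,\jointProb;\R)$ established in \cref{sec:function_spaces} together with the component-wise form \eqref{eq:diff_op} of $\Der$ — so the estimate need only be proved for scalar-valued functions on $\X$, one trajectory component $\mu_t$ at a time — and by \cref{ass:densities}, which makes the weighted norm equivalent to the Lebesgue one, allowing the classical elliptic regularity theorem on the locally-Lipschitz domain $\X$ to be invoked directly. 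A secondary subtlety is that the operator $\Der$ in \eqref{eq:diff_op} acts on functions of $x \in \X$ while the regularizer is evaluated along the trajectory; but since the trajectory norm is an average of the single-component norms, the bound is inherited term by term in $t$.
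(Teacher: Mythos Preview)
Your proposal uses the same elliptic a-priori estimate as the paper, $\norm{g}{\Sobo} \leq C_e(\norm{\Der g}{\Elltwo} + \norm{g}{\Elltwo})$, but your treatment of the lower-order term $\norm{g}{\Elltwo}$ is where the argument falls short. Bounding $\norm{f-\fstar}{\Elltwo}$ by $2\RSob$ puts $\Fregstar$ in a Sobolev ball of radius $\mathfrak{C}_1\sqrt{\rho}+2\mathfrak{C}_2\RSob$, and the resulting metric entropy is then $\left((\sqrt{\rho}+\RSob)/\varepsilon\right)^{\dx/s}$. Your proposed fix --- absorbing $\RSob$ into the constant under a lower bound $\rho\ge\rho_0$ --- does not yield a $\Cc$ independent of $\rho$ as the proposition demands, and more importantly the additive $(\RSob/\varepsilon)^{\dx/s}$ term is exactly the entropy of the \emph{full} class $\F$ (\cref{lemma:cucker_smale_vector}). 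If you carry this bound through the martingale-offset-complexity computation in \cref{sec:proof_rate_prob,sec:proof_rate_exp}, the $\reg{\fstar}^{d'/2}$ prefactor on the slow $T^{-d}$ term vanishes and you recover only the unregularized rate of \cref{sec:results_noreg}. So the whole point of the paper --- that knowledge alignment accelerates convergence --- would be lost.

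The paper's proof closes this gap by a kernel decomposition $\Fregstar=\ker(\Der)\oplus\ker(\Der)^{\bot}$. On $\ker(\Der)^{\bot}$ a compactness/contradiction argument shows $\norm{h}{\Elltwo}\leq C_l\norm{\Der h}{\Elltwo}$, so the a-priori estimate collapses to $\norm{h}{\Sobo}\leq C_e(C_l+1)\sqrt{\rho}$ with \emph{no} additive $\RSob$ term. On $\ker(\Der)$, ellipticity (via Fredholm theory) forces finite dimension, so its covering contributes only $\dim\ker(\Der)\log(\tilde B/\varepsilon)$, which is dominated by $(\sqrt{\rho}/\varepsilon)^{\dx/s}$ for small $\varepsilon$ and can be absorbed into $\Cc$. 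This decomposition is the missing idea in your argument, not a cosmetic refinement.
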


\begin{proof}
Similarly to~\cref{lemma:cucker_smale_vector}, we prove the result for scalar-valued function spaces and then invoke~\cref{prop:vectorvaluedcovering} to obtain the claim for the vector-valued case.\\

We start by recalling an important property of elliptic operators derived from~\cite[Chapter 6.3,~Theorem 5]{evans_partial_2010} that will allow us to find the Sobolev ball centered at $\fstar$ containing $\Freg$. Specifically, for some positive constant $C_e$ and $f \in \Fstar$, we have that \begin{equation}\label{eq:propertyElliptic}
        \norm{f}{\Sobo(\X^T,\jointProb;\R^{\dy})} \leq C_e \left( \norm{\Der(f)}{\Elltwo(\X^T,\jointProb;\R^{\dy})} + \norm{f}{\Elltwo(\X^T,\jointProb;\R^{\dy})}\right).
    \end{equation}
    Let $\HH$ be the scalar-valued version of $\Freg$. We decompose $\HH$ into the null-space of $\Der$ and its orthogonal complement, obtaining $\HH = \ker(\Der) \oplus \ker(\Der)^{\bot}$. Accordingly, any $f \in \HH$ can be written as $f = g+h$, with $g \in \ker(\Der)$ and $h \in \ker(\Der)^{\bot}$, and the constraint \mbox{$\norm{\Der(f)}{\Elltwo(\X^T,\jointProb;\R)}^2 \leq \rho$} reduces to \mbox{$\norm{\Der(h)}{\Elltwo(\X^T,\jointProb;\R)}^2 \leq \rho$}.\\

We first focus on the subspace $\ker(\Der)^{\bot}$ and prove a preliminary result that allows us to rewrite~\eqref{eq:propertyElliptic}. Specifically, we want to show that, for some positive constant $C_l$ and any $h \in \ker(\Der(f))^{\bot}$,  \begin{equation}\label{eq:otherEllipticConsequence}
\norm{h}{\Elltwo(\X^T,\jointProb;\R)} \leq C_l \norm{\Der(h)}{\Elltwo(\X^T,\jointProb;\R)}.
\end{equation} 
We show this by contradiction. If the claim were false, then we could find a sequence $\lbrace h_k \rbrace_k$ in $\ker(\Der(f))^{\bot}$ such that $\norm{h_k}{\Elltwo(\X^T,\jointProb;\R)} = 1$ and $\norm{\Der(h_k)}{\Elltwo(\X^T,\jointProb;\R)} = 1/k$. By~\eqref{eq:propertyElliptic}, the sequence $\lbrace h_k \rbrace_k$ is uniformly bounded, which implies that there exists a subsequence $\lbrace h_{k_{\ell}} \rbrace_{\ell} \subset \lbrace h_k \rbrace_k$ that converges weakly to some $h \in \ker(\Der)^{\bot}$~\citep[Appendix D, Theorem 3]{evans_partial_2010} (see also~\cite[Theorem~3.6]{adams_sobolev_2003}). However, this implies that $\Der(h_{k_{\ell}}) \to \Der(h) =0$, and $h\neq 0$ because $\norm{h_k}{\Elltwo(\X^T,\jointProb;\R)}=1$ by construction. Hence, we would obtain that $h \in \ker(\Der)$, which is absurd.\\

Thanks to~\eqref{eq:otherEllipticConsequence}, we can re-write~\eqref{eq:propertyElliptic} for $h \in \ker(\Der)^{\bot}$ as $\norm{h}{\Sobo(\X^T,\jointProb;\R)} \leq C_e(C_l + 1)\sqrt{\rho}$,
showing that any $h \in \ker(\Der)^{\bot}$ is contained in a ball of the Sobolev space of radius proportional to~$\sqrt{\rho}$. Finally, we can determine the covering number for $\ker(\Der)^{\bot}$ by invoking~\eqref{eq:claim_covering_cuckersmale}.\\

We now proceed by focusing on $\ker(\Der)$. Ellipticity stated in~\cref{ass:elliptic} implies that such a subspace is finite-dimensional. Additionally, by~\cref{thm:sobolev_imbedding} (Sobolev imbedding), there exists a uniform positive constant $\tilde{B}$ such that $\norm{f}{\Ellinf(\X^T;\R)} \leq \tilde{B}$ for every function $f \in \HH$, which is a closed and convex subset of the Sobolev space $\Sobo(\X^T,\jointProb;\R)$ --- thus, also $\norm{f}{\Ellinf(\X^T;\R)} \leq \tilde{B}$. In light of these considerations, $\ker(\Der)$ belongs a ball of radius $\tilde{B}$ of a finite-dimensional Euclidean space with dimension $\dim\ker(\Der)$, and its covering number can be calculated according to~\cite[Theorem 5.3]{cucker_learning_2007}. \\

At this point, since the decomposition of $\HH$ into $\ker(\Der)$ and $\ker(\Der)^{\bot}$ is orthogonal, the covering number of $\HH$ is given by the product of the covering numbers of the two subspaces. Taking logarithms, we obtain, for  sufficiently large constants $\mathfrak{c}$ and $\Cc$,
\begin{equation}
    \log \N{\infty}{\HH}{\varepsilon} \leq \mathfrak{c}\left( \left( \frac{\sqrt{\rho}}{\varepsilon}\right)^{\frac{\dx}{s}} + \dim\ker(\Der)\log\left(\frac{\tilde{B}}{\varepsilon}\right) \right) \leq \Cc \left( \frac{\sqrt{\rho}}{\varepsilon}\right)^{\frac{\dx}{s}},
\end{equation}
where the contribution of $\ker(\Der)$ is incorporated in $\Cc$ as the logarithmic term is negligible for small~$\varepsilon$.\\

To conclude the proof, we proceed along the lines of the proof for~\cref{lemma:cucker_smale_vector} and obtain the final claim by invoking~\cref{prop:vectorvaluedcovering}.

\end{proof}

\section{Properties of the hypothesis spaces}\label{sec:properties_hypothesis_spaces}
We now demonstrate some useful properties of our hypothesis spaces $\F$ and $\Freg$. We will start by focusing on convexity of the spaces and on the boundedness of the functions that belong to them; next, we proceed by showing that our effective hypothesis space $\Freg$ satisfies the small-ball condition introduced in~\cite{mendelson_learning_2014} at least on a subset of interest for the following proofs.

\subsection{Convexity and $B$-boundedness}\label{sec:convexity_boundedness}

\begin{lemma}\label{lemma:propF}
    The hypothesis space $\F$ presented in~\eqref{eq:hypothesisSpace} is convex -- i.e., for any $0 \leq \xi \leq 1$ and any $f,h \in \F$, we have that $\xi f + (1-\xi)h$ still belongs to $\F$. Additionally, there exists a constant $B>0$ such that every $f\in \F$ is $B$-bounded, i.e., $\norm{f}{\infty} \leq B$ for all $f\in \F$. 
\end{lemma}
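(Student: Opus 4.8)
The plan is to handle the two assertions separately, since each reduces to a standard fact once the set-up of \cref{sec:function_spaces} is in place. For convexity, I would simply note that, by \eqref{eq:hypothesisSpace}, $\F$ is the closed ball of radius $\RSob$ centred at the origin of the normed space $\Sobo(\X^T,\jointProb;\R^{\dy})$. Given $f,h\in\F$ and $\xi\in[0,1]$, absolute homogeneity and the triangle inequality for $\norm{\cdot}{\Sobo(\X^T,\jointProb;\R^{\dy})}$ (which is a genuine norm, being induced by the inner product introduced in \cref{sec:function_spaces}) give $\norm{\xi f+(1-\xi)h}{\Sobo(\X^T,\jointProb;\R^{\dy})}\le\xi\norm{f}{\Sobo(\X^T,\jointProb;\R^{\dy})}+(1-\xi)\norm{h}{\Sobo(\X^T,\jointProb;\R^{\dy})}\le\RSob$, so $\xi f+(1-\xi)h\in\F$. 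This is essentially a one-liner; the only thing worth recording is that the object defining $\F$ really is a norm on a vector space.

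For $B$-boundedness, the key tool is the Sobolev imbedding theorem proved in this paper, \cref{thm:sobolev_imbedding}. First I would check that \cref{ass:SobolevOrder} allows invoking its first item with $j\doteq s-\lceil\dx/2\rceil$, which is a non-negative integer since $s\ge 2\dx\ge\lceil\dx/2\rceil$; this produces a finite constant $\mathfrak{C}$ with $\norm{f}{\Cont^{j}(\X^T;\R^{\dy})}\le\mathfrak{C}\,\norm{f}{\Sobo(\X^T,\jointProb;\R^{\dy})}$ for every $f\in\Sobo(\X^T,\jointProb;\R^{\dy})$. Since $j\ge 0$, the $\Cont^{j}$ norm dominates the $\Cont^{0}$ norm, and the latter dominates the $\Ellinf(\X^T;\R^{\dy})$ norm: for each $x\in\X$, $\norm{f(x)}{2}=\bigl(\sum_{i=1}^{\dy}f_i(x)^2\bigr)^{1/2}\le\sum_{i=1}^{\dy}|f_i(x)|\le\sum_{i=1}^{\dy}\sup_{x\in\X}|f_i(x)|=\norm{f}{\Cont^{0}(\X^T;\R^{\dy})}$. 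Chaining these bounds with $\norm{f}{\Sobo(\X^T,\jointProb;\R^{\dy})}\le\RSob$ for $f\in\F$, I would set $B\doteq\mathfrak{C}\RSob$ and conclude $\norm{f}{\infty}\le B$ for all $f\in\F$.

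The step I expect to require the most care is not a genuine difficulty but a bookkeeping point: one must make sure the imbedding constant $\mathfrak{C}$ is uniform over $\F$ and independent of the trajectory law $\jointProb$. This is precisely the role of \cref{ass:densities}: as exploited in the proof of \cref{thm:sobolev_imbedding}, it guarantees that the weighted Sobolev space is norm-equivalent to the unweighted one with constants controlled by $\kappal$ and $\kappau$, and that $\X$ --- being bounded with locally Lipschitz boundary --- satisfies the cone condition required by the classical imbedding theorem of \cite{adams_sobolev_2003}. Once this is in hand, $B$ depends only on $s$, $\dx$, $\dy$, $\X$, $\kappal$, $\kappau$ and $\RSob$, as desired.
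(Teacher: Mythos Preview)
Your proof is correct and follows essentially the same approach as the paper: both arguments reduce convexity to the fact that $\F$ is a ball in a normed space, and both obtain $B$-boundedness from the Sobolev imbedding \cref{thm:sobolev_imbedding} combined with the radius constraint $\norm{f}{\Sobo}\le\RSob$. Your convexity argument via the triangle inequality is in fact more direct than the paper's, which detours through the parallelogram law and uniform convexity, and you invoke part~(a) of \cref{thm:sobolev_imbedding} where the paper cites part~(b); these are minor presentational variations rather than different routes.
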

\begin{proof}
Given an arbitrary Hilbert space $\HH$ with norm induced by the inner product $\innerprod{\cdot}{\cdot}{\HH}$, the parallelogram law yields
\begin{equation}\label{eq:parallelogram}
  \norm{x-y}{\HH} +\norm{x+y}{\HH} =  2\norm{x}{\HH} + 2\norm{y}{\HH} \Rightarrow  \norm{x-y}{\HH} \leq 2\norm{x}{\HH} + 2\norm{y}{\HH}
\end{equation}
This leads to showing that Hilbert spaces (then, also the Sobolev space $\Sobo(\X^T,\jointProb;\R^{\dy})$) are uniformly convex~\citep[Definition~1.20 and Theorem 3.5]{adams_sobolev_2003}: therefore, the first claim follows by noting that $\F$ is a convex subset of the Sobolev space. The second claim is a consequence of the imbedding presented in~\cref{thm:sobolev_imbedding}(b): indeed, there exist a constant $\mathfrak{C}_{\infty}$ such that $\norm{f}{\Ellinf(\X^T,\jointProb;\R^{\dy})} \leq \mathfrak{C}_{\infty}\norm{f}{\Elltwo(\X^T,\jointProb;\R^{\dy})} \leq \mathfrak{C}_{\infty}\RSob \doteq B$ by definition of $\F$ in~\eqref{eq:hypothesisSpace}.
\end{proof}

\subsection{$(C,2)$-hypercontractivity}\label{sec:hypercontractivity}
We start by providing the general definition.
\begin{definition}\label{def:hypercontractivity}
    For a given hypothesis space $\HH$ of vector-valued functions and uniform constants $C>0$ and $\alpha \in [1,2]$, the tuple \mbox{$(\HH, \Prob_{\X})$} is \emph{$(C,\alpha)$-hypercontractive} if it holds that, for every $f \in \HH$, 
    \begin{align}
        \E{\frac{1}{T}\sumt  \norm{f(X_t)}{2}^4}{\jointProb}  \leq C \left( \E{\frac{1}{T}\sumt  \norm{f(X_t)}{2}^2}{\jointProb} \right)^\alpha.
    \end{align}
\end{definition}

In this paper, we will be focusing on the corner case $\alpha=2$, which implies that the \textit{small-ball condition}~\citep{mendelson_learning_2014} holds:
\begin{lemma}
    Let $(\HH,\jointProb)$ be $(C,2)$-hypercontractive for a suitable positive constant $C$. Then it holds that, for any $f,h$ in $\HH$, there exist $\varepsilon$ and $\xi$ such that
    \begin{equation}
        \jointProb\left( \frac{1}{T}\sumt \norm{f(X_t) - h(X_t)}{2} \geq \xi \norm{f-h}{\Elltwo(\X^T,\jointProb;\R^{\dy})} \right) \geq \varepsilon.\notag
    \end{equation}
\end{lemma}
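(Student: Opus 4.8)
The plan is to derive the small-ball inequality from the Paley--Zygmund inequality applied to the empirical second moment of the difference. Write $g \doteq f - h$ and set $Z \doteq \frac{1}{T}\sumt \norm{g(X_t)}{2}^2 \geq 0$. By the definition of the trajectory norm in~\eqref{eq:innerprod} we have $\E{Z}{\jointProb} = \norm{g}{\Elltwo(\X^T,\jointProb;\R^{\dy})}^2$, so the event in the statement is, up to a renaming of $\xi$, exactly the event $\{Z \geq (\text{const})\cdot\E{Z}{\jointProb}\}$; Paley--Zygmund furnishes such a bound provided we can control the ratio $\E{Z^2}{\jointProb}/(\E{Z}{\jointProb})^2$ by a constant. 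If $g \equiv 0$ the claim is trivial, so assume $\norm{g}{\Elltwo(\X^T,\jointProb;\R^{\dy})} > 0$.

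Controlling that ratio is where $(C,2)$-hypercontractivity enters. Since $Z$ is an average of the nonnegative terms $a_t \doteq \norm{g(X_t)}{2}^2$, the power-mean inequality (equivalently Cauchy--Schwarz) gives the pointwise bound $Z^2 = \left(\frac{1}{T}\sumt a_t\right)^2 \leq \frac{1}{T}\sumt a_t^2$; taking expectations yields $\E{Z^2}{\jointProb} \leq \E{\frac{1}{T}\sumt \norm{g(X_t)}{2}^4}{\jointProb}$. Applying~\cref{def:hypercontractivity} with $\alpha = 2$ to $g$ then bounds the right-hand side by $C\left(\E{\frac{1}{T}\sumt \norm{g(X_t)}{2}^2}{\jointProb}\right)^2 = C\,(\E{Z}{\jointProb})^2$, so that $\E{Z^2}{\jointProb} \leq C\,(\E{Z}{\jointProb})^2$. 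In particular $0 < \E{Z^2}{\jointProb} < \infty$, so Paley--Zygmund applies.

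It then suffices to invoke the Paley--Zygmund inequality: for any $\vartheta \in [0,1]$, $\jointProb\!\left(Z \geq \vartheta\,\E{Z}{\jointProb}\right) \geq (1-\vartheta)^2\,(\E{Z}{\jointProb})^2/\E{Z^2}{\jointProb} \geq (1-\vartheta)^2/C$. Taking, say, $\vartheta = \tfrac12$ and recalling $\E{Z}{\jointProb} = \norm{g}{\Elltwo(\X^T,\jointProb;\R^{\dy})}^2$, we conclude $\jointProb\!\left(\frac{1}{T}\sumt \norm{g(X_t)}{2}^2 \geq \tfrac12\norm{g}{\Elltwo(\X^T,\jointProb;\R^{\dy})}^2\right) \geq \tfrac{1}{4C}$, i.e. the small-ball property with $\varepsilon = 1/(4C)$ and $\xi = 1/\sqrt{2}$ --- reading the empirical quantity on the left-hand side of the statement as the squared empirical $\Elltwo$-norm, which is the form actually invoked in the lower-isometry analysis of~\cref{sec:lower_isometry_bound}.

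None of these steps is individually difficult; the point that needs care is the \emph{scope} of the hypothesis. \cref{def:hypercontractivity} is formulated for members of $\HH$, while the computation above needs the fourth-moment bound for the difference $g = f - h$; hence one should read the lemma with $\HH$ a class closed under the relevant differences, or --- as in our applications --- take $h = \fstar$ and apply the estimate inside the shifted class $\HH_{\star} = \HH - \fstar$, for which $(C(r),2)$-hypercontractivity of $\partial B(r)$ is precisely the object established in~\cref{sec:hypercontractivity} (with $\varepsilon$ then equal to $1/(4C(r))$, which is why the probability level degrades with $r$). A second, purely cosmetic, remark is that the argument naturally compares empirical and population \emph{squared} $\Elltwo$-norms, and the displayed inequality follows after the obvious relabeling of the constant~$\xi$.
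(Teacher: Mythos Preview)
Your proof is correct and follows essentially the same route as the paper: apply Paley--Zygmund to the empirical second moment and control the ratio $\E{Z^2}{}/(\E{Z}{})^2$ via $(C,2)$-hypercontractivity. Your two closing caveats---that hypercontractivity must be invoked on the \emph{difference} $f-h$ (hence on the shifted class, as in~\cref{sec:hypercontractivity}), and that the argument naturally yields the \emph{squared} empirical norm on the left (which is indeed the form used in~\eqref{eq:lower_isometry_event})---are apt clarifications of points the paper's proof leaves implicit.
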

\begin{proof}
First, as pointed out in the discussion in~\cref{sec:function_spaces}, note that~\cref{def:hypercontractivity} can be written as $\norm{f}{\mathscr{L}^4(\X^T,\jointProb;\R^{\dy})}^4 \leq C \norm{f}{\Elltwo(\X^T,\jointProb;\R^{\dy})}^{2\cdot 2}$. Next, let \begin{equation}
c_{2,4} \doteq \norm{f-h}{\Elltwo(\X^T,\jointProb;\R^{\dy})}/\norm{f-h}{\mathscr{L}^4(\X^T,\jointProb;\R^{\dy})}\notag
\end{equation}. 
By the Paley-Zygmund inequality~\citep[Corollary~3.3.2]{de_la_pena_decoupling_1999}, we have that
\begin{equation}
    \jointProb\left(\frac{1}{T}\sumt \norm{f(X_t) - h(X_t)}{2} \geq u \norm{f-h}{\Elltwo(\X^T,\jointProb;\R^{\dy})} \right) \geq [(1-u^2)c_{2,4}^2]^2 \geq (1-u^2)^2/C,\notag
\end{equation}
where the last step follows by~\cref{def:hypercontractivity}. Conclusion follows as soon as we let $u \leftrightarrow \xi$ and $(1-u^2)^2/C \leftrightarrow \varepsilon$.
\end{proof}

Drawing inspiration from~\cite[Proposition~3.4.4]{ziemann_statistical_2022}, we now prove that the hypothesis space $\F$ satisfies this particular kind of hypercontractivity on a subset of interest 
for~\cref{thm:new_thm312}, which will allow us to quantify the probability of the lower isometry event. 
\begin{theorem}\label{thm:Fhypercontractive}
    Let~\cref{ass:densities,ass:SobolevOrder,ass:containment,ass:S-persistence} hold. Given some $r > 0$, consider the set $\partial B(r) = \{f \in \F | \norm{f-\fstar}{\Elltwo(\X^T,\jointProb;\R^{\dy})}^2 = r^2\}$ and let $\Fe$ be its cover with balls of radius $\epsilon$. 
    Furthermore, let $\tiRSob \propto \RSob/\kappau$, where $\kappau$ is as per~\cref{ass:densities}. Then, we have that the covering number of $\partial B(r)$ (in other words, the cardinality of $\Fe$) satisfies
\begin{equation}\label{eq:covering_claim}
        \N{\infty}{\partial B(r)}{\epsilon} \leq \left( \frac{8\tiRSob \me^{s/\dx}\dy}{\epsilon} + 1 \right)^{\me\dy}
    \end{equation}
with $\me$ being the smallest integer solution of
\begin{equation}
    m \geq \left( \frac{16 \tiRSob\dx}{(2s-\dx)\epsilon^2} \right)^{\dx/(2s-\dx)}. \notag
\end{equation}
Additionally, as long as $\epsilon \leq \inf_{f\in\partial B(r/\sqrt(\kappau))}\norm{f}{\Elltwo(\X^T,\Lebmeas^T;\R^{\dy})}$, the tuple $(\Fe,\jointProb)$ is $(C(\epsilon),2)$-hypercontractive, with
\begin{equation}\label{eq:Ceps}
    C(\epsilon) \propto \left( \frac{\Lebmeas(\X)}{32} + 8\Lebmeas(\X)\me^2 \left( \frac{\Lebmeas(\X)}{8} + 2 \right)^2 \right).
\end{equation}
\end{theorem}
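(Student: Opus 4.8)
The plan is to split the statement into its two components --- the covering-number bound \eqref{eq:covering_claim} and the $(C(\epsilon),2)$-hypercontractivity of the cover $\Fe$ --- and handle them in that order, since the hypercontractivity argument will use the finite dimensionality implicit in the covering construction. For the covering bound, I would first reduce to the scalar case via \cref{prop:vectorvaluedcovering}, so it suffices to cover the scalar analogue of $\partial B(r)$. The key move is that $\partial B(r) \subset \F$ lies in a ball of radius $\tiRSob \propto \RSob/\kappau$ in the \emph{unweighted} Sobolev space $\Sobo(\X^T,\Lebmeas^T;\R)$ (using \cref{ass:densities} to pass from the $\jointProb$-weighted norm to the Lebesgue one), and then to invoke the periodic-Sobolev Fourier characterization from \cref{sec:periodicSobolev}: via the imbedding $\Sobo(\X^T,\Lebmeas^T;\R)\hookrightarrow\Soboper(\mediumcube^T,\Lebmeas^T;\R)$ and the representation \eqref{eq:FourierNatural}, a ball of radius $\tiRSob$ is $\{z : \sum_\ell |z_\ell|^2 \ell^{2s/\dx} \le \tiRSob^2\}$. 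Truncating the Fourier series at level $m$ incurs an $\Ellinf$-error controlled by the tail $\sum_{\ell>m}|z_\ell|^2$, which by the weight $\ell^{2s/\dx}$ is at most $\tiRSob^2 m^{-(2s-\dx)/\dx}$ (after summing a geometric-type series in $\ell$, using $2s>\dx$ from \cref{ass:SobolevOrder}); setting this $\le \epsilon^2$ up to constants yields the stated threshold $m \gtrsim (\tiRSob\dx/((2s-\dx)\epsilon^2))^{\dx/(2s-\dx)}$. On the remaining $m$-dimensional space, \cref{thm:cuckerzhoucovering} (or its scalar version, \cite[Theorem~5.3]{cucker_learning_2007}) gives a cover of size $(2R\dy/\epsilon + 1)^{m\dy}$ with $R\propto \tiRSob m^{s/\dx}$ accounting for the $\Ellinf$-to-Sobolev-norm blow-up on band-limited functions; combining the two pieces gives \eqref{eq:covering_claim}.

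For the hypercontractivity claim, I would work element-wise on the finite cover $\Fe$: each $f\in\Fe$ is (a truncation of) a band-limited function with at most $\me$ active Fourier modes, so the plan is to bound $\frac{1}{T}\sumt\E{\norm{f(X_t)}{2}^4}{\jointProb}$ in terms of $\big(\frac{1}{T}\sumt\E{\norm{f(X_t)}{2}^2}{\jointProb}\big)^2$. Using \cref{ass:densities} to sandwich $\mu_t$ between $\kappal\Lebmeas$ and $\kappau\Lebmeas$, it suffices to prove the $L^4$-versus-$L^2$ comparison against the Lebesgue measure on $\X\subseteq\smallcube$, i.e.\ to bound $\norm{f}{\mathscr{L}^4(\X,\Lebmeas)}^4 \le C \norm{f}{\Elltwo(\X,\Lebmeas)}^4$ for band-limited $f$. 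This is a Nikolskii-type inequality: for a function spanned by $\me$ exponentials one has $\norm{f}{\infty} \lesssim \me^{?}\norm{f}{\Elltwo}$ by Cauchy--Schwarz on the Fourier coefficients, hence $\norm{f}{\mathscr{L}^4}^4 \le \norm{f}{\infty}^2\norm{f}{\Elltwo}^2 \lesssim \me^{?}\Lebmeas(\X)\norm{f}{\Elltwo}^4$; tracking the constants (the factors $\Lebmeas(\X)/32$, $8\Lebmeas(\X)\me^2(\dots)^2$, and the $\epsilon$-perturbation coming from replacing $\partial B(r)$ by its $\epsilon$-cover, which is where the hypothesis $\epsilon \le \inf_{f\in\partial B(r/\sqrt{\kappau})}\norm{f}{\Elltwo(\X^T,\Lebmeas^T;\R^{\dy})}$ enters to keep the denominator bounded away from zero) yields the displayed form \eqref{eq:Ceps}. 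The vector-valued version follows by summing over the $\dy$ coordinates and using $\norm{\cdot}{2}^4 \le \dy\sum_i|\cdot_i|^4$.

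The main obstacle I anticipate is the bookkeeping on the dimension parameter $\me$ and the various radius inflations: one must be careful that the $\Ellinf$-cover radius, the Sobolev radius $R\propto\tiRSob\me^{s/\dx}$ of the truncated space, and the Nikolskii constant all use compatible normalizations, and that passing from $\partial B(r)$ (a sphere in the $\Elltwo$-norm) to a genuine $\Ellinf$-cover does not lose the hypercontractivity on a non-negligible fraction of the cover --- this is precisely the role of the lower bound on $\inf_{f\in\partial B(r/\sqrt{\kappau})}\norm{f}{\Elltwo(\X^T,\Lebmeas^T;\R^{\dy})}$, which prevents the ratio $\norm{f}{\mathscr{L}^4}^2/\norm{f}{\Elltwo}^2$ from degenerating when $f$ is close to (but not exactly on) the sphere. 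A secondary technical point is justifying that truncating the Fourier expansion at level $m$ produces a function that still lies within a controlled $\Ellinf$-ball uniformly over $\partial B(r)$, for which I would rely on the Sobolev imbedding \cref{thm:sobolev_imbedding}(a) applied to the band-limited remainder together with the weight decay in \eqref{eq:FourierNatural}.
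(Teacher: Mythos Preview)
Your proposal is correct and follows essentially the same strategy as the paper's proof: reduce to the unweighted Lebesgue setting via \cref{ass:densities}, exploit the periodic-Sobolev Fourier characterization \eqref{eq:FourierNatural} to truncate to an $\me$-dimensional space (with the tail controlled by the integral test on $\sum_{\ell>m}\ell^{-2s/\dx}$), cover the finite-dimensional coefficient ball, and then prove hypercontractivity first on the band-limited space and transfer it to $\Fe$ by the triangle-inequality perturbation argument using $\eps \le \inf \|f\|_{\Elltwo}$. The only cosmetic difference is that the paper bounds $\|f\|_{\mathscr{L}^4}^4$ directly via $(\sum_\ell \|z_\ell\|_2)^4 \le \me^2(\sum_\ell \|z_\ell\|_2^2)^2$ (Cauchy--Schwarz on the coefficient sum), whereas you route through the Nikolskii-type bound $\|f\|_\infty \lesssim \sqrt{\me}\,\|f\|_{\Elltwo}$ followed by $\|f\|_{\mathscr{L}^4}^4 \le \|f\|_\infty^2\|f\|_{\Elltwo}^2$; both arguments are driven by the same Cauchy--Schwarz step and yield constants of the same order in $\me$.
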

\begin{proof}
We will make use of the construction presented in~\cref{sec:periodicSobolev} and focus on the Fourier characterization of $\Sobo(\X^T,\Lebmeas^T;\R^{\dy})$: the result carries over to $\Sobo(\X^T,\jointProb;\R^{\dy})$ by the imbedding $\Sobo(\X^T,\jointProb;\R^{\dy}) \hookrightarrow \Sobo(\X^T,\Lebmeas^T;\R^{\dy})$. As such, we will consider the space \\$\Ftmp \allowbreak \doteq \left\{ f \in \Sobo(\X^T,\Lebmeas^T;\R^{\dy}) \,\mid\, \norm{f}{\Sobo(\X^T,\Lebmeas^T;\R^{\dy})}^2 \leq \RSob^2/\kappau  \right\}$ such that $\F \subseteq \Ftmp$: indeed, the condition on the norm reads as $\kappau \norm{f}{\Sobo(\X^T,\Lebmeas^T;\R^{\dy})}^2 \geq \norm{f}{\Sobo(\X^T,\jointProb;\R^{\dy})}^2$ by~\cref{ass:densities}. According to the construction in~\cref{sec:periodicSobolev} and the Fourier decomposition in terms of the basis functions \mbox{$\phi_{\ell}(x) = \exp\{\i \pi \innerprod{k(\ell)}{x}{2}/(2L)\}$}, we can leverage~\eqref{eq:FourierNatural} to write
\begin{equation}\label{eq:Fmu}
    \Ftmp = \left\{ f(x) = \sum_{\ell \in \mathbb{N}} z_{\ell}\phi_{\ell}(x),\, z_{\ell} \in \R^{\dy} \:\forall \ell \in \mathbb{N} \:\bigg\vert\,  \sum_{\ell \in \mathbb{N}} \frac{\norm{z_{\ell}}{2}^2}{\ell^{-2s/\dx}} \leq \tiRSob^2,   \right\}, 
\end{equation}
where $\tiRSob^2$ is equal to $\RSob^2/\kappau$ times a multiplicative constant that can be retrieved along the lines of the proof of~\cref{prop:fourierSoboper}, but we do not specify  it because it does not affect the main message of our results. Additionally, we will consider $\tir ^2 \doteq r^2/\kappau$ such that $B(\tir ) \supset B(r)$. Note that, due to such an inclusion, $\N{\infty}{\partial B(r)}{\epsilon} \leq \N{\infty}{\partial B(\tir)}{\epsilon}$.

\paragraph{Covering.} We start by proving the result on the covering of $\partial B(\tir )$. The idea is to find a suitable finite-dimensional approximation of the space $\Ftmp$, compute its covering number using~\cref{thm:cuckerzhoucovering}, and then express $\N{\infty}{\partial B(\tir )}{\eps}$ in terms of such a cover.\\
We first seek an approximation of $\Ftmp$ at resolution $\eps/4$ by considering, for some $m\in \Z$, the finite-dimensional space
\begin{equation}
    \Ftmp^m = \left\{ f(x) = \sum_{\ell=1}^m z_{\ell}\phi_{\ell}(x),\, z_{\ell} \in \R^{\dy}  \:\bigg\vert\,  \sum_{\ell \in \mathbb{N}} \frac{\norm{z_{\ell}}{2}^2}{\ell^{-2s/d}} \leq \tiRSob^2   \right\}. \notag
\end{equation}
Now, fix $f \in \Ftmp$ with coordinates $\{z_{\ell}\}_{\ell \in \mathbb{N}}$ and let $f'$ be its projection onto $\Ftmp^m$. Then the following holds:
\begin{align}
    \norm{f-f'}{\Ellinf(\X^T;\R^{\dy})} &= \norm{\sum_{\ell = m+1}^{\infty} z_{\ell}\phi_{\ell}}{\Ellinf(\X^T;\R^{\dy})} \notag \\ &\leq \norm{\sqrt{\sum_{\ell=m+1}^{\infty} \frac{\norm{z_{\ell}}{2}^2}{\ell^{-2s/\dx}}}\sqrt{\sum_{\ell=m+1}^{\infty} \ell^{-2s/\dx}|\phi_{\ell}|^2}}{\Ellinf(\X^T;\R^{\dy})} \notag\\
    &\stackrel{\eqref{eq:Fmu}}{\leq} \tiRSob \norm{\sqrt{\sum_{\ell=m+1}^{\infty}\ell^{-2s/\dx}|\phi_{\ell}|^2}}{\Ellinf(\X^T;\R^{\dy})} \notag \\ &\leq \tiRSob\sqrt{\sum_{\ell=m+1}^{\infty} \ell^{-2s/\dx} \norm{|\phi_{\ell}|^2}{\Ellinf(\X;\R)}} \leq \tiRSob\sqrt{\sum_{\ell=m+1}^{\infty} \ell^{-2s/\dx}},\label{eq:Fmtmpbound}
\end{align}
where the first inequality is given by the Cauchy-Schwarz one, and the last one follows by definition of the basis $\phi_{\ell}(\cdot)$.
We now use the integral test~\citep[Chapter~6,~Exercise 8]{rudin_principles_1976} to upper-bound the last expression. Let $\p \doteq 2s/\dx$ to simplify notation, and note that $\p >1$ by~\cref{ass:SobolevOrder}. Then we have that
\begin{align}
    \sum_{\ell=m+1}^{\infty} \left(\frac{1}{\ell}\right)^{\p} \leq \sum_{\ell=m+1}^{\infty}\int_{\ell-1}^{\ell}\left(\frac{1}{\xx}\right)^{\p}d\xx = \int_{m}^{\infty} \left(\frac{1}{\xx}\right)^{\p}d\xx = \frac{1}{\p-1}\frac{1}{m^{\p-1}}. \notag
\end{align}
Plugging such a bound in~\eqref{eq:Fmtmpbound}, to ensure that $\norm{f-f'}{\Ellinf(\X^T;\R^{\dy})} \leq \eps/4$ for any $f\in\Ftmp$ we then require that
\begin{equation}
    \tiRSob\sqrt{\frac{1}{\p-1}\frac{1}{m^{p-1}}} \leq \eps/4 \Longleftrightarrow m \geq \left( \frac{16 \tiRSob^2 \dx}{(2s-\dx)\eps^2} \right)^{\frac{\dx}{2s-\dx}}.\label{eq:meps}
\end{equation}
Thus, we take $\me$ the smallest integer $m$ satisfying~\eqref{eq:meps} to have the approximation of $\Ftmp$ at resolution $\eps/4$.\\
We can now proceed by constructing the covering for $\Ftmp^{\me}$ at resolution $\eps/4$. We start by characterizing such a space in terms of the coefficients $\{z_{\ell}\}_{\ell=1}^{\me}$ by considering
\begin{equation}
    \mathcal{Z}^{\me} \doteq \left\{ z_{\ell} \in \R^{\dy},\, \ell=1,\cdots,\me \:\bigg\vert \:\sum_{\ell=1}^{\me} \frac{\norm{z_{\ell}}{2}^2}{\ell^{-2s/\dx}} \leq \tiRSob^2. \notag \right\}.\notag
\end{equation}
This is a ball of radius $\tiRSob$ in a finite-dimensional Banach space with a weighted 2-norm, which we denote by $\norm{\cdot}{w}$. With such a norm, by~\cref{thm:cuckerzhoucovering}, its covering number with balls of radius $\bar{\epsilon}$ satisfies
\begin{equation}\label{eq:finite_dim_w_metric_covering}
\N{w}{\mathcal{Z}^{\me}}{\bar{\epsilon}} \leq (2\tiRSob\dy/\bar{\epsilon}+1)^{\me \dy} \doteq N.
\end{equation} Now, denote the elements of the optimal covering of $\mathcal{Z}^{\me}$ as $\{z_{\ell}^{1}, \cdots, z_{\ell}^N\}_{\ell=1,\cdots,\me}$. These identify a further approximation $\Ftmp^{\me,N} \subset \Ftmp^{\me}$ defined as 
\begin{equation}
    \Ftmp^{\me,N} \doteq \left\{ \sum_{\ell=1}^{\me} z_{\ell}^{1}\phi_{\ell}, \cdots, \sum_{\ell=1}^{\me} z_{\ell}^{N}\phi_{\ell}  \right\}.
    \notag
\end{equation}
We can use this construction to characterize the radius $\bar{\epsilon}$. Let $f'(\cdot) = \sum_{\ell=1}^{\me} z_{\ell}'\phi_{\ell}(\cdot)$ be an arbitrary function in $\Ftmp^{\me}$. Then we have that
\begin{align}
    \min_{n=1,\cdots,N} &\norm{f' - \sum_{\ell=1}^{\me} z_{\ell}^{n}\phi_{\ell}}{\Ellinf(\X^T;\R^{\dy})} = \min_{n=1,\cdots,N} \norm{\sum_{\ell=1}^{\me} (z_{\ell}' - z_{\ell}^{n})\phi_{\ell}}{\Ellinf(\X^T;\R^{\dy})}\notag\\
    &\leq \min_{n=1,\cdots,N}\norm{\sqrt{\sum_{\ell=1}^{\me} \frac{\norm{z_{\ell}'-z_{\ell}^{n}}{2}^2}{\ell^{-2s/\dx}}} \sqrt{\sum_{\ell=1}^{\me} \ell^{-2s/\dx} |\phi_{\ell}|^2}  }{\Ellinf(\X^T;\R^{\dy})} \notag \\
    &\stackrel{\eqref{eq:finite_dim_w_metric_covering}}{\leq} \bar{\epsilon} \sqrt{\sum_{\ell=1}^{\me} \left(\frac{1}{\ell} \right)^{\p}} \leq  \bar{\epsilon} \me^{\p/2}.\notag 
\end{align}
Therefore, if we take $\bar{\epsilon} \leq \eps/(4\me^{\p/2})$ we obtain the $\eps/4$-cover of $\Ftmp^{\me}$ we seek. This leads to the claim that the covering number of $\Ftmp^{\me}$ satisfies
\begin{equation}
    \N{\infty}{\Ftmp^{\me}}{\frac{\eps}{4}} \leq \left( \frac{8\tiRSob\me^{\p/2}\dy}{\eps} + 1\right)^{\dy\me}.
\end{equation}
This part of the proof is concluded by converting the covering $\Ftmp^{\me,N}$ into an \emph{exterior cover} of the set $\partial B(\tir ) \subset \F$ --- that is, its elements cover $\partial B(\tir )$ but they are required to belong to $\F$ and not necessarily to $\partial B(\tir )$. Indeed, with the construction carried out so far, for each $f \in \partial B(\tir )$ we can identify $f^{\prime} \in \Ftmp^{\me}$ such that $\norm{f-f^{\prime}}{\Ellinf(\X^T;\R^{\dy})}\leq \eps/4$, and a $f^{\prime\prime} \in \Ftmp^{\me,N}$ such that $\norm{f^{\prime}-f^{\prime\prime}}{\Ellinf(\X^T;\R^{\dy})}\leq \eps/4$: thus, by the triangle inequality, $\norm{f-f^{\prime\prime}}{\Ellinf(\X^T;\R^{\dy})}\leq \eps/2$, implying that $\Ftmp^{\me,N}$ is an exterior cover of $\partial B(\tir )$ of resolution $\eps/2$. The final claim follows by applying~\cite[Exercise~4.2.9]{vershynin_high-dimensional_2024}.

\paragraph{Hypercontractivity.} We now prove $(C({\eps}),2)$-hypercontractivity of the tuple $(\Fe,\Lebmeas^T)$ --- the same claim will hold also for $(\Fe,\jointProb)$ by multiplying $C_{\eps}$ by a constant not depending on $\eps$ and is thus not relevant to our analysis. \\
We start by showing that $\Ftmp^{\me}$
satisfies the hypercontractivity condition (\cref{def:hypercontractivity}) with $\alpha=2$. Letting $f = \sum_{\ell=1}^{\me} z_{\ell}\phi_{\ell} \in \Ftmp^{\me}$, we first observe that
\begin{equation}\label{eq:IImomentFm}
    \norm{\sum_{\ell=1}^{\me} z_{\ell}\phi_{\ell}}{\Elltwo(\X^T,\Lebmeas^T;\R^{\dy})}^2 = \norm{\sum_{\ell=1}^{\me} z_{\ell}\phi_{\ell}}{\Elltwo(\X,\Lebmeas;\R^{\dy})}^2 = \sum_{\ell=1}^{\me} \norm{z_{\ell}}{2}^2.
\end{equation}
Next, looking at the fourth moment,
\begin{align}\label{eq:Fmhyper}
    \norm{\sum_{\ell=1}^{\me} z_{\ell}\phi_{\ell}}{\mathscr{L}^4(\X^T,\Lebmeas^T;\R^{\dy})}^4 &= \norm{\sum_{\ell=1}^{\me} z_{\ell}\phi_{\ell}}{\mathscr{L}^4(\X,\Lebmeas;\R^{\dy})}^4 \notag\\
    &= \int_{\X} \norm{\sum_{\ell=1}^{\me} z_{\ell}\phi_{\ell}(\xx)}{2}^4d\xx \notag \\
    & \leq \int_{\X} \left( \sum_{\ell=1}^{\me} \norm{z_{\ell}}{2} |\phi_{\ell}(x)|\right)^4 d\xx \notag\\
    & \leq \Lebmeas(\X) \left(\sum_{\ell=1}^{\me} \norm{z_{\ell}}{2} \right)^4 \notag \\
    & \leq \Lebmeas(\X) \left( \sqrt{\sum_{\ell=1}^{\me} \norm{z_{\ell}}{2}^2}\right)^4 \left( \sqrt{\me}\right)^4 \notag\\
    & \stackrel{\eqref{eq:IImomentFm}}{=} \Lebmeas(\X)\me^2 \left(\norm{\sum_{\ell=1}^{\me} z_{\ell}\phi_{\ell}}{\Elltwo(\X^T,\Lebmeas^T;\R^{\dy})}^2\right)^2,
\end{align}
which shows the $(\Lebmeas(\X)\me^2,2)$-hypercontractivity of $\Ftmp^{\me}$.\\
Before showing hypercontractivity of $(\Fe,\Lebmeas^T)$, we first state some additional useful relations. Recalling that $\eps  \leq \inf_{f\in \partial B(\tir )} \norm{f}{\Elltwo(\X^T,\Lebmeas^T;\R^{\dy})}$, letting $f$ be an arbitrary element in the cover $\Fe$ and $f'$ be a function in $\Ftmp^{\me}$ such that $\norm{f-f'}{\Ellinf(\X,\R^{\dy})} \leq \eps /4$, we have: 
\begin{subequations}
\begin{align}
     &\norm{f(x)}{2}^4 \leq 8\left( \norm{f(x) - f'(x)}{2}^4 + \norm{f'(x)}{2}^4 \right) \leq \frac{\eps ^4}{32} + 8\norm{f'(x)}{2}^4 \label{eq:fx4}\\ 
     &\norm{f'(x)}{2}^2 \leq 2\left( \norm{f(x) - f'(x)}{2}^2 + \norm{f(x)}{2}^2 \right) \leq \frac{\eps ^2}{8} + 2\norm{f(x)}{2}^2 \label{eq:fx2}\\
     &\eps ^2 \leq \norm{f}{\Elltwo(\X^T,\Lebmeas^T;\R^{\dy})}^2,\qquad \eps ^4 \leq \left(\norm{f}{\Elltwo(\X^T,\Lebmeas^T;\R^{\dy})}^2\right)^2 \label{eq:epsleqnorm}.
\end{align}
\end{subequations}
We can now get to the final claim and show hypercontractivity for $\Fe$. Letting again $f$ be an arbitrary element in the cover $\Fe$, it holds that
\begin{align}
    \norm{f}{\mathscr{L}^4(\X^T,\Lebmeas^T;\R^{\dy})} &= \int_{\X} \norm{f(\xx)}{2}^4d\xx \notag \\  
     &\stackrel{\eqref{eq:fx4}}{\leq}\Lebmeas(\X)\frac{\eps ^4}{32} + 8\int_{\X} \norm{f'(\xx)}{2}^4d\xx \notag \\
     & = \Lebmeas(\X)\frac{\eps ^4}{32} + 8\int_{\X} \norm{\sum_{\ell=1}^{\me} z_{\ell}'\phi_{\ell}(\xx)}{2}^4d\xx \notag\\
&\stackrel{\eqref{eq:Fmhyper}}{\leq} \Lebmeas(\X)\frac{\eps ^4}{32} + 8\Lebmeas(\X)\me^2 \left( \int_{\X} \norm{f'(\xx)}{2}^2 d\xx\right)^2 \notag \\
& \stackrel{\eqref{eq:fx2}}{\leq} \Lebmeas(\X)\frac{\eps ^4}{32} + 8\Lebmeas(\X)\me^2 \left(\int_{\X} \frac{\eps ^2}{8} + 2\norm{f(\xx)}{2}^2 d\xx\right)^2 \notag\\
&\stackrel{\eqref{eq:epsleqnorm}}{\leq} \left( \frac{\Lebmeas(\X)}{32} + 8\Lebmeas(\X)\me^2 \left( \frac{\Lebmeas(\X)}{8} + 2 \right)^2 \right) \left( \norm{f}{\Elltwo(\X^T,\Lebmeas;\R^{\dy})}\right)^2,\notag
\end{align}
which concludes the proof.
\end{proof}

\section{On the regularizer $\reg{f}$}\label{sec:regularizer_properties}
We now present the main properties of the regularizer $\reg{f} \colon \F \to \R_{\geq 0}$ introduced in the learning problem~\eqref{eq:RERM}, where we recall that $\reg{f} = || \Der(f)||_{\Elltwo(\X^T,\jointProb;
R^{\dy})}^2$.

\subsection{Physics-informed regularization is 2-proper}\label{sec:reg_2proper}
We start from the definition of $\eta$-regularizer~\citep{lecue_regularization_2017}:
\begin{definition}\label{def:eta_regularizer}
    An $\eta$-proper regularizer defined for a hypothesis space $\HH$ is a function $\reg{\cdot} \colon \HH \to \R$ satisfying the following properties:
    \begin{enumerate}[label =(\alph*)]
    \item it is non-negative, even, convex, and such that $\reg{0} = 0$;
    \item for $\eta \geq 1$, it holds for every $f,h$ in $\F$ that $\reg{f+h} \leq \eta \left( \reg{f} + \reg{h} \right)$;
    \item for every $0\leq a \leq 1$, $\reg{a f} \leq a \reg{f}$. Additionally, if $\eta=2$, it holds that \mbox{$\reg{a f} \leq a^{2} \reg{f}$}.
    \end{enumerate}
\end{definition}
In particular, any square-norm-based regularizer is 2-proper. Therefore, by construction, the physics-informed regularizer considered in this paper (see~\eqref{eq:pderegu}) is 2-proper. 

\subsection{Useful inequality from \cite{lecue_regularization_2017}} \label{sec:inequality_LM}
We now report an inequality that will be used in the proof of~\cref{thm:main_probab} proved in~\cref{sec:proofboundprob}.
\begin{lemma}[\cite{lecue_regularization_2017}, Inequality 2.3]\label{lemma:lecue_inequality}
Denote with $\fhat$ the solution of the regularized empirical risk minimization over $\Freg$. Write $\fhat = \fstar + R(h-\fstar)$, where $R \geq 1$ and $\reg{h-\fstar} = \rho$, with $\rho \geq 5\eta \reg{\fstar}$. Then it holds that    
    \begin{align}\label{eq:lecue_inequality}
    \reg{\fhat} - \reg{\fstar} \geq \frac{R}{2\eta^2} (\reg{h} - \reg{\fstar}).
    \end{align}
\end{lemma}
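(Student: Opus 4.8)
The plan is to prove this as a purely formal consequence of the three defining properties of an $\eta$-proper regularizer (\cref{def:eta_regularizer}) together with the given parametrization $\fhat-\fstar = R(h-\fstar)$, $R\ge1$, $\reg{h-\fstar}=\rho\ge5\eta\reg{\fstar}$; the optimality of $\fhat$ plays no role. The skeleton is: (i) use positive homogeneity (property (c)) to pull the factor $R$ off the difference $\fhat-\fstar$; (ii) apply the quasi-triangle inequality (property (b)) together with evenness (property (a)) twice, once to lower-bound $\reg{\fhat}$ in terms of $\reg{\fhat-\fstar}$ and $\reg{\fstar}$, and once to upper-bound $\reg{h}$ by $\eta(\rho+\reg{\fstar})$; and (iii) assemble the two estimates and verify an elementary numerical inequality in which the hypotheses $\rho\ge5\eta\reg{\fstar}$, $R\ge1$ and $\eta\ge1$ are exactly what is needed.

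\textbf{Homogeneity and quasi-triangle estimates.} First I would apply property (c) with $a=1/R\in(0,1]$ to the function $R(h-\fstar)=\fhat-\fstar$; this yields $\reg{h-\fstar}\le\tfrac1R\reg{\fhat-\fstar}$, hence $\reg{\fhat-\fstar}\ge R\rho$. (This is the only place $R\ge1$ enters, in the form $1/R\le1$; one cannot feed $\fhat-\fstar$ directly into property (c).) Next, writing $\fhat-\fstar=\fhat+(-\fstar)$ and combining property (b) with evenness, $\reg{\fhat-\fstar}\le\eta(\reg{\fhat}+\reg{\fstar})$, so $\reg{\fhat}\ge\tfrac1\eta\reg{\fhat-\fstar}-\reg{\fstar}\ge\tfrac{R\rho}{\eta}-\reg{\fstar}$, giving $\reg{\fhat}-\reg{\fstar}\ge\tfrac{R\rho}{\eta}-2\reg{\fstar}$. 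Applying the same quasi-triangle bound to $h=(h-\fstar)+\fstar$ gives $\reg{h}\le\eta(\rho+\reg{\fstar})$, whence $\reg{h}-\reg{\fstar}\le\eta(\rho+\reg{\fstar})$ and therefore $\tfrac{R}{2\eta^2}(\reg{h}-\reg{\fstar})\le\tfrac{R(\rho+\reg{\fstar})}{2\eta}$; the same bound also gives $\reg{h}\ge\rho/\eta-\reg{\fstar}\ge4\reg{\fstar}$, which incidentally shows the right-hand side of the claim is nonnegative.

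\textbf{Assembly.} It then suffices to check $\tfrac{R\rho}{\eta}-2\reg{\fstar}\ge\tfrac{R(\rho+\reg{\fstar})}{2\eta}$, i.e.\ after clearing denominators $R\rho\ge(4\eta+R)\reg{\fstar}$; since $\rho\ge5\eta\reg{\fstar}$ we have $R\rho\ge5R\eta\reg{\fstar}$, and $R(5\eta-1)\ge5\eta-1\ge4\eta$ because $R\ge1$ and $\eta\ge1$, which closes the chain and yields $\reg{\fhat}-\reg{\fstar}\ge\tfrac{R}{2\eta^2}(\reg{h}-\reg{\fstar})$.

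\textbf{Main obstacle.} I do not expect a substantive difficulty: the argument is three applications of the $\eta$-proper axioms. The only care required is constant bookkeeping — applying property (c) in the direction $a=1/R$ rather than to $\fhat-\fstar$ directly, retaining only the weaker first half of property (c) so that the statement holds for general $\eta$ (not just the sharpened $\eta=2$ case), and confirming that the slack encoded in $\rho\ge5\eta\reg{\fstar}$ is exactly what survives the two quasi-triangle losses.
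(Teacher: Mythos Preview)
Your proposal is correct and follows essentially the same route as the paper: both derive the intermediate bound $\reg{\fhat}-\reg{\fstar}\ge \tfrac{R\rho}{\eta}-2\reg{\fstar}$ via the same application of property (c) with $a=1/R$ and the quasi-triangle inequality. The only cosmetic difference is in the final assembly: the paper splits $\tfrac{R\rho}{\eta}$ in half and applies the quasi-triangle inequality once more to reconstruct $\tfrac{R}{2\eta^2}\reg{h}$ directly, whereas you upper-bound the target right-hand side by $\tfrac{R(\rho+\reg{\fstar})}{2\eta}$ and close with the numerical check $R(5\eta-1)\ge 4\eta$ --- both reduce to the same use of $\rho\ge 5\eta\reg{\fstar}$, $R\ge 1$, $\eta\ge 1$.
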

\begin{proof}
    By the triangle inequality and the fact that the regularizer is an even function (both reported in~\cref{def:eta_regularizer}), it holds that 
    \begin{align}
        \reg{\fhat} = \reg{\fstar + R(h-\fstar)}\geq \frac{1}{\eta}\reg{R(h-\fstar)} - \reg{\fstar} \geq \frac{R}{\eta}\reg{h-\fstar} - \reg{\fstar}\notag
    \end{align}
recalling that $R\geq 1$. Adding $\reg{\fstar}$ on both sides,
\begin{align}\label{eq:lecue1}
    \reg{\fhat} - \reg{\fstar} \geq \frac{R}{\eta}\reg{h-\fstar} - 2\reg{\fstar}.
\end{align}   
As an intermediate step, we find a lower bound on the term $\frac{R}{\eta}\reg{h-\fstar}$ in~\eqref{eq:lecue1}: specifically, it holds that
\begin{align}
    \frac{R}{\eta} \reg{h-\fstar} &= \frac{R}{2\eta}\reg{h-\fstar} + \frac{R}{2\eta}\reg{h-\fstar} \notag \\
    &\geq \frac{R}{2\eta}\reg{h-\fstar} + \frac{5R}{2}\reg{\fstar} \quad \text{ because $\rho = \reg{h-\fstar} \geq 5\eta\reg{\fstar)}$} \notag\\
    & \geq \frac{R}{2\eta}\reg{h-\fstar} +  \frac{R}{2}\reg{\fstar} + 2\reg{\fstar} \quad \text{because } R\geq 1, \notag \\
    &\geq \frac{R}{2\eta}\left(\reg{h-\fstar} + \reg{\fstar}\right) + 2\reg{\fstar} \quad \text{ as $\eta\geq 1$ and $R\geq 1$,} \notag \\
    &\geq \frac{R}{2\eta^2}\reg{h} + 2\reg{\fstar}\notag
\end{align}
again as a consequence of the triangle inequality in~\cref{def:eta_regularizer}(b). Plugging such an inequality back in~\eqref{eq:lecue1}, we obtain
\begin{align}
    \reg{\fhat} - \reg{\fstar} \geq \frac{R}{2\eta^2}\reg{h} \geq \frac{R}{2\eta^2}\reg{h} - \frac{R}{2\eta^2}\reg{\fstar},\notag
\end{align}
which yields the claim.
\end{proof}

\section{Lower isometry bound}\label{sec:lower_isometry_bound}
We start by presenting in~\cref{sec:combo_CalphaS} an ancillary result combining $(C,\alpha)$-hypercontractivity (\cref{def:hypercontractivity}) and $S$-persistence (\cref{sec:Spersistence}). This will then play a key role in~\cref{sec:bound_Alor}, where we prove an upper bound for the probability of the lower isometry event, which will be crucial in our main results stated in~\cref{sec:bounds,sec:convergence_rates}. 

\subsection{Combining $(C,\alpha)$-hypercontractivity and $S$-persistence}\label{sec:combo_CalphaS}
We now present an ancillary result obtained by generalizing~\cite[Lemma~3.1.1.]{ziemann_statistical_2022}.

\begin{lemma}\label{lemma:ziemann311}
Consider $g\colon \X \to \R_{\geq 0}$ satisfying $(C,\alpha)$-hypercontractivity (see~\cref{def:hypercontractivity}) and $S$-persistence (see~\cref{ass:S-persistence}). Then, for $\theta \geq 8$, it holds that
    \begin{equation}
        \jointProb\left(\sumt  g(X_t) \leq \frac{4}{\theta}\sumt \E{g(X_t)}{\jointProb}\right) 
        \leq  \exp\left(-\frac{8T}{CS\theta^2}\left(\sumt   \E{g(X_t)}{\jointProb} \right)^{2-\alpha} \right)\label{eq:ziemann_lemma311}
    \end{equation}
\end{lemma}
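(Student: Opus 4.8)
The plan is to use the exponential Chernoff/Markov method applied to the nonnegative random variable $\sum_{t=0}^{T-1} g(X_t)$, feeding the $S$-persistence bound of \cref{ass:S-persistence} into it and then controlling the resulting fourth-moment term via $(C,\alpha)$-hypercontractivity. First I would write, for any $\xi \geq 0$,
\begin{align}
\jointProb\left(\sumt g(X_t) \leq \tfrac{4}{\theta}\sumt \E{g(X_t)}{\jointProb}\right) &= \jointProb\left(-\xi\sumt g(X_t) \geq -\tfrac{4\xi}{\theta}\sumt \E{g(X_t)}{\jointProb}\right) \notag \\
&\leq \exp\left(\tfrac{4\xi}{\theta}\sumt \E{g(X_t)}{\jointProb}\right)\E{\exp\left(-\xi\sumt g(X_t)\right)}{\jointProb},\notag
\end{align}
by Markov's inequality applied to the nonnegative random variable $\exp(-\xi\sum_t g(X_t))$.

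Next I would substitute the $S$-persistence inequality from \cref{ass:S-persistence} (which applies to $g = \norm{f(\cdot)}{2}^2$ for $f\in\F$, and here $g$ plays the analogous role) to bound the expectation by $\exp\left(-\xi\sumt\E{g(X_t)}{\jointProb} + \tfrac{\xi^2 S}{2}\sumt\E{g(X_t)^2}{\jointProb}\right)$. Combining, the exponent becomes
\begin{equation}
-\left(1 - \tfrac{4}{\theta}\right)\xi\sumt\E{g(X_t)}{\jointProb} + \tfrac{\xi^2 S}{2}\sumt\E{g(X_t)^2}{\jointProb}.\notag
\end{equation}
Now I invoke $(C,\alpha)$-hypercontractivity in the form $\tfrac{1}{T}\sumt\E{g(X_t)^2}{\jointProb} \leq C\left(\tfrac{1}{T}\sumt\E{g(X_t)}{\jointProb}\right)^\alpha$ — noting that \cref{def:hypercontractivity} with $\norm{f(\cdot)}{2}^2 \leftrightarrow g$ gives exactly this — so that $\sumt\E{g(X_t)^2}{\jointProb} \leq C T^{1-\alpha}\left(\sumt\E{g(X_t)}{\jointProb}\right)^\alpha$. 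Writing $\Sigma \doteq \sumt\E{g(X_t)}{\jointProb}$ and using $\theta\geq 8$ so that $1-4/\theta \geq 1/2$, the exponent is at most $-\tfrac{\xi\Sigma}{2} + \tfrac{\xi^2 S C T^{1-\alpha}}{2}\Sigma^\alpha$.

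Finally I optimize over $\xi$: the choice $\xi = \tfrac{\Sigma^{1-\alpha}}{2 S C T^{1-\alpha}}$ (or more simply $\xi \propto \Sigma/(SCT^{1-\alpha}\Sigma^\alpha)$) makes the quadratic term half the linear one, yielding an exponent $\leq -\tfrac{\Sigma^{2-\alpha}T^{\alpha-1}}{8SC}$, i.e.
\begin{equation}
\jointProb\left(\sumt g(X_t) \leq \tfrac{4}{\theta}\Sigma\right) \leq \exp\left(-\tfrac{\Sigma^{2-\alpha}}{8SC}T^{\alpha-1}\right).\notag
\end{equation}
To match the stated bound with the explicit constant $8T/(CS\theta^2)$ one keeps track of the $1-4/\theta$ factor more carefully rather than lower-bounding it by $1/2$: with $\xi\Sigma(1-4/\theta) - \tfrac{\xi^2 SCT^{1-\alpha}}{2}\Sigma^\alpha$ optimized at $\xi = (1-4/\theta)\Sigma^{1-\alpha}/(SCT^{1-\alpha})$, the exponent is $-(1-4/\theta)^2\Sigma^{2-\alpha}T^{\alpha-1}/(2SC)$; using $\theta \geq 8 \Rightarrow (1-4/\theta)^2 \geq 16/\theta^2$ (since $1-4/\theta \geq 4/\theta$ when $\theta\geq 8$) gives the factor $8/(SC\theta^2)$ and $T^{\alpha-1} = T/T^{2-\alpha}$ gives the form in \eqref{eq:ziemann_lemma311}. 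I do not expect a serious obstacle here; the only point requiring a little care is verifying that the $S$-persistence assumption as stated (phrased for $f\in\F$) applies to the relevant $g$ in the intended application — but as a standalone lemma we simply assume $g$ satisfies both hypotheses, so the argument is a clean Chernoff-plus-moment-comparison computation, and the main bookkeeping is tracking the $\theta$-dependent constants to land exactly on $8T/(CS\theta^2)$.
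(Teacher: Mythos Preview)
Your proof is correct and follows essentially the same route as the paper: a Chernoff bound, the $S$-persistence inequality, $(C,\alpha)$-hypercontractivity to control the fourth-moment term, and optimization over $\xi$. The only cosmetic difference is that the paper first weakens the $S$-persistence bound by introducing an auxiliary parameter $\varepsilon$ (with $\varepsilon/\theta \geq 1/2$) and optimizes over it at the end, whereas you keep the factor $(1-4/\theta)$ explicitly and then use $(1-4/\theta)^2 \geq 16/\theta^2$ for $\theta\geq 8$; both routes land on the same constant $8/(CS\theta^2)$ and on the factor $T^{\alpha-1}$ (which equals $T$ in the $\alpha=2$ case actually used downstream).
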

\begin{proof}
We start by generalizing the $S$-persistence bound in~\cref{ass:S-persistence}. Specifically, introducing $\varepsilon > 0$, it holds that 
\begin{align}
    \mathbb{E}\left[\exp\left(- \xi \sumt g(X_t) \right) \right] \leq \exp\left( -\frac{8\xi}{\theta} \sumt \mathbb{E}[g(X_t)]  + \frac{\xi^2 S \varepsilon}{\theta} \sumt \mathbb{E}[g^2(X_t)] \right), \label{eq:new_S_persistence}
\end{align}
where it is required that $\theta \geq 8$ and $\varepsilon/\theta \geq 1/2$.\\ 
Now we consider the left-hand side of~\eqref{eq:ziemann_lemma311} and apply a Chernoff bound to obtain 
    \begin{align}
        \Prob\left(\sumt  g(X_t) \leq \frac{4}{\theta}\sumt \mathbb{E}[g(X_t)]\right) \leq \inf_{\xi > 0} \mathbb{E}\left[\exp \left( \frac{4\xi}{\theta} \sumt   \mathbb{E}[g(X_t)] - \xi \sumt g(X_t)\right)   \right] \notag \\
\stackrel{\eqref{eq:new_S_persistence}}{\leq} \inf_{\xi > 0} \mathbb{E}\left[\exp \left( -\frac{4\xi}{\theta} \sumt   \mathbb{E}[g(X_t)] + \frac{\xi^2 S\varepsilon}{\theta} \sumt \mathbb{E}[g^2(X_t)]\right)   \right].\notag 
    \end{align}
We find the optimal $\xi$, which reads as
\begin{align}
    \xi = \frac{2\sumt \mathbb{E}[g(X_t)]}{S\varepsilon \sumt \mathbb{E}[g^2(X_t)]}, \notag
\end{align}
and plugging it in we obtain 
\begin{align}
    \Prob\left(\sumt  g(X_t) \leq \frac{4}{\theta}\sumt \mathbb{E}[g(X_t)]\right) &\leq \exp\left( -\frac{4}{\theta S\varepsilon} \frac{\left( \sumt \mathbb{E}[g(X_t)]\right)^2}{\sumt \mathbb{E}[g^2(X_t)]} \right) \notag \\
    &\leq \exp\left( -\frac{4T}{\theta CS\varepsilon} \left( \sumt \mathbb{E}[g(X_t)]\right)^{2-\alpha} \right) \notag
\end{align}
by $(C,\alpha)$-hypercontractivity given in~\cref{def:hypercontractivity}. Conclusion follows by minimizing the bound over $\varepsilon \geq \theta/2$.
\end{proof}

\subsection{The main bound on lower isometry}\label{sec:bound_Alor}
We are now ready to prove the key bound for the lower isometry event by generalizing~\cite[Theorem~3.1.2.]{ziemann_statistical_2022}. 

\begin{theorem}\label{thm:new_thm312}
Assume that the tuple $(\Freg,\jointProb)$ is $S$-persistent (\cref{ass:S-persistence}). For a given $r>0$, define $B(r) \doteq \left\{ f \in \F \mid \norm{f}{\Elltwo(\X^T,\jointProb;\R^{\dy})} \leq r^2 \right\}$ and let $\partial B(r)$ be its boundary. Additionally, assume that the hypothesis space satisfies the $(C,\alpha)$-hypercontractivity condition (\cref{def:hypercontractivity}) on $\partial B(r)$. For a fixed $\theta >8$, define $\Fr$ the $r/\sqrt{\theta}$-cover in the $\norm{\cdot}{\Ellinf(\X^T;\R^{\dy})}$ of $\partial B(r)$, and denote by $\N{\infty}{\partial B(r)}{\frac{r}{\sqrt{\theta}}}$ the corresponding covering number. 
    Define the lower-isometry event 
    \begin{align}\label{eq:lower_isometry_event}
        \Alo_r \doteq \sup_{f \in \Fregstar \setminus B(r)} \left\{\frac{1}{T}\sumt   \norm{f(X_t)}{2}^2 - \frac{1}{\theta}\norm{f}{\Elltwo(\X^T,\jointProb; \R^{\dy})}^2  \leq 0\right\}.
    \end{align}
    Then the following lower-isometry estimate holds:
    \begin{align}
 \jointProb\left(\Alo_r \right)
        \leq \N{\infty}{\partial B(r)}{\frac{r}{\sqrt{\theta}}}\exp\left\{-\frac{8Tr^{4-2\alpha}}{\theta^2CS} \right\}.\notag
    \end{align}
\end{theorem}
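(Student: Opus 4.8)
The plan is to pass from the supremum over the infinite index set $\Fregstar\setminus\balltwo{r}$ to a union bound over a finite $\Ellinf$-net of the sphere $\partial\balltwo{r}$, and then to control each net point with the concentration estimate of~\cref{lemma:ziemann311}, which fuses $(C,\alpha)$-hypercontractivity (\cref{def:hypercontractivity}) with $S$-persistence (\cref{ass:S-persistence}).

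First I would use homogeneity to reduce to the sphere. Since $\Freg\subseteq\F$ is convex (\cref{lemma:propF}) and contains $\fstar$ (\cref{ass:containment}), and since the physics-informed regularizer is $2$-proper so that $\reg{cg}\le\reg{g}$ for $c\in[0,1]$ (\cref{def:eta_regularizer}), the shifted class $\Fregstar$ is star-shaped about the origin. Both $\frac1T\sumt\norm{f(X_t)}{2}^2$ and $\norm{f}{\Elltwo(\X^T,\jointProb;\R^{\dy})}^2$ are homogeneous of degree two in $f$, so whenever some $f\in\Fregstar\setminus\balltwo{r}$ realizes $\Alo_r$, the radial rescaling $\bar f\doteq\bigl(r/\norm{f}{\Elltwo(\X^T,\jointProb;\R^{\dy})}\bigr)f$ again belongs to $\partial\balltwo{r}\cap\Fregstar$ and still satisfies $\frac1T\sumt\norm{\bar f(X_t)}{2}^2\le\frac1\theta\norm{\bar f}{\Elltwo(\X^T,\jointProb;\R^{\dy})}^2=r^2/\theta$. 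Hence $\Alo_r$ is contained in the event that the lower isometry fails somewhere on $\partial\balltwo{r}$; the degenerate case $\Fregstar\subseteq\balltwo{r}$ makes $\Alo_r$ vacuous and needs no argument.

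Next I would discretize. Let $\Fr$ be a $r/\sqrt\theta$-net of $\partial\balltwo{r}$ in $\norm{\cdot}{\Ellinf(\X^T;\R^{\dy})}$ of cardinality $\N{\infty}{\partial\balltwo{r}}{r/\sqrt\theta}$, and for $f\in\partial\balltwo{r}$ pick a nearest $f_j\in\Fr$. By the triangle inequality for the empirical $L^2$ seminorm and the bound $\bigl(\frac1T\sumt\norm{f(X_t)-f_j(X_t)}{2}^2\bigr)^{1/2}\le\norm{f-f_j}{\Ellinf(\X^T;\R^{\dy})}\le r/\sqrt\theta$, the estimate $\frac1T\sumt\norm{f(X_t)}{2}^2\le r^2/\theta$ upgrades to $\frac1T\sumt\norm{f_j(X_t)}{2}^2\le 4r^2/\theta$ — the net radius being chosen exactly so that this matches the event appearing in~\cref{lemma:ziemann311}. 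Since each $f_j\in\partial\balltwo{r}$ has $\frac1T\sumt\E{\norm{f_j(X_t)}{2}^2}{\jointProb}=\norm{f_j}{\Elltwo(\X^T,\jointProb;\R^{\dy})}^2=r^2$, I would then apply~\cref{lemma:ziemann311} with $g=\norm{f_j(\cdot)}{2}^2$ (whose hypotheses are the assumed hypercontractivity on $\partial\balltwo{r}$ and the $S$-persistence of $\Freg$), obtaining the per-point bound $\exp\{-8Tr^{4-2\alpha}/(\theta^2 CS)\}$ upon substituting $\frac1T\sumt\E{g(X_t)}{\jointProb}=r^2$; a union bound over the $\N{\infty}{\partial\balltwo{r}}{r/\sqrt\theta}$ net points finishes the proof.

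I expect the main obstacle to be keeping the discretization lossless up to constants — i.e., choosing the $\Ellinf$-net resolution fine enough ($r/\sqrt\theta$) that the control of the empirical norm transfers from a generic $f\in\partial\balltwo{r}$ to a net point while only the benign factor $4/\theta$ (rather than $1/\theta$) appears, and, crucially, ensuring that the net points inherit both $(C,\alpha)$-hypercontractivity and $S$-persistence so that~\cref{lemma:ziemann311} can be invoked on each of them. This is exactly what~\cref{thm:Fhypercontractive} was set up to deliver, since it simultaneously bounds the $\Ellinf$-covering number of $\partial\balltwo{r}$ and certifies hypercontractivity of the associated cover. A secondary point is bookkeeping the shift between $\F$-centered and $\fstar$-centered quantities so that the hypotheses on $(\Freg,\jointProb)$ line up with the functions $f_j$ actually fed into~\cref{lemma:ziemann311}.
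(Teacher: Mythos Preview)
Your proposal is correct and follows essentially the same approach as the paper: reduce to the sphere $\partial\balltwo{r}$ via star-shapedness of $\Fregstar$, discretize with an $\Ellinf$-net of resolution $r/\sqrt{\theta}$, and apply \cref{lemma:ziemann311} to each net point followed by a union bound. The only cosmetic difference is that you argue contrapositively (transferring the failure at $f$ to the net point $f_j$ via the triangle inequality for the empirical seminorm), whereas the paper works on the complement event and uses the parallelogram law to pass from $f^i$ back to $f$; the arithmetic and the resulting constants are identical.
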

\begin{proof}
We first show a preliminary result that allows us to focus just on the boundary $\partial B(r)$ instead of the full $\Fstar \setminus B(r)$. Specifically, we show that, if $f \in \Fregstar$ and $0 \leq \xi \leq 1$, then $\xi f$ still belongs to $\Fregstar$: that is, we show that $\Fregstar$ is \emph{star-shaped} around 0 (see~\cite[Definition~5.1]{mendelson_learning_2014}). To prove that $\xi(f - \fstar)$ belongs to $\Fregstar$ for $f,\fstar \in \Freg$ we deploy~\cref{lemma:propF}: specifically, we have 
\begin{align}
        \xi f - \xi \fstar \pm \fstar = \underbrace{\xi f + (1-\xi)\fstar}_{w} - \fstar,\notag
    \end{align}
    and $w\in\Freg$ by convexity, thus proving the claim.\\
Thanks to the result above obtained, we can focus on $\partial B(r)$ and then obtain the final claim by rescaling: if $f^{\prime} \in \Fregstar \setminus B(r)$, then $\|f^{\prime}\|_{\Elltwo(\X^T,\jointProb;\R^{\dy})} > r$ by construction, which implies $\frac{r}{\|f^{\prime}\|_{\Elltwo}(\X^T,\jointProb;\R^{\dy})} < 1$; thus, if we consider $f = f^{\prime}\frac{r}{\|f^{\prime}\|_{\Elltwo}(\X^T,\jointProb;\R^{\dy})}$, we are on $\partial B(r)$, and $f \in \Fregstar$ by it being star-shaped around 0.\\

    Define the event 
    \begin{align}
        \Eset \doteq \bigcup_{f^i \in \Fr} \left\{ \frac{1}{T} \sumt   \norm{f^i(X_t)}{2}^2 \leq \frac{4}{\theta} \|f^i\|_{\Elltwo(\X^T,\jointProb;\R^{\dy})}^2 \right\}.\notag 
    \end{align}
    By \cref{lemma:ziemann311} with $g(x) = \norm{f^i(x)}{2}^2$, the union bound yields
    \begin{align}
        \jointProb\left(\Eset \right) \leq  \N{\infty}{\partial B(r)}{\frac{r}{\sqrt{\theta}}}\exp\left\{-\frac{8Tr^{4-2\alpha}}{\theta^2 CS} \right\}.\notag
    \end{align}
    Now, fixing an arbitrary $f\in \partial B(r)$:
    \begin{align}
        \frac{1}{T}\sumt   \norm{f(X_t)}{2}^2 &\geq  \frac{1}{2T}\sumt   \norm{f^i(X_t)}{2}^2 - \frac{r^2}{\theta} &\quad {\text{ \cref{eq:parallelogram},}}\notag \\
        & \geq \frac{2}{\theta}\norm{f^i}{\Elltwo(\X^T,\jointProb;\R^{\dy})}^2 - \frac{r^2}{\theta} &\quad \text{ on }\Eset^{\complement},  \notag \\
        &= \frac{2r^2}{\theta} - \frac{r^2}{\theta} 
        = \frac{r^2}{\theta} &\quad \text{by definition of $\Fr$.} \notag 
    \end{align}
Since $f$ was arbitrary, we obtain
\begin{align}
\Prob\left(\sup_{f \in \partial B(r)} \left\{ \frac{1}{T}\sumt   \norm{f(X_t)}{2}^2 - \frac{r^2}{\theta} \leq 0 \right\} \right) \notag \leq \N{\infty}{\partial B(r)}{\frac{r}{\sqrt{\theta}}}\exp\left\{-\frac{8Tr^{4-2\alpha}}{\theta^2 CS} \right\}. \notag
\end{align}
The claim is finally obtained by rescaling.
\end{proof}

We can now provide a special case of~\cref{thm:new_thm312} that will be useful in the derivations of the paper.
\begin{corollary}\label{cor:thm312}
    Under the assumptions of~\cref{thm:new_thm312}, assume that the hypothesis space satisfies the $(C(r),2)-$hypercontractivity condition according to~\cref{thm:Fhypercontractive}. Then, the following lower-isometry estimate holds:
    \begin{equation}
        \jointProb(\Alo_r) \leq \left(C_L \left(\frac{1}{r}\right)^{\frac{4s-\dx}{2s-\dx}} +1 \right)^{\dy C_m \left(\frac{1}{r}\right)^{\frac{2\dx}{2s-\dx}}}\exp\left\lbrace -\frac{8T r^{\frac{4\dx}{2s-\dx}}}{\theta^2 C_h S} \right\rbrace,\notag
    \end{equation}
    where $C_L,C_m$ and $C_h$ are constants that depend on $\rho_f, \kappau,\dy,\theta,s,\dx$ and $\X$. 
\end{corollary}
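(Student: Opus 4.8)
The plan is to read off~\cref{cor:thm312} as a direct specialization of~\cref{thm:new_thm312}: take $\alpha = 2$ there, and substitute into the resulting bound the explicit covering number and hypercontractivity constant supplied by~\cref{thm:Fhypercontractive} at the resolution $\epsilon = r/\sqrt{\theta}$, which is precisely the resolution of the cover $\Fr$ appearing in~\cref{thm:new_thm312}. With $\alpha = 2$ the factor $r^{4-2\alpha}$ collapses to $1$, so the starting point is
\[
\jointProb(\Alo_r) \leq \N{\infty}{\partial B(r)}{\tfrac{r}{\sqrt{\theta}}}\,\exp\left\{-\frac{8T}{\theta^2\, C(r/\sqrt{\theta})\, S}\right\},
\]
and all the $r$-dependence in the claimed display therefore has to be extracted from the scalings of $\N{\infty}{\partial B(r)}{r/\sqrt{\theta}}$ and of $C(r/\sqrt{\theta})$ given in~\cref{thm:Fhypercontractive}.

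The first step is to track the governing integer $\me$: from its defining inequality $m \geq \bigl(\tfrac{16\tiRSob\dx}{(2s-\dx)\epsilon^2}\bigr)^{\dx/(2s-\dx)}$ with $\epsilon = r/\sqrt{\theta}$, and since $\epsilon^{-2}$ is proportional to $r^{-2}$ up to the constant $\theta$, one gets $\me \asymp (1/r)^{2\dx/(2s-\dx)}$ with the hidden constant depending only on $\rho_f,\kappau,\theta,s,\dx$. Plugging this into~\eqref{eq:covering_claim}: the exponent $\me\dy$ becomes $\dy\,C_m\,(1/r)^{2\dx/(2s-\dx)}$, while in the base $\tfrac{8\tiRSob\me^{s/\dx}\dy}{\epsilon}+1$ one uses $\me^{s/\dx}\asymp (1/r)^{2s/(2s-\dx)}$ together with $1/\epsilon\asymp 1/r$ to obtain an expression of order $(1/r)^{2s/(2s-\dx)+1}+1 = C_L\,(1/r)^{(4s-\dx)/(2s-\dx)}+1$, which is exactly the base in the statement. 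For the exponential factor one uses that the constant in~\eqref{eq:Ceps} is dominated by its $\me^2$ term, so $C(r/\sqrt{\theta})\asymp \me^2 \asymp (1/r)^{4\dx/(2s-\dx)}$ up to a constant depending on $\Lebmeas(\X)$ (hence on $\X$) and the remaining parameters; consequently $1/C(r/\sqrt{\theta})\asymp r^{4\dx/(2s-\dx)}$ and the exponent becomes $-\,8T\,r^{4\dx/(2s-\dx)}/(\theta^2 C_h S)$. Collecting the multiplicative constants into $C_L,C_m,C_h$, each of which by construction depends only on $\rho_f,\kappau,\dy,\theta,s,\dx$ and $\X$, yields the asserted inequality.

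The one point worth handling carefully is the admissibility hypothesis of~\cref{thm:Fhypercontractive}: its $(C(\epsilon),2)$-hypercontractivity is granted only when $\epsilon \leq \inf_{f\in\partial B(r/\sqrt{\kappau})}\norm{f}{\Elltwo(\X^T,\Lebmeas^T;\R^{\dy})}$, so I would note that the corollary is invoked precisely in this regime (equivalently, for $r$ small enough relative to $\sqrt{\theta}$), which is anyway the regime of interest for the burn-in bounds of~\cref{sec:convergence_rates}, and under which the hypothesis of~\cref{thm:new_thm312} holds with $C = C(r/\sqrt{\theta})$. Beyond this, I do not anticipate a real obstacle: the substantive work is already contained in~\cref{thm:new_thm312,thm:Fhypercontractive}, and what remains is the exponent bookkeeping above plus the routine check that all constants depend only on the listed quantities.
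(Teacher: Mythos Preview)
Your proposal is correct and follows essentially the same approach as the paper: specialize~\cref{thm:new_thm312} to $\alpha=2$, then substitute the covering-number and hypercontractivity estimates from~\cref{thm:Fhypercontractive} at resolution $\epsilon=r/\sqrt{\theta}$, tracking the powers of $1/r$ through $m_\epsilon$ exactly as you do. The paper's proof carries out the same exponent bookkeeping (defining $C_m$, $C_L$, $C_h$ from the resulting expressions) and does not explicitly discuss the admissibility condition you flag, so your treatment is if anything slightly more careful on that point.
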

\begin{proof}
    The corollary is obtained by leveraging~\cref{thm:Fhypercontractive}, which provides expressions for both the covering number of $\Fr$ and for the hypercontractivity parameter $C(r)$,  setting the covering of $\partial B(r)$ to have resolution equal to $r/\sqrt{\theta}$. 

    \paragraph{Covering number.} Using the notation of~\cref{thm:Fhypercontractive}, we have that
    \begin{align}
       &\N{\infty}{\partial B(r)}{\frac{r}{\sqrt{\theta}}} \stackrel{\eqref{eq:covering_claim}}{\leq} \left( \frac{8\widetilde{\rho_f}\dy \sqrt{\theta} m_{\frac{r}{\sqrt{\theta}}}^{\frac{s}{\dx}}}{r} + 1 \right)^{\dy m_{\frac{r}{\sqrt{\theta}}}} \notag,\\
       &\text{where } m_{\frac{r}{\sqrt{\theta}}} \geq \left(\frac{16\widetilde{\rho_f}\dx\theta }{2s-\dx}\cdot \left(\frac{1}{r^2}\right)\right)^{\frac{\dx}{2s-\dx}} \longrightarrow m_{\frac{r}{\sqrt{\theta}}} = C_m\left(\frac{1}{r}\right)^{\frac{2\dx}{2s-dx}}.\label{eq:m_r_sqrttheta}
    \end{align}
With such a value for $m_{\frac{r}{\sqrt{\theta}}}$, we obtain that the covering number admits the following upper bound:
\begin{align}
    \N{\infty}{\partial B(r)}{\frac{r}{\sqrt{\theta}}} \leq \left( \underbrace{8\widetilde{\rho_f}\dy\sqrt{\theta}C_m^{\frac{s}{\dx}}}_{\doteq C_L}\left(\frac{1}{r}
    \right)^{\frac{4s-\dx}{2s-\dx}} + 1 \right)^{\dy C_m\left(\frac{1}{r}\right)^{\frac{2\dx}{2s-\dx}}}. \label{eq:covering_number_cor}
\end{align}
\paragraph{Hypercontractivity parameter.} Again, using the result in~\cref{thm:Fhypercontractive} and using the expression for $m_{\frac{r}{\sqrt{\theta}}}$ in~\eqref{eq:m_r_sqrttheta}, we obtain that
\begin{align}
    &C(r) \stackrel{\eqref{eq:Ceps}}{\propto} \left( \frac{\Lebmeas(\X)}{32} + 8\Lebmeas(\X) \left( \frac{\Lebmeas(\X)}{8} + 2 \right)^2 C_m^2 \left(\frac{1}{r}\right)^{\frac{4\dx}{2s-\dx}}\right) \notag \\ \longrightarrow \: &C(r) = C_h\left(\frac{1}{r}\right)^{\frac{4\dx}{2s-\dx}} \text{ for some sufficiently large constant $C_h$.}\label{eq:hypercontract_cor}
\end{align}
The lower-isometry probability bound is then obtained by plugging~\eqref{eq:covering_number_cor} and~\eqref{eq:hypercontract_cor} in the claim of~\cref{thm:new_thm312}.
\end{proof}

\section{Martingale offset complexity bounds}\label{sec:MOC}

In this section we focus on some useful results concerning the martingale offset complexity presented in~\eqref{eq:MOC} and that will play a prominent role in the main results of~\cref{sec:bounds,sec:convergence_rates}, being an upper-bound on the empirical excess risk. We start by proving the inequality leading to the definition of the martingale offset complexity, building upon~\cite{liang_learning_2015}. Next, we report its bounds in probability and in expectation obtained by the chaining arguments of~\cite[Theorem~4.2.2, Theorem~3.2.1]{ziemann_statistical_2022}. The proofs of the latter, given in~\cref{sec:MOC_prob} and~\cref{sec:MOC_exp} respectively, are given in full detail to keep track of all of the constants involved.

\subsection{Behind the scenes of the definition}\label{sec:MOC_def}

The first result is an ancillary inequality derived by extending~\cite{liang_learning_2015} to the regularized case (see also~\cite[Lemma~1]{ziemann_single_2022}). The following lemma is the basis yielding the definition of martingale offset complexity.
\begin{lemma}\label{lemma:liang}
    Let $\fhat$ be the solution of the regularized empirical risk minimization problem~\eqref{eq:RERM}. Then, it holds that 
\begin{align}\label{eq:liang}
    \frac{1}{T}\sumt   \norm{\fhat(X_t) - \fstar(X_t)}{2}^2 \leq \frac{1}{T}  \sumt   4\innerprod{W_t}{\fhat(X_t)-\fstar(X_t)}{2} - \norm{\fhat(X_t)-\fstar(X_t)}{2}^2. 
    \notag
\end{align}
\end{lemma}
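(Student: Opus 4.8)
The statement to prove is Lemma~\ref{lemma:liang}: for the regularized ERM solution $\fhat$,
\[
\frac{1}{T}\sumt \norm{\fhat(X_t) - \fstar(X_t)}{2}^2 \leq \frac{1}{T}\sumt 4\innerprod{W_t}{\fhat(X_t)-\fstar(X_t)}{2} - \norm{\fhat(X_t)-\fstar(X_t)}{2}^2.
\]

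The plan is to start from the defining optimality of $\fhat$ in~\eqref{eq:RERM}: since $\fhat$ minimizes the regularized empirical risk and $\fstar \in \F$ by \cref{ass:containment}, we have
\[
\frac{1}{T}\sumt \norm{Y_t - \fhat(X_t)}{2}^2 + \lambda_T \reg{\fhat} \;\leq\; \frac{1}{T}\sumt \norm{Y_t - \fstar(X_t)}{2}^2 + \lambda_T \reg{\fstar}.
\]
First I would substitute the measurement model $Y_t = \fstar(X_t) + W_t$ from~\eqref{eq:measurement_model}, so that $Y_t - \fstar(X_t) = W_t$ and $Y_t - \fhat(X_t) = W_t - (\fhat(X_t) - \fstar(X_t))$. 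Writing $\hat\Delta_t \doteq \fhat(X_t) - \fstar(X_t)$ for brevity, the left-hand squared-loss term expands as $\frac{1}{T}\sumt \bigl(\norm{W_t}{2}^2 - 2\innerprod{W_t}{\hat\Delta_t}{2} + \norm{\hat\Delta_t}{2}^2\bigr)$, while the right-hand one is $\frac{1}{T}\sumt \norm{W_t}{2}^2$. The $\norm{W_t}{2}^2$ terms cancel, leaving
\[
\frac{1}{T}\sumt \norm{\hat\Delta_t}{2}^2 \;\leq\; \frac{1}{T}\sumt 2\innerprod{W_t}{\hat\Delta_t}{2} \;+\; \lambda_T\bigl(\reg{\fstar} - \reg{\fhat}\bigr).
\]

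**Handling the regularizer and closing the gap.**

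Next I would dispose of the regularizer term. Since $\reg{\cdot} = \norm{\Der(\cdot)}{\Elltwo}^2 \geq 0$ and $\lambda_T > 0$, we have $\lambda_T(\reg{\fstar} - \reg{\fhat}) \leq \lambda_T \reg{\fstar}$; but to land exactly on the stated inequality — which has no $\lambda_T$ or $\reg{\fstar}$ on the right — one needs the version where the regularizer contribution is absorbed. The cleanest route, matching how this lemma is used downstream (it feeds into~\eqref{eq:MOC}), is to observe that the intended bound is really an inequality \emph{after} moving to the supremum over $\Fregstar$; at the level of this lemma the right-hand side should be read with the understanding that either (i) the regularizer term is nonpositive in the relevant regime, i.e. $\reg{\fhat} \geq \reg{\fstar}$ on the event of interest, or (ii) one simply drops $\lambda_T(\reg{\fstar}-\reg{\fhat})$ using the companion inequality from \cref{lemma:lecue_inequality} which controls $\reg{\fhat}-\reg{\fstar}$ from below. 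In the plain form I expect the author proves, the term $2\innerprod{W_t}{\hat\Delta_t}{2}$ is then bounded by $4\innerprod{W_t}{\hat\Delta_t}{2} - \norm{\hat\Delta_t}{2}^2$ is \emph{not} automatic — rather, the factor $4$ arises from a different, more robust argument: one uses that $\fhat$ beats \emph{every} $f \in \F$, picks the specific competitor $f = \frac{1}{2}(\fhat + \fstar)$ (which lies in $\F$ by convexity, \cref{lemma:propF}), and the algebra of comparing $\fhat$ against this midpoint produces the self-normalizing $-\norm{\hat\Delta_t}{2}^2$ term with the coefficient $4$ on the noise cross-term, à la Liang--Rakhlin--Sridharan.

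**The main obstacle.**

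So the concrete plan is: (1) write optimality of $\fhat$ against the midpoint competitor $\tilde f = \tfrac12(\fhat+\fstar)\in\F$; (2) expand both squared losses using $Y_t-\fstar(X_t)=W_t$, noting $\tilde f(X_t) - \fstar(X_t) = \tfrac12\hat\Delta_t$; (3) collect terms, where the quadratic-in-$\hat\Delta_t$ pieces combine to give $\tfrac14\norm{\hat\Delta_t}{2}^2 - \norm{\hat\Delta_t}{2}^2$ type contributions that, after rescaling, yield $\tfrac14 \cdot \tfrac1T\sumt\norm{\hat\Delta_t}{2}^2 \le \tfrac1T\sumt\innerprod{W_t}{\hat\Delta_t}{2} - \tfrac14\norm{\hat\Delta_t}{2}^2$, i.e. the claimed form after multiplying through by $4$; (4) handle the regularizer cross-term via convexity of $\reg{\cdot}$ (\cref{def:eta_regularizer}), so that $\reg{\tilde f} \le \tfrac12(\reg{\fhat}+\reg{\fstar})$, which makes the regularizer contribution to the comparison nonpositive and lets it be dropped. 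I expect the main subtlety — and the step most likely to need care — is precisely the bookkeeping in (3)--(4): getting the constant $4$ exactly right while ensuring the regularizer terms truly cancel (rather than leaving a stray $\lambda_T\reg{\fstar}$), since that cancellation is what makes the martingale offset complexity in~\eqref{eq:MOC} clean and, ultimately, independent of mixing. Everything else is routine expansion of Euclidean norms.
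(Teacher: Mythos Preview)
Your first block of computation is correct and already essentially \emph{is} the paper's proof. You obtain
\[
\frac{1}{T}\sumt \norm{\hat\Delta_t}{2}^2 \;\leq\; \frac{2}{T}\sumt \innerprod{W_t}{\hat\Delta_t}{2} \;+\; \lambda_T\bigl(\reg{\fstar}-\reg{\fhat}\bigr),
\]
and this is where you should stop and observe: the displayed inequality in the lemma is \emph{equivalent} to $\frac{1}{T}\sumt\norm{\hat\Delta_t}{2}^2 \leq \frac{2}{T}\sumt\innerprod{W_t}{\hat\Delta_t}{2}$. Just add $\frac{1}{T}\sumt\norm{\hat\Delta_t}{2}^2$ to both sides of the latter (or equivalently, multiply by $2$ and rearrange $2x\leq 4y$ as $x\leq 4y-x$). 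The coefficient $4$ is not a structural feature requiring the midpoint trick; it is a cosmetic rewriting that makes the right-hand side match the form of $\MOC{\cdot}$ in~\eqref{eq:MOC}. So your pivot in ``Handling the regularizer and closing the gap'' to the Liang--Rakhlin--Sridharan midpoint competitor is unnecessary, and the arithmetic you sketch there is off: comparing $\fhat$ against $\tilde f=\tfrac12(\fhat+\fstar)$ yields $\tfrac{3}{4}\sum\norm{\hat\Delta_t}{2}^2 \leq \sum\innerprod{W_t}{\hat\Delta_t}{2}+\tfrac{\lambda_T}{2}(\reg{\fstar}-\reg{\fhat})$, not the $\tfrac14\leq\cdots-\tfrac14$ form you claim.

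On the regularizer: the paper's proof proceeds exactly via your first route --- optimality against $\fstar$, expansion with $Y_t=\fstar(X_t)+W_t$ (their ``Fact~1'' and ``Fact~2'') --- and then drops the regularizer term by simply \emph{asserting} $\reg{\fstar}\leq\reg{\fhat}$ to get the clean form (their Fact~2b). Your worry that this may leave a stray $\lambda_T\reg{\fstar}$ is justified, and indeed the paper separately records the version with $+2\lambda_T\reg{\fstar}$ on the right (their \eqref{eq:otherMOCineq}), which is the form actually used downstream in the proofs of \cref{thm:main_probab,thm:main_exp}. So neither the midpoint nor \cref{lemma:lecue_inequality} is invoked at this stage; the lemma as stated relies on the implicit hypothesis $\reg{\fstar}\leq\reg{\fhat}$, while the usable inequality for the rest of the paper is the weaker one carrying the $+2\lambda_T\reg{\fstar}$ term.
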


\begin{proof}
    We start by showing the following facts:
    \begin{itemize}
    \item \textit{Fact 1:} for any $f$ and the measurement model $Y_t = \fstar(X_t) + W_t$ in~\eqref{eq:measurement_model}, it holds that \begin{align} \norm{f(X_t) - \fstar(X_t)}{2}^2 = \norm{Y_t - f(X_t)}{2}^2 - \norm{Y_t - \fstar(X_t)}{2}^2 + 2\innerprod{W_t}{f(X_t) - \fstar(X_t)}{2}; \notag \end{align}
    \item \textit{Fact 2:} from the construction of~\eqref{eq:RERM}, we have that 
    \begin{equation}
    \frac{1}{T}\sumt  \norm{Y_t - \fhat(X_t)}{2}^2 + \lambda_T \reg{\fhat} \leq \frac{1}{T}\sumt   \norm{Y_t - \fstar(X_t)}{2}^2 + \lambda_T \reg{\fstar}. \label{eq:fact2a}
    \end{equation}
    Additionally, since $\reg{\fstar}\leq \reg{\fhat}$, we have
    \begin{align}
        \frac{1}{T}\sumt  \norm{Y_t - \fhat(X_t)}{2}^2  - \norm{Y_t - \fstar(X_t)}{2}^2 \leq 0. \label{eq:fact2b}
    \end{align}
\end{itemize}

\textit{Fact 2} follows immediately from optimality of $\fhat$. To see why \textit{Fact 1} holds:
\begin{align}
    \norm{f(X_t) - \fstar(X_t) \pm Y_t}{2}^2 = \norm{Y_t - f(X_t)}{2}^2 + \norm{Y_t - \fstar(X_t)}{2}^2 -2\innerprod{Y_t-f(X_t)}{Y_t - \fstar(X_t)}{2}.\notag 
\end{align}
Considering the last addendum on the right-hand side, adding and subtracting $\fstar(X_t)$ in $(Y_t - f(X_t))$ and using the definition of $W_t$ for the other term, we obtain
\begin{align}
    \norm{f(X_t) - \fstar(X_t)}{2}^2 = &\norm{Y_t - f(X_t)}{2}^2 + \norm{Y_t - \fstar(X_t)}{2}^2 \notag \\&-2\norm{Y_t-\fstar(X_t)}{2}^2 - 2\innerprod{W_t}{ \fstar(X_t)-f(X_t)}{2},\notag 
\end{align}
thus proving the claim in \textit{Fact 1}.

We are now ready to prove~\cref{lemma:liang}. We start by applying \textit{Fact 1} to the estimate $\fhat$ of~\eqref{eq:RERM} and multiplying everything by 2. Rearranging the terms, we then obtain 
\begin{align}
   \frac{1}{T}\sumt   \norm{\fhat(X_t) - \fstar(X_t)}{2}^2 &= \frac{1}{T}  \sumt  2\left[ \overbrace{\norm{Y_t - \fhat(X_t)}{2}^2 - \norm{Y_t - \fstar(X_t)}{2}^2}^{(\natural)} \right] \notag \\ & \qquad + 4\innerprod{W_t}{\fhat(X_t) - \fstar(X_t)}{2}  -\norm{\fhat(X_t) - \fstar(X_t)}{2}^2 . \notag
\end{align}
The conclusion follows by applying \textit{Fact 2} to $(\natural)$. In particular, the claim is obtained by deploying~\eqref{eq:fact2b}. If on the other hand one would use~\eqref{eq:fact2a}, we would obtain
\begin{align}
    &\frac{1}{T}\sumt   \norm{\fhat(X_t) - \fstar(X_t)}{2}^2 \notag \\ \leq& \frac{1}{T}  \sumt   4\innerprod{W_t}{\fhat(X_t)-\fstar(X_t)}{2}  - \norm{\fhat(X_t)-\fstar(X_t)}{2}^2 + 2\lambda_T\left(\reg{\fstar} - \reg{\fhat}\right) \label{eq:otherMOCineq_tot}   \\
    \leq &\frac{1}{T}  \sumt   4\innerprod{W_t}{\fhat(X_t)-\fstar(X_t)}{2}  - \norm{\fhat(X_t)-\fstar(X_t)}{2}^2 + 2\lambda_T\reg{\fstar}. \label{eq:otherMOCineq}
\end{align}

\end{proof}

\subsection{Bound in probability}\label{sec:MOC_prob}
We now provide the high-probability bound for the martingale offset complexity of a given hypothesis space $\HH$. This will then be deployed in the high-probability bound for the excess risk in~\cref{thm:main_probab}, and its analysis will be key to determine the desired rate.
\allowdisplaybreaks[0]
\begin{theorem}[\cite{ziemann_statistical_2022}, Theorem 4.2.2]\label{thm:MOC_prob}
    Let~\cref{ass:densities,ass:SobolevOrder,ass:noise,ass:S-persistence} hold, and let
    $\HH$ be a convex hypothesis space belonging to $\Sobo(\X^T;\jointProb;\R^{\dy})$ and satisfying~\cref{ass:containment}. Let $u,v,w \geq 0$. Then, with probability $1 
    -4\exp\left\{-u^2/2\right\} - \exp\left\{-v/2\right\}$ the martingale offset complexity satisfies 
    \begin{align}
        \MOC{\HH} 
        \leq \inf_{\gamma > 0}  \Bigg\{ &8\gamma (u+1) \sqrt{\frac{\sigmaw}{T}} + 8\int_{0}^{\gamma} \sqrt{\frac{\sigmaw\log\N{\infty}{\HH}{\varepsilon}}{T}}d\varepsilon  \notag \\ + &32\sigmaw\frac{(v + \log \N{\infty}{\HH}{\gamma})}{T} + 4\gamma^2 \Bigg\}.
        \notag
    \end{align}
\end{theorem}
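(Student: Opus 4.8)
The plan is to prove the bound by a chaining argument in the metric of $\Ellinf(\X^T;\R^{\dy})$, in which the negative quadratic term $-\norm{f(X_t)}{2}^2$ appearing inside $\MOC{\HH}$ is used as a self-normalizing device at the coarsest scale, while the fluctuations of the linear term $\innerprod{W_t}{f(X_t)}{2}$ are controlled by a Dudley-type sum. First I would fix $\gamma>0$ and, using that every $f\in\HH\subseteq\Sobo(\X^T,\jointProb;\R^{\dy})$ is bounded (\cref{thm:sobolev_imbedding}), build a nested family of minimal $\Ellinf$-covers $\HH_0\subseteq\HH_1\subseteq\cdots\subseteq\HH$ at geometric scales $\varepsilon_k=\gamma 2^{-k}$, with projections $\pi_k\colon\HH\to\HH_k$ satisfying $\norm{f-\pi_k f}{\Ellinf(\X^T;\R^{\dy})}\le\varepsilon_k$ and $\lvert\HH_k\rvert=\N{\infty}{\HH}{\varepsilon_k}$. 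Writing $f=\pi_0 f+(f-\pi_0 f)$, using the elementary inequality $\norm{a+b}{2}^2\ge\tfrac12\norm{a}{2}^2-\norm{b}{2}^2$ together with the pointwise bound $\norm{(f-\pi_0 f)(X_t)}{2}\le\gamma$, the offset splits as $-\norm{f(X_t)}{2}^2\le-\tfrac12\norm{\pi_0 f(X_t)}{2}^2+\gamma^2$, so that (after absorbing absolute constants, whose bookkeeping fixes the coefficient of $\gamma^2$ to $4$)
\begin{align*}
\MOC{\HH} &\le \underbrace{\sup_{g\in\HH_0}\frac1T\sumt\Bigl(4\innerprod{W_t}{g(X_t)}{2}-\tfrac12\norm{g(X_t)}{2}^2\Bigr)}_{\text{base term}} \\
&\quad + \underbrace{\sup_{f\in\HH}\frac4T\sumt\innerprod{W_t}{(f-\pi_0 f)(X_t)}{2}}_{\text{chaining term}} + 4\gamma^2 .
\end{align*}
Since $\gamma>0$ is arbitrary, the final step will be to take $\inf_{\gamma>0}$.

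\textbf{The base term.} For a fixed $g\in\HH_0$ the process $t\mapsto\sumt\innerprod{W_t}{g(X_t)}{2}$ is a martingale: $g(X_t)$ is $\Filt_{t-1}$-measurable (as $\inputseq$ is adapted), and by \cref{ass:noise} the increment $W_t$ is mean-zero and $\sigmaw$-conditionally sub-Gaussian given $\Filt_{t-1}$. A tower/Chernoff computation on $\mathbb{E}\exp\bigl(\xi\sumt(4\innerprod{W_t}{g(X_t)}{2}-\tfrac12\norm{g(X_t)}{2}^2)\bigr)$ shows that the per-step conditional moment generating function is at most $\exp\bigl(\norm{g(X_t)}{2}^2(8\xi^2\sigmaw-\xi/2)\bigr)\le 1$ whenever $\xi\le 1/(16\sigmaw)$; choosing $\xi=1/(16\sigmaw)$ and taking a union bound over the $\N{\infty}{\HH}{\gamma}$ points of $\HH_0$ yields that, with probability at least $1-\exp\{-v/2\}$, the base term is at most $32\sigmaw\bigl(v+\log\N{\infty}{\HH}{\gamma}\bigr)/T$.

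\textbf{The chaining term.} Here I would telescope $(f-\pi_0 f)(X_t)=\sum_{k\ge1}(\pi_k f-\pi_{k-1}f)(X_t)$. Each increment $\delta_k\doteq\pi_k f-\pi_{k-1}f$ has $\Ellinf$-norm at most $\varepsilon_{k-1}+\varepsilon_k=3\varepsilon_k$ and ranges over at most $\N{\infty}{\HH}{\varepsilon_k}\N{\infty}{\HH}{\varepsilon_{k-1}}\le\N{\infty}{\HH}{\varepsilon_k}^2$ distinct functions. For each such $\delta_k$, the martingale $\sumt\innerprod{W_t}{\delta_k(X_t)}{2}$ is (by the same conditional-MGF bound, now with no offset) sub-Gaussian with variance proxy $\sigmaw\sumt\norm{\delta_k(X_t)}{2}^2\le 9\,\sigmaw\, T\varepsilon_k^2$, so a sub-Gaussian maximal inequality over those $\N{\infty}{\HH}{\varepsilon_k}^2$ choices gives, at level $k$, a bound of order $\varepsilon_k\sqrt{\sigmaw/T}\,\bigl(u_k+\sqrt{\log\N{\infty}{\HH}{\varepsilon_k}}\bigr)$ with failure probability of order $\exp\{-u_k^2/2\}$. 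Summing over $k\ge1$: by monotonicity of $\varepsilon\mapsto\N{\infty}{\HH}{\varepsilon}$ and the geometric spacing, $\sum_k\varepsilon_k\sqrt{\log\N{\infty}{\HH}{\varepsilon_k}}$ is comparable to the Dudley integral $\int_0^{\gamma}\sqrt{\log\N{\infty}{\HH}{\varepsilon}}\,d\varepsilon$ (which converges for the relevant Sobolev balls thanks to the metric-entropy rates of \cref{lemma:cucker_smale_vector} and \cref{prop:coveringFreg} under \cref{ass:SobolevOrder}), while $\sum_k\varepsilon_k=2\gamma$ turns the deviation contributions into a $\bigO(\gamma u)$ term once the budget $\{u_k\}$ is chosen so that $\sum_k\exp\{-u_k^2/2\}\le 4\exp\{-u^2/2\}$. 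Tuning the absolute constants produces the claimed bound $8\gamma(u+1)\sqrt{\sigmaw/T}+8\int_0^{\gamma}\sqrt{\sigmaw\log\N{\infty}{\HH}{\varepsilon}/T}\,d\varepsilon$ on the chaining term, valid with probability at least $1-4\exp\{-u^2/2\}$; the ``$+1$'' records the mean part of the chain and the factor $4$ the pooled failure budget of the levelwise maximal inequalities.

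\textbf{Conclusion and main obstacle.} Combining the two displays by a union bound (total failure $4\exp\{-u^2/2\}+\exp\{-v/2\}$) and taking $\inf_{\gamma>0}$ gives the statement. The step I expect to be delicate is the chaining bookkeeping: (i) justifying, in the dependent-data regime, that each chaining increment $\sumt\innerprod{W_t}{\delta_k(X_t)}{2}$ is genuinely a conditionally sub-Gaussian martingale — this rests on the predictability of $\delta_k(X_t)$ relative to the conditioning $\sigma$-algebra $\Filt_{t-1}$ of \cref{ass:noise}, and on peeling the joint MGF off the last time index downward under the paper's filtration convention; and (ii) allocating the deviation budget $\{u_k\}$ across the infinitely many scales so that the fluctuation terms telescope to a single $\gamma u\sqrt{\sigmaw/T}$-sized quantity while the total failure probability stays at $4\exp\{-u^2/2\}$, and simultaneously controlling the residual of the chain (which vanishes because $\HH$ is $\Ellinf$-bounded and the covers become arbitrarily fine). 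Making all the absolute constants land exactly on $8$, $8$, $32$, $4$ is then a matter of careful accounting in these two steps.
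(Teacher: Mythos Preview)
Your proposal is correct and follows essentially the same route as the paper: both split the offset into a self-normalized coarse-scale term (the paper's \NC, your ``base term'') handled by a Chernoff--tower argument with failure $\exp\{-v/2\}$, a Dudley-type chaining sum for the linear fluctuations (the paper's \NB, your ``chaining term'') with pooled failure $4\exp\{-u^2/2\}$ via a levelwise $u_k$ allocation, and the $4\gamma^2$ remainder from the quadratic-offset splitting $-\norm{f}{2}^2\le-\tfrac12\norm{\pi_0 f}{2}^2+\gamma^2$. The only cosmetic difference is that the paper truncates the chain at a finest scale $\delta$ and isolates the residual as a third term \NA\ with its own failure probability $\exp\{-w+1\}$, then sends $\delta\to 0$ and $w\to\infty$ (using Dudley integrability of $\varepsilon\mapsto\varepsilon^{-\dx/2s}$ under \cref{ass:SobolevOrder}) to make \NA\ vanish --- precisely the limiting step you flag as ``controlling the residual of the chain''.
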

\allowdisplaybreaks
\begin{proof}

The core of the proof consists in a \emph{chaining} argument~\citep{talagrand_generic_2005}, i.e., in finding a suitable finite cover of $\HH$ and deploying it to derive the desired bounds. We start by defining the terms of the chaining.
\paragraph{Chaining set-up.}
      Let $F_k$ denote the cover of $\HH$ with radius $\epsilon_k = \frac{1}{2^k}$; given two arbitrary positive scalars $\delta < \gamma$, the values of $k$ belong to an interval of integers $[\Kl, \Ku]$ such that \begin{equation}\label{eq:defs_chaining}
        \frac{1}{2^{\Ku+1}} \leq \delta \leq \frac{1}{2^{\Ku}} \leq \frac{1}{2^{\Kl+1}} \leq \gamma.
    \end{equation}
    For an arbitrary $f \in \HH$, denote with $\pi_k(f)$ the center of the ball in the cover $F_k$ that contains $f$ -- i.e., the function such that $\| f - \pi_k(f)\|_{\Ellinf(\X^T;\R^{\dy})} \leq \epsilon_k$.\\

    Let us write the martingale offset complexity $\MOC{\HH}$ using the following notation: 
    \begin{align}\label{eq:defsMOC}
        \MOC{\HH} = \sup_{f\in\HH}  \frac{1}{T}\Bigg[\underbrace{\underbrace{ \sumt   4\innerprod{W_t}{f(X_t)}{2}}_{\doteq M_T(f)} - \underbrace{\sumt   \norm{f(X_t)}{2}^2}_{\doteq S_T(f)}}_{\doteq N_T(f)}\Bigg]. 
    \end{align}
Now, let us exploit the sequence of coverings to write $M_T(f)$ as a telescopic sum:
\begin{align}
   M_T(f) &= M_T(f) \pm M_T(\pi_{\Ku}(f)) \pm \dots \pm M_T(\pi_{\Kl}(f))  \notag \\
   &= \left[ M_T(f) - M_T(\pi_{\Ku}(f)) \right] + D_T(f) + M_T(\pi_{\Kl}(f)),\notag 
\end{align}
    where we set \mbox{$D_T(f) \doteq \sum_{k=\Kl+1}^{\Ku} M_T(\pi_k(f)) - M_T(\pi_{k-1}(f))$}. We can now focus on using such a rewriting of $M_T(f)$ in the expression of $N_T(f)$ of~\cref{eq:defsMOC}. With adding and subtracting $S_T(\pi_{\Kl}(f))/2$, it reads as
    \begin{align}
        N_T(f) &= M_T(f) - S_T(f) \notag \\
        & = \left[ M_T(f) - M_T(\pi_{\Ku}(f)) \right] + D_T(f)  
        +  \left[M_T(\pi_{\Kl}(f)) - \frac{S_T(\pi_{\Kl}(f))}{2} \right] \notag \\ &\quad + \left[\frac{S_T(\pi_{\Kl}(f))}{2} - S_T(f) \right]\notag \\
& \leq  \left[ M_T(f) - M_T(\pi_{\Ku}(f)) \right] + D_T(f) 
+  \left[M_T(\pi_{\Kl}(f)) - \frac{S_T(\pi_{\Kl}(f))}{2} \right] + T\epsilon_{\Kl}^2,\notag
    \end{align}
where the inequality acting on the last term is obtained through~\eqref{eq:parallelogram}. Thus, overall, to find a bound for $\MOC{\HH} = \sup_{f\in\HH} N_T(f)/T$, we are interested in
\allowdisplaybreaks[0]
\begin{align}\label{eq:termsNANBNC}
    \sup_{f\in \HH} N_T(f) &\leq  \underbrace{\sup_{f,g \in \HH \atop \norm{f-g}{\Ellinf(\X^T;\R^{\dy})} \leq 2^{-\Ku}} \left[M_T(f) - M_T(g) \right]}_{\doteq \NA} + \underbrace{\sup_{f\in \HH} D_T(f)}_{\doteq \NB} \notag \\ &+ \underbrace{\sup_{f \in F_{\Kl}} \left[  M_T(f) - \frac{S_T(f)}{2} \right]}_{\doteq \NC} + T \underbrace{\Big(   \frac{1}{2^{\Kl}}   \Big)^2}_{\stackrel{\eqref{eq:defs_chaining}}{\leq} 4\gamma^2 }.
\end{align}
\allowdisplaybreaks
The proof proceeds with the following steps. For each $\textbf{(\texttt{N.x})}$ with $\textbf{\texttt{x}} = \textbf{\texttt{A,B,C}}$, we derive high-probability bounds of the form
\begin{equation}
    \boxed{\Prob\left(  \frac{1}{T}\textbf{(\texttt{N.x})} > \frac{1}{T}\boxed{\text{value({\bfseries\texttt{x}})}}\,  \right) \leq \boxed{\text{probability bound({\bfseries\texttt{x}})}}\,;}\notag
\end{equation}
then, by deploying the union bound, we combine those results and obtain an upper-bound for $\MOC{\HH}$ that depends on the parameters $\delta$ and $\gamma$ introduced in the chaining set-up~\eqref{eq:defs_chaining}. Finally, by leveraging the condition on the Sobolev order in~\cref{ass:SobolevOrder}, we show that we can let $\delta \to 0$ and $w \to +\infty$ to obtain the final claim. 

\paragraph{Bound for $\NA$.}
Defining $\Fku \doteq \lbrace f  = f^{\mathfrak{a}} - f^{\mathfrak{b}};\, f^{\mathfrak{a}}, f^{\mathfrak{b}}  \in \HH \,\vert\, \norm{f}{\Ellinf(\X^T;\R^{\dy})} \leq 2^{-\Ku} \rbrace$ and by linearity of $M_T(\cdot)$, we are interested in
\begin{align}
    \Prob\left( \sup_{f\in \Fku} M_T(f) > w \right) \leq 
    e^{-\xi w} \E{ \exp\left\lbrace \xi \sup_{f \in \Fku} \sumt 4\innerprod{W_t}{f(X_t)}{2}
    \right\rbrace}{} \label{eq:NA_p_1},
\end{align}
where the inequality follows by applying a Chernoff bound with $\xi > 0$. Now, by deploying monotonicity of the exponential function, we can work on finding upper-bounds for the term in curly brackets in~\eqref{eq:NA_p_1}. Specifically, we consider
\begin{align}
    \xi \sup_{f \in \Fku} \sumt 4\innerprod{W_t}{f(X_t)}{2} &\leq \xi \sup_{f \in \Fku} 4 \sqrt{\sumt \norm{W_t}{2}^2} \sqrt{\sumt \norm{f(X_t)}{2}^2} \notag \\
    & \leq \xi \frac{4\sqrt{T}}{2^{\Ku}}\sqrt{\sumt \norm{W_t}{2}^2} \notag\\ 
    &\leq \frac{1}{2}\left(\frac{4\sqrt{T}\xi}{2^{\Ku}}\right)^2\sumt \norm{W_t}{2}^2 + \frac{1}{2} \quad \text{ by Young's inequality.} \label{eq:NA_p_2}
\end{align}
Now, by plugging~\eqref{eq:NA_p_2} into~\eqref{eq:NA_p_1}, we obtain
\begin{align}
    \Prob\left( \sup_{f\in \Fku} M_T(f) > w \right) &\leq 
    e^{-\xi w + 1/2}\E{\exp \left\lbrace \left(\frac{4\sqrt{T}\xi}{\sqrt{2}\cdot 2^{\Ku}}\right)^2\sumt \norm{W_t}{2}^2 \right\rbrace}{} \notag \\
    & \stackrel{(Lemma~\ref{lemma:usefulsub-Gaussian})}{\leq} 
    \exp \left\lbrace -\xi w + \frac{1}{2} +  \xi^2\left(\frac{4 T \dy \sigmaw}{\sqrt{2}\cdot2^{\Ku}}\right)^2 \right\rbrace, \label{eq:NA_p_3}
\end{align}
provided that $\xi < \frac{\sqrt{2}\cdot 2^{\Ku}}{4T\dy\sigmaw}$ and using the law of total expectation on the sum of $\norm{W_t}{2}^2$. In view of obtaining a bound in terms of $w$ and not $w^2$, we can choose at our convenience $\xi$ such that the quadratic term in~\eqref{eq:NA_p_3} becomes equal to 1/2. To this aim, setting $\xi = \frac{2^{\Ku}}{4T\dy\sigmaw}$ (note that it satisfies the constraint of~\cref{lemma:usefulsub-Gaussian}) and deploying the definition of $\delta$ in~\eqref{eq:defs_chaining}, we obtain 
\begin{align}
    \Prob\left( \sup_{f\in \Fku} M_T(f) > w \right) &\leq \exp\left\lbrace 1 - \frac{w}{4\delta T\dy\sigmaw}\right\rbrace. \notag
\end{align}
By substituting $w \leftrightarrow w\cdot 4\delta T \dy \sigmaw$ and dividing by $T$, we finally obtain

\begin{equation}
    \boxed{\Prob\left( \sup_{f\in \Fku} \frac{M_T(f)}{T} > 4w\delta  \dy \sigmaw \right) \leq \exp\lbrace -w + 1\rbrace.} \label{eq:bound_NA_p}
\end{equation}

\paragraph{Bound for $\NB$.}
We start by introducing the short-hand notation for the function space $\Fk \doteq \left\lbrace f = f^{\mathfrak{a}} - f^{\mathfrak{b}};\, f^{\mathfrak{a}} \in F_k, f^{\mathfrak{b}}  \in F_{k-1} \,\vert\, \norm{f}{\Ellinf(\X^T;\R^{\dy})} \leq 2^{-k}   \right\rbrace$ for all $k = \Kl+1,\dots, \Ku$. Additionally, by linearity of $M_T(\cdot)$, we also have that
\begin{equation}
    \sup_{f\in\HH} D_T(f) \leq \sum_{k=\Kl+1}^{\Ku} \sup_{f\in \HH} M_T\left(\pi_k(f) - \pi_{k-1}(f)   \right) =  \sum_{k=\Kl+1}^{\Ku} \max_{f\in \Fk} M_T(f).\notag
\end{equation}
We proceed by first studying the single addendum $\max_{f\in \Fk} M_T(f)$, and then apply a union bound to reach the desired claim for $\NB$. 

Letting $u_k > 0$, by deploying a Chernoff bound, we get
\begin{align}
    \Prob\left( \max_{f\in \Fk} M_T(f) > u_k  \right) &\leq \min_{\xi} e^{-\xi u_k}\E{\exp\left\lbrace \xi \max_{f\in \Fk} M_T(f) \right\rbrace}{} \notag\\
    & \leq \min_{\xi} e^{-\xi u_k}\E{ \sum_{f \in \Fk} \exp\left\lbrace \xi M_T(f) \right\rbrace}{}. \label{eq:NB_p_1}
\end{align}
We now upper-bound~\eqref{eq:NB_p_1} by iteratively applying the law of total expectation: specifically, we have that
\begin{align}
    &\E{ \sum_{f \in \Fk} \exp\left\lbrace \xi M_T(f) \right\rbrace}{} = \sum_{f \in \Fk} \E{\E{\exp\left\lbrace \xi \sumt 4\innerprod{W_t}{f(X_t)}{2}  \right\rbrace  \,\bigg\vert \, \Filt_{T-2} }{}}{}  \notag \\
    & =\sum_{f \in \Fk} \E{\exp\left\lbrace \xi \sum_{t=0}^{T-2} 4\innerprod{W_t}{f(X_t)}{2}  \right\rbrace}{} \E{\exp\left\lbrace \xi 4\innerprod{W_{T-1}}{f(X_{T-1})}{2} \right\rbrace \,\bigg\vert\, \Filt_{T-2}}{} \notag \\
    &\stackrel{\eqref{eq:sub-Gaussian}}{\leq} \sum_{f \in \Fk} \E{\exp\left\lbrace \xi \sum_{t=0}^{T-2} 4\innerprod{W_t}{f(X_t)}{2}  \right\rbrace}{} \exp\left\lbrace \frac{8\xi^2 \sigmaw}{2^{2k}} \right\rbrace \notag \\
    &\leq \vdots \quad \text{(i.e., repeating the argument with the next filtrations $\Filt_{T-3},..., \Filt_0$)}\notag \\
    & \leq |\Fk|\exp\left\lbrace \frac{8T\xi^2 \sigmaw}{2^{2k}} \right\rbrace, \label{eq:NB_p_2}
\end{align}
where $|\Fk|$ is the cardinality of $\Fk = F_{k}\times F_{k-1}$. Now, after noting that $|\Fk| \leq \left(\N{\infty}{\HH}{2^{-k}}\right)^2$, we can plug the bound of~\eqref{eq:NB_p_2} into~\eqref{eq:NB_p_1} and obtain, by minimizing over $\xi$ (yielding $\xi = \nicefrac{2^{2k}u_k}{(16 T\sigmaw}$),
\begin{align}
    \Prob\left( \max_{f\in \Fk} M_T(f) > u_k  \right) 
    & \leq \left(\N{\infty}{\HH}{\frac{1}{2^k}}\right)^2 \exp\left\lbrace -\frac{2^{2k}u_k^2}{32T\sigmaw} \right\rbrace  \notag 
\end{align}
Additionally, by substituting $u_k \leftrightarrow u_k + 2^{-k}\sqrt{64 T\sigmaw \log\N{\infty}{\HH}{\frac{1}{2^k}}} > 0$, we can remove the dependence on the covering number from the probability bound. Applying the union bound over all $k = \Kl+1,\cdots, \Ku$, we obtain that the bound for $\NB$ can be written as 
\begin{equation}
    \Prob\left( \sup_{f\in \HH} D_T(f) > \sum_{k=\Kl+1}^{\Ku} u_k + 2^{-k}\sqrt{64 T\sigmaw \log\N{\infty}{\HH}{\frac{1}{2^k}}}  \right) \leq \sum_{k=\Kl+1}^{\Ku} \exp\left\lbrace -\frac{2^{2k}u_k^2}{32\sigmaw T} \right\rbrace.\label{eq:NB_p_3}
\end{equation}
We now want the right-hand side of~\eqref{eq:NB_p_3} to depend on a single $u \in \R$, in order to obtain an upper-bound that reads, informally, as $\boxed{constant} \times \exp\left\lbrace -u^2/2\right\rbrace$. To do so, we operate on the terms $u_k$, $k=\Kl+1,\cdots,\Ku$ and set them to $u_k = 2^{2-k} \sqrt{\sigmaw T}\sqrt{u^2 - \log 2^{-k+\Kl+1}}$: thanks to this choice, the right-hand side of~\eqref{eq:NB_p_3} becomes
\begin{align}
&\sum_{k=\Kl+1}^{\Ku} \exp\left\lbrace - \frac{2^{2k}}{32T\sigmaw} \cdot \left( 2^{2-k}\sqrt{\sigmaw T} \sqrt{u^2 - \log 2^{-k+\Kl+1}}  \right)^2 \right\rbrace \notag \\
=& \sum_{k=\Kl+1}^{\Ku} \exp\left\lbrace \frac{-u^2}{2}\right\rbrace (\sqrt{2})^{-k+\Kl+1} \notag \\
\leq & \exp\left\lbrace \frac{-u^2}{2}\right\rbrace \sum_{k=0}^{\infty} \left( \frac{1}{\sqrt{2}}\right)^k = (2+\sqrt{2}) \exp\left\lbrace \frac{-u^2}{2}\right\rbrace \label{eq:NB_p_4}
\end{align}
as desired. Now we can analyze such a choice for $u_k$ in the left-hand side of~\eqref{eq:NB_p_3}, which becomes
\begin{align}
    \Prob\Bigg( \sup_{f\in\HH} D_T(f) > &\overbrace{\sum_{k=\Kl+1}^{\Ku} 2^{2-k}\sqrt{\sigmaw T(u^2 - \log 2^{-k+\Kl+1})}}^{\doteq s_1} \notag \\ &+ \underbrace{\sum_{k=\Kl+1}^{\Ku}2^{-k}\sqrt{64 T\sigmaw \log \N{\infty}{\HH}{2^{-k}}}}_{\doteq s_2}\Bigg).\label{eq:NB_p_5}
\end{align}
We now want to find upper bounds for $s_1$ and $s_2$ and remove the sum over $k$. Regarding $s_1$, we have 
\begin{align}
    s_1 &\leq \sum_{k=\Kl+1}^{\Ku} 2^{2-k}\sqrt{\sigmaw Tu^2} + \sum_{k=\Kl+1}^{\Ku} 2^{2-k}\sqrt{\sigmaw T}\sqrt{\log\left(2^{k-\Kl-1}\right)} \notag \\
    & \leq 4\cdot 2^{-\Kl-1} \sqrt{\sigmaw Tu^2}\sum_{k=0}^{\infty}2^{-k} + 4\cdot 2^{-\Kl-1} \sqrt{\sigmaw T \log 2}\underbrace{\sum_{k=0}^{\infty} 2^{-k}\sqrt{k}}_{=\text{Li}_{-1/2}(1/2)} \notag \\
    & \stackrel{\eqref{eq:defs_chaining}}{\leq} 8\gamma \sqrt{\sigmaw T}(u+1),\label{eq:NB_p_s1}
\end{align}
because the polylogarithmic function satisfies $\text{Li}_{1/2}(1/2) \approx 1.35$. Now, going to $s_2$, noting that $2^{-k} = (2^{-k+1} - 2^{-k})$ and by deploying a truncated Dudley's entropy integral~\cite[Theorem 5.22]{wainwright_high-dimensional_2019}, we have
\begin{align}
    s_2 &\leq \sum_{k=\Kl+1}^{\Ku}\left(\frac{1}{2^{k-1}} - \frac{1}{2^k} \right) \sqrt{64 T\sigmaw \log \N{\infty}{\HH}{2^{-k}}} \notag \\
    & \leq \int_{2^{-\Ku}}^{2^{-\Kl-1}} \sqrt{64 T\sigmaw \log \N{\infty}{\HH}{\varepsilon}}d\varepsilon \notag \\
    & \stackrel{\eqref{eq:defs_chaining}}{\leq} \int_{\delta}^{\gamma} \sqrt{64 T\sigmaw \log \N{\infty}{\HH}{\varepsilon}}d\varepsilon. \label{eq:NB_p_s2}
\end{align}
Thus, plugging~(\ref{eq:NB_p_s1},\ref{eq:NB_p_s2}) in~\eqref{eq:NB_p_5}, dividing by $T$ and using the bound in~\eqref{eq:NB_p_4}, we obtain the desired bound for $\NB$, which reads as
\begin{align}
    \boxed{\Prob\left( \sup_{f\in\HH} \frac{D_T(f)}{T} > \frac{8(u+1)\gamma \sqrt{\sigmaw}}{\sqrt{T}} + \frac{8}{\sqrt{T}}\int_{\delta}^{\gamma} \sqrt{\sigmaw \log \N{\infty}{\HH}{\varepsilon}}d\varepsilon\right) \leq 4\exp\left\lbrace -\frac{u^2}{2}\right\rbrace.}\label{eq:bound_NB_p}
\end{align}

\paragraph{Bound for $\NC$.} 
Applying again a Chernoff bound along the lines of the manipulations for $\NB$ in~\eqref{eq:NB_p_1}, we consider
\begin{align}
&\Prob\left(\max_{f\in F_{\Kl}} \sumt 4\innerprod{W_t}{f(X_t)}{2} -\frac{1}{2}\norm{f(X_t)}{2}^2 > v \right) \notag 
\\
& \leq \min_{\xi} e^{-\xi v}\sum_{f\in F_{\Kl}} \E{\exp\left\lbrace \xi \Bigg(  \sumt 4\innerprod{W_t}{f(X_t)}{2} -\frac{1}{2}\norm{f(X_t)}{2}^2\Bigg)\right\rbrace}{}\notag \\
&\leq \min_{\xi} e^{-\xi v}\sum_{f\in F_{\Kl}} \E{\E{\exp\left\lbrace \xi \left( \sumt 4\innerprod{W_t}{f(X_t)}{2} -\frac{1}{2}\norm{f(X_t)}{2}^2\right)\right\rbrace \bigg\vert \Filt_{T-2}}{}}{} \notag\\
&\leq \min_{\xi} e^{-\xi v} \Bigg(\xi \sum_{f\in F_{\Kl}} \E{\exp \left( \left\lbrace  \sum_{t=0}^{T-2} 4\innerprod{W_t}{f(X_t)}{2} -\frac{1}{2}\norm{f(X_t)}{2}^2\right)\right\rbrace}{}\Bigg) \notag\\
&\qquad \qquad \qquad \times \E{\exp\left\lbrace 4\xi \innerprod{W_{T-1}}{f(X_{T-1})}{2} - \xi\frac{1}{2}\norm{f(X_{T-1})}{2}^2\right\rbrace \vert \Filt_{T-2}}{}.
\label{eq:NC_p_1}
\end{align}
We now focus on last expected value in~\eqref{eq:NC_p_1} and discuss its upper bound. Specifically, we have
\begin{align}
   &\exp\left\lbrace - \xi \frac{1}{2}\norm{f(X_{T-1})}{2}^2 \right\rbrace\E{\exp\left\lbrace  4\xi \innerprod{W_{T-1}}{f(X_{T-1})}{2} \right\rbrace}{}\notag  \\
   &\stackrel{\eqref{eq:sub-Gaussian}}{\leq} \exp\left\lbrace \norm{f(X_{T-1})}{2}^2 \left(-\frac{\xi}{2} + 8\xi^2 \sigmaw \right) \right\rbrace \leq 1 \notag
\end{align}
by setting $\xi = (32\sigmaw)^{-1}$. By this choice of $\xi$, applying the law of total expectation iteratively over $t$ in~\eqref{eq:NC_p_1} and using the definition of $\gamma$ in~\eqref{eq:defs_chaining}, we obtain
\begin{align}
    \Prob\left(\max_{f\in F_{\Kl}} \sumt 4\innerprod{W_t}{f(X_t)}{2} -\frac{1}{2}\norm{f(X_t)}{2}^2 > v \right) \leq \N{\infty}{\HH}{\gamma}\exp\left\lbrace -\frac{v}{32\sigmaw}\right\rbrace;\notag
\end{align}
finally, substituting $v \leftrightarrow 32 \sigmaw(v/2 + \log \N{\infty}{\HH}{\gamma})$ and dividing by $T$, we obtain the bound for~$\NC$:
\begin{align}\label{eq:bound_NC_p}
    \boxed{\Prob\left( \sup_{f\in  F_{\Kl}} \frac{M_T(f)}{T} - \frac{1}{2T}S_T(f) > \frac{32 \sigmaw}{T}\left(\frac{v}{2} + \log \N{\infty}{\HH}{\gamma}\right)  \right) \leq \exp\left\lbrace -\frac{v}{2} \right\rbrace.}
\end{align}

\paragraph{Obtaining the final bound.}
We can now combine these results and derive the high-probability bound for the martingale offset complexity $\MOC{\HH} = \sup_{f\in\HH} N_T(f)/T$. Leveraging the decomposition of $N_T(f)$ according to~\eqref{eq:termsNANBNC}, we combine the bounds on $\NA$, $\NB$ and $\NC$ in~\eqref{eq:bound_NA_p},~\eqref{eq:bound_NB_p} and~\eqref{eq:bound_NC_p} using the union bound and obtain that
\begin{align}
    &\Prob\Bigg( \MOC{\HH} > 4w\delta\dy\sigmaw + 8\sqrt{\frac{\sigmaw}{T}}\left((u+1)\gamma + \int_{\delta}^{\gamma} \sqrt{\log \N{\infty}{\HH}{\varepsilon}}d\varepsilon \right) \notag \\
    &\qquad \qquad \qquad + \frac{32\sigmaw}{T}\left( \frac{v}{2} + \log \N{\infty}{\HH}{\gamma}\right) + 4\gamma^2\Bigg) \notag \\  &\leq \exp\left \lbrace -w+1\right\rbrace + 4\exp\left\lbrace -\frac{u^2}{2}\right\rbrace + \exp\left\lbrace -\frac{v}{2}\right\rbrace, \notag
\end{align}
 and the expression for the lower bound of the martingale offset complexity is to be maximized with respect to  $\gamma$ and $\delta < \gamma$.
 
 We now claim that we can set $\delta = 0$ and simplify the bound. Setting $\delta = 0$ is possible only if the integral in the term $\NB$ converges. Under our assumptions, we have that (see~\cref{sec:covering_vector})
 \begin{equation} 
     \int_{\delta}^{\gamma}\sqrt{\log \N{\infty}{\HH}{\varepsilon}}d\varepsilon \propto \int_{\delta}^{\gamma}\left( \frac{1}{\varepsilon} \right)^{\frac{\dx}{2s}}d\varepsilon = \frac{\varepsilon^{1-\nicefrac{\dx}{2s}}}{1-\nicefrac{\dx}{2s}}, \label{eq:dudley_entropy_integral}
 \end{equation}
and the value is finite for $\delta \to 0$ if and only if $\nicefrac{\dx}{2s} < 1$, which is guaranteed by~\cref{ass:SobolevOrder}. Therefore, we can set $\delta =0$, and since the term associated to $\NA$ becomes 0, we can also let $w \to \infty$ and increase the final probability level in the claim of the theorem.

\end{proof}

\subsection{Bound in expectation}\label{sec:MOC_exp}

Along the lines of the result in probability of the previous subsection, we now present the bound in expectation for the martingale offset complexity. 

\begin{theorem}[\cite{ziemann_statistical_2022}, Theorem~3.2.1]\label{thm:MOC_exp}
    Let~\cref{ass:densities,ass:SobolevOrder,ass:noise,ass:S-persistence} hold, and let
    $\HH$ be a convex hypothesis space belonging to $\Sobo(\X^T;\jointProb;\R^{\dy})$ and satisfying~\cref{ass:containment}.  Then, the martingale offset complexity satisfies
    \begin{align}
        \E{\MOC{\HH}}{} \leq \inf_{\gamma > 0}  \left\{ 8\int_{0}^{\gamma} \sqrt{\frac{\sigmaw\log\N{\infty}{\HH}{\varepsilon}}{T}}d\varepsilon + \frac{32\sigmaw \log \N{\infty}{\HH}{\gamma}}{T} + 4\gamma^2 \right\}. \notag
    \end{align}
\end{theorem}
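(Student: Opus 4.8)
The plan is to re-run the chaining argument behind \cref{thm:MOC_prob}, but to take expectations at every stage instead of deriving tail bounds, replacing the Chernoff steps by Jensen's inequality together with the elementary maximal inequality $\E{\max_{i\le N}Z_i}{}\le\sqrt{2\sigma^2\log N}$ for centred $\sigma^2$-sub-Gaussian variables; working in expectation this way is precisely what eliminates the $u$- and $v$-dependent terms and yields the clean form in the statement. I would keep all the chaining notation from the proof of \cref{thm:MOC_prob}: for arbitrary $0<\delta<\gamma$, the covers $F_k$ at resolution $\epsilon_k=2^{-k}$, the integers $\Kl\le\Ku$ fixed by \eqref{eq:defs_chaining}, the telescoping of $M_T(\cdot)$, and the deterministic bound $\sup_{f\in\HH}N_T(f)\le \NA+\NB+\NC+4T\gamma^2$ of \eqref{eq:termsNANBNC}. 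Since $\HH$ lies in a ball of a separable Sobolev space, which imbeds in $\Cont^j(\X^T;\R^{\dy})$ by \cref{thm:sobolev_imbedding}, the suprema are attained along a countable dense set and all quantities are measurable; taking expectations then gives
\[
\E{\MOC{\HH}}{}\;\le\;\frac{1}{T}\bigl(\E{\NA}{}+\E{\NB}{}+\E{\NC}{}\bigr)+4\gamma^2 ,
\]
so it remains to bound the three expectations, let $\delta\to0$, and optimise over $\gamma$.

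For $\NC$ I would reuse the computation behind \eqref{eq:NC_p_1} verbatim: iterating the tower property over $\Filt_{T-2},\dots,\Filt_0$, invoking \eqref{eq:sub-Gaussian}, and letting the $-\tfrac12\norm{f(X_t)}{2}^2$ term cancel the quadratic-in-$\xi$ contribution shows that, with $\xi=(32\sigmaw)^{-1}$, one has $\E{\exp\{\xi(M_T(f)-\tfrac12 S_T(f))\}}{}\le1$ for each fixed $f\in F_{\Kl}$. Bounding the maximum of the exponentials by their sum over the $\N{\infty}{\HH}{\gamma}$ elements of $F_{\Kl}$ and applying Jensen's inequality yields $\E{\NC}{}\le\xi^{-1}\log\N{\infty}{\HH}{\gamma}=32\sigmaw\log\N{\infty}{\HH}{\gamma}$. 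For $\NB$, recall $\NB\le\sum_{k=\Kl+1}^{\Ku}\max_{f\in\Fk}M_T(f)$ with $\norm{f}{\Ellinf(\X^T;\R^{\dy})}\le2^{-k}$ and $|\Fk|\le\N{\infty}{\HH}{2^{-k}}^2$; the martingale estimate \eqref{eq:NB_p_2} shows each $M_T(f)$ is $16\sigmaw T\,2^{-2k}$-sub-Gaussian, so the maximal inequality gives $\E{\max_{f\in\Fk}M_T(f)}{}\le 8\cdot2^{-k}\sqrt{\sigmaw T\log\N{\infty}{\HH}{2^{-k}}}$. Summing over $k$ and comparing the series with the integral of the non-increasing map $\varepsilon\mapsto\sqrt{\log\N{\infty}{\HH}{\varepsilon}}$ — the same step as the passage from \eqref{eq:NB_p_s2} to \eqref{eq:bound_NB_p} — yields $\E{\NB}{}/T\le 8\int_{\delta}^{\gamma}\sqrt{\sigmaw\log\N{\infty}{\HH}{\varepsilon}/T}\,d\varepsilon$.

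Finally, for $\NA=\sup_{f\in\Fku}M_T(f)$, Cauchy--Schwarz gives $\NA\le 4\sqrt{T}\,2^{-\Ku}\sqrt{\sumt\norm{W_t}{2}^2}$, and then Jensen together with the finite second moment $\E{\norm{W_t}{2}^2}{}<\infty$ supplied by \cref{lemma:usefulsub-Gaussian} yields $\E{\NA}{}/T\le c_{\dy,\sigmaw}\,2^{-\Ku}\to0$ as $\delta\to0$ (equivalently $\Ku\to\infty$) with $T$ fixed. Sending $\delta\to0$ therefore kills the $\NA$ contribution and turns $\int_\delta^\gamma$ into $\int_0^\gamma$, the latter being finite because the entropy estimate $\log\N{\infty}{\HH}{\varepsilon}\lesssim\varepsilon^{-\dx/s}$ of \cref{sec:covering_vector}, combined with \cref{ass:SobolevOrder} ($s\ge2\dx$, hence $\dx/(2s)<1$), makes the Dudley integral converge at $0$ — exactly the observation used around \eqref{eq:dudley_entropy_integral}. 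Collecting the three bounds gives, for every $\gamma>0$,
\[
\E{\MOC{\HH}}{}\le 8\int_{0}^{\gamma}\sqrt{\frac{\sigmaw\log\N{\infty}{\HH}{\varepsilon}}{T}}\,d\varepsilon+\frac{32\sigmaw\log\N{\infty}{\HH}{\gamma}}{T}+4\gamma^2 ,
\]
and taking the infimum over $\gamma$ closes the argument. The part needing the most care — and the only place where anything beyond the probabilistic proof is genuinely required — is this last limiting step: one must certify that $\NA$ is integrable and vanishes as $\delta\to0$ for fixed $T$, and that the metric-entropy integral converges at $0$, which is precisely where the strengthened Sobolev-order hypothesis \cref{ass:SobolevOrder} enters.
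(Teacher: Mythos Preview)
Your proposal is correct and follows essentially the same route as the paper's proof: the same chaining decomposition \eqref{eq:termsNANBNC}, the same self-normalized bound for \NC\ via $\xi=(32\sigmaw)^{-1}$ and Jensen, the same sub-Gaussian maximal inequality for each link in \NB\ (the paper derives it explicitly via Jensen plus optimisation over $\xi$, which is the standard proof of the inequality you invoke), and the same $\delta\to0$ step relying on the Sobolev entropy estimate. The only cosmetic differences are that for \NA\ you use Cauchy--Schwarz and the second moment of $\norm{W_t}{2}$, whereas the paper bounds term-by-term via the first moment from \cref{lemma:usefulsub-Gaussian2}; both give a bound of order $2^{-\Ku}$ that vanishes as $\delta\to0$.
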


\begin{proof}
    The result is again obtained by chaining, using the construction leading to~\eqref{eq:termsNANBNC}. Using the definitions of $N_T(f)$, $M_T(f)$ and $S_T(f)$ in~\eqref{eq:defsMOC}, as well as the ones for the chaining resolutions $\delta,\gamma$ in~\eqref{eq:defs_chaining}, we are looking at
    \begin{align}
\E{\sup_{f\in\HH} \frac{1}{T} N_T(f)}{}  \leq &\frac{1}{T} \E{\underbrace{\sup_{f,g \in \HH \atop \norm{f-g}{\Ellinf(\X^T;\R^{\dy})} \leq 2^{-\Ku}} M_T(f) - M_T(g)}_{\NA}}{} + \frac{1}{T} \E{\underbrace{\sup_{f\in \HH} D_T(f)}_{\NB}}{} \notag \\ & + \frac{1}{T} \E{ \underbrace{\sup_{f\in F_{\Kl} } M_T(f) - \frac{S_T(f)}{2}}_{\NC}}{} + \underbrace{\Big(   \frac{1}{2^{\Kl}}   \Big)^2}_{\stackrel{\eqref{eq:defs_chaining}}{\leq} 4\gamma^2 }. \label{eq:boundExpI}
\end{align}
We now proceed with deriving the bounds for the expected values of the terms $\NA$, $\NB$ and $\NC$; the final claim is obtained by summing all of the contributions together. Finally, we will discuss the fact that we are allowed to let $\delta =0$ and simplify the bound.

\paragraph{Bound for $\NA$.} 
Define $\Fku \doteq \lbrace f = f^{\mathfrak{a}} - f^{\mathfrak{b}};\, f^{\mathfrak{a}}, f^{\mathfrak{b}}  \in \HH \,\vert\, \norm{f}{\Ellinf(\X^T;\R^{\dy})} \leq 2^{-\Ku} \rbrace$. By linearity of $M_T(\cdot)$, we are looking at
\begin{align}
    &\E{\sup_{f \in \Fku} 
    \sumt   4\innerprod{W_t}{f(X_t)}{2}}{} 
    \leq \E{\sup_{f \in \Fku} \sumt 4\norm{W_t}{2}\norm{f(X_t)}{2}}{} \notag \\
    \leq &\frac{4}{2^{\Ku}}\sumt \E{\norm{W_t}{2}}{} \stackrel{Lemma~\ref{lemma:usefulsub-Gaussian2}}{\leq} \frac{12 T}{2^{\Ku}}\sqrt{\dy\sigmaw} \stackrel{\eqref{eq:defs_chaining}}{\leq} 24 T\delta \sqrt{\dy \sigmaw}.\notag
    \end{align}
Dividing by $T$, we obtain the first term in the bound~\eqref{eq:boundExpI}.

\paragraph{Bound for $\NB$.} We start by defining the auxiliary search space \begin{equation} \Fk \doteq \left\lbrace f = f^{\mathfrak{a}} - f^{\mathfrak{b}};\, f^{\mathfrak{a}} \in F_k, f^{\mathfrak{b}}  \in F_{k-1} \,\vert\, \norm{f}{\Ellinf(\X^T;\R^{\dy})} \leq 2^{-k}   \right\rbrace \notag
\end{equation} for all $k = \Kl+1,\dots, \Ku$. Next, using the definition of $D_T(f)$ and exploiting its linearity, we consider
\begin{align}
    \E{\sup_{f\in \HH} \sum_{k={\Kl+1}}^{\Ku}   M_T\left(\pi_k(f) - \pi_{k-1}(f)\right)}{} \leq \sum_{k={\Kl+1}}^{\Ku} \E{\sup_{f\in \Fk} M_T(f)}{}. \label{eq:NB_e_1}
\end{align}
To upper-bound the right-hand side of~\eqref{eq:NB_e_1}, we focus on its addenda and proceed with the following argument. Noting that $\Fk$ is a finite-dimensional class, let us consider, for some $\xi >0 $,
\begin{align}
    \exp\left\lbrace \xi \E{\max_{f\in\Fk} M_T(f)}{} \right\rbrace &\leq \E{\exp \left\lbrace \xi \max_{f\in\Fk} M_T(f) \right\rbrace}{} \quad \text{by Jensen's inequality,} \notag\\
    & = \E{\max_{f\in\Fk} \exp\left\lbrace \xi M_T(f) \right\rbrace}{} \quad \text{by monotonicity,}\notag\\
    &\leq \sum_{f\in\Fk} \E{\exp\left\lbrace \xi M_T(f) \right\rbrace}{} \notag \\
    & = \sum_{f\in\Fk} \E{\E{\exp\left\lbrace \xi \sumt 4 \innerprod{W_t}{f(X_t)}{2}\right\rbrace  \bigg\vert \Filt_{T-2}}{}}{} \notag \\
    &\stackrel{\eqref{eq:sub-Gaussian}} {\leq} \E{\exp\left\lbrace \xi \sum_{t=0}^{T-2} 4 \innerprod{W_t}{f(X_t)}{2}\right\rbrace  }{}\exp\left\{ \frac{8\xi^2\sigmaw}{2^{2k}} \right\} \notag \\
    &\leq \vdots \quad \text{(i.e., repeating the argument with the subsequent filtrations)} \notag \\
    &\leq \left(\N{\infty}{\HH}{2^{-k}}\right)^2\exp\left\lbrace \frac{8T\xi^2\sigmaw}{2^{2k}} \right\rbrace, \label{eq:NB_e_2} 
\end{align}
noting that the cardinality of $\Fk = F_k \times F_{k-1}$ is upper-bounded by $\left(\N{\infty}{\HH}{2^{-k}}\right)^2$. Now, taking logarithms of both sides of the whole inequality~\eqref{eq:NB_e_2}, we obtain
\begin{align}
& \E{\max_{f\in\Fk} M_T(f)}{} \leq \frac{2}{\xi}\log \N{\infty}{\HH}{2^{-k}} + \frac{8T\xi\sigmaw}{2^{2k}} \notag \\
 \rightarrow &\E{\max_{f\in\Fk} M_T(f)}{} \leq 2^{-k}\sqrt{64 T\sigmaw \log \N{\infty}{\HH}{2^{-k}}}\label{eq:NB_e_3}
\end{align}
after minimizing with respect to $\xi$. 

We can now go back to~\eqref{eq:NB_e_1}. Plugging~\eqref{eq:NB_e_3}, we obtain
\begin{align}
  \E{\sup_{f\in \HH} \sum_{k={\Kl+1}}^{\Ku}   M_T\left(\pi_k(f) - \pi_{k-1}(f)\right)}{} &\leq \sum_{k=\Kl+1}^{\Ku} \frac{1}{2^k}\sqrt{64 T\sigmaw \log \N{\infty}{\HH}{2^{-k}}}  \notag\\
&=\sum_{k=\Kl+1}^{\Ku} \left( \frac{1}{2^{k-1}} - \frac{1}{2^{k}}\right)\sqrt{64 T\sigmaw \log \N{\infty}{\HH}{2^{-k}}}  \notag\\
  & \leq \sum_{k=\Kl+1}^{\Ku} \int_{2^{-\Ku}}^{2^{-\Kl-1}}\sqrt{64 T\sigmaw \log \N{\infty}{\HH}{\varepsilon}}d\varepsilon \notag \\
  & \stackrel{\eqref{eq:defs_chaining}}{\leq} 8\int_{\delta}^{\gamma} \sqrt{ T\sigmaw \log \N{\infty}{\HH}{\varepsilon}}d\varepsilon \notag
\end{align}
having used in the second inequality a truncated Dudley entropy integral~\citep[Theorem 5.22]{wainwright_high-dimensional_2019}. Finally, the second term in~\eqref{eq:bound_NA_p} is obtained by divigind the last inequality by $T$.

\paragraph{Bound for $\NC$.} We are now working to find the upper bound for the expected value of the martingale offset complexity of a finite class of functions, $\E{\MOC{F_{\Kl}}}{}$, where  $F_{\Kl}$ is the $2^{-\Kl}$-cover of the hypothesis space $\HH$. Similarly to what has been done for $\NB$, we start by noticing
that, for any $\xi >0$,
\begin{align}
&\exp\left\lbrace \xi \E{\max_{f \in F_{\Kl}} M_T(f) - \frac{1}{2}S_T(f)}{} \right\rbrace \notag \\
\leq& \E{\max_{f\in F_{\Kl}} \exp \left\lbrace \xi \left(M_T(f) - \frac{1}{2}S_T(f) \right) \right\rbrace }{} \quad \text{ (Jensen's inequality)}\notag \\
\leq &\sum_{f\in F_{\Kl}} \E{\exp\left \lbrace \xi \left(M_T(f) - \frac{1}{2}S_T(f) \right)\right\rbrace}{}\notag \\
= & \sum_{f\in F_{\Kl}} \E{\E{ \exp\left\lbrace \xi \left(\sumt 4\xi\innerprod{W_t}{f(X_t)}{2} - \frac{\xi}{2}\norm{f(X_t)}{2}^2 \right) \right\rbrace   \bigg \vert \Filt_{T-2} }{}}{} \quad \text{(total expectation)} \notag \\
= & \sum_{f\in F_{\Kl}} \E{ \exp\left\lbrace\sum_{t=0}^{T-1} 4\innerprod{W_t}{f(X_t)}{2} - \frac{1}{2}\norm{f(X_t)}{2}^2 \right\rbrace }{} 
\notag \\ & \qquad \cdot \E{4\innerprod{W_{T-1}}{f(X_{T-1})}{2} - \frac{1}{2}\norm{f(X_{T-1})}{2}^2 \,\big\vert \Filt_{T-2}}{} \notag \\ 
\stackrel{\eqref{eq:sub-Gaussian}}{\leq} &  \sum_{f\in F_{\Kl}} \E{ \exp\left\lbrace\sum_{t=0}^{T-1} 4\innerprod{W_t}{f(X_t)}{2} - \frac{1}{2}\norm{f(X_t)}{2}^2 \right\rbrace }{}\underbrace{\exp\left\lbrace \norm{f(X_{T-1)}}{2}^2 \left( -\frac{\xi}{2} + 8\xi^2 \sigmaw \right) \right\rbrace}_{\leq 1 \text{ by letting } \xi = (32\sigmaw)^{-1}} \notag\\
\leq & \quad \vdots \quad \text{ (iterating over the subsequent filtrations)} \notag\\
\leq & \N{\infty}{\HH}{\frac{1}{2^{\Kl}}}. \label{eq:NC_e_1}
\end{align}
Now, by taking the logarithm on both sides of~\eqref{eq:NC_e_1}and using the value $\xi = (32\sigmaw)^{-1}$ found above, we obtain
\begin{align}
    \E{\max_{f \in F_{\Kl}} M_T(f) - \frac{1}{2}S_T(f)}{} \leq 32\sigmaw \log \N{\infty}{\HH}{\frac{1}{2^{\Kl}}} \stackrel{\eqref{eq:defs_chaining}}{\leq} 32\sigmaw \log \N{\infty}{\HH}{\gamma},\notag
\end{align}
and the bound for the third term in~\eqref{eq:boundExpI} is obtained by dividing the terms above by $T$.

\paragraph{Wrapping up.}
Putting the bounds for all the terms $\NA$, $\NB$ and $\NC$ together, we obtain
\begin{align}
    \boxed{\E{\MOC{\HH}}{} \leq 24\delta \sqrt{\dy \sigmaw} + \int_{\delta}^{\gamma} \sqrt{\frac{64\sigmaw \log \N{\infty}{\HH}{\varepsilon}}{T}}d\varepsilon + \frac{32\sigmaw \log \N{\infty}{\HH}{\varepsilon}}{T} + 4\gamma^2.} \notag
\end{align}
Following the reasoning at the end of the proof for~\cref{thm:MOC_prob}, we have that in the scenarios of our interest the integral does not diverge at $\delta =0$. For this reason, in the final claim we will make use of $\delta=0$ and simplify the bound.
\end{proof}

\section{Proofs of the excess risk bounds in~\cref{sec:bounds}}\label{sec:proofs_sec4}

This section provides the proof of~\cref{thm:main_probab,thm:main_exp}. As discussed in~\cref{sec:ideaBounds}, the results are derived leveraging the lower isometry event~\eqref{eq:lower_isometry_event} and the bound on its probability presented in~\cref{sec:lower_isometry_bound}. Moreover, we make use of the regularizer's properties elucidated in~\cref{sec:regularizer_properties}, and of $(C(r),2)$-hypercontractivity proved in~\cref{sec:hypercontractivity}. Ultimately, we obtain that our \emph{complexity-dependent} bounds on the excess risk feature three main ingredients: the complexity of the hypothesis class, captured by the martingale offset complexity; the critical radius $r$ identifying the set $B(r)$ and determining its size (thus, the covering number of its boundary, see~\cref{thm:new_thm312}); and the ground-truth regularizer $\reg{\fstar}$. 
These results bring together the small-ball method with learning with dependent data, and are the starting point for the derivation of our convergence rate results presented in~\cref{sec:convergence_rates} and proved in~\cref{sec:proofs_rates}.

\subsection{Proof of~\cref{thm:main_probab} (result in probability)}\label{sec:proofboundprob}

\begin{repthm}
    {\bfseries \cref{thm:main_probab}.} Let~\cref{ass:densities,ass:SobolevOrder,ass:noise,ass:containment,ass:S-persistence} hold. Consider a parameter $\theta > 8$, and let $\fhat$ be the solution of the estimation problem~\eqref{eq:RERM} with $\lambda_T > 0$, and let the radius $\rho$ defining the effective hypothesis class $\Freg$ be such that $\rho \geq 10 \reg{\fstar}$. Then, on the event
    \begin{equation}
       \Alo_r^{\complement} \cap \left\{ \lambda_T \geq \frac{40}{3\rho} \MOC{\Freg}\right\} \notag
    \end{equation}
we have that
\begin{equation}
    \norm{\fhat - \fstar}{\Elltwo(\X^T,\jointProb;\R^{\dy})}^2 \leq \theta \MOC{\Freg} + 2\lambda_T \reg{\fstar} + r^2. \notag
\end{equation}  
\end{repthm}
\begin{proof}
We start by noting that $\Alo_r^{\complement}$ can happen in the following situations: 
\begin{enumerate}[label=(\roman*)]
    \item  $\norm{\fhat-\fstar}{\Elltwo(\X^T,\jointProb;\R^{\dy})}^2 \leq r^2$;
    \item $\fhat-\fstar$ is in $\Freg \setminus B(r)$, but it happens that $\|\fhat - \fstar \|^2_{\Elltwo} \leq  \frac{\theta}{T}\sumt   \norm{\fhat(X_t) - \fstar(X_t)}{2}^2$ (see~\eqref{eq:highProbEvent});
    \item $\fhat$ is outside $\Freg$.
\end{enumerate}
The key idea of this Theorem is to prove that scenario (iii) cannot occur with our choice of the regularization parameter $\lambda_T$. We will now analyze each situation separately.

\paragraph{Case (i).} This is the simple situation in which we are already in the $\Elltwo$-ball with radius $r$, $B(r)$, leading to $\norm{f-\fstar}{\Elltwo(\X^T,\jointProb;\R^{\dy})}^2 \leq r^2$.

\paragraph{Case (ii).} On this event, we have
\begin{align}
    \norm{\fhat-\fstar}{\Elltwo(\X^T,\jointProb;\R^{\dy})}^2 &\leq \frac{\theta}{T}\sumt \norm{\fhat(X_T) - \fstar(X_t)}{2}^2 \notag\\
    &\stackrel{\eqref{eq:otherMOCineq}}{\leq} \frac{\theta}{T} \sup_{g\in\Fregstar} \sumt \left[4\innerprod{W_t}{g(X_t)}{2} - \norm{g(X_t)}{2}^2\right] + 2\lambda_T\reg{\fstar} \notag\\
    &\stackrel{\eqref{eq:MOC}}{\leq} \theta \MOC{\Freg} + 2\lambda_T\reg{\fstar}.\notag
\end{align}

\paragraph{Case (iii).}
By~\cref{lemma:propF}, the hypothesis space is convex, and the regularizer $\reg{\cdot}$ is continuous: therefore, there exists $R>1$ and $h \in \partial \Freg$ such that $\fhat = \fstar + R(h-\fstar)$. Additionally, by~\cref{def:eta_regularizer}(b), we have that 
\begin{align}
    \reg{h} \geq \frac{1}{2}\reg{h-\fstar} - \reg{\fstar} \Rightarrow \reg{h} - \reg{\fstar} \geq \frac{3\rho}{10} \label{eq:eta_reg_b_conseq}
\end{align}
by virtue of our choice $\reg{\fstar - h} = \rho$ and by the assumption $\reg{\fstar} \leq \frac{\rho}{10}$. 
We can use this in our construction and consider
\begin{align}
&\frac{1}{T}\sumt   \norm{\fhat(X_t) - \fstar(X_t)}{2}^2 \notag \\ &\stackrel{\eqref{eq:otherMOCineq_tot}}{\leq} \frac{1}{T}  \sumt   4\innerprod{W_t}{\fhat(X_t)-\fstar(X_t)}{2}  - \norm{\fhat(X_t)-\fstar(X_t)}{2}^2 + 2\lambda_T\left(\reg{\fstar} - \reg{\fhat}\right)     \notag \\
&\stackrel{\cref{lemma:lecue_inequality}}{\leq} \frac{1}{T}  \sumt   4R\innerprod{W_t}{h(X_t)-\fstar(X_t)}{2}  - R^2\norm{h(X_t)-\fstar(X_t)}{2}^2 - \frac{R\lambda_T}{4} \left(\reg{h} - \reg{\fstar}\right) \notag\\
&\leq R\left[ \frac{1}{T}  \sumt   4\innerprod{W_t}{h(X_t)-\fstar(X_t)}{2}  - \norm{h(X_t)-\fstar(X_t)}{2}^2 - \frac{\lambda_T}{4} \left(\reg{h} - \reg{\fstar}\right) \right] \notag\\
&\stackrel{\eqref{eq:eta_reg_b_conseq}}{\leq} R\left[ \frac{1}{T}  \sumt   4\innerprod{W_t}{h(X_t)-\fstar(X_t)}{2}  - \norm{h(X_t)-\fstar(X_t)}{2}^2 - \frac{3 \rho \lambda_T}{40} \right]. \notag
\end{align}
However, by taking $\lambda > 40\MOC{\Freg}/(3\rho)$, the term in the square brackets becomes negative, leading to an absurd statement.

In light of the analysis for cases (i)-(iii), it results that only cases (i) and (ii) are of interest under the assumptions of~\cref{thm:main_probab}. Therefore, it holds that
\begin{align}
    \norm{\fhat - \fstar}{\Elltwo(\X^T,\jointProb;\R^{\dy})}^2 &\leq \min\left\{\theta \MOC{\Freg} + 2\lambda_T \reg{\fstar},\, r^2\right\} \notag \\
    &\leq \theta \MOC{\Freg} + 2\lambda_T \reg{\fstar} + r^2, \notag
\end{align}
as we wanted to prove.
    \end{proof}

\subsection{Proof of~\cref{thm:main_exp} (result in expectation)}\label{sec:proofboundexp}
\begin{repthm}
    {\bfseries \cref{thm:main_exp}.} Let~\cref{ass:densities,ass:SobolevOrder,ass:noise,ass:containment,ass:S-persistence} hold. Consider a parameter $\theta > 8$, a radius $r>0$, and let $\Fr$ be a $r/\sqrt{\theta}$-cover in the infinity norm of $\partial B(r)$ that is $(C(r),2)$-hypercontractive.
    Consider the regularized empirical risk minimization problem in~\eqref{eq:RERM} with regularization parameter satisfying 
    $\lambda_T \geq \frac{40}{3\rho}\E{\MOC{\Freg}}{W}$, where 
    $\rho \geq 10 \reg{\fstar}$. Then, letting $B$ be the positive constant such that $\norm{f}{\Ellinf(\X^T;\R^{\dy})} \leq B$ for all $f \in \F$, the estimate $\fhat$ satisfies
    \begin{align}
        \E{\norm{\fhat - \fstar}{\Elltwo(\X^T, \jointProb; \R^{\dy})}^2}{} &\leq 4B^2\N{\infty}{\partial B(r)}{\frac{r}{\sqrt{\theta}}}\exp\left\{ - \frac{8T}{\theta^2 C_r S}\right\} \notag \\ &+ \theta \E{\MOC{\Freg}}{} + \lambda_T \reg{\fstar} + r^2. \notag 
    \end{align}   \vspace{-2em} \end{repthm}
\begin{proof}
    First, we observe that $\Fr$ is $(C(r),2)$-hypercontractive as shown in~\cref{thm:Fhypercontractive}, and $B$-boundedness of $\F$ (thus, also of $\Freg \subset \F$) follows from~\cref{lemma:propF}. 
    
    We now use the lower isometry event $\Alo_r$ in~\eqref{eq:lower_isometry_event} to decompose the expected value as
    \begin{align}
       \E{\norm{\fhat - \fstar}{\Elltwo(\X^T, \jointProb; \R^{\dy})}^2}{} &= \E{\norm{\left(\fhat - \fstar\right) \chi_{\Alo_r}}{\Elltwo(\X^T, \jointProb; \R^{\dy})}^2}{} \notag \\ &+ \E{\norm{\left(\fhat - \fstar\right)\chi_{\Alo_r^{\complement}}}{\Elltwo(\X^T, \jointProb; \R^{\dy})}^2}{}, \label{eq:E_decomposition}
    \end{align}
 where $\chi_{\mathfrak{A}}$ is the indicator function of the event $\mathfrak{A}$, such that it is equal to 1 if $\mathfrak{A}$ is true, and 0 otherwise. To obtain the desired bound, we proceed by analyzing the two addenda separately. 

 \paragraph{First scenario ($\Alo_r$ is true).} In the lower isometry event, we can write 
 \begin{align}
     \E{\norm{\left(\fhat - \fstar\right) \chi_{\Alo_r}}{\Elltwo(\X^T, \jointProb; \R^{\dy})}^2}{} \leq \norm{\fhat - \fstar}{\Ellinf(\X^T;\R^{\dy})}^2\jointProb(\Alo_r). \notag
 \end{align}
Then, we can bound the norm on the right-hand side of such an expression by $(2B)^2$, being \mbox{$\sup_x \norm{\fhat(x) - \fstar(x)}{\Ellinf(\X^T;\R^{\dy})} \leq \sup_x \left(\norm{\fhat(x)}{\Ellinf(\X^T;\R^{\dy})} + \norm{\fstar(x)}{\Ellinf(\X^T;\R^{\dy})}\right) \leq 2B$}, and the bound for $\jointProb(\Alo_r)$ follows by~\cref{thm:new_thm312}. Ultimately, we obtain the bound for the first addendum in~\eqref{eq:E_decomposition} as
\begin{equation}
    \E{\norm{\left(\fhat - \fstar\right) \chi_{\Alo_r}}{\Elltwo(\X^T, \jointProb; \R^{\dy})}^2}{} \leq 4B^2\N{\infty}{\partial B(r)}{\frac{r}{\sqrt{\theta}}}\exp\left\{ - \frac{8T}{\theta^2 C_r S}\right\}. \label{eq:E_AlorTrue}
\end{equation}

\paragraph{Second scenario ($\Alo_r$ is false).} This case is treated as in the high-probability bound of~\cref{thm:main_probab}. Again, we express the cases leading to the realization of $\Alo_r^{\complement}$ as (i) $\fhat \in B(r)$; (ii) $\fhat \in \Freg \setminus B(r)$, but it happens that $\|\fhat - \fstar \|^2_{\Elltwo(\X^T,\jointProb;\R^{\dy})} \leq \theta \frac{1}{T}\sumt   ||\fhat(X_t) - \fstar(X_t)||_2^2$; (iii) $\fhat \in \F\setminus \Freg$. Along the lines of~\cref{thm:main_probab}, we find an upper bound for the second addendum in~\eqref{eq:E_decomposition} by showing that, with our choice of $\lambda_T$, case (iii) does not happen.

\subparagraph{Case (i)} When $\fhat \in B(r)$, by definition we have that $\E{\norm{\fhat - \fstar}{\Elltwo(\X^T,\jointProb;\R^{\dy})}^2}{} \leq r^2$.

\subparagraph{Case (ii)} Following the steps in the proof of~\cref{thm:main_probab}, in this high-probability scenario we have that
\begin{align}
    \E{\norm{\fhat - \fstar}{\Elltwo(\X^T,\jointProb;\R^{\dy})}^2}{} \leq \theta \E{\MOC{\Freg}}{} + 2\lambda_T\reg{\fstar}. \notag
\end{align}

\subparagraph{Case (iii)} The argument in the corresponding part of the proof of~\cref{thm:main_probab} carries out also when considering the expected value, leading to an absurd conclusion as soon as $\lambda_T \geq \frac{40 \E{\MOC{\Freg}}{}}{3\rho}$.

Therefore, overall, the term for the case in which $\Alo_r$ is false (i.e., the second addendum in~\eqref{eq:E_decomposition}) is upper-bounded by
\begin{align}
    \E{\norm{\left(\fhat - \fstar\right)\chi_{\Alo_r^{\complement}}}{\Elltwo(\X^T, \jointProb; \R^{\dy})}^2}{} \leq r^2 + \theta \E{\MOC{\Freg}}{} + 2\lambda_T\reg{\fstar}; \label{eq:E_AlorFalse}
\end{align}
thus, the claim follows by upper-bounding~\eqref{eq:E_decomposition} by the sum of~\eqref{eq:E_AlorTrue} and~\eqref{eq:E_AlorFalse}.
\end{proof}

\section{Proofs of convergence rate results in~\cref{sec:convergence_rates}}\label{sec:proofs_rates}

We now present the proofs of~\cref{thm:main_rate_probability,thm:main_rate_exp}. These results build upon~\cref{thm:main_probab,thm:main_exp} and rely on specifying the martingale offset complexity bounds (\cref{sec:MOC}) and the covering number of the boundary of $B(r)$ (\cref{sec:hypercontractivity}). By setting the squared critical radius $r^2$ be dominated by the martingale offset complexity term, we obtain the desired complexity-dependent bounds for the excess risk.

\newpage
\subsection{Proof of~\cref{thm:main_rate_probability} (result in probability)}\label{sec:proof_rate_prob}
\begin{repthm}
    {\bfseries \cref{thm:main_rate_probability}.}
Let~\cref{ass:densities,ass:SobolevOrder,ass:elliptic,ass:containment,ass:noise,ass:S-persistence} hold, and let $\fhat$ be the solution of~\eqref{eq:RERM}. Fix a probability of failure $\delta \in (0,1)$, and assume the regularization parameter $\lambda_T$ satisfies
    \begin{equation}
        \lambda_T \geq \frac{4}{3T^d} \left[ \frac{C_I \sigma_W^{1+d}}{\reg{\fstar}^{1-\frac{d'}{4}}} + \frac{(C_{II}+C_{IV})\sigma_W^{2d}}{\reg{\fstar}^{1-\frac{d'}{2}}} + \frac{C_{III}\sigmaw \log(1/\delta)}{\reg{\fstar}} \right],\notag
    \end{equation}
    where $d = \nicefrac{2s}{2s+\dx}$, $d' = \nicefrac{2\dx}{2s+\dx}$, and $C_I$, $C_{II}$, $C_{III}$ and $C_{IV}$ are constants depending only on $s,\dx,\dy$ and $\sqrt{\log(1/\delta)}$.
    If the number of samples $T$ satisfies
    \begin{equation}
T \geq \frac{\theta^2 C_h S}{8}\left[C_M\left(\frac{1}{r}\right)^{\frac{6\dx}{2s-\dx}}\log\left(1 + C_L\left(\frac{1}{r}\right)^{\frac{4s-\dx}{2s-\dx}} \right) + \left(\frac{1}{r}\right)^{\frac{4\dx}{2s-\dx}}\log(1/\delta) \right] \notag        
    \end{equation}
    for $r^2 = \lambda_T \reg{\fstar} + \sigmaw/T$ and $C_h,C_M,C_L$ being uniform constants depending on $\rho_f,\kappau,\theta,s,\dx$ and $\X$, then, with probability at least $1-6\delta$, the excess risk enjoys the following convergence rate:
    \begin{equation}
        \norm{\fhat - \fstar}{\Elltwo(\X^T,\jointProb;\R^{\dy})}^2 \leq \Cslow \frac{\max\left\lbrace \reg{\fstar}^{d'/4}, \reg{\fstar}^{d'/2}\right\rbrace}{T^d} + \Cfast \frac{\sigmaw \log(1/\delta)}{T}, \notag
    \end{equation}
    where $\Cslow$ is a constant that depends on $s, \dx, \dy, \sigmaw, \sqrt{\log(1/\delta)}$, and $\Cfast$ is a constant that depends on $s,\dx,\dy$.
    \end{repthm} 

\begin{proof}

The starting point is the bound in probability on the excess risk of~\cref{thm:main_probab} given in~\cref{eq:ER_bound_prob_general}. As one of its main ingredients is the bound on the martingale offset complexity, we start the proof by characterizing such a bound reported in~\cref{thm:MOC_prob} for the effective hypothesis space $\Freg$. A key role is also played by the covering number of $\Freg$, which is derived in~\cref{prop:coveringFreg}. Next, we choose the parameters $\rho$, $\lambda_T$ and $r^2$ according to the requirements of~\cref{thm:main_probab}, and this leads to the desired excess risk bound. The proof is concluded by characterizing the lower isometry event probability, which leads to the specification of the burn-in time stated in the claim.

\paragraph{Martingale offset complexity bound.}
We start by determining the bound for $\MOC{\Freg}$ entering~\eqref{eq:ER_bound_prob_general} using the general result of~\cref{thm:MOC_prob}. By setting $u = \sqrt{2\log(1/\delta)}$ and $v = 2\log(1/\delta)$, we have that, with probability at least $1-5\delta$,
\begin{align}
    \MOC{\Freg} \leq \inf_{\gamma > 0}\, &\Bigg\lbrace 8\gamma \sqrt{\frac{\sigmaw}{T}}(1 + \sqrt{2\log(1/\delta)}) + 8\sqrt{\frac{\sigmaw}{T}}\int_{0}^{\gamma} \sqrt{\N{\infty}{\Freg}{\varepsilon}}d\varepsilon \notag \\
    &\quad + \frac{64 \sigmaw \log(1/\delta)}{T} + \frac{32\sigmaw}{T}\N{\infty}{\Freg}{\gamma} + 4\gamma^2\Bigg\rbrace. \notag
\end{align}
By using the covering number result in~\cref{prop:coveringFreg} and noting that, according to~\eqref{eq:dudley_entropy_integral}, 
\begin{equation}
    \int_{0}^{\gamma}\left(\frac{1}{\varepsilon}\right)^{\frac{\dx}{2s}}d\varepsilon = \frac{\gamma^{1-\nicefrac{\dx}{2s}}}{1-\nicefrac{\dx}{2s}}, \label{eq:dudley_2}
\end{equation} the bound on the martingale offset complexity can be re-written as
\begin{align}
    \MOC{\Freg} \leq \inf_{\gamma > 0}\, &\Bigg\lbrace 8\gamma \sqrt{\frac{\sigmaw}{T}}(1 + \sqrt{2\log(1/\delta)}) + 8\sqrt{\frac{\sigmaw}{T}}\frac{\sqrt{\Cc} \dy^{\frac{2s+\dx}{4s}}}{1-\frac{\dx}{2s}}(\sqrt{\rho})^{\frac{\dx}{2s}}\gamma^{1 - \nicefrac{\dx}{2s}} \notag \\
    &\quad + \frac{64 \sigmaw \log(1/\delta)}{T} + \frac{32\sigmaw}{T}\Cc \dy^{\frac{2s+\dx}{2s}}\left( \frac{\sqrt{\rho}}{\gamma}\right)^{\frac{\dx}{s}} + 4\gamma^2\Bigg\rbrace. \label{eq:MOC_bound_prob_rate_1}
\end{align}
By following the reasoning presented in~\cite{liang_learning_2015} (see also~\cite{yang_information-theoretic_1999}), minimization over $\gamma$ is obtained by equating the last two terms in~\eqref{eq:MOC_bound_prob_rate_1}, which yields
\begin{equation}
    \gamma = \left(8 \Cc \dy^{\frac{2s+\dx}{2s}}\right)^{\frac{s}{2s+\dx}}\left( \frac{\sigmaw}{T}\right)^{\frac{s}{2s+\dx}}\left(\sqrt{\rho}\right)^{\frac{\dx}{2s+\dx}}. \notag 
\end{equation}
Plugging in such a value for $\gamma$ in~\eqref{eq:MOC_bound_prob_rate_1}, and recalling the definitions $d \doteq \nicefrac{2s}{2s+\dx}$ and $d' = \nicefrac{2\dx}{2s+\dx}$, we obtain
\begin{align}
    \MOC{\Freg} &\leq C_I \frac{\sigma_W^{1+d}}{T^{\frac{1+d}{2}}}(\sqrt{\rho})^{\frac{d'}{2}} + C_{II}\frac{(\sigmaw)^d}{T^d}(\sqrt{\rho})^{d'} + C_{III} \frac{\sigmaw \log(1/\delta)}{T} + C_{IV} \frac{(\sigmaw)^d}{T^d}(\sqrt{\rho})^{d'},\notag \\
    &\text{where} \quad 
    \begin{cases}
        C_I &\doteq 8(1 + \sqrt{2\log(1/\delta)})\left(8\Cc \dy^{1/d}\right)^{d/2}\\
        C_{II} &\doteq \frac{2s}{2s-\dx}8\sqrt{\Cc \dy^{1/d}}\left( 8 \Cc \dy^{1/d} \right)^{\frac{2s-\dx}{2(2s+\dx)}}\\
        C_{III} &\doteq 64\\
        C_{IV} &\doteq 8\left(8\Cc \dy^{1/d}\right)^d
    \end{cases} \label{eq:MOC_bound_prob_rate_2}
\end{align}

\paragraph{Choice of the parameters $\rho$, $\lambda_T$ and $r$.}
According to~\cref{thm:main_probab}, we set the radius of the effective hypothesis class $\Freg$ to satisfy $\rho = 10 \reg{\fstar}$; similarly, the regularization parameter is chosen as $\lambda_T = \frac{40}{3\rho}\MOC{\Freg}$. Regarding the radius of the $\Elltwo$-ball $\balltwo{r}$, we conveniently set it as $r^2 = \lambda_T\reg{\fstar} + \frac{\sigmaw \log(1/\delta)}{T}$. Thanks to these choices, the excess risk bound in~\eqref{eq:ER_bound_prob_general} reads as
\begin{align}
    \norm{\fhat - \fstar}{\Elltwo(\X^T,\jointProb;\R^{\dy})}^2 \leq (\theta+4) &\Bigg(C_I \frac{\sigma_W^{1+d}}{T^{\frac{1+d}{2}}}(\sqrt{\rho})^{\frac{d'}{2}} + C_{II}\frac{(\sigmaw)^d}{T^d}(\sqrt{\rho})^{d'} \notag\\&+ C_{III} \frac{\sigmaw \log(1/\delta)}{T} + C_{IV} \frac{(\sigmaw)^d}{T^d}(\sqrt{\rho})^{d'} \Bigg) + \frac{\sigmaw \log(1/\delta)}{T}. \notag 
\end{align}
Now, noting that $(1+d)/2 > d$, we obtain the desired claim, namely that 
\begin{align}
    &\norm{\fhat - \fstar}{\Elltwo(\X^T,\jointProb;\R^{\dy})}^2 \leq \Cslow \frac{\max\left\lbrace \reg{\fstar}^{d'/4}, \reg{\fstar}^{d'/2}\right\rbrace}{T^d} + \Cfast \frac{\sigmaw \log(1/\delta)}{T},\notag\\
    &\text{where }\quad \begin{cases}
        \Cslow & \doteq (\theta+4)\left(C_I 10^{d'/4}\sigma_W^{1+d} + C_{II}10^{d'/2}\sigma_W^{2d} + C_{IV}\sigma_W^{2d}10^{d'/2}\right)\\
        \Cfast &\doteq (\theta+4)C_{III} + 1.
    \end{cases}\notag 
\end{align}

\paragraph{Characterization of the burn-in time.} We conclude the proof by setting the probability of the lower isometry event $\Alo_r$ equal to $\delta$, so that the overall claim can hold with the desired probability $1 - 5\delta - \delta$.

By~\cref{cor:thm312}, we have the following bound for the probability of the lower isometry event: \begin{align}
\jointProb(\Alo_r) \leq \left(C_L \left(\frac{1}{r}\right)^{\frac{4s-\dx}{2s-\dx}} +1 \right)^{\dy C_m \left(\frac{1}{r}\right)^{\frac{2\dx}{2s-\dx}}}\exp\left\lbrace -\frac{8T r^{\frac{4\dx}{2s-\dx}}}{\theta^2 C_h S} \right\rbrace \stackrel{!}{\leq} \delta. \notag
\end{align}
Taking logarithms on both sides of the last inequality, letting $C_M  \doteq C_m \dy$, we obtain that $T$ has to satisfy the condition
\begin{align}
    T \geq \frac{\theta^2 C_h S}{8}\left[C_M\left(\frac{1}{r}\right)^{\frac{6\dx}{2s-\dx}}\log\left(1 + C_L\left(\frac{1}{r}\right)^{\frac{4s-\dx}{2s-\dx}} \right) + \left(\frac{1}{r}\right)^{\frac{4\dx}{2s-\dx}}\log(1/\delta) \right]. \notag
\end{align}
The effective condition is obtained by substituting $r^2 = \lambda_T\reg{\fstar} + \sigmaw/T$. 
\end{proof}

\subsection{Proof of~\cref{thm:main_rate_exp} (result in expectation)}\label{sec:proof_rate_exp}

\begin{repthm}
 {\bfseries \cref{thm:main_rate_exp}. }    Let~\cref{ass:densities,ass:SobolevOrder,ass:elliptic,ass:containment,ass:noise,ass:S-persistence} hold, and let $\fhat$ be the solution of~\eqref{eq:RERM} with regularization parameter $\lambda_T$ satisfying
    \begin{equation}
        \lambda_T \geq \frac{4(C_I + C_{II})(\sigmaw)^d}{3 T \reg{\fstar}^{1 - \frac{d'}{2}}}, \notag
    \end{equation}
 where $d = \nicefrac{2s}{2s+\dx}$ is the Sobolev minimax rate, $d' = \nicefrac{2\dx}{2s+\dx}$, and $C_I$ and $C_{II}$ are constants depending only on $s,\dx$ and ~$\dy$.
    If the amount of samples $T$ satisfies
    \begin{equation}
T \geq \frac{\theta^2 C_h S}{8}\left(\frac{1}{r}\right)^{\frac{4\dx}{2s-\dx}} \left[ C_M \left(\frac{1}{r} \right)^{\frac{2\dx}{2s-\dx}}\log\left(4B^2\left(1 + C_L\left(\frac{1}{r}\right)^{\frac{4s-\dx}{2s-\dx}}\right) \right) + \log\left(\frac{\sigmaw}{T}\right)\right],\notag    
    \end{equation}
    where $B$ is such that $\norm{f}{\Ellinf(\X^T;\R^{\dy})}\leq B$ for all $f\in\F$ and $C_M,C_h,C_L$ are constants depending on $\rho_f,\kappau,\theta,s,\dx$ and $\X$, then the excess risk enjoys the following convergence rate:
    \begin{equation}
        \E{\norm{\fhat - \fstar}{\Elltwo(\X^T,\jointProb;\R^{\dy})}^2}{} \leq \Cslow \frac{ \reg{\fstar}^{d'/2}}{T^d} + \Cfast \frac{\sigmaw \log(1/\delta)}{T}, \notag
    \end{equation}
    where $\Cslow$ and $\Cfast$ are constants that depend on $s, \dx, \dy$ and~$\sigmaw$.
\end{repthm}
\begin{proof}
    Similarly to the proof of~\cref{thm:main_rate_probability}, we start by characterizing the bound on the expected value of the martingale offset complexity of $\Freg$. Next, by choosing the parameters $\rho$, $\lambda_T$ and $r$ according to the requirements of~\cref{thm:main_exp}, we arrive to the desired claim on the bound. Finally, we discuss the burn-in time by characterizing the lower-isometry event probability.

    \paragraph{Martingale offset complexity bound.} As stated in~\cref{thm:MOC_exp}, we have that
    \begin{align}
        \E{\MOC{\Freg}}{} \leq \inf_{\gamma > 0} 8\sqrt{\frac{\sigmaw}{T}}\int_{0}^{\gamma} \sqrt{\log \N{\infty}{\Freg}{\varepsilon}}d\varepsilon + \frac{32 \sigmaw \log\N{\infty}{\Freg}{\gamma}}{T} + 4\gamma^2. \notag
    \end{align}
Leveraging~\ref{prop:coveringFreg} to characterize the metric entropy of $\Freg$ and leveraging~\eqref{eq:dudley_2}, such a bound can be re-written as
\begin{align}
    \E{\MOC{\Freg}}{} \leq \inf_{\gamma > 0} &8\sqrt{\frac{\sigmaw}{T}}\frac{\sqrt{\Cc \dy^{\frac{2s+\dx}{2s}}}}{1-\nicefrac{\dx}{2s}}(\sqrt{\rho})^{\frac{\dx}{2s}}\gamma^{1-\frac{\dx}{2s}} \notag + \frac{32\sigmaw}{T}(\Cc \dy^{\frac{2s+\dx}{2s}})\left(\frac{\sqrt{\rho}}{\gamma}\right)^{\frac{\dx}{s}} + 4\gamma^2. \notag
\end{align}
As done in~\cite{liang_learning_2015,yang_information-theoretic_1999}, we minimize the right-hand side by balancing the last two addenda, which leads to
\begin{equation}
    \gamma = \left(8\Cc \dy^{\frac{2s+\dx}{2s}}\right)^{\frac{s}{2s + \dx}}\left( \frac{\sigmaw}{T}\right)^{\frac{s}{2s+\dx}}(\sqrt{\rho})^{\frac{\dx}{2s+\dx}}. \notag
\end{equation}
    By substituting such a value of $\gamma$ in the martingale offset complexity bound, we obtain that
    \begin{align}
        \E{\MOC{\Freg}}{} \leq (C_I + C_{II}) \left(\frac{\sigmaw}{T}\right)^d (\sqrt{\rho})^{d'},\notag
    \end{align}
    where we recall that $d = \nicefrac{2s}{2s+\dx}$ and $d' = \nicefrac{2\dx}{2s+\dx}$, and the constants $C_I$ and $C_II$ are equal to
    \begin{align}
        \begin{cases}
           C_I &\doteq \frac{8\sqrt{\Cc \dy^d}}{1 - \nicefrac{\dx}{2s}}\left(8\Cc \dy^{\frac{1}{d}}\right)^{\frac{2s-\dx}{2(2s+\dx)}} \\
           C_{II} &\doteq 8\left(8\Cc \dy^{\frac{1}{d}}\right)^d
        \end{cases}\notag 
    \end{align}

\paragraph{Choosing parameters $\rho$, $\lambda_T$ and $r$.} We proceed by following the requirements of~\cref{thm:main_exp}, setting $\rho = 10\reg{\fstar}$ and $\lambda_T = \frac{40}{3\rho}\E{\MOC{\Freg}}{}$. Furthermore, setting $r^2 = 2\lambda_T\reg{\fstar} + \frac{\sigmaw}{T}$, we obtain that the desired bound reads as
\begin{align}
    \E{\norm{\fhat - \fstar}{\Elltwo(\X^T,\jointProb;\R^{\dy})}^2}{} \leq &4B^2 \N{\infty}{\partial \balltwo{r}}{\frac{r}{\sqrt{\theta}}}\exp\left\lbrace -\frac{8T}{\theta^2 C(r)S}\right\rbrace  \notag \\ 
    &+ (\theta+4)10^{\frac{d'}{2}}(C_I + C_{II})\left( \frac{\sigmaw}{T}\right)^d \reg{\fstar}^{\frac{d'}{2}} + \frac{ \sigmaw}{T}.\label{eq:tmp_ER_bound_exp} 
\end{align}

\paragraph{Characterizing the burn-in.} We conclude the proof by imposing that the first term on the right-hand side of~\eqref{eq:tmp_ER_bound_exp} is upper-bounded by $\frac{\sigmaw}{T}$, i.e., \begin{align}
    4B^2\N{\infty}{\partial B(r)}{\frac{r}{\sqrt{\theta}}}\exp\left\{ - \frac{8T}{\theta^2 C(r) S}\right\} \leq \frac{\sigmaw}{T}.\notag
\end{align}
Leveraging~\cref{cor:thm312}, we deploy the values for the covering number and the hypercontractivity parameter and obtain that the number of samples $T$ has to satisfy
\begin{align}
    T \geq \frac{\theta^2 C_h S}{8}\left(\frac{1}{r}\right)^{\frac{4\dx}{2s-\dx}} \left[ C_M \left(\frac{1}{r} \right)^{\frac{2\dx}{2s-\dx}}\log\left(4B^2\left(1 + C_L\left(\frac{1}{r}\right)^{\frac{4s-\dx}{2s-\dx}}\right) \right) + \log\left(\frac{\sigmaw}{T}\right)\right].\notag
\end{align}
The effective condition for the burn-in is obtained by letting $r^2 = 2\lambda_T \reg{\fstar} + \sigmaw/T$. Ultimately, with such a choice of $T$, we obtain that
\begin{align}
    \E{\norm{\fhat - \fstar}{\Elltwo(\X^T,\jointProb;\R^{\dy})}^2}{} \leq \Cslow \frac{\reg{\fstar}^{\frac{d'}{2}}}{T^d} + \Cfast \frac{\sigmaw}{T},\notag 
\end{align}
where the constants read as
\begin{align}
    \begin{cases}
      \Cslow &\doteq (\theta + 4)10^{\frac{d'}{2}}(C_I + C_{II})(\sigmaw)^d\\\Cfast &\doteq 2.
    \end{cases} \notag
\end{align}
\end{proof}

\section{Results for the case without physics-informed regularization}\label{sec:results_noreg}

We now derive the bounds in the situation in which there is no physics-informed regularization (i.e., $\lambda_T = 0$ in~\eqref{eq:RERM}). The obtained bounds will be the benchmark against which we compare~\cref{thm:main_rate_probability,thm:main_rate_exp}, showing the impact of knowledge alignment in the excess risk bounds to obtain faster convergence.  

We start by recalling that, when physics-informed regularization is absent, the empirical risk minimization problem~\eqref{eq:RERM} reads as
\begin{equation}
  \fhat' = \argmin_{f\in \F} \frac{1}{T}\sumt \norm{Y_t - f(X_t)}{2}^2,  \label{eq:ERM}
\end{equation}
and the lower isometry event takes this expression:
\begin{equation}
    \Alo_r' \doteq \sup_{f \in \Fstar \setminus B(r)} \left\{\frac{1}{T}\sumt   \norm{f(X_t)}{2}^2 - \frac{1}{\theta}\norm{f}{\Elltwo(\X^T,\jointProb; \R^{\dy})}^2  \leq 0\right\}. \label{eq:lower_isometry_noreg}
\end{equation}

\subsection{Corollary of~\cref{thm:main_rate_probability} (result in probability)}\label{sec:noreg_prob}

\begin{corollary}
Let~\cref{ass:densities,ass:SobolevOrder,ass:noise,ass:containment,ass:S-persistence} hold, and let $\fhat'$ be the solution of the estimation problem~\eqref{eq:ERM}. Setting $\theta > 8$, if
   \begin{equation}
       T \geq \frac{\theta^2 C_h S}{8}\left[C_M\left(\frac{1}{r}\right)^{\frac{6\dx}{2s-\dx}}\log\left(1 + C_L\left(\frac{1}{r}\right)^{\frac{4s-\dx}{2s-\dx}} \right) + \left(\frac{1}{r}\right)^{\frac{4\dx}{2s-\dx}}\log(1/\delta) \right], \notag
   \end{equation}
   where $C_h, C_M, C_L$ are uniform constants, then the excess risk satisfies
\begin{equation}
    \norm{\fhat' - \fstar}{\Elltwo(\X^T,\jointProb;\R^{\dy})}^2 \leq \Cslow' \frac{1}{T^d} + \Cfast'\frac{\sigmaw}{T}, \notag
\end{equation}  
with probability at least $1-6\delta$, where $\Cslow'$ is a constant that depends only on $s,\dx,\dy,\rho_f,\sigmaw,\theta$ and $\sqrt{\log(1/\delta)}$, and $\Cfast'$ is a constant that depends only on $\theta$.
\end{corollary}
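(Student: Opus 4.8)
The plan is to follow the proof of \cref{thm:main_rate_probability} almost verbatim, specializing to $\lambda_T=0$ and replacing the effective hypothesis class $\Freg$ by the full Sobolev ball $\F$ everywhere. First I would apply \cref{lemma:liang} with $\lambda_T=0$, using the optimality inequality \eqref{eq:fact2b} rather than \eqref{eq:fact2a}; this yields directly $\frac{1}{T}\sumt \norm{\fhat'(X_t)-\fstar(X_t)}{2}^2 \le \MOC{\F}$, with no $\reg{\fstar}$ correction term. Next, I would decompose according to the lower-isometry event $\Alo_r'$ of \eqref{eq:lower_isometry_noreg}: on its complement, either $\norm{\fhat'-\fstar}{\Elltwo(\X^T,\jointProb;\R^{\dy})}^2\le r^2$, or $\fhat'-\fstar\in\Fstar\setminus B(r)$, in which case \eqref{eq:highProbEvent} combined with the previous display gives $\norm{\fhat'-\fstar}{\Elltwo(\X^T,\jointProb;\R^{\dy})}^2\le\theta\MOC{\F}$. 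The crucial simplification relative to \cref{thm:main_probab} is that case~(iii) of that theorem — the estimate escaping the effective class — is vacuous here, since there is no restriction below $\F$ and $\fhat'\in\F$ by construction. Hence, on $\Alo_r'^{\complement}$, one has $\norm{\fhat'-\fstar}{\Elltwo(\X^T,\jointProb;\R^{\dy})}^2\le\theta\MOC{\F}+r^2$.

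The second step bounds $\MOC{\F}$ via \cref{thm:MOC_prob} applied to $\HH=\F$, with $u=\sqrt{2\log(1/\delta)}$ and $v=2\log(1/\delta)$, plugging in the Sobolev-ball metric entropy $\log\N{\infty}{\F}{\varepsilon}\le\Cc'\dy^{(2s+\dx)/(2s)}(\rho_f/\varepsilon)^{\dx/s}$ from \cref{lemma:cucker_smale_vector} and the Dudley integral computation \eqref{eq:dudley_2}. Balancing the two $\gamma$-dependent terms exactly as in \eqref{eq:MOC_bound_prob_rate_1}--\eqref{eq:MOC_bound_prob_rate_2}, but now with the fixed radius $\rho_f$ in place of $\sqrt{\rho}$, gives, with probability at least $1-5\delta$, a bound of the shape $\MOC{\F}\le C_I'\sigmaw^{1+d}T^{-(1+d)/2}+(C_{II}'+C_{IV}')\sigmaw^{d}T^{-d}+C_{III}'\sigmaw\log(1/\delta)T^{-1}$, with constants depending only on $s,\dx,\dy,\rho_f$ and $\sqrt{\log(1/\delta)}$. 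Since $d<1$ forces $(1+d)/2>d$, the first term is dominated by the $T^{-d}$ term, so after absorbing constants $\MOC{\F}$ is of order $T^{-d}+\sigmaw\log(1/\delta)T^{-1}$.

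In the third step I would choose $r^2$ at the fast scale, $r^2=\sigmaw/T$, and control $\jointProb(\Alo_r')$ by \cref{cor:thm312} with $\F$ substituted for $\Freg$: this substitution is legitimate because $\F$ is convex (\cref{lemma:propF}), $S$-persistent (\cref{ass:S-persistence}), and $(C(r),2)$-hypercontractive on $\partial B(r)$ (\cref{thm:Fhypercontractive}), and the covering number entering the estimate is the very same $\N{\infty}{\partial B(r)}{r/\sqrt{\theta}}$; the burn-in hypothesis on $T$ in the statement is precisely the condition making $\jointProb(\Alo_r')\le\delta$. Combining on the intersection of the offset-complexity bound event and $\Alo_r'^{\complement}$ gives $\norm{\fhat'-\fstar}{\Elltwo(\X^T,\jointProb;\R^{\dy})}^2\le\theta\MOC{\F}+r^2\le\Cslow' T^{-d}+\Cfast'\sigmaw T^{-1}$, where $\Cfast'$ depends only on $\theta$ and $\Cslow'$ collects $s,\dx,\dy,\rho_f,\sigmaw,\theta$ and $\sqrt{\log(1/\delta)}$; a union bound over the two failure events ($5\delta$ for the offset-complexity bound, $\delta$ for the lower-isometry event) produces the advertised probability $1-6\delta$.

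I do not expect a substantive obstacle: the argument is a strict restriction of \cref{thm:main_rate_probability}. The only points deserving care are (a) verifying that the escape case from the effective hypothesis class genuinely drops out when $\lambda_T=0$, so that one may bound everything by $\MOC{\F}$ directly, and (b) noting that the slow-rate coefficient is no longer the vanishing quantity $\reg{\fstar}^{d'/2}$ but an irreducible constant governed by the Sobolev radius $\rho_f$ — which is exactly the phenomenon that the comparison in \cref{sec:discussion} is designed to exhibit.
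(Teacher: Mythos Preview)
Your proposal is correct and follows essentially the same route as the paper's own proof: specialize \cref{thm:main_probab} to $\lambda_T=0$ so that case~(iii) is vacuous and the bound reduces to $\theta\MOC{\F}+r^2$ on $(\Alo_r')^{\complement}$, then bound $\MOC{\F}$ via \cref{thm:MOC_prob} with the Sobolev-ball entropy of \cref{lemma:cucker_smale_vector} (so $\rho_f$ replaces $\sqrt{\rho}$), balance the $\gamma$-terms, set $r^2=\sigmaw/T$, and invoke \cref{cor:thm312} for the burn-in. The paper does exactly this, with the same choice of $u,v$, the same $1-6\delta$ union bound, and the same observation that the leading constant is now irreducible and governed by $\rho_f$ rather than $\reg{\fstar}$.
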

\begin{proof}
We first adapt~\cref{thm:main_probab} to the non-regularized case, and then derive the bounds along the lines of~\cref{thm:main_rate_probability}.

\paragraph{Expression for the bound.}
    By adapting~\cref{thm:main_probab} to the lower-isometry event~\eqref{eq:lower_isometry_noreg}, we have that the event $(\Alo_r')^{\complement}$ happens in two scenarios: 
    \begin{enumerate}[label = (\roman*)]
        \item $\fhat' \in B(r) \Longrightarrow \norm{\fhat'-\fstar}{\Elltwo(\X^T,\jointProb;\R^{\dy})}^2 \leq r^2$;
        \item  $\fhat - \fstar$ belong to $\F \setminus B(r)$, but it holds that $$\norm{\fhat'-\fstar}{\Elltwo(\X^T,\jointProb;\R^{\dy})}^2 \leq \frac{\theta}{T}\sumt \norm{\fhat'(X_t) - \fstar(X_t)}{2}^2 \stackrel{}{\leq} \theta \MOC{\F}.$$ 
    \end{enumerate}
 Therefore, on $(\Alo_r')^{\complement}$, we have that
 \begin{equation}
     \norm{\fhat'-\fstar}{\Elltwo(\X^T,\jointProb;\R^{\dy})}^2 \leq \theta \MOC{\F} + r^2.\label{eq:bound_prob_noreg_1}
 \end{equation}

 \paragraph{Bounding the martingale offset complexity.}
We can now rely on~\cref{thm:MOC_prob} to bound the martingale offset complexity: setting $u = \sqrt{2\log(1/\delta)}$ and $v = 2\log(1/\delta)$, and using~\cref{lemma:cucker_smale_vector} to characterize the metric entropy of the hypothesis space $\F$, we have that
\begin{align}
    \MOC{\F} \leq \inf_{\gamma > 0} &8\sqrt{\frac{\sigmaw}{T}}(\sqrt{2\log(1/\delta)}+1) + 8\sqrt{\frac{\sigmaw}{T}} \frac{\sqrt{\Cc' \dy^{\frac{2s+\dx}{2s}}}}{1-\nicefrac{\dx}{2s}}\rho_f^{\frac{\dx}{2s}}\gamma^{\frac{2s-\dx}{2s}} \notag \\
    & +\frac{64\sigmaw \log(1/\delta)}{T} + \frac{32\sigmaw}{T}\Cc' \dy^{\frac{2s+\dx}{2s}}\left(\frac{\rho_f}{\gamma}\right)^{\frac{\dx}{s}} + 4\gamma^2. \label{eq:MOC_prob_noreg_1}
\end{align}
As done in~\cref{thm:main_rate_probability}, we balance the last two addenda to get
\begin{equation}
    \gamma = \left(8 \Cc' \dy^{\frac{2s+\dx}{2s}}\right)^{\frac{s}{2s+\dx}} \rho_f^{\frac{\dx}{2s+\dx}}\left(\frac{\sigmaw}{T}\right)^{\frac{s}{2s+\dx}}.\notag
\end{equation}
 Substituting into~\eqref{eq:MOC_prob_noreg_1} and recalling the definition of the Sobolev minimax rate $d = 2s/(2s+\dx)$, we obtain
 \begin{align}
     \MOC{\F} &\leq C_I'\left(\frac{\sigmaw}{T}\right)^{\frac{d+1}{2}} + C_{II}'\left(\frac{\sigmaw}{T}\right)^d + C_{III}'\log(1/\delta)\frac{\sigmaw}{T} + C_{IV}'\left(\frac{\sigmaw}{T}\right)^d,\notag\\
     &\text{where }\quad \begin{cases}
         C_I' &\doteq 8\left( 8\Cc' \dy^{\frac{2s+\dx}{2s}}\right)^{\frac{s}{2s+\dx}}(1 + \sqrt{2\log(1/\delta)})\rho_f^{\frac{\dx}{2s+\dx}}\\
         C_{II}' &\doteq 8\frac{\sqrt{\Cc'\dy^{\frac{2s+\dx}{2s}}}}{1-\nicefrac{\dx}{2s}}\left(8\Cc' \dy^{\frac{2s+\dx}{2s}}\right)^{\frac{2s-\dx}{2(2s+\dx)}}\rho_f^{\frac{2\dx}{2s+\dx}}\\
         C_{III}' &\doteq 64\\
         C_{IV}' &\doteq 8\left(8\Cc' \dy^{\frac{2s+\dx}{2s}}\right)^{\frac{2s}{2s+\dx}}\rho_f^{\frac{2\dx}{2s+\dx}}.
     \end{cases}\notag
 \end{align}
\paragraph{Final expression for the bound.} Going back to~\eqref{eq:bound_prob_noreg_1}, noting that $T^{-(d+1)/2} < T^{-d}$ and setting $r^2 \leq \sigmaw/T$, we obtain the bound for the excess risk
\begin{align}
    \norm{\fhat'-\fstar}{\Elltwo(\X^T,\jointProb;\R^{\dy})}^2 &\leq  \frac{\Cslow'}{T^d} + \frac{\Cfast' \sigmaw \log(1/\delta)}{T}\notag
\end{align}
where $\Cslow' = \theta(C_I' + C_{II}' + C_{IV}')\max\lbrace \sigma_W^{1+d}, \sigma_W^{2d} \rbrace$ and $\Cfast' = 1+(\theta)C_{III}'$.
\paragraph{Characterization of the burn-in.}
Similarly to the derivation in~\cref{thm:main_rate_probability}, the burn-in condition consists in having the amount $T$ satisfying
\begin{align}
    T \geq \frac{\theta^2 C_h S}{8}\left[C_M\left(\frac{1}{r}\right)^{\frac{6\dx}{2s-\dx}}\log\left(1 + C_L\left(\frac{1}{r}\right)^{\frac{4s-\dx}{2s-\dx}} \right) + \left(\frac{1}{r}\right)^{\frac{4\dx}{2s-\dx}}\log(1/\delta) \right] \notag
\end{align}
where $r^2 \leq \sigmaw/T$.
\end{proof}

\subsection{Corollary of~\cref{thm:main_rate_exp} (result in expectation)}\label{sec:noreg_exp}

\begin{corollary}
Let~\cref{ass:densities,ass:SobolevOrder,ass:noise,ass:containment,ass:S-persistence} hold, and let $\fhat'$ be the solution of the estimation problem~\eqref{eq:ERM}. Let $B$ the infinity-norm bound of functions in $\F$ and $\theta > 8$. If the number of samples $T$ satisfies
    \begin{equation}
         T \geq \frac{\theta^2 C_h S}{8}\left(\frac{1}{r}\right)^{\frac{4\dx}{2s-\dx}} \left[ C_M \left(\frac{1}{r} \right)^{\frac{2\dx}{2s-\dx}}\log\left(4B^2\left(1 + C_L\left(\frac{1}{r}\right)^{\frac{4s-\dx}{2s-\dx}}\right) \right) + \log\left(\frac{\sigmaw}{T}\right)\right]\notag
    \end{equation}
    with $r \leq \sqrt{\sigmaw/T}$ and uniform constants $C_h, C_L, C_M$, then we have that
    \begin{equation}
        \E{\norm{\fhat'-\fstar}{\Elltwo(\X^T,\jointProb;\R^{\dy})}^2}{} \leq  \frac{\Cslow'}{T^d} + \Cfast'\frac{\sigmaw}{T} \notag 
    \end{equation}
\end{corollary}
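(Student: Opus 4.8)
The plan is to mirror the proof of~\cref{thm:main_rate_exp}, replacing the effective hypothesis space $\Freg$ by the full ball $\F$ and setting $\lambda_T = 0$ throughout. First I would adapt~\cref{thm:main_exp} to the unregularized estimator $\fhat'$ of~\eqref{eq:ERM}, decomposing the expected excess risk along the lower-isometry event $\Alo_r'$ of~\eqref{eq:lower_isometry_noreg} and its complement. On $\Alo_r'$, $B$-boundedness of $\F$ (\cref{lemma:propF}) gives $\norm{\fhat' - \fstar}{\Ellinf(\X^T;\R^{\dy})}^2 \leq 4B^2$, and~\cref{thm:new_thm312} controls $\jointProb(\Alo_r')$ --- note that the star-shapedness argument used there only requires convexity of $\F$ (hence of $\Fstar$ around $0$), so it carries over verbatim, as does $(C(r),2)$-hypercontractivity of $\partial B(r)$ established in~\cref{thm:Fhypercontractive}. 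On $(\Alo_r')^{\complement}$ there are now only two cases: $\fhat' \in B(r)$, contributing the bound $r^2$; or $\fhat' - \fstar \in \Fstar \setminus B(r)$ together with the one-sided isometry~\eqref{eq:highProbEvent}, which yields $\norm{\fhat'-\fstar}{\Elltwo(\X^T,\jointProb;\R^{\dy})}^2 \leq \theta \MOC{\F}$ via~\cref{lemma:liang} and~\eqref{eq:MOC} with $\lambda_T=0$. There is no analogue of ``case (iii)'' from the proof of~\cref{thm:main_probab}: without the regularizer there is nothing pushing $\fhat'$ outside of $\F$, so the argument is in fact shorter.

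Next I would bound $\E{\MOC{\F}}{}$ using~\cref{thm:MOC_exp}, feeding in the metric-entropy estimate $\log \N{\infty}{\F}{\varepsilon} \leq \Cc'\dy^{\frac{2s+\dx}{2s}}(\rho_f/\varepsilon)^{\dx/s}$ from~\cref{lemma:cucker_smale_vector}, which applies since $\F$ is a Sobolev ball of radius $\rho_f$ over a locally-Lipschitz domain. As in~\cite{liang_learning_2015}, I would pick $\gamma$ to balance the Dudley integral term against the $\log \N{\infty}{\F}{\gamma}/T$ term, obtaining $\gamma \propto (\sigmaw/T)^{s/(2s+\dx)}\rho_f^{\dx/(2s+\dx)}$ and hence $\E{\MOC{\F}}{} \leq (C_I' + C_{II}')(\sigmaw/T)^d$ with $d = \nicefrac{2s}{2s+\dx}$ and explicit constants depending on $s,\dx,\dy,\rho_f$. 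The key structural difference from the regularized case is already visible here: the factor that was $(\sqrt{\rho})^{d'}$ in~\cref{thm:main_rate_exp} is now $\rho_f^{d'/2}$, a \emph{fixed} quantity that gets absorbed into $\Cslow'$, so the $T^{-d}$ term cannot be annihilated.

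Then, choosing $r^2 \leq \sigmaw/T$, the excess-risk bound from the adapted~\cref{thm:main_exp} reads $\E{\norm{\fhat'-\fstar}{\Elltwo(\X^T,\jointProb;\R^{\dy})}^2}{} \leq 4B^2\N{\infty}{\partial B(r)}{r/\sqrt{\theta}}\exp\{-8T/(\theta^2 C(r)S)\} + \theta\E{\MOC{\F}}{} + r^2$, and I would fold the middle term into $\Cslow'/T^d$ and the last into $\Cfast'\sigmaw/T$. Finally the burn-in is obtained by imposing that the lower-isometry term be $\leq \sigmaw/T$: substituting the covering number and the hypercontractivity parameter $C(r) = C_h(1/r)^{4\dx/(2s-\dx)}$ from~\cref{cor:thm312} and taking logarithms yields exactly the stated condition on $T$, with $C_h, C_L, C_M$ the uniform constants inherited from that corollary. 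I do not anticipate a genuine obstacle here, since all the heavy machinery (the small-ball/localization decomposition and the chaining bound on $\E{\MOC{\cdot}}{}$) is already in place; the only care required is the bookkeeping of constants in the $\gamma$-optimization and checking that the choice $r \leq \sqrt{\sigmaw/T}$ keeps the two resulting terms consistent — which is precisely why the statement is presented as a corollary of~\cref{thm:main_rate_exp}.
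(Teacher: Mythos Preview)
Your proposal is correct and follows essentially the same route as the paper's own proof: decompose the expected excess risk along $\Alo_r'$ and its complement (with only the two cases on $(\Alo_r')^{\complement}$ since there is no regularizer), bound $\E{\MOC{\F}}{}$ via \cref{thm:MOC_exp} together with \cref{lemma:cucker_smale_vector}, set $r^2 \leq \sigmaw/T$, and extract the burn-in from \cref{cor:thm312}. One small slip: the paper balances the $\tfrac{32\sigmaw}{T}\log\N{\infty}{\F}{\gamma}$ term against $4\gamma^2$ (not the Dudley integral against the log-covering term), but you arrive at the same $\gamma \propto (\sigmaw/T)^{s/(2s+\dx)}\rho_f^{\dx/(2s+\dx)}$, so the outcome is unaffected.
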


\begin{proof}
Similarly to the previous Corollary, we first adapt~\cref{thm:main_exp} to the non-regularized case and then derive the bounds following~\cref{thm:main_rate_exp}.

\paragraph{Expression for the bound.}
Considering the lower-isometry event~\eqref{eq:lower_isometry_noreg}, we decompose the expected value as follows:
\begin{align}
    \E{\norm{\fhat' - \fstar}{\Elltwo(\X^T, \jointProb; \R^{\dy})}^2}{} &= \overbrace{\E{\norm{\left(\fhat' - \fstar\right) \chi_{\Alo_r'}}{\Elltwo(\X^T, \jointProb; \R^{\dy})}^2}{}}^{\text{(I)}} \notag \\ &+ \underbrace{\E{\norm{\left(\fhat' - \fstar\right)\chi_{(\Alo_r')^{\complement}}}{\Elltwo(\X^T, \jointProb; \R^{\dy})}^2}{}}_{\text{(II)}}\label{eq:bound_exp_noreg_1}
\end{align}
along the lines of the proof of~\cref{thm:main_exp}. Looking at the two addenda separately:
\begin{enumerate}[label=(\Roman*)]
    \item when $\Alo_r'$ is true, we have that
    \begin{align}
        \E{\norm{\left(\fhat - \fstar\right) \chi_{\Alo_r'}}{\Elltwo(\X^T, \jointProb; \R^{\dy})}^2}{} &\leq \norm{\fhat' - \fstar}{\Ellinf(\X^T;\R^{\dy})}^2\jointProb(\Alo_r')\notag \\
         \stackrel{\cref{thm:new_thm312}}{\leq} &4B^2 \N{\infty}{\partial B(r)}{\frac{r}{\sqrt{\theta}}}\exp\left\lbrace -\frac{8T}{\theta^2 C(r) S} \right\rbrace \notag
    \end{align}
    by bounding the worst-case distance between $\fhat'$ and $\fstar$.
    \item when $\Alo_r'$ is false, there are two scenarios possible: (i) $\fhat' \in B(r)$; or (ii) $\fhat \in \F \setminus B(r)$, but it happens that $\|\fhat' - \fstar \|^2_{\Elltwo} \leq  \frac{\theta}{T}\sumt   ||\fhat'(X_t) - \fstar(X_t)||_2^2$. Looking at these two cases:
    \begin{enumerate}[label=(\roman*)]
        \item $\|\fhat' - \fstar \|^2_{\Elltwo(\X^T,\jointProb;\R^{\dy})} \leq r^2$ by definition;
        \item $\|\fhat' - \fstar \|^2_{\Elltwo(\X^T,\jointProb;\R^{\dy})} \leq \theta \MOC{\F}$.
    \end{enumerate}  
\end{enumerate}
Bringing all these terms together, the bound in~\eqref{eq:bound_exp_noreg_1} reads as
\begin{align}
   \E{\norm{\fhat' - \fstar}{\Elltwo(\X^T, \jointProb; \R^{\dy})}^2}{} &\leq 4B^2\N{\infty}{\partial B(r)}{\frac{r}{\sqrt{\theta}}}\exp\left\{ - \frac{8T}{\theta^2 C(r) S}\right\} \notag \\ &+ \theta \E{\MOC{\F}}{} + r^2. \label{eq:bound_exp_noreg_2}
\end{align}

\paragraph{Bounding the martingale offset complexity.}
We can proceed by upper-bounding the martingale offset complexity term by deploying~\cref{thm:MOC_exp}. Specifically, we have that
\begin{align}
    \E{\MOC{\F}}{} \leq \inf_{\gamma > 0} 8\sqrt{\frac{\sigmaw}{T}}\int_{0}^{\gamma} \sqrt{\log \N{\infty}{\F}{\varepsilon}}d\varepsilon + \frac{32\sigmaw \log \N{\infty}{\F}{\gamma}}{T} + 4\gamma^2.\notag
\end{align}
We deploy~\cref{lemma:cucker_smale_vector} to provide the expression for the metric entropy of $\F$ (see also the proof of~\cref{thm:main_rate_exp}) and obtain
\begin{align}
    \E{\MOC{\F}}{} \leq \inf_{\gamma > 0} &8\sqrt{\frac{\sigmaw}{T}}
    \frac{\sqrt{\Cc' \dy^{\frac{2s+\dx}{2s}}}}{1-\nicefrac{\dx}{2s}}\rho_f^{\frac{\dx}{2s}}\gamma^{\frac{2s-\dx}{2s}} + \frac{32\sigmaw}{T}\Cc'\dy^{\frac{2s+\dx}{2s}}\left(\frac{\rho_f}{\gamma}\right)^{\frac{\dx}{s}} + 4\gamma^2 \notag
\end{align}
Balancing the last two terms of the right-hand side, we obtain
\begin{equation}
    \gamma = \left(8\Cc' \dy^{\frac{2s+\dx}{2s}}\rho_f^{\frac{\dx}{s}}\right)^{\frac{s}{2s+\dx}}\left(\frac{\sigmaw}{T}\right)^{\frac{s}{2s+\dx}},\notag
\end{equation}
which we substitute back in the expression of the martingale offset complexity to get (recalling that $d = 2s/(2s+\dx)$ is the Sobolev minimax exponent)
\begin{align}
    &\E{\MOC{\F}}{} \leq \frac{C_I'}{T^d} + \frac{C_{II}'}{T^d},\notag \\
    &\text{where }\quad \begin{cases}
        C_I' &\doteq 8\frac{\sqrt{\Cc' \dy^{\frac{2s+\dx}{2s}}}}{1-\nicefrac{\dx}{2s}}\left(8\Cc' \dy^{\frac{2s+\dx}{2s}}\rho_f^{\frac{\dx}{s}}\right)^{\frac{s}{2s+\dx}} \rho_f^{\frac{\dx}{s} + \frac{\dx}{2s+\dx}}\\
        C_{II}' &\doteq 8\left(8\Cc' \dy^{\frac{2s+\dx}{2s}}\rho_f^{\frac{\dx}{s}}\right)^{\frac{2s}{2s+\dx}}.
    \end{cases}\label{eq:noreg_MOC_exp}
\end{align}
\paragraph{Final expression for the bound.} We can now go back to the excess risk bound~\eqref{eq:bound_exp_noreg_2}. We let $r^2 \leq \sigmaw/T$, substitute~\eqref{eq:noreg_MOC_exp} in the expected value for the martingale offset complexity, and let the first addendum in~\eqref{eq:bound_exp_noreg_2} be upper-bounded by $\frac{\sigmaw}{T}$. Ultimately, this yields
\begin{align}
    &\E{\norm{\fhat'-\fstar}{\Elltwo(\X^T,\jointProb;\R^{\dy})}^2}{} \leq  \frac{\Cslow'}{T^d} + \Cfast'\frac{\sigmaw}{T}, \notag\\
    &\text{where }\quad \begin{cases}
        \Cslow' &\doteq \theta(C_I'+C_{II}')(\sigmaw)^d\\
        \Cfast' &\doteq 2.\notag
    \end{cases}
    \end{align}

\paragraph{Characterization of the burn-in.} 
The bound has been obtained by imposing that the first addendum in~\eqref{eq:bound_exp_noreg_2} satisfies 
\begin{align}
    4B^2\N{\infty}{\partial B(r)}{\frac{r}{\sqrt{\theta}}}\exp\left\{ - \frac{8T}{\theta^2 C(r) S}\right\} \leq \frac{\sigmaw}{T}.\notag
\end{align}
Leveraging~\cref{cor:thm312}, we deploy the values for the covering number and the hypercontractivity parameter and obtain that the number of samples $T$ has to satisfy
\begin{align}
    T \geq \frac{\theta^2 C_h S}{8}\left(\frac{1}{r}\right)^{\frac{4\dx}{2s-\dx}} \left[ C_M \left(\frac{1}{r} \right)^{\frac{2\dx}{2s-\dx}}\log\left(4B^2\left(1 + C_L\left(\frac{1}{r}\right)^{\frac{4s-\dx}{2s-\dx}}\right) \right) + \log\left(\frac{\sigmaw}{T}\right)\right]\notag
\end{align}
The effective condition for the burn-in is obtained by letting $r^2 = \sigmaw/T$.
\end{proof}

\section{Details on the numerical experiment}\label{sec:numerical_experiment}

We now fully specify the details of the experiment presented in~\cref{sec:discussion}.

\paragraph{Model set-up.}

We consider a nonlinear dynamical system that describes the dynamics of a unicycle robot. The ground-truth model is given by
\begin{align*}
\dot{x}_1(t) &= \nu(t) \cos \vartheta(t), \notag \\  \dot{x}_2(t) &= \nu(t) \sin \vartheta(t) \notag \\ \dot{\vartheta}(t) &= \omega(t),   \notag
\end{align*}
where $(x_1,x_2) \in \mathbb{R}^2$ is the position of the robot on the plane, $\vartheta \in [0,\pi/2]$ is the orientation angle, and $(\nu,\omega)$ are the translational and angular velocities, respectively.\\ 
We simulate the continuous dynamics using a forward Euler discretization with step size $dt = 0.05$, yielding the discrete-time dynamical system
\begin{align*}
x_{1,t+1} &= x_{1,t} + \nu_t \cos(\vartheta_t)dt, \\
x_{2,t+1} &= x_{2,t} + {\nu}_t \sin(\vartheta_t)dt, \\
\vartheta_{t+1} &= \vartheta_t + \omega_t dt.   
\end{align*}

We generate training pairs $\{(s_t,u_t),s_{t+1}\}$ where $s_t = (x_{1,t},x_{2,t},\vartheta_t)$ and $u_t = (\nu_t,\omega_t)$, corrupted by an additive Gaussian noise on $s_{t+1}$ with variance $\sigmaw = 1$.

\paragraph{Physics-informed regularization.} The unicycle kinematics enforce that the velocity has no lateral component. This constraint takes the form
\begin{align*}
q(s_t,u_t) = \left(x_{2,t+1}-x_{2,t}\right)\cos\vartheta_t -\left(x_{1,t+1}-x_{1,t}\right)\sin\vartheta_t = 0.    
\end{align*}

To promote models consistent with the physics, we penalize the squared $\Elltwo$-norm of this residual over a compact domain of states and inputs, i.e., 
\begin{align*}
 \|q\|_{\Elltwo(\X)}^2 \;=\; \int_{\X} q(\xi)^2 d\xi,
\text{ with }\, \xi=(s,u).   
\end{align*}

We approximate the above integral with Monte Carlo sampling from the input domain $\X$. For each mini-batch, we evaluate the residuals under adopted predictor $g_\theta(s_t,u_t)$ (which we specify below) and compute
\begin{align*}
 \widehat{\|q\|}_{\Elltwo(\X)}^2 
= \frac{ \Lebmeas(\X)}{N}\sum_{i=1}^N q(\xi_i)^2,
\;\text{ with } \xi_i \text { uniformly sampled from }\X.   
\end{align*}
The total loss combines the data mean squared error and the physics-informed penalty according to~\eqref{eq:RERM}. This ensures the learned predictor both fits noisy data and respects the no-slip constraint.

\paragraph{Adopted estimator.} We use a feedforward neural network $g_\theta \colon (s_t,u_t) \in \R^5 \to \hat{s}_{t+1} \in \R^3$ to approximate the discrete-time dynamics. The architecture is a two-hidden-layer multilayer perceptron (MLP) with ReLU activation function and 64 inner nodes.\\
We train the estimator using the Adam optimizer with learning rate $0.5\times 10^{-3}$, batch size $N = 300$. We vary the effective number of training samples $T$ over the range $T \in [300,10^6]$, where $T$ denotes the total number of samples, focusing on the behavior after the burn-in.

\paragraph{Results.} Figure~\ref{fig:unicycle_rates} 
on page~\pageref{fig:unicycle_rates} presents the log-log plot of the empirical excess risk (estimation error) as a function of the number of samples $T$, comparing models trained with and without physics-informed regularization. Each curve is obtained by averaging over $20$ independent random realizations of the training data, with solid lines indicating the mean estimation error and shaded regions denoting $95\%$ confidence intervals. Consistently with the results in~\cref{sec:convergence_rates} (especially~\cref{thm:main_rate_exp}), the estimator without physical knowledge presents a slower rate of convergence, dictated empirically by $\mathcal{O}(T^{-0.681})$, while the physics-informed one performs empirically as $\mathcal{O}(T^{-1.086})$.
\end{document}